\newcommand{\nc}{\newcommand}
\newcommand{\DMO}{\DeclareMathOperator}
\newcommand{\remove}[1]{}
\newcommand{\poly}{\mathrm{poly}}
\nc{\MS}{\mathcal{S}}
\nc{\MP}{\mathcal{P}}
\nc{\MR}{\mathcal{R}}
\nc{\MZ}{\mathcal{Z}}
\DMO{\Binom}{Binom}
\newcommand{\E}{\mathbb{E}}
\DMO{\Var}{Var}
\nc{\Geom}{\text{Geom}}
\newcommand{\ti}{\tilde{i}}
\newcommand{\tx}{\tilde{x}}
\newcommand{\talpha}{\tilde{\alpha}}
\newcommand{\hr}{\hat{r}}
\newcommand{\tr}{\tilde{r}}
\newcommand{\tbX}{\tilde{\bX}}
\newcommand{\tpsi}{\tilde{\psi}}
\newcommand{\tc}{\tilde{c}}
\newcommand{\btc}{\mathbf{\tc}}
\newcommand{\bc}{\mathbf{c}}
\newcommand{\bX}{\mathbf{X}}
\newcommand{\bC}{\mathbf{C}}
\newcommand{\bw}{w}
\newcommand{\bx}{x}
\newcommand{\by}{y}
\newcommand{\bv}{v}
\newcommand{\bS}{\mathbf{S}}
\newcommand{\bz}{z}
\newcommand{\bb}{b}
\newcommand{\bB}{B}
\newcommand{\R}{\mathbb{R}}
\newcommand{\N}{\mathbb{N}}
\nc{\BN}{\mathbb{N}}
\newcommand{\Z}{\mathbb{Z}}
\nc{\BZ}{\mathbb{Z}}
\newcommand{\cV}{\mathcal{V}}
\newcommand{\cA}{\mathcal{A}}
\newcommand{\bone}{\mathbf{1}}
\newcommand{\bzero}{0}
\newcommand{\eps}{\varepsilon}
\nc{\esin}{\eps}
\newcommand{\cH}{\mathcal{H}}
\newcommand{\cL}{\mathcal{L}}
\newcommand{\cO}{\mathcal{O}}
\newcommand{\cB}{\mathcal{B}}
\newcommand{\cU}{\mathcal{U}}
\newcommand{\cM}{\mathcal{M}}
\newcommand{\cS}{\mathcal{S}}
\newcommand{\cC}{\mathcal{C}}
\newcommand{\cX}{\mathcal{X}}
\DeclareMathOperator{\cost}{cost}
\DeclareMathOperator{\argmin}{argmin}
\DeclareMathOperator{\opt}{OPT}
\DeclareMathOperator{\DLap}{DLap}
\DeclareMathOperator{\err}{err}
\DeclareMathOperator{\sgn}{sgn}
\renewcommand{\varepsilon}{\epsilon}
\newtheorem{theorem}{Theorem}
\newtheorem{observation}[theorem]{Observation}
\newtheorem{lemma}[theorem]{Lemma}
\newtheorem{definition}[theorem]{Definition}
\newtheorem{corollary}[theorem]{Corollary}
\newtheorem{fact}[theorem]{Fact}
\newcommand{\kmeans}{\textsf{
$k$-means}\xspace}
\newcommand{\kmedian}{\textsf{
$k$-median}\xspace}
\newcommand{\densestball}{\textsf{
DensestBall}\xspace}
\newcommand{\minimumenclosingball}{\textsf{
MinimumEnclosingBall}\xspace}
\newcommand{\minimumboundingsphere}{\textsf{
MinimumBoundingSphere}\xspace}
\newcommand{\onecluster}{\textsf{
$1$-Cluster}\xspace}
\newcommand{\onecenter}{\textsf{
$1$-Center}\xspace}
\newcommand{\goodradius}{\textsc{
GoodRadius}\xspace}
\newcommand{\cp}{\textsf{
ClosestPair}\xspace}
\newcommand{\SparseSelection}{\textsf{
SparseSelection}\xspace}
\newcommand{\Selection}{\textsf{
Selection}\xspace}
\newcommand{\score}{\mathit{score}}
\title{Differentially Private Clustering: \\ Tight Approximation Ratios}
\author{
  Badih Ghazi\thanks{Email: {\tt badihghazi@gmail.com}}
  \hspace*{1cm}
  Ravi Kumar\thanks{Email: {\tt ravi.k53@gmail.com}}
  \hspace*{1cm}
  Pasin Manurangsi\thanks{Email: {\tt pasin@google.com}}  \\
  Google Research \\
  Mountain View, CA.
}
\begin{document}

\maketitle

\begin{abstract}
We study the task of differentially private clustering.  For several
basic clustering problems, including Euclidean \densestball,
\onecluster, \kmeans, and \kmedian, we give efficient differentially
private algorithms that achieve essentially the \emph{same} approximation
ratios as those that can be obtained by any non-private algorithm,
while incurring only small additive errors.  This improves upon
existing efficient algorithms that only achieve some large constant approximation factors. 

Our results also imply an improved algorithm for the Sample and Aggregate privacy framework. Furthermore, we show that one of the tools used in our \onecluster algorithm can be employed to get a faster quantum algorithm for \cp in a moderate number of dimensions.

\end{abstract}

\section{Introduction}\label{sec:intro}


With the significant increase in data collection, serious concerns about user privacy have emerged. This has stimulated research on formalizing and guaranteeing strong privacy protections for user-sensitive information. Differential Privacy (DP)~\cite{dwork2006calibrating,DworkKMMN06} is a rigorous mathematical concept for studying user privacy and has been widely adopted in practice~\cite{erlingsson2014rappor,CNET2014Google, greenberg2016apple, dp2017learning, ding2017collecting, abowd2018us}. 
Informally, the notion of privacy is that the algorithm's output (or output distribution) should be mostly unchanged when any one of its inputs is changed.  DP is quantified by two parameters $\epsilon$ and $\delta$; the resulting notion is referred to as pure-DP when $\delta = 0$, and approximate-DP when $\delta > 0$. See Section~\ref{sec:prelim} for formal definitions of DP and~\cite{DworkR14, vadhan2017complexity} for an overview.

Clustering is a central primitive in unsupervised machine learning~\cite{xu2008clustering, charu2013data}.  An algorithm for clustering in the DP model informally means that the cluster centers (or the distribution on cluster centers) output by the algorithm should be mostly unchanged when any one of the input points is changed. Many real-world applications involve clustering sensitive data.  Motivated by these, a long line of work has studied clustering algorithms in the DP model~\cite{blum2005practical, nissim2007smooth, feldman2009private, GuptaLMRT10, mohan2012gupt, wang2015differentially, NissimSV16, nock2016k, su2016differentially, feldman2017coresets, BalcanDLMZ17, NissimS18, huang2018optimal,nock2016k,NissimS18, StemmerK18,Stemmer20}.
In this work we focus on several basic clustering problems in the DP model and obtain efficient algorithms with tight approximation ratios.

\paragraph{Clustering Formulations.}

The input to all our problems is a set $X$ of $n$ points, each contained in the $d$-dimensional unit ball.  There are many different formulations of clustering.  In the popular \kmeans problem~\cite{lloyd1982least}, the goal is to find $k$ centers minimizing the clustering cost, which is the sum of squared distances from each point to its closest center.  The \kmedian problem is similar to \kmeans except that the distances are not squared in the definition of the clustering cost.\footnote{For the formal definitions of \kmeans and \kmedian, see Definition~\ref{def:vanilla-clustering} and the paragraph following it.} Both problems are NP-hard, and there is a large body of work dedicated to determining the best possible approximation ratios achievable in polynomial time (e.g.~\cite{Bartal96,CharikarCGG98,CharikarGTS02,JainV01,JainMS02,AryaGKMMP04,KanungoMNPSW04,ArthurV07,LiS16,AwasthiCKS15,ByrkaPRST17,LeeSW17,AhmadianNSW17,Cohen-AddadS19}), although the answers remain elusive.  We consider approximation algorithms for both these problems in the DP model, where a $(w, t)$-approximation algorithm outputs a cluster whose cost is at most the sum of $t$ and $w$ times the optimum; we refer to $w$ as the \emph{approximation ratio} and $t$ as the \emph{additive error}. It is important that $t$ is small since without this constraint, the problem could become trivial.  (Note also that without privacy constraints, approximation algorithms typically work with $t = 0$.)

We also study two even more basic clustering primitives, \densestball and \onecluster, in the DP model. These underlie several of our results.
\begin{definition}[\densestball]\label{def:densest_ball_intro}
Given $r > 0$,  a $(w, t)$-approximation for the
\emph{\densestball} problem is a ball $B$ of radius $w \cdot r$ such that whenever there is a ball of radius $r$ that contains at least $T$ input points, $B$ contains at least $T - t$ input points.
\end{definition}
This problem is NP-hard for $w=1$~\cite{Ben-DavidS00, Ben-DavidES02,Shenmaier15}.  Moreover, approximating the largest number of points within any ball of radius of $r$ and up some constant factor is also NP-hard~\cite{Ben-DavidES02}. On the other hand, several polynomial-time approximation algorithms achieving $(1 + \alpha, 0)$-approximation for any $\alpha > 0$ are known~\cite{agarwal2005geometric,shenmaier2013problem,Ben-DavidES02}.

\densestball is a useful primitive since a DP algorithm for it allows one to ``peel off'' one important cluster at a time.
This approach has played a pivotal role in a recent fruitful line of research that obtains DP approximation algorithms for \kmeans and \kmedian~\cite{StemmerK18,Stemmer20}. 

The \onecluster problem studied, e.g., in~\cite{NissimSV16,NissimS18} is the ``inverse'' of \densestball, where instead of the radius $r$, the target number $T$ of points inside the ball is given. Without DP constraints, the computational complexities of these two problems are essentially the same (up to logarithmic factors in the number of points and the input universe size), as we may use binary search on $r$ to convert a \densestball algorithm into one for \onecluster, and vice versa.\footnote{To reduce from \onecluster to \densestball, one can binary-search on the target radius.
In this case, the number of iterations needed for the binary search depends logarithmically on the ratio between the maximum possible distance between two input points and the minimum possible distance between two (distinct) input points. In the other direction (i.e., reducing from \densestball to \onecluster), one can binary-search on the number of points inside the optimal ball, and here the number of iterations will be logarithmic in the number of input points.} These two problems are generalizations of the \minimumenclosingball (aka \minimumboundingsphere) problem, which is well-studied in statistics, operations research, and computational geometry.  

As we elaborate below, \densestball and \onecluster are also related to other well-studied problems, such as learning halfspaces with a margin and the Sample and Aggregate framework ~\cite{nissim2007smooth}.

\paragraph{Main Results.}
A common highlight of most of our results is that for the problems we study, our algorithms run in polynomial time (in $n$ and $d$) and obtain tight approximation ratios.  Previous work sacrificed one of these, i.e., either ran in polynomial time but produced sub-optimal approximation ratios or took time exponential in $d$ to guarantee tight approximation ratios.

\begin{table}[t]
    \centering
    \begin{tabular}{r|c|c|c}
        \hline
        Reference & 
        $w$ & 
        $t$ & 
        Running time \\
        \hline
        \cite{NissimSV16}, $\delta > 0$ & 
        $O(\sqrt{\log n})$ &
        $O(\frac{\sqrt{d}}{\eps} \cdot \poly\log\frac{1}{\delta})$ & 
        $\poly(n, d, \log \frac{1}{r})$\\
        \cite{NissimS18}, $\delta > 0$ & 
        $O(1)$ &
        $\tilde{O}_{\eps, \delta}(\frac{\sqrt{d}}{\eps} \cdot n^{0.1} \cdot \poly\log\frac{1}{\delta})$ & 
        $\poly(n, d, \log \frac{1}{r})$\\
        Exp. Mech.~\cite{McSherryT07}, $\delta = 0$ & $1 + \alpha$ & $O_{\alpha}(\frac{d}{\eps} \cdot \log \frac{1}{r})$ & $O\left(\left(\frac{1}{\alpha r}\right)^d\right)$\\
        \hdashline
        Theorem~\ref{thm:1-cluster-given-radius-intro}, $\delta = 0$ & $1 + \alpha$ & $O_{\alpha}\left(\frac{d}{\epsilon}\cdot\log\left(\frac{d}{r}\right)\right)$ & $(nd)^{O_{\alpha}(1)} \poly\log \frac{1}{r}$\\
        Theorem~\ref{thm:1-cluster-given-radius-intro}, $\delta > 0$ & $1 + \alpha$ & $O_{\alpha}\left(\frac{\sqrt{d}}{\epsilon}\cdot\poly\log\left(\frac{nd}{\eps \delta}\right)\right)$ & $(nd)^{O_{\alpha}(1)} \poly\log \frac{1}{r}$\\        
        \hline
    \end{tabular}
    \caption{Comparison of $(\epsilon,\delta)$-DP algorithms for $(w, t)$-approximations for \densestball given $r$.}
    \label{fig:comparison_densest_ball}
\end{table}

(i) For \densestball, we obtain for any $\alpha > 0$,
a pure-DP $(1+\alpha, \tilde{O}_{\alpha}(\frac{d}{\epsilon}))$-approximation algorithm and 
an approximate-DP $(1+\alpha, \tilde{O}_{\alpha}(\frac{\sqrt{d}}{\epsilon}))$-approximation algorithm.%
\footnote{The notation $\tilde{O}_{x}(\cdot)$ ignores factors involving $x$ and factors polylogarithmic in $n, d, \epsilon, \delta$.}
The runtime of our algorithms is $\poly(nd)$.  Table~\ref{fig:comparison_densest_ball} shows our results compared to previous work. 
To solve \densestball with DP, we introduce and solve two problems: efficient list-decodable covers and private sparse selection. These could be of independent interest.

(ii) For \onecluster, informally, we obtain for any $\alpha > 0$, a pure-DP $(1+\alpha, \tilde{O}_{\alpha}(\frac{d}{\epsilon}))$-approximation algorithm running in time $(nd)^{O_{\alpha}(1)}$.  We also obtain an approximate-DP $(1+\alpha, \tilde{O}_{\alpha}(\frac{\sqrt{d}}{\epsilon}))$-approximation algorithm running in time $(nd)^{O_{\alpha}(1)}$.  The latter is an improvement over the previous work of~\cite{NissimS18} who obtain an $(\tilde{O}(1 + \frac{1}{\phi}), \tilde{O}_{\epsilon, \delta} (n^{\phi} \sqrt{d}))$-approximation.  In particular, they do not get an approximation ratio $w$ arbitrarily close to $1$.  Even worse, the exponent $\phi$ in the additive error $t$ can be made close to $0$ only at the expense of blowing up $w$.  Our algorithm for \onecluster follows by applying our DP algorithm for \densestball, along with ``DP binary search'' similarly to~\cite{NissimSV16}.

(iii) For \kmeans and \kmedian, we prove that we can take any (not necessarily private) approximation algorithm and convert it to a DP clustering algorithm with essentially the same approximation ratio, and with small additive error and small increase in runtime.  More precisely, given any $w^*$-approximation algorithm for \kmeans (resp., \kmedian), we obtain a pure-DP $(w^*(1+\alpha), \tilde{O}_{\alpha}(\frac{kd + k^{O_{\alpha}(1)}}{\epsilon}))$-approximation algorithm and an approximate-DP $(w^*(1+\alpha), \tilde{O}_{\alpha}(\frac{k\sqrt{d} + k^{O_{\alpha}(1)}}{\epsilon}))$-approximation algorithm for \kmeans (resp., \kmedian).
(The current best known non-private approximation algorithms achieve $w^*=6.358$ for \kmeans and  $w^* = 2.633$ for \kmedian~\cite{AhmadianNSW17}.) Our algorithms run in time polynomial in $n$, $d$ and $k$, and improve on those of~\cite{NissimS18} who only obtained some large constant factor approximation ratio independent of $w^*$.

It is known that $w^*$ can be made arbitrarily close to $1$ for (non-private) \kmeans and \kmedian if we allow fixed parameter tractable%
\footnote{
Recall that an algorithm is said to be \emph{fixed parameter tractable} in $k$ if its running time is of the form $f(k) \cdot \poly(n)$ for some function $f$, and where $n$ is the input size~\cite{DowneyF13}.
} algorithms~\cite{badoiu2002approximate,de2003approximation,kumar2004simple,kumar2005linear,chen2006k,feldman2007ptas,feldman2011unified}.
Using this, we get a pure-DP $(1+\alpha, \tilde{O}_{\alpha}(\frac{kd+k^2}{\epsilon}))$-approximation, and an approximate-DP $(1+\alpha, \tilde{O}_{\alpha}(\frac{k\sqrt{d}+k^2}{\epsilon}))$-approximation.  The algorithms run in time 
$2^{O_{\alpha}(k \log k)} \poly(nd)$.  

\paragraph{Overview of the Framework.}
All of our DP clustering algorithms follow this three-step recipe:

(i) Dimensionality reduction: we randomly project the input points to a low dimension.

(ii) Cluster(s) identification in low dimension: we devise a DP clustering algorithm in the low-dimensional space for the problem of interest, which results in cluster(s) of input points.

(iii) Cluster center finding in original dimension:  for each cluster found in step (ii), we privately compute a center in the original high-dimensional space minimizing the desired cost.

\paragraph{Applications.}

Our DP algorithms for \onecluster imply better algorithms for the Sample and Aggregate framework of~\cite{nissim2007smooth}.
Using a reduction from \onecluster due to~\cite{NissimSV16}, we get an algorithm that privately outputs a stable point with a radius not larger than the optimal radius than by a $1+\alpha$ factor, where $\alpha$ is an arbitrary positive constant. For more context, please see Section~\ref{sec:app_sample_and_aggregate}.

Moreover, by combining our DP algorithm for \densestball with a reduction of~\cite{Ben-DavidS00, Ben-DavidES02}, we obtain an efficient DP algorithm for agnostic learning of halfspaces with a constant margin.  Note that this result was already known from the work of Nguyen et al.~\cite{le2020efficient}; we simply give an alternative proof that employs our \densestball algorithm as a blackbox. For more on this and related work, please see Section~\ref{sec:app_agnostic_halfspaces}.

Finally, we provide an application of one of our observations outside of DP.  In particular, we give a faster (randomized) history-independent data structure for dynamically maintaining \cp in a moderate number of dimensions. This in turn implies a faster \emph{quantum} algorithm for \cp in a similar setting of parameters.

\paragraph{Organization.}

Section~\ref{sec:prelim} contains background on DP and clustering. Our algorithms for \densestball are presented in Section~\ref{sec:densest-ball-main-body}, and those for \kmeans and \kmedian are given in Section~\ref{sec:kmeans-kmedian-main-body}. Applications to \onecluster, Sample and Aggregate, agnostic learning of halfspaces with a margin, and  \cp are described in Section~\ref{sec:app_main_body}. We conclude with some open questions in Section~\ref{sec:conc_open_questions}. 
All missing proofs are deferred to the appendix.
\section{Preliminaries}
\label{sec:prelim}

\paragraph{Notation.}
For a finite universe $\cU$ and $\ell \in \N$, we let $\binom{\cU}{\leq \ell}$ be the set of all subsets of $\cU$ of size at most $\ell$.  Let $[n] = \{1, \ldots, n\}$. For $v \in \mathbb{R}^d$ and $r \in \R_{\geq 0}$, let $\cB(v, r)$ be the ball of radius $r$ centered at $v$. For $\kappa \in \R_{\geq 0}$, denote by $\mathbb{B}_{\kappa}^d$ the quantized $d$-dimensional unit ball with discretization step $\kappa$.\footnote{Whenever we assume that the inputs lie in $\mathbb{B}_{\kappa}^d$, our results will hold for any discretization as long as the minimum distance between two points as at least $\kappa$.} We throughout consider closed balls.

\paragraph{Differential Privacy (DP).}
We next recall the definition and basic properties of DP. Datasets $\bX$ and $\bX'$ are said to be neighbors if $\bX'$ results from removing or adding a single data point from $\bX$.%
\footnote{
This definition of DP is sometimes referred to as \emph{removal DP}.  Some works in the field consider the alternative notion of \emph{replacement DP} where two datasets are considered neighbors if one results from modifying (instead of removing) a single data point of the other. We remark that $(\eps, \delta)$-removal DP implies $(2\eps, 2\delta)$-replacement DP. Thus, our results also hold (with the same asymptotic bounds) for the replacement DP notion.}
\begin{definition}[Differential Privacy (DP)~\cite{dwork2006calibrating,DworkKMMN06}]
Let $\epsilon, \delta \in \R_{\geq 0}$ and $n \in \mathbb{N}$. A randomized algorithm $\cA$ taking as input a dataset is said to be \emph{$(\epsilon, \delta)$-differentially private} if for any two neighboring datasets $\bX$ and $\bX'$, and for any subset $S$ of outputs of $\cA$, it holds that $\Pr[\cA(\bX) \in S] \le e^{\epsilon} \cdot \Pr[\cA(\bX') \in S] + \delta$. If $\delta = 0$, then $\cA$ is said to be \emph{$\epsilon$-differentially private}.
\end{definition}
%
We assume throughout that $0 < \epsilon \leq O(1)$, $0 < \alpha < 1$, and when used, $\delta > 0$.
%

\paragraph{Clustering.}
Since many of the proof components are common to the analyses of \kmeans and \kmedian, we will use the following notion, which generalizes both problems.

\begin{definition}[$(k, p)$-Clustering] \label{def:vanilla-clustering}
Given $k \in \mathbb{N}$ and a multiset $\bX = \{x_1, \dots, x_n \}$ of points in the unit ball, we wish to find $k$ centers $c_1, \dots, c_k \in \R^d$ minimizing
$\cost^p_{\bX}(c_1, \dots, c_k) := \sum_{i \in [n]} \left(\min_{j \in [k]} \|x_i - c_j\|\right)^p$.
Let $\opt_{\bX}^{p, k}$ denote\footnote{
The cost is sometimes defined as the $(1/p)$th power.
} $\min_{c_1, \dots, c_k \in \R^d} \cost^p_{\bX}(c_1, \dots, c_k)$. 
A \emph{$(w, t)$-approximation algorithm} for \emph{$(k, p)$-Clustering} outputs $c_1, \dots, c_k$ such that $\cost^p_{\bX}(c_1, \dots, c_k) \leq w \cdot \opt_{\bX}^{p, k} + t$.
When $\bX$, $p$, and $k$ are unambiguous, we drop the subscripts and superscripts.
\end{definition}

Note that $(k, 1)$-Clustering and $(k, 2)$-Clustering correspond to \kmedian and \kmeans  respectively.  It will also be useful to consider the \emph{Discrete $(k, p)$-Clustering} problem, which is the same as in Definition~\ref{def:vanilla-clustering}, except that we are given a set $\cC$ of ``candidate centers'' and we can only choose the centers from $\cC$.  We use $\opt_{\bX}^{p, k}(\cC)$ to denote $\min_{c_{i_1}, \dots, c_{i_k} \in \cC} \cost^p_{\bX}(c_{i_1}, \dots, c_{i_k})$.

\paragraph{Centroid Sets and Coresets.}

A centroid set is a set of candidate centers such that the optimum does not increase by much even when we restrict the centers to belong to this set.

\begin{definition}[Centroid Set~\cite{Matousek00}] \label{def:centroid-set}
For $w, t > 0, p \geq 1$, $k, d \in \N$, a set $\cC \subseteq \R^d$ is a \emph{$(p, k, w, t)$-centroid set} of $\bX \subseteq \R^d$ if $\opt^{p, k}_{\bX}(\cC) \leq w \cdot \opt^{p, k}_{\bX} + t$.  When $k$ and $p$ are unambiguous, we simply say that $\cC$ is a \emph{$(w, t)$-centroid set} of $\bX$.
\end{definition}

A coreset is a (multi)set of points such that, for any possible $k$ centers, the cost of $(k, p)$-Clustering of the original set is roughly the same as that of the coreset (e.g.,~\cite{HarPeledM04}).

\begin{definition}[Coreset] \label{def:coreset}
For $\gamma, t > 0, p \geq 1, k \in \N$, a set $\bX'$ is a \emph{$(p, k, \gamma, t)$-coreset} of $\bX \subseteq \R^d$ if for every $\cC = \{c_1, \dots, c_k\} \subseteq \R^d$, we have $(1 - \gamma) \cdot \cost_{\bX}^p(\cC) - t \leq \cost_{\bX'}(\cC) \leq (1 + \gamma) \cdot \cost_{\bX}^p(\cC) + t$.
When $k$ and $p$ are unambiguous, we simply say that $\bX'$ is a \emph{$(\gamma, t)$-coreset} of $\bX$.
\end{definition}


\section{Private \densestball}
\label{sec:densest-ball-main-body}

In this section, we obtain pure-DP and approximate-DP algorithms for \densestball.

\begin{theorem} \label{thm:1-cluster-given-radius-intro}
There is an $\eps$-DP (resp., $(\eps, \delta)$-DP) algorithm that runs in time $(nd)^{O_{\alpha}(1)} \cdot \poly\log(1/r)$ and, w.p.\footnote{In the main body of the paper, we state error bounds that hold with probability $0.99$. In the appendix, we extend all our bounds to hold with probability $1 - \beta$ for any $\beta > 0$, with a mild dependency on $\beta$ in the error.} $0.99$, returns a $\left(1 + \alpha, O_{\alpha}\left(\frac{d}{\epsilon} \cdot \log\left(\frac{d}{r}\right)\right)\right)$-approximation 
(resp., $\left(1 + \alpha, O_{\alpha}\left(\frac{\sqrt{d}}{\epsilon} \cdot \poly\log\left(\frac{nd}{\eps \delta}\right)\right)\right)$-approximation)
for \densestball.
\end{theorem}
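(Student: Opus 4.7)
The plan is to follow the three-step framework described in the introduction. First, I apply a Johnson--Lindenstrauss projection $\pi : \R^d \to \R^{\tilde d}$ with $\tilde d = \Theta_\alpha(\log n)$, so that with probability at least $0.999$ every pairwise distance between input points is preserved up to factor $1 \pm \alpha/20$. Since $\pi$ is drawn from a public distribution, this step has no privacy cost. In particular, if $\cB(c^*, r)$ contains $T$ input points, then $\cB(\pi(c^*), (1+\alpha/20)r)$ contains all $T$ projected points.

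Second, in the low-dimensional image I construct an $(\alpha r/10)$-cover $\cC'$ of the unit ball, of size $|\cC'| = n^{O_\alpha(1)}$. Some $\tilde c \in \cC'$ lies within $\alpha r / 10$ of $\pi(c^*)$, so $\cB(\tilde c, (1+\alpha/5)r)$ captures all $T$ projected points. I then run a DP selection over $\cC'$ with the count-score $\score(\tilde c) = |\{ x \in \bX : \|\pi(x) - \tilde c\| \le (1+\alpha/5)r \}|$, which has sensitivity $1$: the exponential mechanism for the pure-DP case, incurring error $O_\alpha(\log n / \eps)$, and a Gaussian-noise-based top-$1$ selection or the ``private sparse selection'' subroutine for the approximate-DP case. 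This yields a low-dimensional center $\tilde c$ whose associated low-dim ball contains at least $T - O_\alpha(\log n / \eps)$ of the projected input points.

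Third, treating $\tilde c$ as public, I refine to a center in $\R^d$ via a second DP selection on a $(1+\alpha/2,\,0)$-centroid set restricted to the part of $\R^d$ whose image under $\pi$ lies close to $\tilde c$. A fine $\Theta(\alpha r)$-cover of a radius-$O(r)$ ball around the mean of the activated input points has nominal size $(O(1)/\alpha)^d$, but only $\poly(n,d)$ members are \emph{active}, i.e.\ close to some input point. The paper's efficient list-decodable covers together with private sparse selection let me enumerate and privately select from the active set without paying the $2^{\Theta(d)}$ factor. The additive error contributed here is $O_\alpha(d \log(d/r) / \eps)$ in the pure-DP case and $O_\alpha(\sqrt d \poly\log(nd/\eps\delta)/\eps)$ in the approximate-DP case (with Gaussian-noise analogues). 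Composing the privacy losses of the two selection steps, rescaling $\eps$ and $\delta$ by constants, and union-bounding the failure events of the JL projection and the selections yields the claimed guarantees.

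The main obstacle is precisely this high-dimensional refinement: making it polynomial time while retaining additive error that matches the naive-but-exponential-time exponential-mechanism benchmark of $O_\alpha(d\log(1/r)/\eps)$. A direct application of the exponential mechanism to a $(1+\alpha)$-centroid set in $\R^d$ is exponential in $d$. The two technical ingredients address this: list-decodable covers let one quickly enumerate, for each input point, the cover centers near it, and private sparse selection runs an exponential-mechanism-like procedure in time proportional to the number of active centers rather than to the full cover size, while preserving the $O(\log M / \eps)$-type accuracy. A secondary subtlety is verifying that the combined effect of the JL distortion, the cover granularity, and the per-coordinate noise still leaves the output ball of radius $(1+\alpha)r$ capable of containing all but $t$ of the original $T$ optimal points.
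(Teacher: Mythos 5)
Your step (iii) --- lifting the low-dimensional center back to $\R^d$ --- does not work as proposed, and it is exactly where the theorem's difficulty lies. You rely on a fine $\Theta(\alpha r)$-cover of a radius-$O(r)$ ball in the ambient dimension $d$, asserting that ``only $\poly(n,d)$ members are active.'' This is false: by the list-size bound of Lemma~\ref{lem:cover-main-main-body}, the number of cover points within distance $(1+\alpha)r$ of even a \emph{single} input point is $(1+1/\alpha)^{O(d)}$, and a matching packing lower bound shows it really is exponential in $d$. Sparse selection runs in time $\poly(n,\ell,\log|C|)$, so with list size $\ell$ exponential in $d$ the whole step is exponential in $d$; list-decodability does not help because the lists themselves are exponentially long in the ambient dimension. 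The paper's actual route through step (iii) (Lemma~\ref{lem:1-center-pure}, Algorithm~\ref{alg:1-center}) is a second, structurally different dimensionality reduction: randomly rotate, partition the $d$ coordinates into roughly $d \big/ O(\log(nd)/\alpha^2)$ blocks, solve a fresh low-dimensional \densestball instance independently on each block (each block now has dimension $O(\log(nd)/\alpha^2)$, so the low-dimensional machinery applies), and concatenate the block-wise centers. Kirszbraun's extension theorem (Theorem~\ref{thm:kirszbraun}) is what guarantees the cluster identified in the projected space can be realized as a ball of radius about $(1+\alpha)r$ in $\R^d$, so every block-wise \densestball call is well-posed. This block decomposition is the key structural idea your proposal is missing; without it, step (iii) has no polynomial-time implementation.

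There is also a smaller error in step (ii): an $(\alpha r/10)$-cover of the unit ball in dimension $\tilde d = \Theta_\alpha(\log n)$ has size $\Theta(1/(\alpha r))^{\tilde d}$, not $n^{O_\alpha(1)}$; for $r = o(1)$ (and downstream applications take $r$ as small as $1/n$) this is super-polynomial. So the vanilla exponential mechanism over $\cC'$ is not polynomial time even in the projected dimension, and your stated additive error $O_\alpha(\log n/\eps)$ omits a $\log(1/r)$ factor coming from $\log|\cC'| = \Theta(\tilde d\log(1/r))$. The list-decodable cover plus sparse selection machinery that you defer to step (iii) is in fact needed already here --- that is exactly what Theorem~\ref{thm:1-cluster-given-radius-main-body} supplies.
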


To prove this, we follow the three-step recipe from Section~\ref{sec:intro}.  Using the Johnson--Lindenstrauss (JL) lemma~\cite{JL} together with the Kirszbraun Theorem~\cite{kirszbraun1934} on extensions of Lipschitz functions, we project the input to $O((\log n) / \alpha^2)$ dimensions in step (i). It turns out that step (iii) is similar to (ii), as we can repeatedly apply a low-dimensional \densestball algorithm to find a center in the high-dimensional space. Therefore, the bulk of our technical work is in carrying out step (ii), i.e., finding an efficient, DP algorithm for \densestball in $O((\log n)/\alpha^2)$ dimensions.  We focus on this part in the rest of this section; the full proof with the rest of the arguments can be found in
Appendix~\ref{subsec:dim-red-densest-ball}.

\subsection{A Private Algorithm in Low Dimensions}
\label{sec:densest-ball-low-dim-main-body}

Having reduced the dimension to $d' = O((\log n) / \alpha^2)$ in step (i), we can afford an algorithm that runs in time $\exp(O_{\alpha}(d')) = n^{O_{\alpha}(1)}$. With this in mind, our algorithms in dimension $d'$ have the following guarantees:

\begin{theorem} \label{thm:1-cluster-given-radius-main-body}
There is an $\eps$-DP (resp., $(\eps, \delta)$-DP) algorithm that runs in time $(1 + 1/\alpha)^{O(d')} \poly\log(1/r)$ and, w.p. $0.99$, returns a $\left(1 + \alpha, O_{\alpha}\left(\frac{d'}{\epsilon}\log\left(\frac{1}{r}\right)\right)\right)$-approximation
(resp., $\left(1 + \alpha, O_{\alpha}\left(\frac{d'}{\epsilon}\log\left(\frac{n}{\eps \delta}\right)\right)\right)$-approximation) 
for \densestball.
\end{theorem}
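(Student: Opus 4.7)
The plan is to reduce \densestball to a private selection problem over an $\alpha r$-cover of the unit ball. I will build a cover $N$ of the $d'$-dimensional unit ball with $|N| = (1 + 2/(\alpha r))^{O(d')}$ by a standard grid/greedy construction, and for each $y \in N$ define the $1$-sensitive score $\score_\bX(y) := |\{i : \|x_i - y\| \le (1+\alpha) r\}|$. If the optimal radius-$r$ ball $\cB(c^*, r)$ contains $T$ input points, then the cover point $y^*$ at distance at most $\alpha r$ from $c^*$ satisfies $\cB(c^*, r) \subseteq \cB(y^*, (1+\alpha) r)$ by the triangle inequality, so $\max_{y \in N} \score_\bX(y) \ge T$. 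The algorithm will output the ball $\cB(\hat y, (1+\alpha) r)$ for a privately selected cover point $\hat y$.

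For the pure-DP case, I apply the exponential mechanism to $(N, \score_\bX)$ with parameter $\eps$. The standard utility analysis returns, with probability at least $0.99$, a cover point whose score is within $O(\log|N|/\eps) = O_\alpha(d' \log(1/r)/\eps)$ of the maximum, matching the claimed bound. The running time is dominated by enumerating $N$ and computing the scores, which fits within the $(1+1/\alpha)^{O(d')} \poly\log(1/r)$ budget.

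For the approximate-DP case the main obstacle is to avoid the $\log|N| \sim d' \log(1/r)$ overhead of the exponential mechanism. The key structural observation is that $\score_\bX$ is \emph{sparse}: each $x_i$ contributes only to cover points in $\cB(x_i, (1+\alpha) r) \cap N$, a set of size $(O(1/\alpha))^{d'}$, so at most $n \cdot (O(1/\alpha))^{d'}$ coordinates are nonzero. The plan is to invoke an $(\eps, \delta)$-DP sparse-selection primitive (a stable-histogram style procedure, flagged in the introduction as one of our tools): release noisy counts only for buckets whose true count exceeds a $\Theta(\log(n/(\eps\delta))/\eps)$ threshold, and then report a noisy argmax from the surviving short list. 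Because only $n \cdot (O(1/\alpha))^{d'}$ buckets can survive, the selection error and the threshold loss together give the claimed $O_\alpha(\frac{d'}{\eps} \log(n/(\eps\delta)))$ additive error, with the $(1/\alpha)^{O(d')}$ factors absorbed into $O_\alpha$. Correctness follows because $y^*$ has true score $\ge T$ and thus survives the threshold whenever $T$ exceeds the target bound; composition of the (non-private) cover enumeration, the histogram release, and the final noisy argmax preserves $(\eps, \delta)$-DP.
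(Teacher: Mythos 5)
Your approximate-DP route and your identification of the sparse structure of the score function are in the same spirit as the paper, but the proposal has two concrete gaps.

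\paragraph{The pure-DP running time claim is wrong.} You build a cover $N$ with $|N| = (1 + 2/(\alpha r))^{O(d')}$ and then say the running time of the plain exponential mechanism ``is dominated by enumerating $N$ and computing the scores, which fits within the $(1+1/\alpha)^{O(d')}\poly\log(1/r)$ budget.'' It does not: $(1+2/(\alpha r))^{O(d')} = \Theta_\alpha(1/r)^{O(d')}$, and for $r = o(1)$ (indeed, the paper's $k$-means/$k$-median pipeline takes $r$ as small as $1/n$) this is $n^{\Omega(d')}$, vastly exceeding $(1+1/\alpha)^{O(d')}\poly\log(1/r)$. This is precisely the barrier Section 3.1 of the paper calls out for the Exponential Mechanism, and avoiding it is the entire technical content of the pure-DP result. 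The paper's fix is a two-case sampler: with one probability sample from the small ``active'' set $S_1 \cup \cdots \cup S_n$ (all cover points with nonzero score), and with the complementary probability sample approximately uniformly from the whole cover $C$ using a near-uniform sampler for the lattice cover (Lemma~\ref{lem:cover-sampling}). Without this second component you cannot reproduce the distribution of the Exponential Mechanism over all of $C$ in the claimed time, so the pure-DP part of your proof does not go through as written.

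\paragraph{You are implicitly assuming the list-decodable cover you haven't constructed.} Both cases rely on being able to enumerate, for each $x_i$, the set $\cB(x_i,(1+\alpha)r)\cap N$ and on that set having size $(O(1/\alpha))^{d'}$. Neither ``grid'' nor ``greedy'' constructions deliver both. An axis-aligned grid that achieves $\alpha r$-covering has spacing $\alpha r/\sqrt{d'}$, so its local list has size $(\sqrt{d'}(1+1/\alpha))^{O(d')} = d'^{O(d')}(1+1/\alpha)^{O(d')}$, which blows your running time (and, mildly, your error) past the claimed bounds once $d' = \Theta(\log n / \alpha^2)$. A greedy $\epsilon$-net achieves the right cardinality but gives no local-enumeration algorithm. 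The paper resolves this by using Rogers-style almost-perfect lattices (Lemma~\ref{lem:cover-main}), which are simultaneously $\Delta$-covers and $\Omega(\Delta)$-packings (hence $(1+\Delta'/\Delta)^{O(d')}$ list sizes by volume) and, via CVP solvers and Voronoi-relevant-vector expansion, efficiently list-decodable. Your proposal treats this cover as available for free; constructing it is where most of the work lies. Once that cover is in hand, your stable-histogram variant for approximate-DP is a reasonable alternative to the paper's ``Exponential Mechanism with a $\perp$ candidate,'' and would give comparable error, but the pure-DP case still needs the near-uniform lattice sampler.
\par
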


As the algorithms are allowed to run in time exponential in $d'$, Theorem~\ref{thm:1-cluster-given-radius-main-body} might seem easy to devise at first glance. Unfortunately, even the Exponential Mechanism~\cite{McSherryT07}, which is the only known algorithm achieving approximation ratio arbitrarily close to $1$, still takes $\Theta_{\alpha}(1/r)^{d'}$ time, which is $\exp(\omega(d'))$ for $r = o(1)$. (In fact, in applications to \kmeans and \kmedian, we set $r$ to be as small as $1/n$, which would result in a running time of $n^{\Omega(\log n)}$.) To understand, and eventually overcome this barrier, we recall the implementation of the Exponential Mechanism for \densestball:
\begin{itemize}
    \item Consider any $(\alpha r)$-cover%
    \footnote{A \emph{$\zeta$-cover} $C$ of $\cB(0, 1)$ is a set of points such that for any $y \in \cB(0, 1)$, there is $c \in C$ with $\|c - y\| \leq \zeta$.}  $C$ of the unit ball $\cB(0, 1)$.
    \item For every $c \in C$, let $\score[c]$ be the number of input points lying inside $\cB(c, (1 + \alpha)r)$.
    \item Output a point $c^* \in C$ with probability $\frac{e^{(\eps/2) \cdot \score[c^*]}}{\sum_{c \in C} e^{(\eps/2) \cdot \score[c]}}$.
\end{itemize}
By the generic analysis of the Exponential Mechanism~\cite{McSherryT07}, this algorithm is $\eps$-DP and achieves a $\left(1 + \alpha, O_{\alpha}\left(\frac{d'}{\epsilon}\log\left(\frac{1}{r}\right)\right)\right)$-approximation as in Theorem~\ref{thm:1-cluster-given-radius-main-body}. The existence of an $(\alpha r)$-cover of size $\Theta\left(\frac{1}{\alpha r}\right)^{d'}$ is well-known and directly implies the $\Theta_{\alpha}(1/r)^{d'}$ running time stated above.


Our main technical contribution is to implement the Exponential Mechanism in $\Theta_{\alpha}(1)^{d'} \poly\log\frac{1}{r}$ time instead of $\Theta_{\alpha}(1/r)^{d'}$.
To elaborate on our approach, for each input point $x_i$, we define $S_i$ to be $C \cap \cB(x_i, (1 + \alpha)r)$, i.e., the set of all points in the cover $C$ within distance $(1 + \alpha)r$ of $x_i$. Note that the score assigned by the Exponential Mechanism is $\score[c] = \{i \in [n] \mid c \in S_i\}$, and our goal is to privately select $c^* \in C$ with as large a score as possible. Two main questions remain: (1) How do we find the $S_i$'s efficiently? (2) Given the $S_i$'s, how do we sample $c^*$? We address these in the following two subsections, respectively. 

\subsubsection{Efficiently List-Decodable Covers}
\label{subsec:list-decodable-main-body}

In this section, we discuss how to find $S_i$ in time $(1 + 1/\alpha)^{O(d')}$. 
Motivated by works on error-correcting codes (see, e.g.,~\cite{Guruswami06}), we introduce the notion of \emph{list-decodability} for covers:

\begin{definition}[List-Decodable Cover]
A $\Delta$-cover is \emph{list-decodable} at distance $\Delta' \geq \Delta$ with list size $\ell$ if for any $x \in \cB(0, 1)$, we have that $|\{c \in C \mid \|c - x\| \leq \Delta'\}| \leq \ell$. Moreover, the cover is \emph{efficiently list-decodable} if there is an algorithm that returns such a list in time $\poly(\ell, d', \log(1/\Delta))$.
\end{definition}

We prove the existence of efficiently list-decodable covers with the following parameters:

\begin{lemma} \label{lem:cover-main-main-body}
For every $0 < \Delta < 1$, there exists a $\Delta$-cover $C_{\Delta}$ that is efficiently list-decodable at any distance $\Delta' \geq \Delta$ with list size $(1 + \Delta' / \Delta)^{O(d')}$.
\end{lemma}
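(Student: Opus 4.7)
The plan is to produce $C_\Delta$ as a hierarchical (cover-tree-style) packing and to decode it top-down so that the work scales with the list size, never with $|C_\Delta|$. Set $L = \lceil \log_2(2/\Delta)\rceil$ and scales $\Delta_i := 2^{-i}$ for $i=0,\dots,L$ (so $\Delta_L \le \Delta$). Inductively build nested sets $C_0 \subseteq C_1 \subseteq \cdots \subseteq C_L =: C_\Delta$, where each $C_i$ is a maximal $\Delta_i$-separated subset of $\cB(0,1)$. Being maximal, $C_i$ is automatically a $\Delta_i$-cover, and each $c \in C_i \setminus C_{i-1}$ can be assigned a parent $p(c) \in C_{i-1}$ with $\|c-p(c)\| \le \Delta_{i-1}$. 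In particular $C_\Delta$ is a $\Delta$-cover. The list size bound is then a one-line volume/packing argument: the balls $\cB(c,\Delta/2)$ for $c \in C_\Delta$ are pairwise disjoint, and those with $c \in \cB(x,\Delta')$ lie inside $\cB(x,\Delta'+\Delta/2)$, so $|C_\Delta \cap \cB(x,\Delta')| \le ((\Delta'+\Delta/2)/(\Delta/2))^{d'} = (1+2\Delta'/\Delta)^{d'}$, as required.

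For the decoding algorithm, given $x$ and $\Delta'$ I would maintain at level $i$ a set $A_i \subseteq C_i$ that contains every level-$i$ ancestor of some $c \in C_\Delta \cap \cB(x,\Delta')$. Since the distance from a level-$L$ descendant up to its level-$i$ ancestor is at most $\sum_{j>i}\Delta_{j-1} \le 2\Delta_i$, we have $A_i \subseteq \cB(x,\Delta'+2\Delta_i)$, and another application of the packing bound inside this ball gives $|A_i| \le (1+2(\Delta'+2\Delta_i)/\Delta_i)^{d'} = (5+2\Delta'/\Delta_i)^{d'}$. To pass from $A_i$ to $A_{i+1}$, enumerate the children of each node $a \in A_i$; since children of $a$ form a $\Delta_{i+1}$-packing inside $\cB(a,\Delta_i)$, a packing bound again shows there are at most $5^{d'}$ of them. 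Total work is $\sum_{i=0}^L |A_i|\cdot 5^{d'}\cdot O(d')$, which is dominated by the final level and yields $(1+\Delta'/\Delta)^{O(d')}\cdot \poly(d',\log(1/\Delta)) = \poly(\ell,d',\log(1/\Delta))$.

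The main obstacle is making the construction local enough that the decoding algorithm never needs to know anything about $C_i$ outside of $A_i$ — $|C_\Delta|$ itself is roughly $(1/\Delta)^{\Theta(d')}$, which we cannot afford to touch. My plan is to define the cover by a canonical rule so that the list of children of any given parent $c \in C_{i-1}$ can be produced in $2^{O(d')}$ time from $c$ alone. Concretely, I would fix once and for all a deterministic procedure that, given a center $c$ and scale $\Delta_i$, returns a greedy maximal $\Delta_i$-packing of $\cB(c,\Delta_{i-1})$, obtained by sweeping a fine auxiliary grid inside that ball under a fixed total order on $\R^{d'}$; the size of this packing is $O(1)^{d'}$ so the sweep is affordable. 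The naive union over parents may contain points that are only $\Delta_i / 2$-separated across different branches, so the technical step is either to enforce cross-branch separation by a deterministic tiebreaking rule (keep only the child whose parent is smaller in a fixed order), or to loosen the packing radius by a constant factor and absorb it into the $O(d')$ exponent in the list-size bound. Verifying that after this pruning $C_i$ remains a (slightly degraded) $\Delta_i$-cover and that children are still computable in $2^{O(d')}$ time from the parent alone is the part I expect to require the most care.
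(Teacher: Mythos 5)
Your overall blueprint (hierarchical nets with top-down decoding and a volume-argument list bound) is a legitimate alternative to the paper's lattice-based construction, and you correctly isolate the crux: the cover has to be \emph{locally} computable, since $|C_\Delta| = (1/\Delta)^{\Theta(d')}$ is far too large to ever touch. But the fix you sketch for that crux does not work as stated, for two separate reasons.

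First, the per-parent computation is not $2^{O(d')}$. You propose to construct the children of $c$ by ``sweeping a fine auxiliary grid inside $\cB(c,\Delta_{i-1})$.'' For a greedy sweep to output a set that is a genuine $\Delta_i$-cover of that ball, the auxiliary grid must itself be an $O(\Delta_i)$-cover, and for a cubical grid of spacing $s$ this forces $s = O(\Delta_i/\sqrt{d'})$ because the cube diagonal is $s\sqrt{d'}$. The number of such grid points in a ball of radius $\Delta_{i-1} = O(\Delta_i)$ is then $\left(O(\sqrt{d'})\right)^{d'} = d'^{\Theta(d')}$, not $2^{O(d')}$. The bound $O(1)^{d'}$ you cite is on the \emph{output} packing, not on the sweep; you never get to exploit it because you must iterate over the whole grid to find the maximal packing. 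This is exactly the $\sqrt{d}$ loss that rules out cube-based discretizations here, and the only known way to avoid it is to use a lattice that is simultaneously a good packing and a good covering --- which is what the paper's proof does via Rogers/Micciancio's $3$-perfect lattices. In other words, the auxiliary structure you would need in order to build your cover efficiently is essentially the object you are trying to construct, so the argument is circular.

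Second, the cross-branch deduplication is not resolved. The rule ``keep only the child whose parent is smaller in a fixed order'' discards one of two conflicting children, but you have not argued that the surviving set remains a cover of the unit ball: the discarded child $c_2$ may have been the only level-$i$ net point covering some region, and nothing forces the surviving $c_1$ (which is merely within $\Delta_i$ of $c_2$) to pick up the slack without degrading the covering radius. Your alternative --- loosening the packing radius and absorbing a constant into the exponent --- runs into the same issue: a naive union of per-parent packings can still have unbounded multiplicity near cell boundaries unless the pruning rule is proved to terminate with both separation and coverage, and that proof is the missing content.

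By contrast, the paper avoids all of this by taking $C_\Delta$ to be the truncation of a single $3$-perfect lattice $\cL$ (so $\mu(\cL) \le \Delta$ and $\lambda(\cL) \ge 2\Delta/3$ simultaneously). Locality then comes for free: given $x$, a $2^{O(d')}$-time CVP solver returns the nearest lattice point, and the rest of the list is enumerated by repeatedly adding Voronoi-relevant vectors; the volume/packing argument you use for the list size works unchanged because the lattice is a $\Omega(\Delta)$-packing. Your volume bound and the idea of decoding by local exploration are fine; what is missing is precisely the efficiently-navigable global structure, and that is the substantive part of the lemma.
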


In this terminology, $S_i$ is exactly the decoded list at distance $\Delta' = (1 + \alpha)r$, where $\Delta = \alpha r$ in our cover $C$. As a result, we obtain the $(1 + 1/\alpha)^{O(r)}$ bound on the time for computing $S_i$, as desired.

The proof of Lemma~\ref{lem:cover-main-main-body} has to include two tasks: (i) bounding the size of the list and (ii) coming up with an efficient decoding algorithm. It turns out that (i) is not too hard: if we ensure that our cover is also an $\Omega(\Delta)$-packing\footnote{A \emph{$\zeta$-packing} is a set of points such that each pairwise distance is at least $\zeta$.}, then a standard volume argument implies the bound in Lemma~\ref{lem:cover-main-main-body}.  However, carrying out (ii) is more challenging. To do so, we turn to lattice-based covers.  A \emph{lattice} is a set of points that can be written as an integer combination of some given basis vectors.
Rogers~\cite{rogers1959lattice} (see also~\cite{Micciancio04}) constructed a family of lattices that are both $\Delta$-covers and $\Omega(\Delta)$-packings. Furthermore, known lattice algorithms for the so-called \emph{Closest Vector Problem}~\cite{MicciancioV13} allow us to find a point $c \in C_{\Delta}$ that is closest to a given point $x$ in time $2^{O(d')}$. With some more work, we can ``expand'' from $c$ to get the entire list in time polynomial in $\ell$. This concludes the outline of our proof of Lemma~\ref{lem:cover-main-main-body}.

\subsubsection{\SparseSelection}
\label{subsec:sparse-selection-main-body}
We now move to (2): given $S_i$'s, how to privately select $c^*$ with large $\score[c^*] = \left|\{i \mid c^* \in S_i\}\right|$?

\begin{figure}[b]
\centering
\vspace*{-4mm}
\begin{minipage}{0.5\textwidth}
\begin{algorithm}[H]
\caption{}\label{alg:densest-ball-main-body}
\begin{algorithmic}[1]
\Procedure{\densestball$(x_1, \dots, x_n; r, \alpha)$}{}
\State $C_{\alpha r} \leftarrow$ $(\alpha r)$-cover from Lemma~\ref{lem:cover-main-main-body}
\For{$i \in [n]$}
\State $S_i \leftarrow$ decoded list of $x$ at distance $(1 + \alpha)r$ with respect to $C_{\alpha r}$
\EndFor
\Return $\SparseSelection(S_1, \dots, S_n)$
\EndProcedure
\end{algorithmic}
\end{algorithm}
\end{minipage}
\end{figure}

We formalize the problem as follows:
\begin{definition}[\SparseSelection] \label{def:sparse-selection-main-body}
For $\ell \in \mathbb{N}$, the input to the \emph{$\ell$-\SparseSelection} problem is a list $S_1, \dots, S_n$ of subsets, where $S_1, \dots, S_n \in \binom{C}{\leq \ell}$ for some finite universe $C$. An algorithm solves $\ell$-\SparseSelection with additive error $t$ if it outputs a universe element $\hat{c} \in C$ such that $\left|\{i \mid \hat{c} \in S_i\}\right| \geq \max_{c \in C} \left|\{i \mid c \in S_i\}\right|  - t$.
\end{definition}

The crux of our \SparseSelection algorithm is the following.  Since $\score[c^*] = 0$ for all $c^* \notin S_1 \cup \cdots \cup S_n$, to implement the Exponential Mechanism it suffices to first randomly select (with appropriate probability) whether we should sample from $S_1 \cup \cdots \cup S_n$ or uniformly from $C$. For the former, the sampling is efficient since $S_1 \cup \cdots \cup S_n$ is small. This gives the following for pure-DP:
\begin{lemma} \label{lem:pure-sparse-selection-main-body}
Suppose there is a $\poly \log |C|$-time algorithm $\cO$ that samples a random element of $C$ where each element of $C$ is output with probability at least $0.1/|C|$. Then, there is a $\poly(n, \ell, \log |C|)$-time $\epsilon$-DP algorithm that, with probability $0.99$, solves $\ell$-\SparseSelection with additive error $O\left(\frac{1}{\eps} \cdot \log |C|\right)$.
\end{lemma}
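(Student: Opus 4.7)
The plan is to implement the Exponential Mechanism (EM) with score function $\score[c] := |\{i \in [n] : c \in S_i\}|$. Adding or removing a single data point changes $\score[c]$ by at most $1$ for every $c$, so the score has sensitivity $1$; hence the EM which outputs $c$ with probability proportional to $\exp(\epsilon\,\score[c]/2)$ is $\epsilon$-DP, and its standard analysis yields $\score[\hat c] \geq \max_c \score[c] - O(\log|C|/\epsilon)$ with probability at least $0.99$. The only remaining task is to draw from this distribution in $\poly(n, \ell, \log|C|)$ time, despite $|C|$ being potentially exponentially larger than $n\ell$.

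The key structural observation is that $U := S_1 \cup \cdots \cup S_n$ has size at most $n\ell$, and $\score[c] = 0$ for every $c \notin U$. Letting $W_U := \sum_{c \in U} \exp(\epsilon\,\score[c]/2)$ and $W := W_U + (|C|-|U|)$, the EM distribution decomposes as a two-step draw: (a) with probability $W_U/W$, output $c \in U$ with probability $\exp(\epsilon\,\score[c]/2)/W_U$; and (b) with probability $(|C|-|U|)/W$, output $c$ drawn uniformly from $C \setminus U$. Step (a) takes $O(n\ell)$ time by enumerating $U$ and doing a direct categorical sample, so the real work is in step (b).

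For step (b), my plan is to use $\cO$ as the proposal in a rejection sampler: draw $c \sim \cO$, reject if $c \in U$, and otherwise accept with a probability calibrated so that the conditional output is uniform on $C \setminus U$. The hypothesis $\Pr_\cO[c] \geq 0.1/|C|$ allows an envelope constant $M = O(|C|/(|C|-|U|))$, so the expected number of trials is $O(M)$. When $|U| \leq |C|/2$, $M = O(1)$ and step (b) costs $\poly\log|C|$ per sample. When $|U| > |C|/2$, we have $|C| = O(n\ell)$, so $O(n\ell\log(n\ell))$ independent draws from $\cO$ exhaust all of $C$ with high probability by the coupon-collector bound, after which EM is executed by brute-force enumeration. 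Either way the total runtime is $\poly(n,\ell,\log|C|)$; privacy follows because the output distribution is exactly that of EM, and the additive-error bound is its textbook guarantee.

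The hardest step will be handling non-uniform $\cO$, since the ideal rejection rule needs the exact density $\Pr_\cO[c]$ whereas the hypothesis supplies only a lower bound. I would resolve this by working with the prior-weighted variant of EM with prior $\pi(c) \propto \Pr_\cO[c]$: this mechanism remains $\epsilon$-DP with sensitivity $1$, its restriction to $C \setminus U$ coincides with $\cO$ restricted to $C \setminus U$ (so step (b) becomes a single rejection check $c \notin U$), and since $\pi(c^*) \geq 0.1/|C|$ the accuracy bound is unchanged up to an $O(1)$ additive constant. Privacy and the additive-error guarantee for the overall algorithm then both follow from the standard weighted-EM analysis.
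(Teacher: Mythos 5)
Your high-level strategy (only enumerate $U = S_1 \cup \cdots \cup S_n$ explicitly; handle $C \setminus U$ via the oracle $\cO$) is the right idea and matches the spirit of the paper's proof. However, your proposed resolution of the non-uniform-$\cO$ case has a genuine gap.

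You want to sample from a prior-weighted exponential mechanism with prior $\pi(c) \propto \Pr_\cO[c]$, i.e., output $c$ with probability proportional to $\Pr_\cO[c]\cdot e^{(\eps/2)\score[c]}$. To carry out your two-phase draw, you must compute (i) the mixture probability of landing in $U$ versus $C\setminus U$, which is $\frac{\sum_{c\in U}\Pr_\cO[c](e^{(\eps/2)\score[c]}-1)}{Z}$, and (ii) the categorical weights $\Pr_\cO[c](e^{(\eps/2)\score[c]}-1)$ for the in-$U$ branch. Both require the exact values $\Pr_\cO[c]$ for $c\in U$, but the hypothesis only promises the lower bound $\Pr_\cO[c]\ge 0.1/|C|$ together with the ability to draw samples -- the density is not computable (and indeed for the lattice sampler of Lemma~\ref{lem:cover-sampling} it is not). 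Saying ``step (b) becomes a rejection check'' describes only the second branch and does not explain how to decide which branch to take or how to sample within $U$.

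The paper sidesteps this by \emph{not} targeting the $p_\cO$-weighted EM. Algorithm~\ref{alg:pure-sparse-selection} samples from $U\cup\{\perp\}$ with the data-computable weights $q[u]=e^{(\eps/2)\score[u]}-1$ (note the $-1$) and $q[\perp]=1/p$, and on $\perp$ it \emph{returns $\cO$'s sample outright} -- no rejection. The effective probability of output $u$ then becomes proportional to $\frac{p_\cO(u)}{p}+\big(e^{(\eps/2)\score[u]}-1\big)$, which is a \emph{different} weighting from $p_\cO(u)e^{(\eps/2)\score[u]}$, deliberately chosen so that (a) it is implementable without knowing $p_\cO$, and (b) the implied composite score $\widetilde{\score}[u]=\frac{2}{\eps}\ln\!\big(\frac{p_\cO(u)}{p}+e^{(\eps/2)\score[u]}-1\big)$ still has sensitivity at most $1$ (using $p_\cO(u)\ge p$). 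That sensitivity bound is the nontrivial lemma your proposal is missing.

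A secondary concern: your rejection sampler for $C\setminus U$ and your coupon-collector fallback each fail with positive probability, and that failure probability depends on $|U|$ (hence on the data). Pure $\eps$-DP is delicate under data-dependent failure modes, and you would need to argue this carefully rather than invoking it as a routine matter. The paper's approach is immune to this issue precisely because its decomposition never rejects.
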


We remark that, in Lemma~\ref{lem:pure-sparse-selection-main-body}, we only require $\cO$ to sample \emph{approximately} uniformly from $C$. This is due to a technical reason that we only have such a sampler for the lattice covers we use. Nonetheless, the outline of the algorithm is still exactly the same as before.

For approximate-DP, it turns out that we can get rid of the dependency of $|C|$ in the additive error entirely, by adjusting the probability assigned to each of the two cases. In fact, for the second case, it even suffices to just output some symbol $\perp$ instead of sampling (approximately) uniformly from $C$. Hence, there is no need for a sampler for $C$ at all, and this gives us the following guarantees:
\begin{lemma} \label{lem:apx-sparse-selection-main-body}
There is a $\poly(n, \ell, \log |C|)$-time $(\epsilon, \delta)$-DP algorithm that, with probability $0.99$, solves $\ell$-\SparseSelection with additive error $O\left(\frac{1}{\eps}\log\left(\frac{n\ell}{\epsilon \delta}\right)\right)$.
\end{lemma}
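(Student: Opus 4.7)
The plan is to implement an Exponential Mechanism / Report-Noisy-Max restricted to the data-dependent support $U := S_1 \cup \cdots \cup S_n$, which has size at most $n\ell$, and to spend the $\delta$-slack to pay for the fact that $U$ itself depends on the dataset. Concretely, I would compute $\score[c]$ for each $c \in U$, form noisy scores $\tilde{\score}[c] = \score[c] + Z_c$ with $Z_c$ i.i.d.\ Laplace of scale $2/\eps$, fix a threshold $T = \Theta(\frac{1}{\eps}\log(n\ell/(\eps\delta)))$, and output $\argmax_{c \in U} \tilde{\score}[c]$ if that maximum is at least $T$, and $\perp$ otherwise. All steps run in $\poly(n, \ell, \log|C|)$ time, since only the $\leq n\ell$ scores on $U$ are ever manipulated; the $|C|$ dependence enters only through the bit-length of identifiers of cover points.

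For privacy, fix neighbors $\bX' = \bX \cup \{x_{i_0}\}$. The ``new'' cover points $U_{\bX'} \setminus U_{\bX} \subseteq S_{i_0}$ number at most $\ell$, each with $\score_{\bX'}[c] = 1$ (as they appear in no other $S_i$). A Laplace tail bound together with a union bound over these $\leq \ell$ points shows that, for an appropriate $T = O(\frac{1}{\eps}\log(\ell/\delta))$, the event $E$ that no such $c$ has $\tilde{\score}[c] \geq T$ holds with probability at least $1 - \delta/2$. Conditional on $E$, the output depends only on the noisy scores over $U_{\bX}$, whose unnoised values have per-coordinate sensitivity $1$ between $\bX$ and $\bX'$; so on $E$ the algorithm reduces to a standard Report-Noisy-Max with sensitivity $1$ and Laplace scale $2/\eps$, which is $\eps$-DP. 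Coupling the noise on the shared support and combining in the usual way yields $(\eps, \delta)$-DP overall.

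For utility, let $\opt = \max_{c \in C}\score[c]$, which, when nonzero, is attained on some $c^* \in U$. If $\opt \geq T + O(\frac{1}{\eps}\log(n\ell))$, then Laplace concentration guarantees with probability $\geq 0.99$ that $\tilde{\score}[c^*] \geq T$ and that $c^*$ beats every other $\tilde{\score}[c]$ up to the usual $O(\frac{1}{\eps}\log(n\ell))$ Report-Noisy-Max gap, giving additive error $O(\frac{1}{\eps}\log(n\ell))$. Otherwise $\opt \leq T + O(\frac{1}{\eps}\log(n\ell)) = O(\frac{1}{\eps}\log(n\ell/(\eps\delta)))$, and any output (in particular $\perp$, treated as having score $0$) trivially meets the claimed additive-error bound. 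The main obstacle is the privacy analysis: one must verify both directions of the $(\eps, \delta)$-inequality, carefully coupling the Laplace noise on $U_{\bX} \cap U_{\bX'}$ across the two executions so that, conditional on the stability event $E$, the standard fixed-support Report-Noisy-Max analysis applies, while still absorbing the escaping probability mass into $\delta$.
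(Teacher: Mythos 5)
Your approach is in the same spirit as the paper's: both restrict a selection mechanism to the data-dependent support $U = S_1 \cup \cdots \cup S_n$, introduce a $\perp$ escape, and absorb into the $\delta$-slack the at-most-$\ell$ brand-new candidates contributed by one data point. The paper implements this as an Exponential Mechanism over $U \cup \{\perp\}$ with $\perp$ given a carefully chosen fixed score; you implement it as a Report-Noisy-Max with a \emph{noiseless} hard threshold $T$. The latter, however, is not $(\eps,\delta)$-DP, and the gap is not cured by your stability event $E$.

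The problem is the $\perp$ output. With a deterministic threshold, $\Pr[\cM(\bX) = \perp] = \prod_{c \in U_\bX}\Pr[\score_{\bX}[c] + Z_c < T]$, and a single added point $x_{i_0}$ raises up to $\ell$ of those scores by $1$ simultaneously (the $c \in S_{i_0}$), so that each of up to $\ell$ factors shrinks by up to $e^{-\eps/2}$. Concretely, take $\ell$ candidates each with score exactly $T$ on $\bX$ and $T+1$ on $\bX'$: then $\Pr[\cM(\bX)=\perp] = 2^{-\ell}$ while $\Pr[\cM(\bX')=\perp] = 2^{-\ell}e^{-\ell\eps/2}$, and $\Pr[\cM(\bX)=\perp] - e^{\eps}\Pr[\cM(\bX')=\perp] = 2^{-\ell}\bigl(1 - e^{-\eps(\ell/2-1)}\bigr) = \Theta_\ell(\eps)$, which exceeds $\delta$ whenever $\delta$ is small relative to $\eps$. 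This failure concerns the \emph{shared} candidates $U_{\bX}$, so your event $E$ (which only controls the new candidates $U_{\bX'}\setminus U_{\bX}$) does not intervene, and the claim that ``on $E$ the algorithm reduces to a standard Report-Noisy-Max, which is $\eps$-DP'' is the false step: what you get on $E$ is \emph{thresholded} Report-Noisy-Max, which is not $\eps$-DP precisely because $\perp$ carries no noise of its own. The fix is to make $\perp$ a genuine candidate with score $T$ and its own fresh Laplace noise $Z_{\perp}$, and to redefine $E$ as $\{\max_{c\in U_{\bX'}\setminus U_{\bX}}(1 + Z_c) < T + Z_{\perp}\}$; then on $E$ you truly run standard Report-Noisy-Max on the fixed support $U_{\bX}\cup\{\perp\}$, and the argument closes. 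This is exactly the Report-Noisy-Max analogue of the paper's Exponential-Mechanism construction, where $\perp$'s participation in the normalizing constant forces $\Pr[\perp]$ to move only by a factor $e^{\pm\eps/2}$ --- a guarantee a deterministic threshold does not provide.
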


\subsubsection{Putting Things Together}
\label{subsec:final-densest-ball-main-body}

With the ingredients ready, the \densestball algorithm is given in Algorithm~\ref{alg:densest-ball-main-body}. The pure- and approximate-DP algorithms for \SparseSelection in Lemmas~\ref{lem:pure-sparse-selection-main-body} and~\ref{lem:apx-sparse-selection-main-body} lead to Theorem~\ref{thm:1-cluster-given-radius-main-body}.
\section{Private \kmeans and \kmedian}
\label{sec:kmeans-kmedian-main-body}

We next describe how we use our \densestball algorithm along with additional ingredients adapted from previous studies of coresets to obtain DP approximation algorithms for \kmeans and \kmedian with nearly tight approximation ratios and small additive errors as stated next:

\begin{theorem} \label{thm:main-apx-non-private-to-private-pure-intro}
Assume there is a polynomial-time (not necessarily DP) algorithm for \kmeans (resp., \kmedian) in $\mathbb{R}^d$ with approximation ratio $w$. Then, there is an $\eps$-DP algorithm that runs in time $k^{O_{\alpha}(1)} \poly(nd)$ and, with probability $0.99$, produces a $\left(w(1 + \alpha), O_{w, \alpha}\left(\left(\frac{kd + k^{O_{\alpha}(1)}}{\eps}\right) \poly\log n\right)\right)$-approximation for  \kmeans (resp., \kmedian). Moreover, there is an $(\epsilon, \delta)$-DP algorithm with the same runtime and approximation ratio but with additive error $O_{w, \alpha}\left(\left(\frac{k\sqrt{d}}{\eps} \cdot \poly\log\left(\frac{k}{\delta}\right) \right) + \left(\frac{k^{O_{\alpha}(1)}}{\eps} \cdot \poly\log n\right)\right)$.
\end{theorem}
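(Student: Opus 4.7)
My plan is to instantiate the three-step recipe of Section~\ref{sec:intro} using the private \densestball algorithm of Theorem~\ref{thm:1-cluster-given-radius-intro} as the key primitive. First, I would apply a random Johnson--Lindenstrauss map to project $\bX$ to $\bY \subset \mathbb{R}^{d'}$ with $d' = O(\alpha^{-2}\log(n/\alpha))$, so that with high probability the $(k,p)$-clustering cost of $\bX$ agrees with that of $\bY$ up to a $(1\pm\alpha)$ multiplicative factor simultaneously for all center choices in a suitable finite candidate net. This changes the best achievable approximation ratio by at most a $(1+O(\alpha))$ factor, and henceforth I can work in dimension $d' = \poly\log n$.

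The heart of the argument is step~(ii): privately building a $(p, k, 1+\alpha, \tau)$-centroid set $\cC \subset \mathbb{R}^{d'}$ of $\bY$ with additive slack $\tau = k^{O_\alpha(1)}\poly\log n / \eps$. I would do so by an iterative peeling scheme: for a geometric sequence of scales $r_1 > r_2 > \cdots$, repeatedly call the private \densestball algorithm to locate a ball of approximately maximum density at the current scale, append its center to $\cC$, and downweight the covered points in subsequent calls so that they no longer dominate; a centroid-set analysis in the spirit of Matou\v{s}ek's construction, adapted to the DP setting, ensures that every cluster of the optimal solution is represented within distance $O(\alpha)$ times its average radius by some element of $\cC$. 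The total number of \densestball invocations is $k \cdot \poly(1/\alpha, \log n)$, each called with privacy budget $\Theta(\eps/(k \poly\log n))$, so basic composition yields overall $\eps$-DP (resp.\ $(\eps,\delta)$-DP) while the per-call error from Theorem~\ref{thm:1-cluster-given-radius-intro} sums to $\tau$. I would then feed $\bY$ together with $\cC$ into the assumed non-private $w$-approximation, viewed as a Discrete $(k,p)$-Clustering instance over $\cC$ (equivalently, one may run the continuous algorithm and snap each returned center to its nearest element of $\cC$, which loses only a $(1+\alpha)$ factor since $\cC$ is a centroid set), obtaining $c_1,\dots,c_k \in \cC$ with $\cost^p_{\bY}(c_1,\dots,c_k) \le w(1+O(\alpha))\opt^{p,k}_{\bY} + O(w\tau)$.

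In step~(iii), I partition the original input $\bX$ into $\bX_1,\dots,\bX_k$ by assigning each point to the nearest of $c_1,\dots,c_k$ (lifted to $\mathbb{R}^d$ via the JL map together with a Kirszbraun extension), and for each $j$ privately refine to a high-dimensional center $\hat c_j \in \mathbb{R}^d$. For \kmeans this is a bounded-domain DP mean, contributing $O(d/\eps)$ additive cost per cluster under pure DP and $O(\sqrt{d}\,\poly\log(1/\delta)/\eps)$ under approximate DP; for \kmedian one instead invokes a DP geometric-median routine (itself a corollary of our \onecluster algorithm) with the same scaling. Summing over the $k$ clusters yields the $kd/\eps$ (resp.\ $k\sqrt{d}/\eps$) term in the final additive error, and splitting the privacy budget across the three stages by constant factors preserves the overall DP guarantee.

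The main obstacle is step~(ii): arguing that the adaptive peeling of \densestball calls at multiple scales produces a centroid set whose additive error is truly independent of $d$ and decomposes as $k^{O_\alpha(1)}\poly\log n$, and that the downweighting scheme across iterations keeps both the centroid-set approximation guarantee and the privacy composition tight. Once this is in place, the JL step and the final DP mean/median lifting are essentially black-box applications of known tools, and the claimed approximation ratio $w(1+\alpha)$ follows after rescaling $\alpha$ by a constant factor.
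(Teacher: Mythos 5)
Your overall three-step skeleton (project down, identify clusters privately, lift centers back up) matches the paper, and your step~(iii) (partition by the low-dimensional centers, then a DP mean / DP geometric median per cluster giving the $kd/\eps$ resp.\ $k\sqrt d/\eps$ term) is essentially right. However, there are two genuine gaps in step~(ii).

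First, the dimension you project to is wrong in a way that matters. You propose a generic JL projection to $d' = O(\alpha^{-2}\log(n/\alpha))$. The paper instead uses the Makarychev--Makarychev--Razenshteyn dimension reduction~\cite{MakarychevMR19}, which preserves $(k,p)$-clustering costs of \emph{all partitions} with $d' = O_\alpha(\log k)$. This is not a cosmetic difference: the machinery that turns a coarse centroid set into something you can run a non-private algorithm on (the exponential-cover refinement and the coreset) introduces a factor $2^{O_\alpha(d')}$ into the additive error. With $d'=O_\alpha(\log k)$ this is $k^{O_\alpha(1)}$, which is what the theorem claims; with $d'=O(\log n)$ it becomes $n^{O_\alpha(1)}$, which is vacuous. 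Your claimed slack $\tau = k^{O_\alpha(1)}\poly\log n/\eps$ therefore does not follow from the dimension you chose.

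Second, and more fundamentally, your step~(ii) as written is not differentially private. After privately producing the centroid set $\cC$, you propose to ``feed $\bY$ together with $\cC$ into the assumed non-private $w$-approximation'' (or, equivalently, run the continuous algorithm on $\bY$ and snap to $\cC$). But $\bY$ is just a rotation of the raw data, and releasing the output of a non-private algorithm run on $\bY$ is not DP; having a private candidate set $\cC$ does not immunize a subsequent non-private computation that still touches $\bY$. The paper's fix is to insert a \emph{private coreset} between the two: after refining the coarse centroid set into a fine centroid set via exponential covers, each input point is snapped to its nearest fine candidate and a noisy count is kept per candidate (Algorithm~\ref{alg:coreset}, Lemma~\ref{lem:coreset-main}). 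The resulting weighted point set $\bX'$ is already $\eps$-DP, so running the non-private $w$-approximation on $\bX'$ is pure post-processing. Without this coreset step (or some other privatization of the data itself, not just of the candidate set), the scheme does not satisfy the stated privacy guarantee. The running-time claim also relies on this: you want to hand the non-private algorithm an ordinary (weighted) clustering instance, not a discrete instance over $\cC$ restricted to $\cC^k$, which would cost $|\cC|^k$ time.

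Finally, a smaller point: in step~(iii) the paper applies an extra rescaling of the projected data (the parameter $\Lambda$ in Algorithm~\ref{alg:cluster-high-dim}) before running the low-dimensional algorithm, precisely to avoid the additive error being inflated by a $(d/d')^{p/2}$ factor when translating back. You would need some version of this when combining the low-dimensional cost bound with the MMR guarantee, since the latter relates costs up to a $(d/d')^{p/2}$ scaling.
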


To prove Theorem~\ref{thm:main-apx-non-private-to-private-pure-intro}, as for \densestball, we first reduce the dimension of the clustering instance from $d$ to $d' = O_{\alpha}(\log k)$, which can be done using the recent result of Makarychev et al.~\cite{MakarychevMR19}. Our task thus boils down to proving the following low-dimensional analogue of Theorem~\ref{thm:main-apx-non-private-to-private-pure-intro}. 

\begin{theorem}\label{thm:main-apx-non-private-to-private-pure-intro-dim-red}
Under the same assumption as in Theorem~\ref{thm:main-apx-non-private-to-private-pure-intro}, there is an $\eps$-DP algorithm that runs in time $2^{O_{\alpha}(d')} \poly(n)$ and, with probability $0.99$, produces a $\left(w(1 + \alpha), O_{\alpha, w}\left(\frac{k^2 \cdot 2^{O_{\alpha}(d')}}{\eps} \poly\log n\right)\right)$-approximation for \kmeans (resp., \kmedian).
\end{theorem}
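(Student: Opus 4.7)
The plan is to specialize the three-step recipe from Section~\ref{sec:intro} to $(k,p)$-clustering: first, privately construct a discrete \emph{centroid set} $\cC \subseteq \mathbb{B}^{d'}$ of moderate size using the DensestBall algorithm from Theorem~\ref{thm:1-cluster-given-radius-main-body} as the primary primitive; second, privately snap each input point to $\cC$ and release the noisy histogram, yielding a weighted coreset supported on $\cC$; third, invoke the assumed non-private $w$-approximation algorithm on this weighted coreset and return its $k$ centers. Privacy follows from basic composition across the centroid set and histogram-release phases, together with post-processing for the black-box call. The multiplicative ratio $w(1+\alpha)$ arises by chaining a $(1+O(\alpha))$-centroid guarantee with the $w$-guarantee of the non-private algorithm, after rescaling $\alpha$ by a small constant at the end.

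For the centroid set I would iterate DensestBall over both radius scales and cluster indices. Fix a geometric sequence of radii $r_j = (1+\alpha)^{-j}$ for $j = 0, \dots, O_\alpha(\log n)$, and for each $r_j$ run Theorem~\ref{thm:1-cluster-given-radius-main-body} a total of $O(k)$ times, peeling off the densest ball of radius $(1+\alpha) r_j$ in each round and adding its center to $\cC$. The peeling is implemented privately by modifying, in each round, the score of the exponential mechanism underlying Section~\ref{sec:densest-ball-low-dim-main-body} to discount points already covered by previously chosen balls; this fits within the SparseSelection framework at the cost of only a constant-factor blow-up in error. The resulting $\cC$ has size $k \cdot 2^{O_\alpha(d')} \cdot \poly\log n$. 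A standard argument shows that each optimal cluster at ``scale'' $\sqrt[p]{\opt_i / n_i}$ must contain a dense ball of radius close to some $r_j$, so with high probability $\cC$ is a $\bigl(1+\alpha,\, O_\alpha(\tfrac{k \cdot 2^{O_\alpha(d')}}{\eps} \poly\log n)\bigr)$-centroid set, where the additive term accumulates from the $O_\alpha(k \log n)$ DensestBall calls under basic composition with per-call budget $\eps / O_\alpha(k \log n)$.

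Next, I would privatize the histogram of the projection $\pi_\cC(x_i) := \argmin_{c \in \cC} \|x_i - c\|$ by adding independent $\mathrm{Lap}(O(1/\eps))$ noise to each of the $|\cC|$ bucket counts, producing a weighted coreset $\bX'$. Feeding $\bX'$ to the non-private $w$-approximation algorithm yields centers $c_1, \dots, c_k \in \R^{d'}$. The cost analysis has three pieces: (i) an approximate triangle inequality (exact for $p=1$, $(1+\gamma)$-relaxed for $p=2$) gives $\cost_\bX(c_1,\dots,c_k) \le (1+\alpha)\cost_{\bX'}(c_1,\dots,c_k) + O_\alpha\bigl(\sum_i \|x_i - \pi_\cC(x_i)\|^p\bigr)$; (ii) the snapping term on the right is controlled by the centroid set property; (iii) the Laplace noise perturbs $\cost_{\bX'}$ relative to the true snapped cost by at most $O(|\cC|/\eps) \cdot 2^p$ uniformly over all $k$-center outputs, via a union bound after a standard $1/\poly(n)$-discretization of $\mathbb{B}^{d'}$. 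Chaining these with the $w$-factor lost to the non-private algorithm, and rebalancing $\alpha$, yields the claimed $\bigl(w(1+\alpha),\, O_{\alpha,w}\bigl(\tfrac{k^2 \cdot 2^{O_\alpha(d')}}{\eps} \poly\log n\bigr)\bigr)$-approximation.

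The main obstacle will be the DP peeling procedure in the centroid set step: each successive DensestBall call must implicitly discount points captured by earlier balls while retaining a clean privacy accounting, and one must certify that the resulting $\cC$ covers all $k$ optimal clusters simultaneously rather than only the densest. A secondary subtlety is that the non-private algorithm's output centers are not constrained to lie in $\cC$, so one has to show that the coreset accurately estimates the cost of arbitrary $k$-tuples in $\mathbb{B}^{d'}$, not merely of $k$-tuples drawn from $\cC$; the union bound over the discretization above handles this, provided the additive discretization error is absorbed into the final $t$. Distributing the $\eps$ budget across the $O_\alpha(k \log n)$ DensestBall invocations and the Laplace histogram release is then routine.
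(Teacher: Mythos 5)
Your overall architecture --- private centroid set, noisy snapped histogram as a coreset, black-box call to the non-private algorithm --- matches the paper's. But there is a real gap in the centroid-set step that breaks the claimed $(1+\alpha)$ multiplicative ratio.

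As you describe it, the centroid set $\cC$ consists only of the centers returned by $O_\alpha(k \log n)$ peeled \densestball calls. But a \densestball call with radius $r$ and approximation ratio $1+\alpha$ only guarantees a center $c'$ whose ball of radius $(1+\alpha)r$ captures roughly as many points as the optimal ball of radius $r$; the returned $c'$ may lie at distance up to $(2+\alpha)r$ from the optimal cluster center $c^*$. So peeling alone produces a centroid set with displacement $O(r)$ per optimal center, which translates into an $O(1)$-approximation, not a $(1+\alpha)$-approximation. To get a $(1+\alpha)$-centroid set one must place a displacement of $O(\alpha r)$ near each optimal center, and that is exactly what the paper's second step does: around each coarse center $c$ and each scale $r$, it lays down a $(\zeta r)$-cover of the ball $\cB(c, O(r))$ with $\zeta = \Theta_{p,\alpha}(1)$ (Algorithm~\ref{alg:refined-candidate}, Lemma~\ref{lem:refined-cluster-approximation}/\ref{lem:coreset-snapped}). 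This refinement is precisely where the $2^{O_\alpha(d')}$ factor in $|\cC'|$ --- and hence in the additive error --- comes from. Your proposal asserts $|\cC| = k \cdot 2^{O_\alpha(d')} \cdot \poly\log n$ without supplying a mechanism that generates the $2^{O_\alpha(d')}$ points, and asserts a $(1+\alpha)$-centroid guarantee that the peeling step alone cannot deliver. With the refinement added, your argument goes through and follows the paper's route closely.

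Two smaller remarks. First, your ``score-discounting'' peeling is equivalent to (and a bit more awkward than) the paper's simpler device of removing $\cB(c, 8r)$ from $\bX_{uncovered}$ between rounds; both give the same composition accounting, but deleting points is cleaner. Second, the union bound over a $1/\poly(n)$-net of $\mathbb{B}^{d'}$ to control the Laplace-noise error uniformly is unnecessary: because all points lie in the unit ball, the cost perturbation is bounded pointwise by $2^p \cdot \sum_{c \in \cC'} \lvert \widetilde{count}[c] - count[c] \rvert$ for \emph{every} $k$-tuple of centers simultaneously (see Lemma~\ref{lem:coreset-noisy}), so no discretization of the output space is needed.
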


We point out that it is crucial for us that the reduced dimension $d'$ is $O_{\alpha}(\log k)$ as opposed to $O_{\alpha}(\log n)$ (which is the bound from a generic application of the JL lemma), as otherwise the additive error in Theorem~\ref{thm:main-apx-non-private-to-private-pure-intro-dim-red} would be $\poly(n)$, which is vacuous, instead of $\poly(k)$. We next proceed by (i) finding a ``coarse'' centroid set (satisfying Definition~\ref{def:centroid-set} with $w=O(1)$), (ii) turning the centroid set into a DP coreset (satisfying Definition~\ref{def:coreset} with $w = 1 + \alpha$), and (iii) running the non-private approximation algorithm as a black box. We describe these steps in more detail below. 

\subsection{Finding a Coarse Centroid Set via \densestball}
\label{subsec:coarse-centroid-main-body}

We consider geometrically increasing radii $r = 1/n, 2/n, 4/n, \dots$. For each such $r$, we iteratively run our \densestball algorithm $2k$ times, and for each 
returned center, remove all points within a distance of $8r$ from it. This yields $2k \log n$ candidate centers. We prove that they form a centroid set with a constant approximation ratio and a small additive error: 
\begin{lemma} \label{lem:densest-ball-to-kmeans-kmedian-main-body}
There is a polynomial time $\eps$-DP algorithm that, with probability $0.99$, outputs an $\left(O(1), O\left(\frac{k^2 d'}{\eps}\poly\log n\right)\right)$-centroid set of size $2k\log n$ for \kmeans (resp., \kmedian).
\end{lemma}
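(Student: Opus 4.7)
My plan is to analyze the iterated peeling procedure outlined just before the lemma: for each scale $r$ in the geometric sequence $\{2^\ell/n\,:\,0\le \ell \le \lceil\log_2 n\rceil\}$, reset the working set to $\bX$ and perform $2k$ successive calls to the \densestball algorithm of Theorem~\ref{thm:1-cluster-given-radius-main-body} with radius $r$, a small constant $\alpha$, and per-call privacy budget $\eps':=\eps/(2k\log n)$; after each call, remove the ball of radius $8r$ around the returned center from the working set. Privacy follows at once by basic composition of the $2k\log n$ adaptive $\eps'$-DP invocations: conditioned on previously released centers, each call sees a subset of $\bX$ whose sensitivity on neighboring inputs is $1$, so the post-processing argument applies and $\cA_{2k\log n}$ is $\eps$-DP overall.

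For accuracy, let $c_1^*,\ldots,c_k^*$ and $C_1^*,\ldots,C_k^*$ denote the optimum centers and clusters, and set $T^*:=O_\alpha\!\left(\tfrac{kd'\log^2 n}{\eps}\right)$ to be the per-call additive error of \densestball under this budget. The main structural claim is a \emph{covering lemma}: say $c_j^*$ is \emph{uncovered at scale $r$} if no returned center at this scale lies within distance $10r$ of $c_j^*$; then $|\cB(c_j^*,r)\cap \bX|\le n/(2k)+T^*$. Indeed, uncoveredness ensures $\cB(\hat c_i,8r)\cap \cB(c_j^*,r)=\emptyset$ at each of the $2k$ iterations by the triangle inequality, so the density inside $\cB(c_j^*,r)$ is preserved; combined with the \densestball guarantee that each iteration removes at least $|\cB(c_j^*,r)\cap Y_i|-T^*$ points from outside $\cB(c_j^*,r)$, summing yields $2k(|\cB(c_j^*,r)\cap \bX| - T^*)\le n$. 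Applying this at $r=1$ (where the density is $n$) is impossible, so each $c_j^*$ has a smallest covering scale $\rho_j\le 1$. I would then take $\cC':=\{\hat c_1,\ldots,\hat c_k\}\subseteq\cC$ with $\hat c_j$ a scale-$\rho_j$ candidate within $10\rho_j$ of $c_j^*$, and use the triangle inequality to bound $\cost_{\bX}^p(\cC')\le O(\opt)+O\!\left(\sum_j|C_j^*|\,\rho_j^p\right)$. A ball decomposition---splitting $C_j^*$ into inner points in $\cB(c_j^*,\rho_j/2)$ (of size at most $n/(2k)+T^*$ by applying the covering lemma at $\rho_j/2$, where $c_j^*$ is uncovered by minimality of $\rho_j$) and outer points (each contributing at least $(\rho_j/2)^p$ to $\cost^p(C_j^*)$)---yields the per-cluster bound $|C_j^*|\rho_j^p \le (n/(2k)+T^*)\rho_j^p + O(\cost^p(C_j^*))$.

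The main obstacle is that a naive summation of the per-cluster bound produces an extra $O(n)$ additive term that is not present in the target $O(k^2d'/\eps\cdot\poly\log n)$, and closing this gap requires splitting the clusters by size. For \emph{large} clusters with $|C_j^*|\ge n/k+2T^*$, the Markov-style characteristic radius $r_j^*:=O\!\bigl((\cost^p(C_j^*)/|C_j^*|)^{1/p}\bigr)$ satisfies $|\cB(c_j^*,r_j^*)\cap \bX|\ge |C_j^*|/2>n/(2k)+T^*$, so the contrapositive of the covering lemma forces $\rho_j\le O(r_j^*)$ and hence $|C_j^*|\rho_j^p\le O(\cost^p(C_j^*))$; summed over large clusters this is $O(\opt)$. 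For \emph{small} clusters with $|C_j^*|<n/k+2T^*$, I would exploit a more careful aggregate accounting: every \densestball iteration that fails to cover a cluster still deletes $\Omega(T^*)$ fresh points, so the total removal budget of $O(n\log n)$ across all $\log n$ scales caps the effective mass attributable to small clusters at $O(kT^*\cdot\poly\log n)=O(k^2d'/\eps\cdot\poly\log n)$. Combining the two contributions establishes the claimed $(O(1),O(k^2d'/\eps\cdot\poly\log n))$-centroid-set guarantee, with the output size being exactly $2k\log n$ by construction and the total running time being polynomial since each \densestball call is polynomial in $n$ and $d'$.
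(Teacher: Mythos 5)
Your algorithm and privacy analysis match the paper (Algorithm~\ref{alg:important-cluster} with per-call budget $\eps/(2k\lceil\log n\rceil)$ and basic composition), and your "covering lemma" and large-cluster argument are both correct as far as they go. However, your covering lemma is \emph{strictly weaker} than the paper's corresponding statement (Lemma~\ref{lem:good-center}), and this weakening is exactly what leaves a real gap in your small-cluster case.

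Your covering lemma gives only a counting bound: if $c_j^*$ is uncovered at scale $r$ then $|\cB(c_j^*,r)\cap\bX|\le n/(2k)+T^*$. The paper's Lemma~\ref{lem:good-center} derives a bound tied to $\opt$: if $n_{c,r}\ge 2T^*$ and no returned center is within $18\max\{r,(2\opt/(n_{c,r}k))^{1/p}\}$ of $c$, then a contradiction arises. The extra strength comes from a pigeonhole over the $k$ optimal centers: the $2k$ returned centers at the relevant scale give $2k$ pairwise-disjoint balls each containing $\ge 0.5\,n_{c,r}$ points; at least $k$ of these balls contain no optimal center, hence all their mass is far from every optimal center, forcing $\opt \ge k\cdot 0.5\,n_{c,r}\cdot r^p$. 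In other words, uncoveredness gives $n_{c,r}\,r^p\le O(\opt/k)$, not merely $n_{c,r}\le n/(2k)+T^*$. This is precisely the per-cluster quantity you need to sum: $\sum_j n_j^*\rho_j^p \le O(\opt)$ follows immediately.

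Your small-cluster argument does not recover this. After the per-cluster decomposition, you must bound $\sum_j |\text{inner}_j|\,\rho_j^p$, where $|\text{inner}_j|\le n/(2k)+T^*$ via your covering lemma. For clusters with size between $\Theta(T^*)$ and $n/(2k)$, this gives no control on $\rho_j$, and a single such cluster with $\rho_j=\Theta(1)$ contributes $\Theta(n/k)$; summed over $k$ clusters you are left with an $\Omega(n)$ additive term that the "aggregate accounting" does not remove. The claim that "every \densestball iteration that fails to cover a cluster still deletes $\Omega(T^*)$ fresh points" is (i) not universally true (it requires the uncovered ball to hold $\ge 2T^*$ points), and (ii) even when true, it only yields an inequality like $2kT^*\le n$, which constrains $T^*$ but says nothing about $\sum_j|C_j^*|\rho_j^p$. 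The constraint you actually need relates $\rho_j$ to $\opt$ per cluster, and that requires the paper's pigeonhole over optimal centers (or an equivalent idea), not a removal budget. Note also that the paper's threshold for "small" is $n_j^*<4T^*$ (which trivially contributes $O(kT^*)$), not $n/k+2T^*$; the regime $4T^*\le n_j^*<n/k+2T^*$ is exactly where your argument is silent and the paper's Lemma~\ref{lem:good-center} does the work.
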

We point out that the solution to this step is not unique. For example, it is possible to run the DP algorithm for \kmeans from~\cite{StemmerK18} instead of Lemma~\ref{lem:densest-ball-to-kmeans-kmedian-main-body}. However, we choose to use our algorithm since its analysis works almost verbatim for both \kmedian and \kmeans, and it is simple.

\subsection{Turning a Coarse Centroid Set into a Coreset}

Once we have a coarse centroid set from the previous step, we follow the approach of Feldman et al.~\cite{feldman2009private}, which can turn the coarse centroid and eventually produce a DP coreset:

\begin{lemma} \label{lem:coreset-main-body}
There is an $2^{O_{\alpha}(d')} \poly(n)$-time $\eps$-DP algorithm that, with probability 0.99, produces an $\left(\alpha, O_{\alpha}\left(\frac{k^2\cdot 2^{O_{\alpha}(d')}}{\eps} \poly\log n\right)\right)$-coreset for \kmeans (and \kmedian).
\end{lemma}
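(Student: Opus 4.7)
The plan is to instantiate a Feldman--Fiat--Kaplan--Miettinen-style private coreset construction, using the coarse centroid set from Lemma~\ref{lem:densest-ball-to-kmeans-kmedian-main-body} as the seed. First, I would run Lemma~\ref{lem:densest-ball-to-kmeans-kmedian-main-body} with half the privacy budget to obtain a set $C_0$ of $2k \log n$ candidate centers that is an $(O(1), \tau_0)$-centroid set with $\tau_0 = O(k^2 d'/\eps \cdot \poly\log n)$. For each input point $x_i$, let $c(x_i) \in C_0$ denote a nearest centroid. Then for every $c \in C_0$ I would overlay a multi-scale discretization: for $j = 0, 1, \dots, J$ with $J = O(\log n)$ and $r_j = 2^{j-J}$, cover the annulus $\cB(c, r_{j+1}) \setminus \cB(c, r_j)$ by an $(\alpha r_j)$-cover of size $(1/\alpha)^{O(d')}$ (Lemma~\ref{lem:cover-main-main-body} suffices, or any standard packing-based cover). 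Points at distance below $r_0 = 1/n$ can be snapped directly to $c$ at negligible cost. This yields a finite set $\tilde{\cC}$ of cover points of total size $M = k \cdot 2^{O_\alpha(d')} \cdot \poly\log n$.

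Next, I would snap each $x_i$ to the cover point $\tilde x_i \in \tilde{\cC}$ of the cell containing it, form the histogram $n_{\tilde c} = |\{i : \tilde x_i = \tilde c\}|$, and release noisy counts $\tilde n_{\tilde c} = \max(0, n_{\tilde c} + \mathrm{Lap}(2/\eps))$. The output coreset is the multiset that contains each $\tilde c \in \tilde{\cC}$ with multiplicity $\tilde n_{\tilde c}$. Privacy follows from the Laplace mechanism and parallel composition on disjoint cells (plus post-processing), and by a standard union bound $\max_{\tilde c} |\tilde n_{\tilde c} - n_{\tilde c}| = O(\log M / \eps)$ with probability at least $0.99$. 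Combined with the $\eps/2$ spent on Lemma~\ref{lem:densest-ball-to-kmeans-kmedian-main-body}, the overall algorithm is $\eps$-DP.

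For the coreset guarantee, fix any query centers $\cC = \{c_1, \dots, c_k\}$. The error decomposes into (a) a snapping error from $\bX \to \{\tilde x_i\}$ and (b) a noise error from replacing exact with noisy counts. For (a), by construction $\|x_i - \tilde x_i\| \leq \alpha \cdot r_{j(i)} \leq 2\alpha \|x_i - c(x_i)\| + \alpha r_0$, and $\sum_i \|x_i - c(x_i)\|^p \leq \cost^p_\bX(C_0) \leq O(1) \opt^{p,k}_\bX + \tau_0$. For $p=1$ this gives a snapping error of $O(\alpha) \cost^1_\bX(\cC) + O(\alpha \tau_0)$ by the triangle inequality; for $p=2$ the inequality $(a+b)^2 \le (1+\alpha)a^2 + (1+1/\alpha)b^2$ together with $\cost^{2,k}_\bX \geq \opt^{2,k}_\bX$ yields $O(\alpha) \cost^2_\bX(\cC) + O_\alpha(\tau_0)$. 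For (b), since all points lie in the unit ball, $d(\tilde c, \cC)^p \leq 2^p$, so the total noise contribution is at most $M \cdot O(\log M/\eps) \cdot 2^p = O_\alpha(k \cdot 2^{O_\alpha(d')} \cdot \poly\log n / \eps)$. Combining and rescaling $\alpha$ by an absolute constant gives an $(\alpha, O_\alpha(\tau_0 + k \cdot 2^{O_\alpha(d')} \poly\log n / \eps)) = (\alpha, O_\alpha(k^2 \cdot 2^{O_\alpha(d')} \cdot \poly\log n / \eps))$-coreset, as required.

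The main obstacle I anticipate is extracting the multiplicative factor $(1+\alpha)$ rather than $O_\alpha(1)$ from the snapping step, which is where the split into ``small'' and ``large'' shells and the quadratic AM--GM trick for $p=2$ is essential; a naive analysis would only give $O_\alpha(1)$-multiplicative error. A secondary subtlety is avoiding any explicit private estimate of $\opt$ for setting the scale range, which I handle by fixing $r_0 = 1/n$ and observing that the omitted tail contributes at most $O(n \cdot (1/n)^p) = O(1)$ to any cost, absorbed into the additive term.
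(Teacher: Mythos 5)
Your proposal is correct and follows essentially the same Feldman-et-al.\ blueprint as the paper (Lemma~\ref{lem:coreset-main}, via Algorithms~\ref{alg:refined-candidate} and~\ref{alg:coreset}): obtain a coarse centroid set, refine it by a multi-scale exponential cover of size $2^{O_\alpha(d')}$ per scale, snap each input point to the nearest refined centroid, and privately release the resulting histogram with Laplace noise. The privacy accounting (half the budget on the centroid set, half on the noisy histogram), the size bound $M = k \cdot 2^{O_\alpha(d')} \cdot \poly\log n$, and the noise-error term $M \cdot O(\log M/\eps)$ all match the paper.

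The one place you take a slightly different (and arguably cleaner) route is in bounding the snapping error. The paper (Lemma~\ref{lem:coreset-snapped}) fixes an optimal $k$-clustering and invokes the \emph{per-optimal-center} guarantee of Lemma~\ref{lem:good-center} (that each $c^*_j$ lies within $18\tr_j$ of some coarse centroid), leading to the bound $\|x_i - x'_i\| \leq 2\zeta\hr_i$ with $\hr_i = \|x_i - c^*_{\psi(i)}\| + 18\tr_{\psi(i)}$. You instead bound $\|x_i - \tilde x_i\|$ directly in terms of the nearest coarse centroid $c(x_i)$, namely $\|x_i - \tilde x_i\| \leq O(\alpha)\|x_i - c(x_i)\| + O(1/n)$, and then use only the black-box centroid-set guarantee of Lemma~\ref{lem:densest-ball-to-kmeans-kmedian-main-body} via $\sum_i \|x_i - c(x_i)\|^p = \cost^p_{\bX}(C_0) \leq \opt^{p,k}_{\bX}(C_0) \leq O(1)\opt + \tau_0$. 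This avoids the per-center bookkeeping at the price of picking up an extra $\tau_0$ inside the snapping term, which is absorbed by the final additive error anyway. Two small nits: (i) your inequality ``$\leq 2\alpha\|x_i - c(x_i)\| + \alpha r_0$'' should really be ``$\leq \alpha\|x_i - c(x_i)\| + r_0$'' (the points snapped directly to $c(x_i)$ pay up to $r_0$, not $\alpha r_0$), but this does not affect the bound since $n r_0^p = O(1)$; and (ii) ``parallel composition'' is a slightly non-standard framing for the histogram release --- the cleaner statement is that the $\ell_1$-sensitivity of the cell-count vector is $1$, so per-cell Laplace noise of scale $2/\eps$ is $(\eps/2)$-DP --- but the conclusion is the same.
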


Roughly speaking, the idea is to first ``refine'' the coarse centroid by constructing an \emph{exponential cover} around each center $c$ from Lemma~\ref{lem:densest-ball-to-kmeans-kmedian-main-body}. Specifically, for each radius $r = 1/n, 2/n, 4/n, \dots$, we consider all points in the $(\alpha r)$-cover of the ball of radius $r$ around $c$. Notice that the number of points in such a cover can be bounded by $2^{O_{\alpha}(d')}$. Taking the union over all such $c, r$, this result in a new \emph{fine} centroid set of size $2^{O_{\alpha}(d')} \cdot \poly(k, \log n)$. Each input point is then snapped to the closet point in this set; these snapped points form a good coreset~\cite{HarPeledM04}.  To make this coreset private, we add an appropriately calibrated noise to the number of input points snapped to each point in the fine centroid set. The additive error resulting from this step scales linearly with the size of the fine centroid set, which is $2^{O_{\alpha}(d')} \cdot \poly(k, \log n)$ as desired.

We note that, although our approach in this step is essentially the same as Feldman et al.~\cite{feldman2009private}, they only fully analyzed the algorithm for \kmedian and $d \leq 2$. Thus, we cannot use their result as a black box and hence, we provide a full proof that also works for \kmeans and for any $d >0$ in Appendix~\ref{sec:cluster-low-dim}.

\subsection{Finishing Steps}

Finally, we can simply run the (not necessarily DP) approximation algorithm on the DP coreset from Lemma~\ref{lem:coreset-main-body}, which immediately yields Theorem~\ref{thm:main-apx-non-private-to-private-pure-intro-dim-red}.
\section{Applications}\label{sec:app_main_body}
Our \densestball algorithms imply new results for other well-studied tasks, which we now describe.

\subsection{\onecluster}

Recall the \onecluster problem from Section~\ref{sec:intro}. As shown by \cite{NissimSV16}, a discretization of the inputs is necessary to guarantee a finite error with DP, so we assume that they lie in $\mathbb{B}_{\kappa}^d$. For this problem, they obtained an $O(\sqrt{\log{n}})$ approximation ratio, which was subsequently improved to some large constant by \cite{NissimS18} albeit with an additive error that grows polynomially in $n$. Using our \densestball algorithms we get a $1+\alpha$ approximation ratio with additive error polylogarithmic in $n$:

\begin{theorem}\label{th:1_cluster_app}
For $0 < \kappa <1$, there is an $\epsilon$-DP algorithm that runs in $(nd)^{O_{\alpha}(1)} \poly\log(\frac{1}{\kappa})$ time and with probability $0.99$, outputs a $\left(1+\alpha, O_{\alpha}\left(\frac{d}{\epsilon}\poly\log\left(\frac{n}{\eps \kappa}\right)\right)\right)$-approximation for \onecluster. For any $\delta > 0$, there is an $(\epsilon, \delta)$-DP algorithm with the same runtime and approximation ratio but with additive error $O_{\alpha}\left(\frac{\sqrt{d}}{\epsilon} \cdot \poly\log\left(\frac{nd}{\eps \delta}\right)\right) + O\left(\frac{1}{\epsilon} \cdot \log(\frac{1}{\delta}) \cdot 9^{\log^*(d/\kappa)}\right)$.
\end{theorem}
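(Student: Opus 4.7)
The plan is to reduce \onecluster to \densestball via a differentially private binary search on the target radius, following the blueprint of~\cite{NissimSV16}. Since inputs lie in $\mathbb{B}_\kappa^d$, pairwise distances lie in $\{0\}\cup[\kappa,2]$, so any nontrivial optimal \onecluster radius is in $[\kappa/2,2]$. Hence it suffices to consider a geometric grid of candidate radii $r_j=(\kappa/2)(1+\alpha')^j$ with $\alpha'=\Theta(\alpha)$, yielding $J=O_\alpha(\log(1/\kappa))$ values. For each $r_j$ I would invoke the \densestball algorithm of Theorem~\ref{thm:1-cluster-given-radius-intro} with radius $r_j$ and parameter $\alpha'$, obtaining a ball $B_j$ of radius $(1+\alpha')r_j$, and then privately estimate the count $\tilde N_j=|X\cap B_j|$ (e.g., via Laplace noise). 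The output is $B_{j^*}$ where $j^*$ is the smallest index with $\tilde N_{j^*}\ge T$, scaled to a ball of radius $(1+\alpha)r_{j^*}$.

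For the \emph{pure-DP} bound I would simply split the budget $\epsilon$ uniformly across the $J$ rounds using basic composition. Since $J=O_\alpha(\log(1/\kappa))$, each call runs at budget $\Theta_\alpha(\epsilon/\log(1/\kappa))$, so Theorem~\ref{thm:1-cluster-given-radius-intro} contributes additive error $O_\alpha(\frac{d}{\epsilon}\poly\log(n/(\epsilon\kappa)))$ per call, the extra $\log(1/\kappa)$ factor being absorbed into the $\poly\log(n/(\epsilon\kappa))$ of the theorem. Laplace noise on the counts contributes an additional $O(\log J/\epsilon)$ error per threshold check, which is negligible.

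For the \emph{approximate-DP} bound the naive composition would multiply $\sqrt d/\epsilon$ by $\sqrt{\log(1/\kappa)\log(1/\delta)}$, reintroducing a $\kappa$ dependence in the main term. To avoid this I would employ the Bun--Nissim--Stemmer--Vadhan interior-point/threshold-selection machinery, which, on an (approximately) monotone sequence over a universe of $M$ thresholds, identifies the crossing index with additive error $O(\log(1/\delta)\cdot 9^{\log^*M}/\epsilon)$ under $(\epsilon,\delta)$-DP. Applied to the sequence $(\tilde N_j)$ with $M=\poly(d/\kappa)$, this yields the claimed $9^{\log^*(d/\kappa)}$ term, while each \densestball call can then be run at a constant fraction of the full privacy budget, contributing the $O_\alpha(\sqrt d/\epsilon\cdot\poly\log(nd/(\epsilon\delta)))$ term.

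The main obstacle will be the approximate (rather than exact) monotonicity of $\tilde N_j$: because \densestball returns a ball of the slightly larger radius $(1+\alpha')r_j$ and the counts carry both algorithmic slack (additive error from Theorem~\ref{thm:1-cluster-given-radius-intro}) and Laplace noise, consecutive entries of the sequence may oscillate inside their error bands. I would handle this by choosing $\alpha'$ a small constant factor below the target $\alpha$ so that $(1+\alpha')^2\le 1+\alpha$, and by using a robust ``gap'' variant of the interior-point mechanism that tolerates additive perturbations up to the \densestball additive error. A secondary subtlety is bookkeeping the composition between the selection step and the \densestball invocations so that the two additive error terms in the theorem statement add without inflating each other.
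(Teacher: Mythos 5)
Your pure-DP argument is essentially the paper's: a geometric sweep over $O_\alpha(\log(1/\kappa))$ candidate radii, each handled by a \densestball call at budget $\Theta(\eps/\log(1/\kappa))$, with noisy threshold tests; the extra $\log(1/\kappa)$ factor is absorbed into the claimed $\poly\log(n/(\eps\kappa))$. That part is fine.

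The approximate-DP part has a genuine gap. You propose to run \densestball at \emph{all} $J=O_\alpha(\log(1/\kappa))$ radii and then apply an interior-point/threshold-selection mechanism to the resulting counts $\tilde N_j$, claiming this lets ``each \densestball call be run at a constant fraction of the full privacy budget.'' That does not compose: running $J$ \densestball instances each at budget $\Omega(\eps)$ costs $\Omega(J\eps)$ in total by basic composition, and advanced composition only brings this down to $\Omega(\eps\sqrt{J\log(1/\delta)})$ — exactly the dependence you were trying to avoid. The interior-point machinery saves on the privacy cost of \emph{selecting} an index given a single \emph{deterministic} sensitivity-$1$ monotone sequence of values; it does not allow you to make $J$ independent DP queries for free. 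Moreover, even if the budget issue were fixed, applying interior-point to the sequence $(\tilde N_j)$ is not directly valid, because each $\tilde N_j$ is the output of a randomized DP subroutine (not a deterministic function of the dataset); you would need to reformulate the sequence as the deterministic quantity $f_j(X)=\max_c |X\cap\cB(c,r_j)|$, and then handle the fact that \densestball only approximately realizes this maximum.

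What the paper does instead is cleaner: it first calls the \goodradius primitive of Nissim--Stemmer--Vadhan (Theorem~\ref{thm:NSV_GoodRadius_cst_approx}), a black-box $(\eps,\delta)$-DP procedure that returns a radius $r_{\mathrm{approx}}$ within a factor $4$ of the optimum, paying precisely the $O\bigl(\tfrac{1}{\eps}\log(\tfrac{1}{\delta})\, 9^{\log^*(d/\kappa)}\bigr)$ additive term. That primitive already encapsulates the threshold-selection argument, applied to the deterministic counts, so you never have to grapple with non-monotonicity of a DP-computed sequence. With $r_{\mathrm{opt}}\in[r_{\mathrm{approx}}/4,r_{\mathrm{approx}}]$ pinned down, the geometric sweep of Lemma~\ref{le:red_pure_densest_ball_1_cluster} runs for only $O_\alpha(1)$ iterations, so the constant-fraction budget claim holds legitimately for those (few) \densestball calls, giving the $O_\alpha(\tfrac{\sqrt d}{\eps}\poly\log(\tfrac{nd}{\eps\delta}))$ term. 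If you want to salvage your plan without invoking \goodradius directly, you would need to (i) run threshold selection on the deterministic $f_j$ (not on \densestball outputs) to pick $j^*$, and only then (ii) run \densestball once at $r_{j^*}$; as written, the composition accounting does not go through.
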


\subsection{Sample and Aggregate}\label{sec:app_sample_and_aggregate}

Consider functions $f: {\cal U}^* \to \mathbb{B}_{\kappa}^d$ mapping databases to the discretized unit ball. A basic technique in DP is \emph{Sample and Aggregate} \cite{nissim2007smooth}, whose premise is that for large databases $S \in U^*$, evaluating $f$ on a random subsample of $S$ can give a good approximation to $f(S)$. This method enables bypassing worst-case sensitivity bounds in DP (see, e.g., \cite{DworkR14}) and it captures basic machine learning primitives such as bagging \cite{jordon2019differentially}. Concretely, a point $c \in \mathbb{B}_{\kappa}^d$ is an
\emph{$(m,r,\zeta)$-stable point} of $f$ on $S$ if $\Pr[\|f(S')-c\|_2 \le r] \geq \zeta$ for $S'$ a database of $m$ i.i.d.~samples from $S$. If such a point $c$ exists, $f$ is $(m,r,\zeta)$-stable on $S$, and $r$ is a radius of $c$. Via a reduction to \textsf{$1$-Cluster},~\cite{NissimSV16} find a stable point of radius within an $O(\sqrt{\log{n}})$ factor from the smallest possible while \cite{nissim2007smooth} got an $O(\sqrt{d})$ approximation, and a constant factor is subsequently implied by \cite{NissimS18}. Our \textsf{$1$-Cluster} algorithm yields a $1+\alpha$ approximation:

\begin{theorem}\label{thm:sample_and_agg_ptas_app}
Let $d, m, n \in \mathbb{N}$ and $0 < \epsilon, \zeta, \alpha, \delta, \kappa < 1$ with $m \le n$, $\epsilon \le \frac{\zeta}{72}$ and $\delta \le \frac{\epsilon}{300}$. There is an $(\epsilon, \delta)$-DP algorithm that takes $f:U^n \to \mathbb{B}_{\kappa}^d$ and parameters $m$, $\zeta$, $\epsilon$, $\delta$, runs in time $(\frac{nd}{m})^{O_{\alpha}(1)} \poly\log(\frac{1}{\kappa})$ plus the time for $O(\frac{n}{m})$ evaluations of $f$ on a dataset of size $m$, and whenever $f$ is $(m, r, \zeta)$-stable on $S$, with probability $0.99$, the algorithm outputs an $(m, (1+\alpha) r, \frac{\zeta}{8})$-stable point of $f$ on $S$, provided that $n \geq m \cdot O_{\alpha}\left(\frac{\sqrt{d}}{\epsilon} \cdot \poly\log\left(\frac{nd}{\eps \delta}\right) + \frac{1}{\epsilon} \cdot \log(\frac{1}{\delta}) \cdot 9^{\log^*(d/\kappa)}\right)$.
\end{theorem}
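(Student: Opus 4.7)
The plan is to recast the sample-and-aggregate problem as a call to the \onecluster algorithm of Theorem~\ref{th:1_cluster_app}, following the reduction introduced in \cite{NissimSV16}, and then simply inherit the improved approximation ratio. Concretely, I would first split the input $S \in U^n$ into $N := \lfloor n / m \rfloor$ disjoint sub-databases $S_1, \dots, S_N$ of size $m$ (chosen by a fixed partition of a random permutation of $S$), evaluate $y_i := f(S_i) \in \mathbb{B}_{\kappa}^d$ for each $i$, and then run the $(\epsilon, \delta)$-DP \onecluster algorithm of Theorem~\ref{th:1_cluster_app} on the dataset $\bY := (y_1, \dots, y_N)$ with target count $T := \lceil \zeta N / 4 \rceil$. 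The output is the returned center $\hat{c}$.

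For privacy, the key observation is that any modification of a single record of $S$ changes at most one of the $S_i$'s, and hence at most one of the $y_i$'s. So the map $S \mapsto \bY$ preserves neighboring relations on datasets, and $(\epsilon,\delta)$-DP of \onecluster transfers verbatim to the overall algorithm by postprocessing. The runtime bound is immediate from the evaluation cost of $f$ (done $N = O(n/m)$ times) plus the $(Nd)^{O_\alpha(1)} \poly\log(1/\kappa)$ cost of \onecluster.

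For utility, assume $f$ is $(m, r, \zeta)$-stable on $S$ with stable point $c$. Since each $S_i$ is distributed (marginally) like an $m$-sample without replacement from $S$, and since $m \le n$, its distribution is within total variation $O(m/n)$ of an i.i.d.\ sample of size $m$. Thus $\Pr[\|y_i - c\| \le r] \ge \zeta - o(1) \ge \zeta/2$ for each $i$, and a Chernoff bound (using independence of the disjoint blocks, up to the usual negligible correction for without-replacement sampling) shows that at least $\zeta N/4 = T$ of the $y_i$'s lie in $\cB(c, r)$ with probability $\ge 0.995$. Plugging $\bY$ and $T$ into Theorem~\ref{th:1_cluster_app} produces a ball of radius $(1+\alpha) r$ containing at least $T - t$ of the $y_i$'s, where
\[
t = O_\alpha\!\left(\tfrac{\sqrt{d}}{\epsilon}\poly\log\tfrac{nd}{\epsilon\delta}\right) + O\!\left(\tfrac{1}{\epsilon}\log\tfrac{1}{\delta}\cdot 9^{\log^*(d/\kappa)}\right).
\]
The assumed lower bound on $n$ in the theorem statement is precisely what is needed so that $m \cdot t \le \zeta n / 16$, equivalently $t \le \zeta N / 16$, which guarantees that at least $\zeta N/4 - \zeta N/16 \ge \zeta N/8$ of the $y_i$'s satisfy $\|y_i - \hat{c}\| \le (1+\alpha)r$. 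Converting this empirical fraction back to a probability over a fresh i.i.d.\ sample $S'$ of size $m$ (again via the $O(m/n)$ TV bound and the fact that the sampled block is independent of the DP output up to a $(\epsilon,\delta)$ change in distribution absorbed by $\epsilon \le \zeta/72$ and $\delta \le \epsilon/300$) gives $\Pr[\|f(S') - \hat{c}\| \le (1+\alpha)r] \ge \zeta/8$, i.e., $\hat{c}$ is $(m, (1+\alpha)r, \zeta/8)$-stable.

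The main obstacle is bookkeeping: cleanly propagating the loss factors through the three separate concentration arguments (Chernoff to pass from $\zeta$ to $\zeta/2$, additive error of \onecluster to pass from $\zeta/4$ to $\zeta/8$, and the without-replacement vs.\ i.i.d.\ comparison), and verifying that the $(\epsilon,\delta)$ privacy loss for the output distribution of \onecluster is small enough—given the constraints $\epsilon \le \zeta/72$ and $\delta \le \epsilon/300$—to not materially degrade the probability that $f(S')$ lands near $\hat{c}$ when $S'$ is freshly drawn after the DP algorithm has fixed $\hat{c}$. All other ingredients follow mechanically from Theorem~\ref{th:1_cluster_app} and the reduction template of \cite{NissimSV16}.
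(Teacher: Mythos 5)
Your route is the same as the paper's: reduce finding a stable point to \onecluster via the sample-and-aggregate reduction of Nissim, Stemmer, and Vadhan, and then invoke the improved $(\epsilon,\delta)$-DP \onecluster algorithm (Theorem~\ref{th:one_cluster_approx_DP}). The only structural difference is that the paper cites that reduction as a black box (Lemma~\ref{le:red_1_cluster_sample_and_agg}), whereas you sketch a re-derivation of it; with the black-box citation, the proof is a one-line composition.

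Your re-derivation, however, has two real weak points. First, the claim that $m$ samples without replacement from a size-$n$ database are within total variation $O(m/n)$ of $m$ i.i.d.\ samples is wrong: the TV distance scales like $m^2/n$ (a birthday-collision bound), so it is not $o(1)$ when $m \gtrsim \sqrt{n}$, a regime the theorem permits. Second and more substantively, the passage from ``a $\zeta N/8$ fraction of the $y_i$'s lie within $(1+\alpha)r$ of $\hat{c}$'' to ``a \emph{fresh} i.i.d.\ sample $S'$ satisfies $\Pr[\|f(S') - \hat{c}\| \le (1+\alpha) r] \ge \zeta/8$'' is precisely the crux of the NSV reduction: one needs the $(\epsilon,\delta)$-DP guarantee of the \onecluster subroutine to show $\hat{c}$ does not overfit the particular blocks it was computed from, and this is exactly where the hypotheses $\epsilon \le \zeta/72$ and $\delta \le \epsilon/300$ (i.e., $\delta \le \beta\epsilon/3$ with $\beta = 0.01$) are consumed. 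You name this step but do not supply the argument, and it is not a routine concentration bound.
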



\subsection{Agnostic Learning of Halfspaces with a Margin}\label{sec:app_agnostic_halfspaces}

We next apply our algorithms to the well-studied problem of agnostic learning of halfspaces with a margin (see, e.g., \cite{Ben-DavidS00,bartlett2002rademacher,mcallester2003simplified,SSS09,BirnbaumS12,diakonikolas2019nearly,DKM20}).
Denote the error rate of a hypothesis $h$ on a distribution $D$ on labeled samples by $\err^D(h)$, and the $\mu$-margin error rate of halfspace $h_u(x) = \sgn(u \cdot x)$ on $D$ by $\err^D_{\mu}(u)$. (See Appendix~\ref{sec:agnostic_halfspaces_margin} for precise definitions.)
Furthermore, let $\opt^D_{\mu} := \min_{u \in \mathbb{R}^d} \err^D_{\mu}(u)$.
The problem of learning halfspaces with a margin in the agnostic PAC model~\cite{haussler1992decision,kearns1994toward} can be defined as follows.

\begin{definition}
Let $d \in \mathbb{N}$ and $\mu, t \in \mathbb{R}^{+}$. An algorithm \emph{properly agnostically PAC learns halfspaces} with margin $\mu$, error $t$ and sample complexity $m$, if given as input a training set $S = \{(x^{(i)}, y^{(i)})\}_{i=1}^m$ of i.i.d. samples drawn from an unknown distribution $D$ on $\cB(0, 1) \times \{\pm 1\}$, it outputs a halfspace $h_u :\mathbb{R}^d \to \{\pm 1\}$ satisfying $\err^D(h_u) \le \opt^D_{\mu} + t$ with probability $0.99$.
\end{definition}

Via a reduction of \cite{Ben-DavidS00, Ben-DavidES02} from agnostic learning of halfspaces with a margin to \densestball, we can use our \densestball algorithm to derive the following:

\begin{theorem}\label{thm:DP_agn_learn_halfspaces_app}
For $0 < \mu, t < 1$, there is an $\eps$-DP algorithm that runs in time $(\frac{1}{\epsilon t})^{O_{\mu}(1)} + \poly\left(O_{\mu}\left(\frac{d}{\epsilon t}\right)\right)$, and with probability $0.99$, properly agnostically learns halfspaces with margin $\mu$, error $t$, and sample complexity $O_{\mu} \left(\frac{1}{\eps t^2}\cdot\poly\log\left(\frac{1}{\eps t}\right)\right)$.
\end{theorem}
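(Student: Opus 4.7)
Proof plan: The plan is to combine three ingredients: (i) classical uniform convergence for halfspaces with a margin, (ii) a data-oblivious JL projection that preserves margins while dropping the ambient dimension to $d' = O_\mu(\log(1/(\epsilon t)))$, and (iii) the reduction of~\cite{Ben-DavidS00,Ben-DavidES02} from agnostic learning of margin-$\mu$ halfspaces to \densestball at radius $r_\mu \le \sqrt{2(1-\mu)}$ on the points $z^{(i)} := y^{(i)} x^{(i)}$. Our $\epsilon$-DP \densestball algorithm (Theorem~\ref{thm:1-cluster-given-radius-intro}) is then invoked in $\mathbb{R}^{d'}$ as a black box, and the center it returns is lifted back to $\mathbb{R}^d$ to yield the hypothesis.

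Concretely, I would first set $m = O_\mu\bigl(\tfrac{1}{\epsilon t^2} \poly\log\tfrac{1}{\epsilon t}\bigr)$ so that by standard fat-shattering/Rademacher bounds for margin halfspaces, with probability $0.995$ one has $\err^D(h_u) \le \err^S_{\mu/2}(h_u) + t/4$ for every unit $u$, and $\err^S_\mu(u^*) \le \opt^D_\mu + t/4$ for the population optimizer $u^*$. Thus it suffices to privately produce $\hat u$ with $\err^S_{\mu/2}(\hat u) \le \err^S_\mu(u^*) + t/2$. I would then apply a data-independent JL map $\Pi : \mathbb{R}^d \to \mathbb{R}^{d'}$ whose distortion on all $\binom{m}{2}$ pairwise inner products among samples is at most $\mu/16$, so that a $\mu$-margin halfspace in $\mathbb{R}^d$ induces a $(7\mu/8)$-margin halfspace on $\Pi(x^{(i)})$ in $\mathbb{R}^{d'}$ and, conversely, any $(5\mu/8)$-margin halfspace in $\mathbb{R}^{d'}$ lifts via $\Pi^\top$ (with renormalization) to a $(\mu/2)$-margin halfspace in the original space.

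Third, the Ben-David--Eiron--Simon identification says that in $\mathbb{R}^{d'}$ a unit vector $u$ has $\mu$-margin on $z^{(i)} := y^{(i)} \Pi(x^{(i)})$ exactly when $z^{(i)} \in \cB(u, r_\mu)$, so maximizing the $\mu$-margin empirical count is exactly \densestball at radius $r_\mu$ on the $z^{(i)}$. I would call Theorem~\ref{thm:1-cluster-given-radius-intro} with approximation parameter $\alpha_0 = \Theta(\mu)$ chosen small enough that the $(1+\alpha_0)r_\mu$-ball corresponds to a halfspace with margin at least $5\mu/8$; its output $\hat u$ classifies, with this margin, all but at most
\begin{equation*}
m \cdot \err^S_\mu(u^*) + O_\mu\!\left(\tfrac{d'}{\epsilon} \log\tfrac{d'}{r_\mu}\right) \;=\; m \cdot \err^S_\mu(u^*) + O_\mu\!\left(\tfrac{1}{\epsilon} \log\tfrac{1}{\epsilon t}\right)
\end{equation*}
of the samples. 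Dividing by $m$, the DP additive slack is at most $t/4$, and chaining the three bounds gives $\err^D(h_{\hat u}) \le \opt^D_\mu + t$. Privacy is immediate, since $\Pi$ is data-independent, \densestball is $\epsilon$-DP, and lifting by $\Pi^\top$ is post-processing; the runtime decomposes as $\poly(O_\mu(d/(\epsilon t)))$ for projecting and lifting, plus $(md')^{O_{\alpha_0}(1)}\poly\log(1/r_\mu) = (1/(\epsilon t))^{O_\mu(1)}$ for the \densestball call.

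The delicate point is calibrating $d'$ and $\alpha_0$ simultaneously as functions of $\mu$: $d' = \Theta(\mu^{-2}\log(m/\beta))$ must be large enough to keep the JL distortion below $\mu/16$ on all $\binom{m}{2}$ pairs while being small enough that plugging it into the additive error of Theorem~\ref{thm:1-cluster-given-radius-intro} yields $O_\mu(\tfrac{1}{\epsilon}\log\tfrac{1}{\epsilon t})$; meanwhile $\alpha_0 = \Theta(\mu)$ must be small enough that the $(1+\alpha_0)r_\mu$-blowup in the \densestball guarantee combined with the JL distortion still keeps the lifted halfspace's margin above $\mu/2$. These two budgets are both polynomial in $\mu^{-1}$, compose cleanly through the black-box calls, and produce a sample complexity with no residual dimension dependence, matching the claimed bound.
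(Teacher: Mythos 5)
Your high-level plan (JL projection down to $d' = O_\mu(\log(1/(\epsilon t)))$, then the Ben-David--Eiron--Simon reduction to \densestball, then the $\epsilon$-DP \densestball algorithm, then lift and generalize) is exactly the paper's skeleton, and the sample-complexity and runtime arithmetic works out. The gap is in the step where you ``lift via $\Pi^\top$ (with renormalization) to a $(\mu/2)$-margin halfspace in the original space.'' That claim is false in general: for a JL map $\Pi \in \R^{d' \times d}$ normalized so that $\|\Pi x\| \approx \|x\|$, the adjoint inflates norms, $\|\Pi^\top \hat u\| = \Theta\bigl(\sqrt{d/d'}\,\|\hat u\|\bigr)$ uniformly over $\hat u \in \R^{d'}$. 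Since $\langle \Pi^\top \hat u, x^{(i)}\rangle = \langle \hat u, \Pi x^{(i)}\rangle$, the margin of $\Pi^\top \hat u$ on $x^{(i)}$ is
\begin{equation*}
\frac{|\langle \hat u, \Pi x^{(i)}\rangle|}{\|\Pi^\top \hat u\|\,\|x^{(i)}\|}
\;\approx\; \sqrt{\tfrac{d'}{d}} \cdot \frac{|\langle \hat u, \Pi x^{(i)}\rangle|}{\|\hat u\|\,\|\Pi x^{(i)}\|},
\end{equation*}
i.e., the margin shrinks by a factor $\sqrt{d'/d}$, which vanishes as $d$ grows. No renormalization of $\Pi$ or of $\hat u$ fixes this, since margins are scale-invariant. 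Consequently, applying the original-space uniform convergence bound at margin $\mu/2$ to $\Pi^\top\hat u$ does not give what you want, and you cannot simply union-bound a JL guarantee over $\hat u$ either, because $\hat u$ is chosen after seeing the data.

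The paper avoids this by reordering the last two steps: it applies the margin generalization bound (Lemma~\ref{le:gen_bd_halfspaces_margin}) \emph{in the projected space} $\R^{d'}$, bounding $\err^{D_A}(u')$ by $\err^{S_A}_{0.5\mu}(u') + 0.1t$ uniformly over $u' \in \R^{d'}$, and then observes that the \emph{zero-one} error (not the margin error) transfers exactly under lifting: $\err^D(A^\top u') = \err^{D_A}(u')$, because $\sgn(\langle A^\top u', x\rangle) = \sgn(\langle u', Ax\rangle)$. This sidesteps the margin distortion of $A^\top$ entirely. To repair your argument, swap the order: generalize while still in $\R^{d'}$ (which also makes the sample complexity free of $d$ automatically, since $d' = O_\mu(\log(1/(\epsilon t)))$), then lift using only the exact identity for the sign. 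A second, smaller imprecision: preserving the $\binom{m}{2}$ pairwise inner products \emph{among samples} does not control $\langle u^*, x^{(i)}\rangle$, since $u^*$ may be a combination of samples with large coefficients; you need to include $u^*$ itself in the JL union bound, or (as the paper does via Lemma~\ref{le:JL_properties}) take an expectation over the projection of the margin-error rate of the fixed $u^*$ and apply Markov's inequality.
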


We reiterate that this result can also be derived by an algorithm of Nguyen et al.~\cite{le2020efficient}\footnote{\cite{le2020efficient} analyzed their algorithm only in the realizable case where $\opt^D_\mu = 0$ but the guarantee of their algorithm can also be extended to the agnostic case.}; we prove Theorem~\ref{thm:DP_agn_learn_halfspaces_app} here as it is a simple blackbox application of the \densestball algorithm.


\subsection{\cp}

Finally, we depart from the notion of DP and instead give an application of efficiently list-decodable covers to the \cp problem:

\begin{definition}[\cp]
Given points $x_1, \dots, x_n \in \Z^d$, where each coordinate of $x_i$ is represented as an $L$-bit integer, and an integer $\xi \in \Z$, determine whether there exists $1 \leq i < j \leq n$ such that $\|x_i - x_j\|_2^2 \leq \xi$.
\end{definition}

In the dynamic setting of \cp, we start with an empty set $S$ of points.  At each step, a point maybe added to and removed\footnote{Throughout, we assume without loss of generality that $x$ must belong to $S$ before ``remove $x$'' can be invoked. To make the algorithm work when this assumption does not hold, we simply keep a history-independent data structure that can quickly answer whether $x$ belongs to $S$~\cite{Ambainis07,BernsteinJLM13}.} from $S$, and we have to answer whether there are two distinct points in $S$ whose squared Euclidean distance is at most $\xi$. 

Our main contribution is a faster history-independent data structure for dynamic \cp. Recall that a deterministic data structure is said to be \emph{history-independent} if, for any two sequences of updates that result in the same set of points, the states of the data structure must be the same in both cases. For a randomized data structure, we say that it is history-independent if, for any two sequences of updates that result in the same set of points, the distribution of the state of the data structure must be the same.

\begin{theorem} 
\label{thm:dynamic-cp}
There is a history-independent randomized data structure for dynamic \cp that supports up to $n$ updates, with each update takes $2^{O(d)} \poly(\log n, L)$ time, and uses $O(nd \cdot \poly(\log n, L))$ memory. 
\end{theorem}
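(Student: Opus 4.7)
The plan is to build a geometric hash based on the efficiently list-decodable lattice cover of Lemma~\ref{lem:cover-main-main-body}. After rescaling the $L$-bit integer coordinates into the unit ball, I instantiate the cover $C_\Delta$ at scale $\Delta = \Theta(\sqrt{\xi})$; for each $x \in \Z^d$, let $c(x) \in C_\Delta$ denote the canonical closest cover point returned by the decoder of Lemma~\ref{lem:cover-main-main-body}. Two points $x, y$ with $\|x - y\|^2 \leq \xi$ must have $c(y)$ within distance $\sqrt{\xi} + 2\Delta$ of $c(x)$, and by list-decodability this places $c(y)$ in a set $N(x) \subseteq C_\Delta$ of size at most $2^{O(d)}$ that the decoder enumerates in $2^{O(d)} \poly(L, \log n)$ time.

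The data structure itself would consist of (a) a strongly history-independent hash table $H$~\cite{Ambainis07,BernsteinJLM13} keyed by cover points, with $H[c]$ storing $\{x \in S : c(x) = c\}$; (b) for each $x \in S$, a count $\tau(x) := |\{y \in S \setminus \{x\} : \|x - y\|^2 \leq \xi\}|$ kept in a history-independent dictionary; and (c) a global counter equal to $\tfrac{1}{2} \sum_{x \in S} \tau(x)$. A \cp query outputs ``yes'' iff the global counter is positive. To insert $x$, I compute $c(x)$ and $N(x)$, visit each bucket $H[c']$ for $c' \in N(x)$ to identify the subset $R(x) \subseteq S$ of existing points within distance $\sqrt{\xi}$ of $x$, then set $\tau(x) \gets |R(x)|$, increment $\tau(y)$ for each $y \in R(x)$, add $|R(x)|$ to the global counter, and finally insert $x$ into $H[c(x)]$. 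Deletion mirrors this exactly.

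The time analysis combines the $2^{O(d)}\poly(L,\log n)$ decoder cost with $|N(x)| \leq 2^{O(d)}$; the remaining question is how fast the in-cell search for $R(x)$ can be performed. I would handle this by recursively repeating the same construction inside each non-trivial cell at scales $\Delta/2, \Delta/4, \ldots$. Since the input coordinates have $L$-bit precision, after $O(L + \log n)$ levels every still-occupied cell contains only coincident points, so the distance computation at the leaves is trivial and the work per update telescopes to $2^{O(d)} \poly(\log n, L)$. Summed across levels, the total number of non-empty hash-table entries is $O(n(L + \log n))$, and each entry costs $\poly(\log n, L)$ bits plus the storage of one input point at $O(dL)$ bits, giving total memory $O(nd \cdot \poly(\log n, L))$.

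The main obstacle is securing end-to-end history independence of this composite structure. Since the decoder is deterministic and the counts $\tau(x)$ and the global counter are deterministic functions of $S$, the only source of order-dependence is the representation of each bucket $H[c]$ and of the dictionary holding the $\tau(x)$'s. Layering each of these on top of the strongly history-independent set structures of~\cite{Ambainis07,BernsteinJLM13}---at every recursion level---forces the in-memory representation to be a deterministic function of $S$ once the one-time hashing randomness is fixed. Establishing this invariant uniformly across all levels is the delicate part of the argument, but once it is in place the distribution of the memory state depends only on $S$, which yields the claimed history-independent randomized dynamic \cp data structure.
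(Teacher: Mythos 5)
Your geometric framework --- instantiate the lattice cover at scale $\Theta(\sqrt{\xi})$, decode via Lemma~\ref{lem:cover-main-main-body}, and hash on cover points, with each cell having only $2^{O(d)}$ nearby cells --- matches the paper's starting point. However, there is a concrete running-time gap in the interior of your data structure. You maintain per-point near-neighbor counts $\tau(x)$, and inserting or deleting a point $x$ requires incrementing or decrementing $\tau(y)$ for \emph{every} $y \in R(x)$. The set $R(x)$ can have size $\Omega(n)$ (e.g., if $\Omega(n)$ input points cluster inside a single ball of radius $\sqrt{\xi}/2$), so these per-point count updates alone can cost $\Omega(n)$ time per operation, irrespective of how quickly $R(x)$ is enumerated. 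The recursive subdivision you add is aimed at locating $R(x)$ fast, not at avoiding the $|R(x)|$-many writes to the $\tau$-dictionary, so it does not close this gap; and the termination argument for the recursion (``after $O(L+\log n)$ levels every occupied cell contains only coincident points'') still leaves unbounded multiplicities at the leaves, with no per-update time bound extracted from it. You also flag, but do not resolve, the end-to-end history-independence of the multi-level construction.

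The paper's proof avoids all of this with one observation that is absent from your proposal: the Voronoi cells of the $0.5\sqrt{\xi}$-cover each have diameter at most $\sqrt{\xi}$, so \emph{any cell containing two or more current points immediately forces the answer YES}. Consequently the data structure never needs to distinguish individual points inside a crowded cell. The paper's map stores, per cell, only $n_{count}$, the XOR $x_\oplus$ of the points in the cell (used to recover the unique point exactly when $n_{count}=1$), and a local pair-count $p_{\leq \xi}$; globally it keeps one counter for cells with $n_{count} \geq 2$ and one for pairs of \emph{singly-occupied} adjacent cells whose unique points are within $\sqrt{\xi}$. An insertion or deletion touches $M[c]$ and the $2^{O(d)}$ adjacent cells only, yielding the $2^{O(d)}\poly(\log n, L)$ update bound with a single (non-recursive) instance of the history-independent map of~\cite{BernsteinJLM13}; history-independence is then immediate because the map and the two counters are deterministic functions of the current point set.
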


We remark that the data structure is only randomized in terms of the layout of the memory (i.e., state), and that the correctness always holds. Our data structure improves that of Aaronson et al.~\cite{Aaronson-cp}, in which the running time per update operation is $d^{O(d)} \poly(\log n, L)$.

Aaronson et al.~\cite{Aaronson-cp} show how to use their data structure together with quantum random walks from~\cite{MagniezNRS11} (see also \cite{Ambainis07,Szegedy04}) to provide a fast quantum algorithm for \cp in low dimensions which runs in time $d^{O(d)} n^{2/3} \poly(\log n, L)$. 
With our improvement above, we immediately obtain a speed up in terms of the dependency on $d$ under the same model\footnote{The model assumes the presence of gates for random access to an $m$-qubit quantum memory that takes time only $\poly(\log m)$. As discussed in~\cite{Ambainis07}, such an assumption is necessary even for element distinctness, which is an easier problem than \cp.}:

\begin{corollary}
There exists a quantum algorithm that solves (offline) \cp with probability 0.99 in time $2^{O(d)} n^{2/3} \poly(\log n, L)$.
\end{corollary}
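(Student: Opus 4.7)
The plan is to feed the improved history-independent dynamic \cp data structure of Theorem~\ref{thm:dynamic-cp} into the quantum walk framework of Magniez--Nayak--Roland--Santha (MNRS) in precisely the same way that Aaronson et al.~\cite{Aaronson-cp} did, and then simply re-balance the parameters using our faster update time. Concretely, the quantum walk takes place on the Johnson graph $J(n, r)$ whose vertices are the $r$-element subsets of $[n]$ and whose edges connect subsets that differ by a single element. A vertex $S$ is declared \emph{marked} when the corresponding subset of input points contains a pair at squared distance at most $\xi$; this can be tested by inspecting the state of the dynamic data structure associated with $S$. If there is a close pair, the fraction of marked vertices is $\epsilon = \Theta(r^2/n^2)$, and the spectral gap of $J(n,r)$ is $\delta = \Theta(1/r)$.

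Using our data structure, the three basic cost parameters of MNRS become: setup cost $S = r \cdot 2^{O(d)}\poly(\log n, L)$ (inserting $r$ points), update cost per walk step $U = 2^{O(d)}\poly(\log n, L)$ (one deletion followed by one insertion), and checking cost $C = \poly(\log n, L)$ (reading the precomputed closest-pair flag). The MNRS theorem then yields a bounded-error quantum algorithm whose total time is
\[
S + \frac{1}{\sqrt{\epsilon}}\left(\frac{1}{\sqrt{\delta}}\, U + C\right) \;=\; r \cdot 2^{O(d)}\poly(\log n, L) \;+\; \frac{n}{r}\cdot\bigl(\sqrt{r}\,U + C\bigr).
\]
Setting $r = n^{2/3}$ equalizes the setup and walk terms and yields the claimed bound $2^{O(d)} n^{2/3}\poly(\log n, L)$. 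The success probability can be amplified to $0.99$ by standard $O(1)$-fold repetition.

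The one subtlety — and the step that really requires Theorem~\ref{thm:dynamic-cp} as opposed to any dynamic \cp structure — is that the MNRS walk operates on quantum superpositions of database states indexed by $r$-subsets, and coherent interference between two walk paths that arrive at the same subset is only possible if the quantum data registers they produce are \emph{identical as quantum states}. This is exactly the role of history-independence: because the distribution over internal representations depends only on the current set $S$ and not on the sequence of insertions/deletions used to reach it, one can coherently implement the walk transitions without introducing ``garbage'' that breaks interference. This is the key point where a naive dynamic data structure would fail, and it is the reason Theorem~\ref{thm:dynamic-cp} was proved in history-independent form. Given that guarantee, the rest of the argument is a black-box invocation of MNRS, and the improvement over~\cite{Aaronson-cp} from $d^{O(d)}$ to $2^{O(d)}$ comes entirely from the improved update cost of our data structure.
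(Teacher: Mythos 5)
Your proposal is correct and follows exactly the same route as the paper: the paper treats this corollary as an immediate consequence of plugging the improved $2^{O(d)}\poly(\log n, L)$ update time of Theorem~\ref{thm:dynamic-cp} into the MNRS quantum-walk framework as instantiated by Aaronson et al., and your unpacking of the Johnson-graph parameters, the MNRS cost formula, and the choice $r = n^{2/3}$ matches that argument. Your explanation of why history-independence is essential (coherent interference between walk branches reaching the same subset) is also the right intuition and is the same point the paper leans on implicitly.
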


\paragraph{Proof Overview.} We will now briefly give an outline of the proof of Theorem~\ref{thm:dynamic-cp}. Our proof in fact closely follows that of Aaronson et al.~\cite{Aaronson-cp}. As such, we will start with the common outline before pointing out the differences. At a high-level, both algorithms partition the space $\R^d$ into small cells $C_1, C_2, \dots$, each cell having a diameter at most $\sqrt{\xi}$. Two cells $C, C'$ are said to be \emph{adjacent} if there are $x \in C, x' \in C'$ for which $\|x - x'\|_2^2 \leq \xi$. The main observations here are that (i) if there are two points from the same cell, then clearly the answer to \cp is YES and (ii) if no two points are from the same cell, it suffices to check points from adjacent cells. Thus, the algorithm maintains a map from each present cell to the set of points in the cell, and the counter $p_{\leq \xi}$ of the number of points from different cells that are within $\sqrt{\xi}$ in Euclidean distance. A data structure to maintain such a map is known~\cite{Ambainis07,BernsteinJLM13} (see Theorem~\ref{thm:dynamic-map}). As for $p_{\leq \xi}$, adding/removing a point only requires one to check the cell to which the point belongs, together with the adjacent cells. Thus, the update will be fast, as long as the number of adjacent cells (to each cell) is small.

The first and most important difference between the two algorithms is the choice of the cells. \cite{Aaronson-cp} lets each cell be a $d$-dimensional box of length $\sqrt{\xi / d}$, which results in the number of adjacent cells being $d^{O(d)}$. On the other hand, we use a $(0.5\sqrt{\xi})$-cover from Lemma~\ref{lem:cover-main} and let the cells be the Voronoi cells of the cover. It follows from the list size bound at distance $(1.5\sqrt{\xi})$ that the number of adjacent cells is at most $2^{O(d)}$. This indeed corresponds to the speedup seen in our data structure.

A second modification is that, instead of keeping all points in each cell, we just keep their (bit-wise) XOR. The reason behind this is the observation (i) above, which implies that, when there are more than one point in a cell, it does not matter anymore what exactly these points are. This helps simplify our proof; in particular,~\cite{Aaronson-cp} needs a different data structure to handle the case where there is more than one solution; however, our data structure works naturally for this case.

There are several details that we have glossed over; the full proof will be given in Section~\ref{subsec:online-cp}.

\section{Conclusion and Open Questions}\label{sec:conc_open_questions}
In this work, we obtained tight approximation ratios for several fundamental DP clustering tasks.  An interesting research direction is to study the smallest possible additive error for DP clustering while preserving the tight non-private approximation ratios that we achieve.  Another important direction is to obtain practical implementations of DP clustering algorithms that could scale to large datasets with many clusters.
We focused in this work on the Euclidean metric; it would also be interesting to extend our results to other metric spaces.

\section*{Acknowledgments}

We are grateful to Noah Golowich for providing helpful comments on a previous draft. We also thank Nai-Hui Chia for useful discussions on the quantum \cp problem.

\bibliographystyle{alpha}
\bibliography{refs}

\appendix

\section*{Appendix}

We give some further preliminaries in Section~\ref{sec:add_prelims}. Our algorithms for \densestball in low dimensions are given and analyzed in Section~\ref{sec:densest-ball-low-dim}, and those for \kmeans and \kmedian are presented in Section~\ref{sec:cluster-low-dim}. The resulting algorithms in high dimensions are obtained in Section~\ref{sec:dim_red_cluster_db}. Our results for \onecluster, Sample and Aggregate, agnostic learning of halfspaces with a margin, and \cp are presented in Sections~\ref{sec:one_cluster_from_densest_ball},~\ref{sec:sample_and_aggregate},~\ref{sec:agnostic_halfspaces_margin}, and~\ref{sec:closest-pair} respectively.

\section{Additional Preliminaries}\label{sec:add_prelims}
For any vector $v \in \mathbb{R}^d$, we denote by $\|v\|_2$ its $\ell_2$-norm, which is defined by $\|v\|_2 := \sqrt{\sum_{i=1}^d v_i^2}$; most of the times we simply use $\|v\|$ as a shorthand for $\|v\|_2$.  For any positive real number $\lambda$, the Discrete Laplace distribution $\DLap(\lambda)$ is defined as 
$\DLap(k; \lambda) := \frac{1}{C(\lambda)} \cdot e^{- \frac{|k|}{\lambda}}$ for any $k \in \mathbb{Z}$, where $C(\lambda) := \sum_{k = -\infty}^{\infty} e^{- \frac{|k|}{\lambda}}$ is the normalization constant.

\subsection{Composition Theorems}\label{sec:add_prelim}

We recall the ``composition theorems'' that allow us to easily keep track of privacy losses when running multiple algorithms on the same dataset. 

\begin{theorem}[Basic Composition~\cite{DworkKMMN06}] \label{thm:basic-composition}
For any $\epsilon, \delta \geq 0$ and $k \in \N$, an algorithm that runs $k$ many $(\epsilon, \delta)$-DP algorithms (possibly adaptively) is $(k \epsilon, k \delta)$-DP.
\end{theorem}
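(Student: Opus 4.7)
The plan is to prove Theorem~\ref{thm:basic-composition} by induction on $k$. The base case $k=1$ is immediate from the definition of $(\eps,\delta)$-DP. For the inductive step, I would view the $k$-fold adaptive composition as first running the composition of the first $k-1$ mechanisms (which is $((k-1)\eps, (k-1)\delta)$-DP by the induction hypothesis) and then running one further $(\eps,\delta)$-DP mechanism whose choice may depend arbitrarily on the first $k-1$ outputs. Thus the whole theorem reduces to the following two-mechanism composition lemma: if $M_1$ is $(\eps_1,\delta_1)$-DP and $M_2(\cdot,y_1)$ is $(\eps_2,\delta_2)$-DP for every auxiliary value $y_1$, then the mechanism $\bX \mapsto (M_1(\bX), M_2(\bX, M_1(\bX)))$ is $(\eps_1+\eps_2, \delta_1+\delta_2)$-DP.

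To prove the key lemma, I would fix neighboring datasets $\bX, \bX'$ and a measurable set $S$ of output pairs, write $S_{y_1} := \{y_2 : (y_1,y_2)\in S\}$, and expand
\[
\Pr[(M_1(\bX), M_2(\bX,M_1(\bX))) \in S] \;=\; \E_{y_1 \sim M_1(\bX)}\!\left[\Pr[M_2(\bX,y_1) \in S_{y_1}]\right].
\]
For each fixed $y_1$ the inner probability is controlled by $(\eps_2,\delta_2)$-DP of $M_2(\cdot,y_1)$, contributing a multiplicative $e^{\eps_2}$ and an additive $\delta_2$. The resulting expectation in $y_1$ is then of the form $\E_{y_1 \sim M_1(\bX)}[g(y_1)]$ for a $[0,1]$-valued function $g$; and the $(\eps_1,\delta_1)$-DP of $M_1$ implies the same $(e^{\eps_1}, \delta_1)$-bound on expectations of bounded $[0,1]$-valued functions via the layer-cake identity $\E[g(Y)] = \int_0^1 \Pr[g(Y) \ge t]\,dt$ applied inside the DP inequality.

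The one subtlety is that naively chaining these two steps gives only $(\eps_1+\eps_2,\, e^{\eps_2}\delta_1+\delta_2)$-DP, whereas the theorem asks for $(\eps_1+\eps_2,\,\delta_1+\delta_2)$-DP. I expect this to be the main obstacle to a fully clean write-up. The standard fix is to invoke the ``bad event'' characterization of approximate DP: for each $(\eps_i,\delta_i)$-DP mechanism $M_i$ and each pair of neighbors there is a measurable set $E_i$ with $\Pr[M_i(\bX)\in E_i]\le \delta_i$ on which the pointwise $\eps_i$-DP likelihood ratio bound may fail, but such that \emph{outside} $E_i$ the bound $\Pr[M_i(\bX)=y] \le e^{\eps_i}\Pr[M_i(\bX')=y]$ holds. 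Splitting the expectation above into the contributions from $y_1 \in E_1$ (bounded by $\delta_1$ directly) and $y_1 \notin E_1$ (where one can multiply pointwise likelihood ratios, contributing $e^{\eps_1+\eps_2}$ and an additive $\delta_2$ from the inner bad event of $M_2$), the two bad-event budgets simply add, yielding the desired clean $(\eps_1+\eps_2,\delta_1+\delta_2)$-DP guarantee without any $e^{\eps_2}$ blowup. Iterating this two-mechanism lemma $k-1$ times completes the induction.
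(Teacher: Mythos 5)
The paper cites this theorem from \cite{DworkKMMN06} without giving a proof, so there is no in-paper argument to compare against; I will assess your proposal on its own. Your overall structure---induct on $k$ and reduce to the two-mechanism adaptive composition lemma, proved by separating off a ``bad'' part of mass at most $\delta_1$---is the standard route, and you correctly identify the crux: naively chaining the two $(\eps,\delta)$ bounds (in either order) inflates one of the additive terms to $e^{\eps}\delta$, so an argument that isolates a $\delta$-mass bad part is genuinely needed.

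The specific bad-event characterization you invoke, however, is not a valid consequence of $(\eps,\delta)$-DP. You assert the existence of a measurable set $E$ with $\Pr[M(\bX)\in E]\le\delta$ outside of which the pointwise ratio $\Pr[M(\bX)=y]\le e^{\eps}\Pr[M(\bX')=y]$ holds. This is false. Let $M(\bX)$ put mass $\tfrac{1+\delta}{2}$ on $0$ and $\tfrac{1-\delta}{2}$ on $1$, and let $M(\bX')$ be uniform on $\{0,1\}$: this pair satisfies $(0,\delta/2)$-DP, yet the only point where the ratio exceeds $e^{0}=1$ is $y=0$, which $M(\bX)$ hits with probability $\tfrac{1+\delta}{2}$, far above $\delta/2$. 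More generally, for the natural bad set $A=\{y:p(y)>e^{\eps}q(y)\}$, the definition of $(\eps,\delta)$-DP only gives $P(A)-e^{\eps}Q(A)\le\delta$, not $P(A)\le\delta$.

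The correct realization of your idea is a decomposition of the \emph{measure} rather than of the output space. Writing $p,q$ for the densities of $P:=M(\bX)$ and $Q:=M(\bX')$ against a common dominating measure, set $p^{\mathrm{good}}:=\min(p,e^{\eps}q)$ and $p^{\mathrm{bad}}:=p-p^{\mathrm{good}}=\max(0,p-e^{\eps}q)$, so $P=P^{\mathrm{good}}+P^{\mathrm{bad}}$ as nonnegative measures with $P^{\mathrm{good}}(S)\le e^{\eps}Q(S)$ for every $S$ and with total bad mass $\int\max(0,p-e^{\eps}q)=P(A)-e^{\eps}Q(A)\le\delta$ exactly by $(\eps,\delta)$-DP. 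Your chaining then goes through cleanly. Split $\Pr[(Y_1,Y_2)\in S]=\int P_1^{\mathrm{good}}(dy_1)\,\Pr[M_2(\bX,y_1)\in S_{y_1}]+\int P_1^{\mathrm{bad}}(dy_1)\,\Pr[M_2(\bX,y_1)\in S_{y_1}]$; the second term is at most $P_1^{\mathrm{bad}}$'s total mass, hence at most $\delta_1$. For the first term, apply the inner $(\eps_2,\delta_2)$ bound first,
\[
\int P_1^{\mathrm{good}}(dy_1)\bigl(e^{\eps_2}\Pr[M_2(\bX',y_1)\in S_{y_1}]+\delta_2\bigr)
\le e^{\eps_2}\!\int P_1^{\mathrm{good}}(dy_1)\Pr[M_2(\bX',y_1)\in S_{y_1}]+\delta_2,
\]
using that $P_1^{\mathrm{good}}$ has total mass at most $1$ (not at most $e^{\eps_1}$, which would reintroduce the blow-up), and then bound $P_1^{\mathrm{good}}\le e^{\eps_1}Q_1$ on the remaining integral. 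This gives exactly $(\eps_1+\eps_2,\delta_1+\delta_2)$-DP, and the rest of your induction is correct.
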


It is possible to get better bounds using the following theorem (albeit at the cost of adding a \emph{positive} $\delta'$ parameter).
\begin{theorem}[Advanced Composition~\cite{DworkRV10}] \label{thm:advance-composition}
For any $\epsilon, \delta \geq 0, \delta' > 0$ and $k \in \N$, an algorithm that runs $k$ many $(\epsilon, \delta)$-DP algorithms (possibly adaptively) is
$( 2k\eps(e^\eps - 1) + \eps\sqrt{2k \ln \frac{1}{\delta'}}, k \delta + \delta')$-DP.
\end{theorem}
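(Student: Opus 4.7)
The plan is to prove this via the standard privacy-loss random variable (PLRV) argument combined with Azuma--Hoeffding concentration. Fix neighboring datasets $\bX,\bX'$ and let $M=(M_1,\dots,M_k)$ denote the adaptive composition. For a transcript $o=(o_1,\dots,o_k)$, set $L(o)=\ln\frac{\Pr[M(\bX)=o]}{\Pr[M(\bX')=o]}$, which telescopes as $L=\sum_i L_i$ with $L_i=\ln\frac{\Pr[M_i(\bX)=o_i\mid o_{<i}]}{\Pr[M_i(\bX')=o_i\mid o_{<i}]}$. It is standard that showing $\Pr_{o\sim M(\bX)}[L(o)>\epsilon'']\le\delta''$ implies $(\epsilon'',\delta'')$-DP (up to possibly a factor of $2$ in $\delta''$ that is harmless here), so the task reduces to a high-probability upper bound on $L$.

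First, I would dispose of the $\delta$-slack in each $M_i$. The equivalence that $(\epsilon,\delta)$-DP of $M_i$ implies the existence of an event $E_i$, measurable in the history $o_{\le i}$, of $M_i(\bX)$-probability at most $\delta$, off of which $|L_i|\le\epsilon$, together with a union bound over $i\in[k]$, lets me condition on $\bigcap_i\overline{E_i}$ at the cost of $k\delta$ in the final failure probability. From here on each $L_i$ lies in $[-\epsilon,\epsilon]$.

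Next I would bound the conditional mean of $L_i$. The key analytic inequality is that for $(\epsilon,0)$-DP, the conditional KL divergence between the two output distributions satisfies $\mathbb{E}[L_i\mid o_{<i}]\le \epsilon(e^\epsilon-1)$, so $\mathbb{E}[L\mid \bigcap_i\overline{E_i}]\le k\epsilon(e^\epsilon-1)$. Since $L_i-\mathbb{E}[L_i\mid o_{<i}]$ forms a bounded martingale difference sequence (range at most $2\epsilon$), Azuma--Hoeffding gives, conditionally on the good event, that $L>k\epsilon(e^\epsilon-1)+\epsilon\sqrt{2k\ln(1/\delta')}$ occurs with probability at most $\delta'$. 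The factor of $2$ in the theorem's $2k\epsilon(e^\epsilon-1)$ absorbs the small additive slack coming from the Azuma range constant. Combining with the $k\delta$ loss from the first reduction yields $(2k\epsilon(e^\epsilon-1)+\epsilon\sqrt{2k\ln(1/\delta')},\,k\delta+\delta')$-DP, as claimed.

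The main obstacle is the KL bound $\mathbb{E}[L_i\mid o_{<i}]\le \epsilon(e^\epsilon-1)$, which requires a direct calculation: for any two probability measures with densities $p,q$ whose pointwise ratio lies in $[e^{-\epsilon},e^\epsilon]$, one combines the inequality $p\ln(p/q)\le (p-q)+\tfrac{(p-q)^2}{q}$ with the bound $|p-q|\le (e^\epsilon-1)q$ on the second term, then integrates using $\int(p-q)=0$. Everything else --- the reduction to $\delta=0$, the martingale formulation, the Azuma tail, and the conversion of a PLRV tail bound to $(\epsilon',\delta')$-DP --- is mechanical once this one inequality is in hand.
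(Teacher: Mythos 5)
The paper does not prove this statement: it is quoted verbatim from Dwork, Rothblum, and Vadhan and used as a black box, so there is no in-paper argument to compare against. Your sketch is the standard PLRV-plus-martingale route and the high-level structure (telescope the privacy loss, control the conditional mean by a per-round KL bound, apply Azuma--Hoeffding, then fold in the $\delta$ slack) is the right one. Two technical points would need to be fixed to turn it into a real proof.

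First, the characterization ``$(\epsilon,\delta)$-DP implies an event $E_i$ of $M_i(\bX)$-mass at most $\delta$ off of which $|L_i|\le\epsilon$'' is not correct as stated. That would be pointwise (a.k.a.\ probabilistic) DP, which is strictly stronger than $(\epsilon,\delta)$-DP: a mechanism can spread loss $\epsilon+\eta$ over constant mass while still being $(\epsilon,\delta)$-DP for small $\delta$, so no small bad event exists. The correct way to reduce to $\delta=0$ is the dense-model characterization (Lemma 3.17 of Dwork--Roth, or Kasiviswanathan--Smith): one replaces each pair $\bigl(M_i(\bX\mid o_{<i}),M_i(\bX'\mid o_{<i})\bigr)$ by an $(\epsilon,0)$-indistinguishable pair at total variation distance $O(\delta)$, couples the originals to the replacements, and takes a union bound over the $k$ couplings. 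That coupling (not an event) is what accounts for the $k\delta$. This is a genuine gap, not a harmless simplification.

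Second, the chain you propose for the conditional mean, namely $p\ln(p/q)\le (p-q)+(p-q)^2/q$ followed by $|p-q|\le(e^\epsilon-1)q$, integrates to $\mathrm{KL}(P\|Q)\le(e^\epsilon-1)^2$, not to $\epsilon(e^\epsilon-1)$. For $\epsilon$ not small, $(e^\epsilon-1)^2$ is strictly larger than the $2\epsilon(e^\epsilon-1)$ budgeted in the theorem's mean term, so this chain does not close. The usual route is instead to bound $\mathrm{KL}(P\|Q)\le\mathrm{KL}(P\|Q)+\mathrm{KL}(Q\|P)=\sum_x\bigl(P(x)-Q(x)\bigr)\ln\!\bigl(P(x)/Q(x)\bigr)$, use $|\ln(P/Q)|\le\epsilon$ pointwise, and bound $\sum_x|P(x)-Q(x)|\le e^\epsilon-1$ (from $\epsilon$-DP), giving $\mathrm{KL}(P\|Q)\le\epsilon(e^\epsilon-1)$. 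With that lemma in hand, your Azuma step then does deliver the claimed bound once you also account carefully for the fact that $L_i-\mathbb{E}[L_i\mid o_{<i}]$ ranges over an interval of width $2\epsilon+\epsilon(e^\epsilon-1)$ rather than exactly $2\epsilon$; the paper's extra factor of $2$ in front of $k\epsilon(e^\epsilon-1)$ (versus the original DRV statement) gives you the room to absorb this, but you should say so explicitly rather than wave at it.
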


For an extensive overview of DP, we refer the reader to \cite{DworkR14, vadhan2017complexity}.

\section{\densestball in Low Dimensions}
\label{sec:densest-ball-low-dim}
In this section, we provide our algorithms for \densestball in low dimensions, stated formally below. We start by stating our pure-DP algorithm.

\begin{theorem} \label{thm:pure-1-cluster-given-radius}
For every $\eps > 0$ and $0 < \alpha \leq 1$, there is an $\eps$-DP algorithm that runs in time $(1 + 1/\alpha)^{O(d)} \poly\log(1/r)$ and, with probability $1 - \beta$, returns a $\left(1 + \alpha, O_{\alpha}\left(\frac{d}{\epsilon}\log\left(\frac{1}{\beta r}\right)\right)\right)$-approximation for \densestball, for every $\beta > 0$.
\end{theorem}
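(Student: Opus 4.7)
The plan is to instantiate the three-step pipeline previewed in Section~\ref{sec:densest-ball-low-dim-main-body}. First, invoke Lemma~\ref{lem:cover-main-main-body} with $\Delta = \alpha r$ to obtain an $(\alpha r)$-cover $C_{\alpha r}$ of $\cB(0,1)$ that is efficiently list-decodable at distance $\Delta' = (1+\alpha) r$ with list size $\ell = (1 + (1+\alpha)/\alpha)^{O(d)} = (1 + 1/\alpha)^{O(d)}$. For each $i \in [n]$, call the decoder on $x_i$ to produce $S_i := \{ c \in C_{\alpha r} : \|c - x_i\| \le (1 + \alpha) r\}$; each call costs $\poly(\ell, d, \log(1/r))$ time. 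Then feed $(S_1, \dots, S_n)$ into the pure-DP $\ell$-\SparseSelection algorithm of Lemma~\ref{lem:pure-sparse-selection-main-body}, using the approximately uniform sampler on $C_{\alpha r}$ that is available for the lattice construction underlying Lemma~\ref{lem:cover-main-main-body}; return the ball $\cB(c^*, (1 + \alpha) r)$, where $c^*$ is the output of \SparseSelection.

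Privacy is immediate: each $S_i$ depends only on $x_i$, so the map $(x_1, \dots, x_n) \mapsto (S_1, \dots, S_n)$ carries neighboring datasets to neighboring lists, and $\eps$-DP of the overall algorithm follows from $\eps$-DP of \SparseSelection by post-processing. For utility, suppose an (unknown) ball $\cB(v, r)$ contains the optimum number $T$ of input points. Since $C_{\alpha r}$ is an $(\alpha r)$-cover, there is $c_v \in C_{\alpha r}$ with $\|c_v - v\| \le \alpha r$, and then for each $x_i \in \cB(v, r)$ the triangle inequality gives $\|c_v - x_i\| \le (1 + \alpha) r$, so $c_v \in S_i$. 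Therefore $\max_{c \in C_{\alpha r}} |\{i : c \in S_i\}| \ge T$, and Lemma~\ref{lem:pure-sparse-selection-main-body}, boosted to confidence $1 - \beta$ via the standard Exponential-Mechanism tail bound, returns $c^*$ with $|\{i : c^* \in S_i\}| \ge T - O((1/\eps) \log(|C_{\alpha r}|/\beta))$. Because the cover of Lemma~\ref{lem:cover-main-main-body} is simultaneously an $\Omega(\alpha r)$-packing, a standard volume argument yields $|C_{\alpha r}| \le (1/(\alpha r))^{O(d)}$, which makes the additive error $O_{\alpha}((d/\eps) \log(1/(\beta r)))$. Since every $i$ with $c^* \in S_i$ corresponds to an input point in $\cB(c^*, (1 + \alpha) r)$, this yields the claimed $(1 + \alpha, O_{\alpha}((d/\eps) \log(1/(\beta r))))$-approximation.

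The running time is dominated by the $n$ list-decoding calls and \SparseSelection, giving the stated $(1 + 1/\alpha)^{O(d)} \poly\log(1/r)$ bound (with an implicit $\poly(n)$ factor absorbed into the overall runtime convention). The step I expect to require the most care is threading the $\log(1/\beta)$ through: Lemma~\ref{lem:pure-sparse-selection-main-body} is stated at confidence $0.99$, but one needs the $1 - \beta$ tail with only an additive $O((1/\eps)\log(1/\beta))$ overhead (rather than, say, a looser blow-up obtained from naive repetition) so that the additive error can be packaged as $O_{\alpha}((d/\eps)\log(1/(\beta r)))$; this follows from the usual high-probability analysis of the Exponential Mechanism underlying \SparseSelection, but is the part most worth verifying carefully, together with the efficiency of the near-uniform sampler on $C_{\alpha r}$ that \SparseSelection consumes.
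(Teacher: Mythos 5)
Your proposal matches the paper's proof essentially verbatim: the same instantiation of Lemma~\ref{lem:cover-main} at distance $(1+\alpha)r$, the same reduction to pure-DP \SparseSelection with the near-uniform sampler of Lemma~\ref{lem:cover-sampling}, and the same triangle-inequality accuracy argument; the $\log(1/\beta)$ dependence you flag is indeed built directly into the appendix version of the lemma (Lemma~\ref{lem:pure-sparse-selection}). The only minor quibble is phrasing the privacy step as ``post-processing'': it is really that the deterministic per-point map $x_i \mapsto S_i$ sends neighboring datasets to neighboring datasets, so DP of \SparseSelection lifts directly, but the conclusion is the same.
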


We next state our approximate-DP algorithm.

\begin{theorem} \label{thm:apx-1-cluster-given-radius}
For every $\eps > 0$ and $0 < \delta, \alpha  \leq 1$, there is an $(\eps, \delta)$-DP algorithm that runs in time $(1 + 1/\alpha)^{O(d)} \poly\log(1/r)$ and, with probability at least $1 - \beta$, returns a $\left(1 + \alpha, O_{\alpha}\left(\frac{d}{\epsilon}\log\left(\frac{n}{\min\{\eps, 1\} \cdot \beta \delta}\right)\right)\right)$-approximation for \densestball, for every $\beta > 0$.
\end{theorem}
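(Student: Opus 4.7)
The plan is to instantiate Algorithm~\ref{alg:densest-ball-main-body} with the approximate-DP \SparseSelection subroutine of Lemma~\ref{lem:apx-sparse-selection-main-body}. Concretely, I would first construct the efficiently list-decodable $(\alpha r)$-cover $C_{\alpha r}$ guaranteed by Lemma~\ref{lem:cover-main-main-body}. For each input point $x_i$, I would then decode the list $S_i = \{c \in C_{\alpha r} : \|c - x_i\| \leq (1+\alpha) r\}$. By Lemma~\ref{lem:cover-main-main-body} applied with $\Delta' = (1+\alpha) r$ and $\Delta = \alpha r$, each $|S_i| \le \ell := (1 + 1/\alpha)^{O(d)}$ and the list is produced in time $\poly(\ell, d, \log(1/r))$. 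Finally, I would feed $(S_1,\dots,S_n)$ to the $(\eps,\delta)$-DP $\ell$-\SparseSelection algorithm of Lemma~\ref{lem:apx-sparse-selection-main-body} and return the output $\hat{c}$ as the center of a ball of radius $(1+\alpha)r$.

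Privacy is immediate: the cover $C_{\alpha r}$ is data-independent, and the lists $S_1,\dots,S_n$ touch the data only as inputs to \SparseSelection, so the whole pipeline is a post-processing of an $(\eps,\delta)$-DP routine. The running time is $\poly(n,\ell,\log|C_{\alpha r}|)$ which is $(1+1/\alpha)^{O(d)} \poly\log(1/r)$, since one can take $|C_{\alpha r}| \le (1/(\alpha r))^{O(d)}$, whose logarithm is $O(d \log(1/(\alpha r)))$.

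For utility, fix an optimal ball $\cB(o, r)$ containing $T$ input points and let $c^\star \in C_{\alpha r}$ be a cover point within distance $\alpha r$ of $o$, which exists by the covering property. Every $x_i \in \cB(o, r)$ then satisfies $\|c^\star - x_i\| \le (1+\alpha) r$, so $c^\star \in S_i$, and thus $\max_{c \in C_{\alpha r}} |\{i : c \in S_i\}| \ge T$. Lemma~\ref{lem:apx-sparse-selection-main-body} then guarantees that, with probability at least $0.99$, the output $\hat c$ satisfies $|\{i : \hat c \in S_i\}| \ge T - O\!\left(\tfrac{1}{\eps} \log\!\left(\tfrac{n\ell}{\eps\delta}\right)\right)$, and by definition of $S_i$ all those points lie in $\cB(\hat c, (1+\alpha) r)$. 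Since $\log \ell = O(d \log(1/\alpha))$, this is an additive error of $O_\alpha\!\left(\tfrac{d}{\eps} \log\!\left(\tfrac{n}{\eps\delta}\right)\right)$ after absorbing $\log(1/\alpha)$ into the $O_\alpha$ notation.

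The main remaining obstacle is upgrading the success probability from $0.99$ to $1 - \beta$ while retaining $(\eps,\delta)$-DP and matching the claimed $\log(n/(\min\{\eps,1\}\beta\delta))$ dependence. I would do this by simply threading the $\beta$ through the analysis of Lemma~\ref{lem:apx-sparse-selection-main-body}: its proof relies on adding discrete Laplace noise of scale $O(1/\eps)$ to the counts $\score[c]$ restricted to $c \in S_1 \cup \cdots \cup S_n$ and then taking the argmax, so replacing the $0.01$ tail bound by $\beta$ simply rescales the additive noise to $O\!\left(\tfrac{1}{\eps} \log\!\left(\tfrac{n\ell}{\eps\beta\delta}\right)\right)$. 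The $\min\{\eps,1\}$ in the denominator comes from the fact that, for $\eps \ge 1$, one sets the noise to scale $1/\min\{\eps,1\}$ to keep the argument uniform; this is cosmetic and drops out when $\eps \le 1$, which is the regime of interest. Combining the two arguments yields exactly the theorem statement.
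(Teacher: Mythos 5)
Your proposal follows exactly the same route as the paper's proof: instantiate the \densestball algorithm with the cover from Lemma~\ref{lem:cover-main-main-body}, decode a list $S_i$ for each input point, and feed the $S_i$'s into the approximate-DP \SparseSelection subroutine. The privacy (post-processing of a DP routine), running-time (list size $(1+1/\alpha)^{O(d)}$), and accuracy (covering argument gives a cover point $c^\star$ with score $\geq T$) arguments all match the paper's; and the $1-\beta$ version of the \SparseSelection lemma with the $\log(1/(\beta\delta))$ dependence is indeed stated and proved directly in the appendix (Lemma~\ref{lem:apx-sparse-selection}), so your ``main remaining obstacle'' is already resolved.

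One substantive inaccuracy appears in your last paragraph, where you assert that the approximate-DP \SparseSelection mechanism ``relies on adding discrete Laplace noise of scale $O(1/\eps)$ to the counts $\score[c]$ restricted to $c \in S_1 \cup \cdots \cup S_n$ and then taking the argmax.'' That is not the paper's mechanism. Algorithm~\ref{alg:apx-sparse-selection} is a modified Exponential Mechanism that samples a candidate from $\cU(\bS) \cup \{\perp\}$ with probability proportional to $e^{(\eps/2) \cdot \score[u]}$, where $\perp$ is a sentinel with fixed score $\frac{2}{\eps}\left(1 + \ln\left(\frac{\ell}{\delta(1 - e^{-\eps/2})}\right)\right)$. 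The sentinel is what makes the approximate-DP claim go through: when a user is added or removed, elements that appear in exactly one $S_i$ can enter or leave the support $\cU(\bS)$, and the fixed high score of $\perp$ guarantees these ``fresh'' elements are sampled with total probability at most $\delta$. A plain noisy-argmax confined to $S_1 \cup \cdots \cup S_n$ (without some comparable device) does not obviously give you $(\eps,\delta)$-DP, because the support itself is data-dependent. Since you ultimately invoke the lemma as a black box and its conclusion is what you need, your overall proof still goes through; but the internal description you give of the lemma is not what makes it true, so you would not have been able to re-derive the $1-\beta$ version from that description.
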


Notice that Theorems~\ref{thm:pure-1-cluster-given-radius} and~\ref{thm:apx-1-cluster-given-radius} imply Theorem~\ref{thm:1-cluster-given-radius-main-body} in Section~\ref{sec:densest-ball-low-dim-main-body}. As discussed there, the main components of our algorithm are efficiently list-decodable covers and algorithms for the \SparseSelection problem, which will be dealt with in the upcoming two subsections. Finally, in Section~\ref{subsec:densest-ball-low-dim-final}, we put the ingredients together to obtain the \densestball algorithms as stated in Theorems~\ref{thm:pure-1-cluster-given-radius} and~\ref{thm:apx-1-cluster-given-radius}.

\subsection{List-Decodable Covers of the Unit Ball} \label{sec:cover}
\newcommand{\VR}{\mathit{VR}}

We start by defining the notion of a $\Delta$-cover and its ``list-decodable'' variant.

\begin{definition}
A \emph{$\Delta$-cover} of the $d$-dimensional unit ball is a set $C \subseteq \R^d$ such that for every point $x$ in the unit ball, there exists $c \in C$ such that $\|c - x\| \leq \Delta$.

Furthermore, we say that a $\Delta$-cover is \emph{list-decodable} at distance $\Delta' \geq \Delta$ with list size $\ell$ if, for any $x$ in the unit ball, we have that $|\{c \in C \mid \|c - x\| \leq \Delta'\}| \leq \ell$. Finally, if there is an algorithm that returns such a list in time $\poly(\ell, d, \log(1/\Delta))$, then we say that the cover is \emph{efficiently} list-decodable.
\end{definition}

We will derive the existence of a certain family of efficiently list-decodable covers, which, as we argue next, can be done by combining tools from the literature on packings, coverings, and lattice algorithms. The properties of the family are stated below.

\begin{lemma} \label{lem:cover-main}
For every $0 < \Delta < 1$, there exists a $\Delta$-cover $C_{\Delta}$ that is efficiently list-decodable at any distance $\Delta' \geq \Delta$ with list size $O(1 + \Delta' / \Delta)^{O(d)}$.
\end{lemma}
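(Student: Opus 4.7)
The plan is to construct $C_\Delta$ from a lattice that is simultaneously an efficient cover and an efficient packing. By the classical result of Rogers~\cite{rogers1959lattice} (see also Micciancio~\cite{Micciancio04}), for every $\Delta > 0$ there is a (scaled) lattice $\Lambda \subseteq \R^d$ together with a basis $B = (b_1, \dots, b_d)$ such that: (a) every point in $\R^d$ is within distance $\Delta$ of some point of $\Lambda$ (covering), (b) distinct lattice points are at distance at least $c_0 \Delta$ for some absolute constant $c_0 > 0$ (packing), and (c) the basis vectors satisfy $\|b_i\| = O(\Delta)$ and can be computed in time $\poly(d, \log(1/\Delta))$. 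Setting $C_\Delta := \Lambda \cap \cB(0, 1 + \Delta)$ immediately yields a $\Delta$-cover of the unit ball.

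The list-size bound follows from a standard volume argument. Fix $x \in \cB(0,1)$ and let $L := \{c \in C_\Delta : \|c - x\| \le \Delta'\}$. By the packing property, the balls $\cB(c, c_0\Delta/2)$ for $c \in L$ are pairwise disjoint, and they all lie inside $\cB(x, \Delta' + c_0\Delta/2)$. Comparing volumes,
\[
|L| \;\le\; \frac{\mathrm{vol}\bigl(\cB(x, \Delta' + c_0\Delta/2)\bigr)}{\mathrm{vol}\bigl(\cB(0, c_0\Delta/2)\bigr)} \;=\; O\!\left(1 + \frac{\Delta'}{\Delta}\right)^{d}.
\]

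For efficient decoding, given $x \in \cB(0,1)$ we first invoke the Closest Vector Problem algorithm of Micciancio and Voulgaris~\cite{MicciancioV13} on $\Lambda$ with target $x$, producing a closest lattice point $c^\star \in \Lambda$ in time $2^{O(d)} \poly\log(1/\Delta)$. Every $c \in L$ satisfies $\|c - c^\star\| \le \|c - x\| + \|x - c^\star\| \le 2\Delta'$, so it suffices to enumerate all lattice points in $\cB(c^\star, 2\Delta')$ and retain those within distance $\Delta'$ of $x$. This enumeration is performed by a breadth-first search on the lattice graph whose edges connect $y$ to $y \pm b_i$ for $i \in [d]$, starting from $c^\star$ and pruning any node outside $\cB(c^\star, 2\Delta')$; because basis vectors have length $O(\Delta)$, a slight enlargement of the search radius by $O(\Delta)$ ensures that every lattice point in $\cB(c^\star, 2\Delta')$ is visited. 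By the same packing argument as above, the number of visited nodes is $O(1 + \Delta'/\Delta)^{O(d)} = O(\ell)$, and each step takes $\poly(d, \log(1/\Delta))$ time, giving the required $\poly(\ell, d, \log(1/\Delta))$ runtime.

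The main obstacle is the enumeration step: one must verify that BFS along the short-basis edges reaches every lattice point in the target ball without blowing up the runtime. The cleanest way is to use the enlarged-radius trick above together with the short-basis property from Rogers' construction, so that the relevant portion of the lattice graph is connected at the scale of $O(\Delta)$ and the packing bound caps the total number of visited vertices by $O(\Delta'/\Delta)^{O(d)}$. An equivalent alternative is to invoke a standard Fincke--Pohst style lattice enumeration given the short basis, whose output-sensitive runtime on the ball of radius $2\Delta'$ gives the same bound; the key point in either case is that the packing property converts volume into a runtime guarantee.
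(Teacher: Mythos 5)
Your overall blueprint matches the paper's: take Rogers' almost-perfect lattice (scaled so that it is a $\Delta$-cover and an $\Omega(\Delta)$-packing), define $C_\Delta$ as its intersection with a slightly enlarged unit ball, bound the list size by a volume/packing argument, and use a CVP call to anchor the decoding. These parts are fine.

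The gap is in the enumeration step, which is also where the real work in this lemma lies. You propose a BFS along the basis-vector edges $y \mapsto y \pm b_i$, and argue that since the basis vectors are short, ``a slight enlargement of the search radius by $O(\Delta)$'' suffices to ensure every lattice point in $\cB(c^\star, 2\Delta')$ is visited. This is not justified. Short edges only guarantee that adjacent BFS nodes are close; they do not guarantee that a lattice point $c$ in the ball is connected to $c^\star$ by a path of short edges that \emph{stays} inside (or near) the ball. Writing $c - c^\star = \sum_i a_i b_i$, the intermediate partial sums can wander far from $c^\star$ even when the endpoints are close and each $\|b_i\| = O(\Delta)$; there is no monotonicity that keeps the walk confined. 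So pruning at radius $2\Delta' + O(\Delta)$ can miss points of $L$, and removing the pruning destroys the runtime bound. Moreover, it is not actually established that Rogers'/Micciancio's almost-perfect lattice comes with a basis of vectors of length $O(\Delta)$; the guarantee used in the paper (Theorem~\ref{thm:almost-perfect-lattice}) is on $\mu/\lambda$, not on basis lengths. Your Fincke--Pohst fallback would need a genuine output-sensitive analysis for enumeration in a ball of radius $2\Delta'$, which is not ``standard'' and is itself the crux of the matter.

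The paper closes exactly this gap differently: it expands along \emph{Voronoi relevant vectors} (computable in $2^{O(d)}$ time by the Micciancio--Voulgaris algorithm, Theorem~\ref{thm:cvp}), and uses Observation~\ref{obs:voronoi-norm-reduction} --- for every non-zero lattice vector $\bv$ there is a Voronoi relevant $\bv^*$ with $\|\bv - \bv^*\| < \|\bv\|$ --- to guarantee that every lattice point in the target ball is reachable from $\bzero$ by a path along which the norm is monotone. That monotonicity is precisely what makes the radius-pruned BFS complete, and it is the property your basis-edge graph lacks. If you replace your edge set with the Voronoi relevant vectors and invoke that observation, your argument goes through.
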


Furthermore,
we will need to be able to quickly sample points from the cover, as stated next:

\begin{lemma} \label{lem:cover-sampling}
For every $0 < \Delta < 1$, there exists a $\poly(1/\Delta, 2^d)$-time algorithm $\cO_{\Delta}$ that samples a random element from the cover $C_{\Delta}$ (given in Lemma~\ref{lem:cover-main}) such that the probability that each element is output is at least $\frac{0.99}{|C_{\Delta}|}$.
\end{lemma}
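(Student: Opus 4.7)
The plan is to use rejection sampling built on top of a Closest Vector Problem (CVP) solver for the same lattice $\Lambda$ used to construct $C_\Delta$ in Lemma~\ref{lem:cover-main}. Recall that $C_\Delta$ there is of the form $\Lambda \cap \cB(0, R)$ for $R = 1 + O(\Delta)$, where $\Lambda$ (after appropriate scaling) is simultaneously a $\Delta$-cover and an $\Omega(\Delta)$-packing of $\R^d$. In particular, every Voronoi cell of $\Lambda$ is a translate of a common fundamental region of volume $\det(\Lambda)$, with circumradius at most $\Delta$.

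The sampler I would propose consists of the following steps: draw $x$ uniformly from $\cB(0, R')$ with $R' := R + \Delta$; run the $2^{O(d)}$-time CVP algorithm of~\cite{MicciancioV13} on $\Lambda$ to obtain the lattice point $\lambda^\star$ closest to $x$; output $\lambda^\star$ if $\lambda^\star \in C_\Delta$ and otherwise repeat. After $T$ unsuccessful rounds, fall back to returning an arbitrary fixed element of $C_\Delta$.

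For correctness, the key point is that the choice $R' = R + \Delta$ guarantees that $\mathrm{Vor}(\lambda) \subseteq \cB(\lambda, \Delta) \subseteq \cB(0, R')$ for every $\lambda \in C_\Delta$. Hence the probability that a single round returns a specific $\lambda \in C_\Delta$ is exactly $\det(\Lambda)/\mathrm{vol}(\cB(0, R'))$ \emph{independently of $\lambda$}, and conditional on acceptance the output is uniform over $C_\Delta$. The per-round acceptance probability $\rho = |C_\Delta|\det(\Lambda)/\mathrm{vol}(\cB(0, R'))$ can be bounded from below by observing that the cells $\mathrm{Vor}(\lambda)$ for $\lambda \in C_\Delta$ are pairwise interior-disjoint and their union contains $\cB(0, 1)$ (the nearest lattice point of any unit-ball vector lies within $\Delta$, hence in $C_\Delta$), giving $|C_\Delta|\det(\Lambda) \geq \mathrm{vol}(\cB(0, 1))$ and therefore $\rho \geq 1/(1 + 2\Delta)^d \geq 2^{-O(d)}$. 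Setting $T = \Theta((1/\rho) \log(1/0.01)) = 2^{O(d)}$ drives the probability of no acceptance below $0.01$, so every $\lambda \in C_\Delta$ is returned with probability at least $0.99/|C_\Delta|$. The total running time is $T$ times the cost of one round ($2^{O(d)}$ for CVP plus $\poly(d, \log(1/\Delta))$ for sampling from a ball via, for example, Gaussian normalization), comfortably inside the target $\poly(2^d, 1/\Delta)$ budget.

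The main subtlety to watch is the tradeoff in the sampling radius: $R'$ must be large enough that every cell $\mathrm{Vor}(\lambda)$ with $\lambda \in C_\Delta$ sits fully inside $\cB(0, R')$ (otherwise boundary cells would be under-represented and the conditional distribution would fail to be uniform on $C_\Delta$), and simultaneously small enough that $\rho$ does not decay faster than $2^{-O(d)}$. Both are possible here only because the Rogers lattice packs and covers at radii that differ by a constant factor; the argument would break down for a generic cover. A secondary concern is that exactly uniform sampling from a Euclidean ball is impossible in finite precision, but the standard Gaussian-normalize-and-rescale recipe with $\poly(d, \log(1/\Delta))$ bits of precision yields a sample whose total-variation distance from uniform is negligible, so the resulting distortion is easily absorbed into the $0.99$ constant.
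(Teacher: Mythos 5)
Your proposal matches the paper's argument essentially step for step: sample uniformly from $\cB(0,1+2\Delta)$, round to the nearest lattice point with the $2^{O(d)}$-time CVP solver, accept if the result lands in $C_\Delta$, and iterate $2^{O(d)}$ times before falling back to a fixed element. Your justification is a bit more explicit than the paper's (you spell out the Voronoi-cell volume argument for uniformity-on-acceptance, whereas the paper just notes that any sample with $\|\bx\|\le 1$ guarantees acceptance, giving acceptance probability at least $1/(1+2\Delta)^d = 100/W$), but the choice of sampling radius, the rejection-sampling structure, the iteration bound, and the role of the Rogers lattice's constant-factor packing/covering ratio are all the same.
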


We prove Lemmas~\ref{lem:cover-main} and~\ref{lem:cover-sampling} in Subsections~\ref{subsec:cover-main} and~\ref{subsec:cover-sampling} respectively. Before doing so, we provide some additional preliminaries in Subsection~\ref{subsec:cover-prelim}.

\subsubsection{Additional Preliminaries on Lattices} \label{subsec:cover-prelim}

We start by defining lattices and related quantities that will be useful in our proofs. Interested readers may refer to surveys and books on the topic such as~\cite{micciancio2012complexity} for more background.

A \emph{basis} is a set of linearly independent vectors. A \emph{lattice} generated by a basis $\bB = \{\bb_1, \dots, \bb_m\}$, denoted by $\cL(\cB)$, is defined as the set $\{\sum_{i=1}^m a_i \bb_i \mid a_1, \dots, a_m \in \Z\}$. The length of the shortest non-zero vector of a lattice $\cL$ is denoted by $\lambda(\cL)$, i.e., $$\lambda(\cL) := \min_{\bv \in \cL, \bv \ne \bzero} \|\bv\|.$$ The \emph{covering radius} of the lattice $\cL(\cB)$ is defined as the smallest $r \in \R^+$ such that every point in $\R^d$ is within a distance of $r$ from some lattice point; more formally, the covering radius is $$\mu(\cL) := \inf\left\{r \in \R^+ \mid \bigcup_{\bv \in \cL} \cB(\bv, r) = \R^d\right\}.$$

The \emph{Voronoi cell} of a lattice $\cL$ is denoted by $\cV(\cL)$ and is defined as the set of points closer to $\bzero$ than to other points of the lattice, i.e., $$\cV(\cL) = \{\by \in \R^d \mid \|\by\| \leq \min_{\bv \in \cL, \bv \ne \bzero} \|\bv - \by\|\}.$$
It is known (see, e.g.,~\cite{MicciancioV13}) that the Voronoi cell can also be defined as the intersection of at most $2(2^d - 1)$ halfspaces of the form $\{\by \in \R^d \mid \|\by\| \leq \|\bv - \by\|\}$ for $\bv \in \cL$. These vectors $\bv$ are said to be the \emph{Voronoi relevant} vectors; we denote the set of Voronoi relevant vectors by $\VR(\cL)$.

We will also use the following simple property of Voronoi relevant vectors. This fact is well-known but we include its proof for completeness.

\begin{observation} \label{obs:voronoi-norm-reduction}
Let $\bv \in \cL$ be a non-zero vector in the lattice. There exists a Voronoi relevant vector $\bv^* \in \VR(\cL)$ such that $\|\bv - \bv^*\| < \|\bv\|$.
\end{observation}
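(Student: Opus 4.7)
The plan is to prove this by contradiction, using the characterization of $\cV(\cL)$ as the intersection of the halfspaces indexed by $\VR(\cL)$ (which is recalled in the paragraph immediately preceding the observation). Specifically, I will assume that no such $\bv^*$ exists and derive that $\bv$ itself must lie in $\cV(\cL)$, then conclude using the defining property of the Voronoi cell that $\bv = \bzero$, contradicting the hypothesis.

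More concretely, suppose for contradiction that $\|\bv - \bv^*\| \geq \|\bv\|$ for every $\bv^* \in \VR(\cL)$. Rewriting this as $\|\bv\| \leq \|\bv - \bv^*\|$ for all $\bv^* \in \VR(\cL)$ means exactly that $\bv$ satisfies all of the halfspace inequalities $\{\by : \|\by\| \leq \|\by - \bv^*\|\}$ defining $\cV(\cL)$, and hence $\bv \in \cV(\cL)$.

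Now invoke the definition of the Voronoi cell given just above the observation: $\by \in \cV(\cL)$ requires $\|\by\| \leq \|\bv' - \by\|$ for every non-zero $\bv' \in \cL$. Applying this with $\by = \bv$ and $\bv' = \bv$ (which is allowed since $\bv \neq \bzero$) yields $\|\bv\| \leq \|\bv - \bv\| = 0$, so $\bv = \bzero$, contradicting the hypothesis that $\bv$ is non-zero.

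There is no real obstacle here: the entire argument is one line once one invokes the two facts already stated in the preliminaries (the halfspace characterization of $\cV(\cL)$ via Voronoi relevant vectors, and the defining inequality of $\cV(\cL)$). The only thing worth double-checking is that the strict inequality $\|\bv - \bv^*\| < \|\bv\|$ (as opposed to $\leq$) actually comes out correctly, which it does: negating the claim yields $\|\bv - \bv^*\| \geq \|\bv\|$ for all $\bv^* \in \VR(\cL)$, which is precisely the form needed to place $\bv$ inside the closed Voronoi cell.
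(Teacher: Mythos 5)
Your proof is correct, and it takes a genuinely shorter and more direct route than the paper's. The paper argues geometrically: it traces the ray $\eta\bv$ from the origin until it exits the Voronoi cell (noting $\eta \leq 1/2$), identifies the facet through which it exits, lets $\bv^*$ be the Voronoi relevant vector defining that facet, and then establishes $\|\bv - \bv^*\| < \|\bv\|$ via the triangle inequality together with a separate case analysis for when $\bv^*$ is a scalar multiple of $\bv$ (the case where the triangle inequality would fail to be strict). You instead run a one-line contrapositive: if every $\bv^* \in \VR(\cL)$ satisfies $\|\bv - \bv^*\| \geq \|\bv\|$, then $\bv$ lies in all the defining halfspaces of $\cV(\cL)$, hence in $\cV(\cL)$; but the defining condition of $\cV(\cL)$, applied with $\bv$ itself as the competing lattice vector, forces $\|\bv\| \leq 0$, i.e., $\bv = \bzero$. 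This sidesteps the ray/facet bookkeeping and the multiple-of-$\bv$ case entirely. What the paper's argument buys, if anything, is that it actually names the Voronoi relevant vector $\bv^*$ that works (the one for the exit facet), whereas your proof is non-constructive; but the observation only asserts existence and is later used just for that, so nothing is lost. The two proofs do share one ingredient: both rely on the fact, recalled in the preliminaries, that $\cV(\cL)$ is exactly the intersection of the halfspaces indexed by $\VR(\cL)$.
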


\begin{proof}
Let $\eta > 0$ be the largest real number such that $\eta\bv \in \cV(\cL)$. Notice that $\eta \leq 1/2$, as otherwise $\eta\bv$ is closer to $\bv$ than to $\bzero$. Moreover, $\eta\bv$ must lie on a facet of $\cV(\cL)$; let $\bv^*$ be the Voronoi relevant vector corresponding to this facet. It is obvious that if $\bv^*$ is a multiple of $\bv$, then the claimed statement holds. Otherwise, we have
\begin{align*}
\|\bv - \bv^*\| &= \|(1 - \eta)\bv - (\bv^* - \eta \bv)\| \\
&\leq (1 - \eta)\|\bv\| + \|\bv^* - \eta \bv\| \tag{triangle inequality}\\
&\leq (1 - \eta)\|\bv\| + \|\bzero - \eta\bv\| \tag{from definition of $\bv^*$}\\
&= \|\bv\|.
\end{align*}
Moreover, since we assume that $\bv^*$ is not a multiple of $\bv$, the triangle inequality above must be a strict inequality. As a result, we must have $\|\bv - \bv^*\| < \|\bv\|$ as desired.
\end{proof}

When $\cL$ is clear from the context, we may drop it from the notations and simply write $\lambda, \mu, \cV, \VR$ instead of $\lambda(\cL), \mu(\cL), \cV(\cL), \VR(\cL)$ respectively.

In the \emph{Closest Vector Problem (CVP)}, we are given a target vector $v'$, and the goal is to find a vector $\bv \in \cL(\cB)$ that is closest to $v'$ in the Euclidean metric (i.e., minimizes $\|\bv - v'\|$). It is known that this problem can be solved in time $2^{O(d)}$, as stated more precisely next.

\begin{theorem}[\cite{MicciancioV13}] \label{thm:cvp}
There is a deterministic algorithm that takes a basis $\bB = \{\bb_1, \dots, \bb_m\} \subseteq \R^d$ and a target vector $v' \in \R^d$ where each coordinate of these vectors has bit complexity $M$, and finds the closest vector to $v'$ in $\cL(\bB)$ in time $\poly(M, 2^d)$. Furthermore, the set of Voronoi relevant vectors can be computed in the same time complexity.
\end{theorem}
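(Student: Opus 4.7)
The plan is to follow the Voronoi cell paradigm for CVP: first compute the set of Voronoi relevant vectors $\VR(\cL)$, and then use this combinatorial description of $\cV(\cL)$ to solve CVP by iterative descent. Since $|\VR(\cL)| \leq 2(2^d - 1)$ by a known bound on Voronoi facets, both subroutines have a chance of running in time $\poly(M, 2^d)$, matching the theorem.

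For the descent subroutine, assume $\VR(\cL)$ is already available. Given a target $v'$, we want a lattice point $\bv \in \cL(\bB)$ such that $v' - \bv \in \cV(\cL)$, which is equivalent to CVP. Starting from $\bv \leftarrow \bzero$, repeatedly test each halfspace constraint $\|v' - \bv\| \leq \|(v' - \bv) - \bv^*\|$ for $\bv^* \in \VR(\cL)$; if some $\bv^*$ violates it, replace $\bv$ by $\bv + \bv^*$. Observation~\ref{obs:voronoi-norm-reduction} guarantees that such a $\bv^*$ exists whenever the current residual is outside $\cV(\cL)$, and the update strictly decreases the residual norm. Each iteration costs $2^{O(d)}$ time to scan $\VR(\cL)$. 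To bound the number of iterations by $\poly(d,M)$ rather than merely ``finite,'' use a scaling bootstrap: solve CVP for a sequence of targets obtained by processing the bits of $v'$ from most to least significant, each of which is a CVP instance whose residuals are provably confined to a bounded integer set.

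For computing $\VR(\cL)$, use the classical coset characterization: every Voronoi relevant vector is, up to sign, a shortest vector in some nonzero coset of $2\cL$ in $\cL$, and there are only $2^d - 1$ such cosets. Finding the shortest vector in the coset $c + 2\cL$ is itself a CVP instance (with target $-c$ in $2\cL$, plus a translation), which looks circular. This is broken by a bootstrapping recursion along a chain of sublattices $\cL_0 \subseteq \cL_1 \subseteq \cdots \subseteq \cL_d = \cL$, where $\cL_0$ has trivially computable Voronoi cell (e.g., a single rescaled basis vector) and $\cL_i$ extends $\cL_{i-1}$ by one additional basis vector; the CVP solver for $\cL_i$ is run using $\VR(\cL_{i-1})$ obtained at the previous level. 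Each level costs $2^{O(d)}$ CVP invocations, and each CVP invocation uses only $\poly(d,M)$ descent steps by the previous paragraph, for a total cost of $\poly(M, 2^d)$.

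The main obstacle I expect is the polynomial-step termination of the descent: strict norm decrease alone only gives finiteness, not a polynomial bound. The fix is to keep all arithmetic integral by scaling the input by its common denominator (whose bit length is $O(M)$), so the intermediate residuals all lie in an integer lattice of bounded radius; the bit-by-bit refinement then forces the number of descent steps per CVP call to be $\poly(d, M)$. Once this termination bound is secured, the Voronoi relevant vectors fall out of the final level of the bootstrap as a byproduct, which is why the theorem's ``furthermore'' clause holds within the same time budget.
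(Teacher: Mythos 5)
This theorem is cited by the paper from Micciancio--Voulgaris~\cite{MicciancioV13}; the paper itself provides no proof, so there is nothing internal to compare against. What you have written is a reconstruction of the \emph{source} paper's argument, and it is a reasonable high-level sketch of the Voronoi-cell paradigm: compute $\VR(\cL)$ by a rank-reduction recursion over a chain of sublattices, then solve CVP by repeatedly subtracting a Voronoi relevant vector that strictly decreases the residual norm, and control the number of descent steps by a scaling/refinement trick so that each CVP subproblem starts close to the lattice.

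Two places where your sketch slightly overstates or blurs what Micciancio--Voulgaris actually prove. First, you claim the descent takes ``$\poly(d,M)$'' iterations per CVP call. MV's iterative slicer bound, even after the preprocessing that brings the target within $O(\mu(\cL))$ of the lattice, is $2^{O(d)}$ iterations, not polynomial in $d$; the key lemma is that a carefully chosen reduction step (subtracting the relevant vector of the $2\cL$-coset containing the current residual) visits each coset a bounded number of times. This exponential-in-$d$ count is still fine for the $\poly(M,2^d)$ budget claimed in the theorem, so your conclusion is not wrong, but the stated rate is not what the algorithm achieves. Second, the phrase ``the CVP solver for $\cL_i$ is run using $\VR(\cL_{i-1})$'' inverts the dependency: the rank-reduction step solves CVP instances \emph{over} $\cL_{i-1}$ (using $\VR(\cL_{i-1})$, which is available at that point) in order to compute $\VR(\cL_i)$ via the coset characterization; it does not run a CVP solver for $\cL_i$ itself before $\VR(\cL_i)$ is known. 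Neither issue is fatal to the plan, but as written your proof would need these two points sharpened to actually follow through to the $\poly(M,2^d)$ bound.

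Finally, note that Observation~\ref{obs:voronoi-norm-reduction} in the paper (which you lean on for progress of the descent) is stated for lattice vectors $\bv \in \cL$, not for arbitrary residual points in $\R^d$; you would want the standard fact that any point outside $\cV(\cL)$ violates at least one Voronoi-relevant halfspace, which is the constraint you are actually testing in the descent loop.
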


Note that there are faster \emph{randomized} CVP algorithms~\cite{AggarwalDS15,AggarwalS18} that run in $2^{d + o(d)}\poly(M)$ time; we chose to employ the above algorithm, which is \emph{deterministic}, for simplicity.

\subsubsection{Almost Perfect Lattices and Proof of Lemma~\ref{lem:cover-main}} \label{subsec:cover-main}

For completeness, we will prove Lemma~\ref{lem:cover-main} in this subsection. Many of the proof components are from~\cite{Micciancio04,rogers1959lattice}; in addition, we observe the efficient list-decodability.
First, we have to define the notion of almost perfect lattices~\cite{Micciancio04}, which are the lattices that are simultaneously good packings and coverings:

\begin{definition}
Let $\tau \geq 1$. A lattice $\cL$ is said to be \emph{$\tau$-perfect} if $\mu(\cL) / \lambda(\cL) \leq \tau / 2$.
\end{definition}

It is known that $O(1)$-perfect lattices can be computed in $2^{O(d)}$-time\footnote{The claim in~\cite{Micciancio04} states the running time as $d^{O(d)}$. However, this was just because, at the time of publication of~\cite{Micciancio04}, only $d^{O(d)}$-time algorithms were known for CVP. By plugging the $2^{O(d)}$-time algorithm for CVP of~\cite{MicciancioV13} into the first step of the construction in~\cite{Micciancio04}, the running time of the construction immediately becomes $2^{O(d)}$.}.

\begin{theorem}[\cite{rogers1959lattice,Micciancio04}] \label{thm:almost-perfect-lattice}
There is an algorithm that, given $d \in \N$, runs in $2^{O(d)}$ time and outputs a basis $\cB = \{\bb_1, \dots, \bb_d\}$ such that $\cL(\cB)$ is 3-perfect.
\end{theorem}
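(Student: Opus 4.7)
The plan is to combine Rogers's probabilistic existence proof with the CVP solver of Theorem~\ref{thm:cvp} to obtain an efficient derandomization. The output must be a basis $\cB$ of a lattice $\cL(\cB) \subseteq \R^d$ with $\mu(\cL)/\lambda(\cL) \leq 3/2$, producible in $2^{O(d)}$ time.

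First I would recall Rogers's argument: on the space of full-rank unit-determinant lattices in $\R^d$, equipped with the natural invariant measure, the expected covering radius is $O(\sqrt{d})$ while Minkowski's theorem forces the shortest-vector length to be $\Omega(\sqrt{d})$ for a positive-measure set of lattices. Hence $O(1)$-perfect lattices exist in abundance, and by Rogers's standard rescaling/densification trick (replace $\cL$ by a super- or sub-lattice of appropriate index when the ratio is slack) one can push the constant down to $3$. This part is purely existential, so the work lies in producing one such lattice explicitly.

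For the algorithmic part I would enumerate a discrete family of $2^{O(d)}$ candidate bases that, by a perturbation argument, is guaranteed to contain a $3$-perfect lattice. A clean way to enumerate such a family is via Korkine-Zolotarev reduced bases with entries rounded to a grid of spacing $2^{-\Theta(d)}$ inside a box of side $2^{\Theta(d)}$; the Lipschitz continuity of $\lambda$ and $\mu$ in the basis entries ensures that rounding perturbs the ratio $\mu/\lambda$ by at most a $1 + o(1)$ factor, so a Rogers-good lattice survives the discretization. For each candidate $\cB$, I would then compute $\lambda(\cL(\cB))$ in $2^{O(d)}$ time by a standard SVP-from-CVP reduction invoking Theorem~\ref{thm:cvp}, and $\mu(\cL(\cB))$ by enumerating the vertices of the Voronoi cell $\cV(\cL)$ using the at-most-$2(2^d - 1)$ Voronoi-relevant vectors from Theorem~\ref{thm:cvp} and taking the vertex farthest from $\bzero$. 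Output the first candidate whose ratio is at most $3/2$.

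The main obstacle I anticipate is the discretization step: one must certify that the rounded grid of candidate bases really does contain a $3$-perfect lattice, which requires a quantitative continuity bound for $\mu/\lambda$ under perturbations and a careful choice of grid resolution. A cleaner alternative that sidesteps this issue is an incremental refinement: start from $\cL_0 = \Z^d$, compute $\lambda, \mu$ and, while $\mu > (3/2)\lambda$, either append a scaled deep-hole vector of the current Voronoi cell (which reduces $\mu$ by a constant factor without collapsing $\lambda$ too much, thanks to Observation~\ref{obs:voronoi-norm-reduction}) or split along a too-short Voronoi-relevant vector. A potential-function argument on $\log(\mu/\lambda)$ would give $O(\log d)$ iterations, each costing a single CVP call of cost $2^{O(d)}$, for a total of $2^{O(d)}$.
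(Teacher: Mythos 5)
The paper does not actually prove Theorem~\ref{thm:almost-perfect-lattice}. It is cited to~\cite{rogers1959lattice,Micciancio04} as a black box, and the only proof-like content in the paper is the footnote noting that the $d^{O(d)}$ running time stated in~\cite{Micciancio04} becomes $2^{O(d)}$ once the $2^{O(d)}$-time CVP algorithm of~\cite{MicciancioV13} (Theorem~\ref{thm:cvp}) is substituted into the first step of that construction. So there is no paper-internal proof to compare against; at most one can ask whether your reconstruction is consistent with the footnote.

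Your first (enumeration-based) route has a concrete flaw. Rounding each of the $d^{2}$ basis entries to a grid of resolution $2^{-\Theta(d)}$ inside a box of side $2^{\Theta(d)}$ gives $2^{\Theta(d)}$ choices per entry, hence $2^{\Theta(d^{3})}$ candidate bases, not $2^{O(d)}$. No Lipschitz argument fixes this; the enumeration is simply the wrong size. Your ``cleaner alternative'' is the one consistent with the cited sources: Micciancio's construction is indeed an iterative densification in which the lattice is refined using a deep hole computed via CVP, and the footnote's point is precisely that each such refinement step now costs $2^{O(d)}$ using Theorem~\ref{thm:cvp}. That said, two details are hand-waved. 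First, Observation~\ref{obs:voronoi-norm-reduction} is a list-decoding tool (it bounds how Voronoi-relevant vectors can reduce the norm of lattice vectors) and plays no role in controlling how $\lambda$ and $\mu$ evolve under a deep-hole refinement, so invoking it there is misplaced. Second, the claim that each step ``reduces $\mu$ by a constant factor without collapsing $\lambda$ too much'' is exactly the non-trivial lemma of Micciancio's construction and needs an argument (the naive $\cL' = \cL + \Z t$ for a deep hole $t$ does not obviously preserve $\lambda$ once one considers multiples $kt$). If you want to turn this sketch into a proof you should state and justify that refinement lemma; otherwise, given the paper treats the statement as a citation, the honest answer is that the result is imported from~\cite{Micciancio04} with the CVP subroutine upgraded.
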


With all the previous results stated, we can now easily prove Lemma~\ref{lem:cover-main}.

\begin{proof}[Proof of Lemma~\ref{lem:cover-main}]
We use the algorithm from Theorem~\ref{thm:almost-perfect-lattice} to construct a basis $\cB = \{\bb_1, \dots, \bb_d\}$ that is 3-perfect. By scaling, we may assume that $\mu(\cL(\cB)) \leq \Delta$ and $\lambda(\cL(\cB)) \geq 2\Delta/3$. Our $\Delta$-cover is defined as $C_{\Delta} := \{\bv \in \cL(\cB) \mid \|\bv\| \leq 1 + \Delta\}$.

To list-decode at distance $\Delta'$, we first compute the set $R := \{\bv \in \cL(\cB) \mid \|\bv\| \leq \Delta' + \Delta\}$, as follows. We start from $R = \{\bzero\}$. At each iteration, we go through all vectors $\bw$ in the current set $S$ and all Voronoi relevant vectors $\bv$; if $\|\bw + \bv\| \leq \Delta' + \Delta$, we add $\bw + \bv$ to $S$. We repeat this until no additional vectors are added to $S$. The correctness of the algorithm to construct $S$ follows from Observation~\ref{obs:voronoi-norm-reduction}. Furthermore, since the list of Voronoi relevant vectors can be computed in time $2^{O(d)}$ (Theorem~\ref{thm:cvp}), it is obvious that the algorithm runs in $\poly(|S|, 2^d)$. Now, from $\lambda(\cL(\cB)) \geq 2\Delta/3$, $S$ is a $\Delta/3$-packing. As a result, by a standard volume argument, we have $|S| \leq O(1 + \Delta'/\Delta)^{O(d)}$. In other words, the running time of constructing $S$ is at most $O(1 + \Delta'/\Delta)^{O(d)}$ as desired.

Once we have constructed $S$, we can list-decode $\bx$ at distance $\Delta$ as follows. First, we use the CVP algorithm from Theorem~\ref{thm:cvp} to find the closest vector $\bv \in \cL(\cB)$ to it. Then, we consider $\bv + \bw$ for each $\bw \in S$; if $\|\bv + \bw - \bx\| \leq \Delta'$, we add $\bv + \bw$ into the list. Clearly, this step of the algorithm runs in time $2^{O(d)} + \poly(|S|) = O(1 + \Delta'/\Delta)^{O(d)}$, and this also constitutes the list size bound. Finally, the correctness of this step is also straightforward: for any vector $\bz \in \cL(\cB)$ such that $\|\bz - \bx\| \leq \Delta'$, we must have $\|\bz - \bv\| \leq \|\bz - \bx\| + \|\bv - \bx\| \leq \Delta' + \Delta$, which means that it must be added to the list by our algorithm.
\end{proof}

\subsubsection{Near-Uniform Sampler: Proof of Lemma~\ref{lem:cover-sampling}} \label{subsec:cover-sampling}

Finally, we give a proof of Lemma~\ref{lem:cover-sampling}

\begin{proof}[Proof of Lemma~\ref{lem:cover-sampling}]
The algorithm repeats the following for $W = 100\left(1 + 2\Delta\right)^d$ times: it samples a point $\bx$ uniformly at random from $\cB(0, 1 + 2\Delta)$, uses the CVP algorithm from Theorem~\ref{thm:cvp} to find the closest lattice vector $\bv \in \cL(\cB)$ to $\bx$ and, if $\|\bv\| \leq 1 + \Delta$, it returns $\bv$ and terminates. Otherwise, it returns $\bzero$.

First of all, notice that when the algorithm terminates within $W$ steps, it returns a point uniformly at random from the cover $C_{\Delta}$. Hence, we only have to show that the probability that it does not terminate within the first $W$ steps is at most $0.01$. To see that this is the case, note that the algorithm always terminates if $\|\bx\| \leq 1$; in each iteration, this happens with probability $100/W$. Hence, the probability that this does not happen in the $W$ iterations is only $(1 - 100/W)^W \leq 0.01$.
\end{proof}

We remark that, if we never stop after $W$ iterations, then we would get an algorithm that has an expected running time of $O(W)$ and for which the output distribution is exactly uniform over $C_{\Delta}$. While the exact uniformity seems neat, it turns out that we do not need it anyway in the next section, which leads us to cut off after $W$ iterations so as to get a fixed upper bound on the running time.

\subsection{\SparseSelection}
\label{sec:sparse-selection}
In the \Selection problem, each user $i$ receives a subset $S_i$ of some universe $\cU$. The goal is to output an element $u \in \cU$ that appears in a maximum number of the $S_i$'s. This problem is very well-studied in the DP literature, and tight bounds are known in a large regime of parameters both in the central~\cite{SteinkeU17,BafnaU17} and the local~\cite{Ullman18} models.

However, known algorithms~\cite{McSherryT07,DworkNRRV09}\footnote{See also Section 3.6 of~\cite{DworkR14} for a concise description of how~\cite{DworkNRRV09} can be applied to \Selection.} for \Selection run in time $\Omega(|\cU|)$ which can be large; specifically, this will be insufficient for our application to private clustering where $|\cU|$ is super-polynomial. Instead, we will consider a restriction of the problem where we have an upper bound $\ell$ on the sizes of the $S_i$'s, and show that, under certain assumptions, we can solve \Selection in this case with running time polynomial in $\ell$ and $\log |\cU|$.

\begin{definition}[\SparseSelection] \label{def:sparse-selection}
For a positive integer $\ell$, the input to the $\ell$-\SparseSelection problem is a list $\bS = (S_1, \dots, S_n)$ of subsets, where $S_1, \dots, S_n \in \binom{\cU}{\leq \ell}$ for some finite universe $\cU$. We say that an algorithm solves the $\ell$-\SparseSelection problem with additive error $t$ if it outputs a universe element $\hat{u} \in U$ such that
\begin{align*}
\left|\{i \mid \hat{u} \in S_i\}\right| \geq \max_{u \in \cU} \left|\{i \mid u \in S_i\}\right|  - t.
\end{align*}
\end{definition}

Throughout this section, we assume that each universe element of $\cU$ can be represented by a $\poly\log|\cU|$-bit string, but that $\cU$ itself is not explicitly known. (This is the case for lattice covers from the previous subsection, where each element of the covers can be represented by the coefficients.) We will give two simple $\poly(n, \ell, \log |\cU|)$-time algorithms for the problem, both of which are variants of the Exponential Mechanism of McSherry and Talwar~\cite{McSherryT07}.

Our first algorithm is an approximate-DP algorithm with an additive error independent of the universe size $|\cU|$; furthermore, this algorithm does not require any additional assumption.

\begin{lemma}[Approximate-DP Algorithm for \SparseSelection] \label{lem:apx-sparse-selection}
For every $\eps > 0$ and $0 < \delta \leq 1$, there is a $\poly(n, \ell, \log |\cU|)$-time $(\epsilon, \delta)$-DP algorithm that, with probability at least $1 - \beta$, outputs a universe element that solves the $\ell$-\SparseSelection problem with additive error $O\left(\frac{1}{\eps}\log\left(\frac{n\ell}{\min\{\eps, 1\} \cdot \delta \beta}\right)\right)$, for every $\beta \in (0, 1)$.
\end{lemma}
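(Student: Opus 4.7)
The plan is to implement the Exponential Mechanism on the data-dependent support $U^* := S_1 \cup \cdots \cup S_n$ (of size at most $n\ell$) and replace the rest of the universe by a single reject symbol $\perp$ with a carefully tuned weight. Concretely, I would compute $\mathrm{score}(c) := |\{i : c \in S_i\}|$ for every $c \in U^*$ in time $\poly(n, \ell, \log|\cU|)$, then sample $\hat c$ from the distribution that assigns mass $e^{(\eps/2)\mathrm{score}(c)}$ to each $c \in U^*$ and mass $W' := \ell e^{\eps/2}/\delta$ to $\perp$; if $\perp$ is drawn, the algorithm returns an arbitrary fixed element of $\cU$.

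For privacy, fix neighbors $\bS, \bS'$ (say $\bS' = \bS \cup \{S_{n+1}\}$), write $W_\bS := \sum_{c \in U^*_\bS} e^{(\eps/2)\mathrm{score}_\bS(c)}$, and split the outputs into three classes. (i)~For $c \in U^*_\bS \cap U^*_{\bS'}$, the weight ratio is bounded by $e^{\eps/2}$; combining $W_{\bS'} \leq e^{\eps/2}(W_\bS + \ell)$ with the inequality $W'(e^{\eps/2}-1) \geq e^{\eps/2}\ell$ (which my choice of $W'$ certainly satisfies) gives $(W_{\bS'}+W')/(W_\bS+W') \leq e^{\eps/2}$, so the total probability ratio is at most $e^{\eps}$. (ii)~For ``new'' elements $c \in U^*_{\bS'} \setminus U^*_\bS$ (at most $\ell$ of them), $\Pr[\hat c = c \mid \bS] = 0$ while $\Pr[\hat c = c \mid \bS'] \leq e^{\eps/2}/W' \leq \delta/\ell$; summing over these contributes at most $\delta$ to the approximate-DP slack. (iii)~For $\perp$ the same denominator bound gives ratio at most $e^{\eps/2}$. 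Combining yields $(\eps,\delta)$-DP.

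For utility, let $s^* := \max_c \mathrm{score}(c)$. If $s^* = O\!\left(\tfrac{1}{\eps}\log\!\left(\tfrac{\ell e^{\eps/2}}{\delta\beta}\right)\right)$, the arbitrary default element already satisfies the claimed additive bound. Otherwise, $W_\bS \geq e^{(\eps/2)s^*}$ forces $\Pr[\hat c = \perp] \leq W'/W_\bS \leq \beta/2$, and conditioned on $\hat c \neq \perp$ the procedure is exactly the standard Exponential Mechanism on the $\leq n\ell$ atoms of $U^*$ with sensitivity $1$; its classical analysis yields $\mathrm{score}(\hat c) \geq s^* - O\!\left(\tfrac{1}{\eps}\log(n\ell/\beta)\right)$ with probability at least $1-\beta/2$. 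A union bound produces the claimed $O\!\left(\tfrac{1}{\eps}\log\!\left(\tfrac{n\ell}{\min\{\eps,1\}\delta\beta}\right)\right)$ additive error with probability $\geq 1-\beta$.

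The main obstacle is calibrating $W'$: it must simultaneously be large enough to absorb the mass of the up-to-$\ell$ new elements (requiring $W' \gtrsim \ell e^{\eps/2}/\delta$, which is what produces the $\log(1/\delta)$ factor in the error) and to keep the denominator fluctuation within $e^{\eps/2}$ (requiring $W' \gtrsim \ell/\eps$, dominated by the previous constraint once $\delta$ is small), yet small enough that the signal from a genuinely dense element is not suppressed. Balancing these constraints yields the advertised bound, and the runtime is dominated by enumerating $U^*$ and sampling from a distribution on $|U^*|+1$ atoms, hence $\poly(n,\ell,\log|\cU|)$.
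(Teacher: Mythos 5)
Your overall approach matches the paper's exactly: implement the Exponential Mechanism on the data-dependent support $S_1 \cup \cdots \cup S_n$, add a $\perp$ symbol of fixed weight $W'$ that absorbs the contribution of the rest of $\cU$, and split the privacy argument into the shared support, the at-most-$\ell$ ``new'' elements (which supply the $\delta$ slack), and $\perp$ itself. The accuracy argument via a threshold on $s^*$ plus the standard Exponential Mechanism tail bound is also the same.

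There is, however, a concrete flaw in the calibration of $W'$. You set $W' := \ell e^{\eps/2}/\delta$ and assert this ``certainly satisfies'' the denominator-stability constraint $W'(e^{\eps/2}-1) \geq e^{\eps/2}\ell$, equivalently $W' \geq \ell/(1 - e^{-\eps/2})$. But your choice satisfies this only when $e^{\eps/2} - 1 \geq \delta$, i.e.\ roughly when $\delta \lesssim \eps$. For $\eps$ small relative to $\delta$ (which is allowed: the lemma permits any $\eps > 0$ and $0 < \delta \leq 1$), your $W'$ is too small and step (i) of your privacy argument breaks. You do flag both constraints in your final paragraph, but the claim that the second is ``dominated by the previous constraint once $\delta$ is small'' concedes exactly the regime where it is not. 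The correct choice is $W' = \Theta\!\left(\frac{\ell}{\delta(1-e^{-\eps/2})}\right)$, matching the paper's $\exp\!\big((\eps/2)\cdot \mathrm{score}[\perp]\big) = \frac{e\ell}{\delta(1 - e^{-\eps/2})}$. This is not merely cosmetic: since $1 - e^{-\eps/2} = \Theta(\min\{\eps,1\})$, it is precisely this correction that produces the $\min\{\eps,1\}$ factor inside the logarithm of the claimed additive error, a factor your accounting never explains. With $W'$ fixed in this way, the rest of your argument goes through and matches the paper's proof.
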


Next, we give a pure-DP algorithm for the problem. This algorithm is nearly identical to the original Exponential Mechanism of McSherry and Talwar~\cite{McSherryT07} except that, instead of going over all elements of $\cU$ in the algorithm itself, we assume that there is an oracle $\cO$ that can sample an approximately uniformly random element from $\cU$. 

\begin{lemma}[Pure-DP Algorithm for \SparseSelection] \label{lem:pure-sparse-selection}
Suppose there is an oracle $\cO$ that runs in time $\poly \log |\cU|$ and outputs a sample from $\cU$ such that the probability of outputting each element $u \in \cU$ is at least $p > 0$. Then, for every $\eps > 0$, there is a $\poly(n, \ell, \log |\cU|)$-time $\epsilon$-DP algorithm that, with probability at least $1 - \beta$, outputs a universe element that solves the $\ell$-\SparseSelection problem with additive error $O\left(\frac{1}{\eps} \ln\left(\frac{1}{\beta p}\right)\right)$, for every $\beta \in (0, 1)$.
\end{lemma}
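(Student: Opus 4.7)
The plan is to implement (approximately) the weighted Exponential Mechanism with base measure $q$ (the distribution of $\cO$), namely the mechanism that outputs $u$ with probability proportional to $q(u) \exp(\epsilon \cdot \score[u]/2)$. Since $q$ is data-independent and the score has sensitivity $1$, this is $\epsilon$-DP by the standard argument. Its utility satisfies $\Pr[\score \text{ of output} < \score[u^*] - t] \leq \exp(-\epsilon t/2)/q(u^*) \leq \exp(-\epsilon t/2)/p$, giving additive error $O((1/\epsilon) \ln(1/(\beta p)))$ at $t = (2/\epsilon) \ln(1/(\beta p))$, which matches the claim.

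The main challenge is sampling from this distribution efficiently, since $\cO$ is a black box and $q(u)$ cannot be computed explicitly. The key observation is the decomposition
\[
q(u)\, e^{\epsilon \cdot \score[u]/2} \;=\; q(u) \;+\; q(u)\bigl(e^{\epsilon \cdot \score[u]/2} - 1\bigr),
\]
where the second term is supported on $T := S_1 \cup \cdots \cup S_n$ (since $\score[u] = 0$ off $T$), and the two terms integrate over $u$ to $1$ and to $A' := \sum_{u \in T} q(u)(e^{\epsilon \cdot \score[u]/2}-1)$, respectively. The ideal sampler would flip a coin of bias $A'/(1+A')$ and either sample from $\cO$ or sample from the $T$-part with probability $\propto q(u)(e^{\epsilon \cdot \score[u]/2}-1)$; unfortunately, both computing $A'$ and implementing the latter require knowing $q(u)$ for $u \in T$. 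My workaround is to replace $q(u)$ by its lower bound $p$ in the $T$-part. This yields the following concrete algorithm: with probability $1/(1+pA)$, sample from $\cO$ and output the result; with the complementary probability, sample $u \in T$ with probability $\propto e^{\epsilon \cdot \score[u]/2}-1$, where $A := \sum_{u \in T}(e^{\epsilon \cdot \score[u]/2}-1)$ is computable in $\poly(n, \ell)$ time. By the oracle assumption on $\cO$, this runs in $\poly(n, \ell, \log|\cU|)$ time.

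The hard part will be verifying that this perturbed algorithm is still $\epsilon$-DP with the claimed utility. For DP, I would compute the ratio $P^*_{\bS}(u)/P^*_{\bS'}(u)$ for neighboring datasets $\bS, \bS'$: the per-$u$ numerator ratio is at most $e^{\epsilon/2}$ (crucially using $q(u) \geq p$ in the case where $u$ enters $T$ on one side and not on the other), and the denominator ratio $(1+pA)/(1+pA')$ is controlled via the sensitivity bound $A' - A \leq (e^{\epsilon/2}-1)(A + (n+1)\ell)$, yielding an overall ratio of $\exp(O(\epsilon))$ provided $p \cdot n\ell = O(1)$; rescaling the input $\epsilon$ by a constant then restores the prescribed $\epsilon$-DP. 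For utility, one bounds $\sum_{u:\, \score[u] < \score[u^*] - t} P^*(u)$ directly, using $1 + pA \geq p \cdot e^{\epsilon \cdot \score[u^*]/2}$ together with a case split on $u \in T$ versus $u \notin T$, mirroring the standard Exponential Mechanism utility analysis. Finally, in the corner case $p \cdot n\ell = \omega(1)$ (equivalently $|\cU| \leq 1/p = O(n\ell)$ is polynomial), we can afford to enumerate $\cU$ by querying $\cO$ at most $O((1/p) \ln(1/(p\beta))) = \poly(n, \ell, \ln(1/\beta))$ times (a coupon-collector argument using $q(u) \geq p$ for every $u$) and then run the standard Exponential Mechanism on $\cU$ directly, which gives the same guarantees.
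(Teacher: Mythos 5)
Your algorithm is exactly the paper's (Algorithm~\ref{alg:pure-sparse-selection}): the decomposition $q(u)e^{(\eps/2)\score[u]} = q(u) + q(u)\bigl(e^{(\eps/2)\score[u]}-1\bigr)$, the replacement of $q(u)$ by the lower bound $p$ in the second term, and the resulting sampler that flips a coin with heads-probability $1/(1+pA)$ and either queries $\cO$ or samples from $T = S_1\cup\cdots\cup S_n$ proportionally to $e^{(\eps/2)\score[u]}-1$, are all the same. Your utility analysis (using $1+pA \geq p\,e^{(\eps/2)\score[u^*]}$ and splitting on $u\in T$ vs.\ $u\notin T$) is also the paper's.

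Where your write-up diverges is the privacy bookkeeping, and there you talk yourself into an unnecessary complication. You correctly identify the crucial per-element inequality: for every $u$,
\[
\frac{p_{\cO}(u) + p\bigl(e^{(\eps/2)\score_{\bS}[u]}-1\bigr)}{p_{\cO}(u) + p\bigl(e^{(\eps/2)\score_{\bS'}[u]}-1\bigr)} \le e^{\eps/2},
\]
using $p_{\cO}(u)\geq p$ in the case when $u$ lies in $T_{\bS}\setminus T_{\bS'}$. But this single inequality already gives you the normalization bound for free: summing the numerator over all $u\in\cU$ yields $1+pA_{\bS}$ and likewise for $\bS'$, so $(1+pA_{\bS})/(1+pA_{\bS'}) \le e^{\eps/2}$ holds unconditionally. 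You instead try to bound the sensitivity of $A$ alone (getting the $(n+1)\ell$ slack), which forgets that the $1/p$ term in the normalization absorbs it, since $|T| \le |\cU| \le 1/p$. Consequently the ``$p\cdot n\ell = O(1)$'' restriction, the $\eps$-rescaling, and the coupon-collector corner case for polynomial $|\cU|$ are all superfluous; the algorithm is exactly $\eps$-DP as written. The paper sidesteps this entirely by observing that the mechanism is \emph{exactly} the Exponential Mechanism on the adjusted score $\widetilde{\score}_{\bS}[u] := \tfrac{2}{\eps}\ln\bigl((1/p)p_{\cO}(u) + e^{(\eps/2)\score_{\bS}[u]} - 1\bigr)$, which has sensitivity at most $1$, so the standard theorem gives $\eps$-DP with both numerator and denominator handled at once; you may want to adopt that framing, or at minimum replace your $A$-sensitivity bound with the observation that summing your per-$u$ bound already controls the normalization.
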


We remark that the approximate-DP algorithm in Lemma~\ref{lem:apx-sparse-selection} has an additive error that does not grow with $|\cU|$, whereas the pure-DP algorithm in Lemma~\ref{lem:pure-sparse-selection} incurs an additive error that depends (at least) logarithmically on $|\cU|$ because $p$ can be at most $\frac{1}{|\cU|}$. It is simple to see that this $\log(|\cU|)$ dependency of the pure-DP algorithm is necessary even when $\ell = 1$. Finally, note that Lemmas~\ref{lem:pure-sparse-selection} and~\ref{lem:apx-sparse-selection} imply Lemmas~\ref{lem:pure-sparse-selection-main-body} and~\ref{lem:apx-sparse-selection-main-body} in Section~\ref{subsec:sparse-selection-main-body}, respectively.

We next prove Lemma~\ref{lem:apx-sparse-selection} in Section~\ref{subsec:approx_DP_sparse_sel} and Lemma~\ref{lem:pure-sparse-selection} in Section~\ref{subsec:pure_DP_sparse_sel}.

\subsubsection{Approximate-DP Algorithm}\label{subsec:approx_DP_sparse_sel}

This section is devoted to the proof of Lemma~\ref{lem:apx-sparse-selection}. On a high level, the algorithm runs the Exponential Mechanism on the union $S_1 \cup \cdots \cup S_n$, with a small modification: we have an additional candidate $\perp$ whose score is fixed. We prove below that, when the score of $\perp$ is set to be sufficiently large (i.e., $O\left(\frac{1}{\eps}\log\left(\frac{\ell}{\eps \delta}\right)\right)$), the resulting algorithm is $(\eps, \delta)$-DP.

\begin{algorithm}[h!]
\caption{Approximate-DP Algorithm for \SparseSelection.}\label{alg:apx-sparse-selection}
\begin{algorithmic}[1]
\Procedure{ApxSparseSelection$(\bS = (S_1, \dots, S_n))$}{}
\State $\cU(\bS) \leftarrow S_1 \cup \cdots \cup S_n$.
\For{$u \in \cU(\bS)$}
\State $score_{\bS}[u] \leftarrow |\{i \mid u \in S_i\}|$
\EndFor
\State $score_{\bS}[\perp] \leftarrow \frac{2}{\eps}\left(1 + \ln\left(\frac{\ell}{\delta(1 - e^{-\eps/2})}\right)\right)$
\State \Return a value drawn from $\cU(\bS) \cup \{\perp\}$ where $u$ has probability $\frac{e^{(\eps/2) \cdot score_{\bS}[u]}}{\sum_{u \in \cU(\bS) \cup \{\perp\}} e^{(\eps/2) \cdot score_{\bS}[u]}}$
\EndProcedure
\end{algorithmic}
\end{algorithm}

\begin{proof}[Proof of Lemma~\ref{lem:apx-sparse-selection}]
We now prove that Algorithm~\ref{alg:apx-sparse-selection} satisfies the desired privacy and accuracy guarantees. For brevity, we use $\cM$ as a shorthand for the mechanism \textsc{ApxSparseSelection}. It is immediate that the algorithm runs in time $\poly(n, \ell, \log|\cU|)$, as desired.

\paragraph{Privacy.}
Consider any pair of neighboring input datasets $\bS$ and $\bS'$.
Recall that to show that the algorithm is $(\eps, \delta)$-DP, it suffices to show that
\begin{equation}\label{eq:DP_def_ineq}
\Pr_{o \sim \cM(\bS)}\left[\frac{\Pr[o = \cM(\bS)]}{\Pr[o = \cM(\bS')]} > e^{\eps}\right] \leq \delta.
\end{equation}

To prove the inequality in~(\ref{eq:DP_def_ineq}), let $score_{\perp}$ be the (fixed) score of $\perp$. Additionally, we denote
$$Z_{\bS} := \sum_{u \in \cU(\bS) \cup \{\perp\}} e^{(\eps/2) \cdot score_{\bS}[u]},$$
$$Z_{\bS' } := \sum_{u \in \cU(\bS') \cup \{\perp\}} e^{(\eps/2) \cdot score_{\bS'}[u]}.$$
First, we will argue that $Z_{\bS} \geq e^{-\eps/2} \cdot Z_{\bS'}$.
This holds because
\begin{align}
Z_{\bS} &= \sum_{u \in \cU(\bS) \cup \{\perp\}} e^{(\eps/2) \cdot score_{\bS}[u]}\nonumber\\
&\geq \left(\sum_{u \in \cU(\bS) \cap \cU(\bS')} e^{(\eps/2) \cdot score_{\bS}[u]}\right) + e^{(\eps / 2) \cdot score_{\perp}}\nonumber \\
&\geq \left(\sum_{u \in \cU(\bS) \cap \cU(\bS')} e^{(\eps/2) \cdot (score_{\bS'}[u] - 1)}\right) + e^{(\eps / 2) \cdot score_{\perp}}\nonumber \\
&= e^{-\eps/2} \cdot Z_{\bS'} - \left(\sum_{u \in \cU(\bS') \setminus \cU(\bS)} e^{(\eps/2) \cdot (score_{\bS'}[u] - 1)}\right) + e^{(\eps / 2) \cdot score_{\perp}} \cdot \left(1 - e^{-\eps/2}\right).\label{eq:Z_S_lb}
\end{align}
Now observe that if $u$ belongs to $\cU(\bS') \setminus \cU(\bS)$, it must belong to a single set in $\bS'$ or equivalently $score_{\bS'}[u] = 1$. Furthermore, since each set has size at most $\ell$, we have $|\cU(\bS') \setminus \cU(\bS)| \leq \ell$. Plugging this into~(\ref{eq:Z_S_lb}), we get
\begin{align}
Z_{\bS}
&\geq e^{-\eps/2} \cdot Z_{\bS'} - \ell + e^{(\eps / 2) \cdot score_{\perp}} \cdot \left(1 - e^{-\eps/2}\right) \nonumber \\
&\geq e^{-\eps / 2} \cdot Z_{\bS'}, \label{eq:normalization-ratio}
\end{align}
where the last inequality holds from our setting of $score_{\perp}$ in Algorithm~\ref{alg:apx-sparse-selection}.

For every $u \in (\cU(S) \cap \cU(S')) \cup \{\perp\}$, we thus get that
\begin{align*}
\frac{\Pr[u = \cM(\bS)]}{\Pr[u = \cM(\bS')]} 
&= \frac{e^{\eps/2 \cdot score_{\bS}[u]} / Z_{\bS}}{e^{\eps/2 \cdot score_{\bS'}[u]} / Z_{\bS'}} \\
&\leq \frac{e^{\eps/2 \cdot (score_{\bS'}[u] + 1)} / Z_{\bS}}{e^{\eps/2 \cdot score_{\bS'}[u]} / Z_{\bS'}} \\
&\leq e^{\eps},
\end{align*}
where the last inequality follows from~(\ref{eq:normalization-ratio}) above. As a result, we obtain
\begin{align*}
\Pr_{u \sim \cM(\bS)}\left[\frac{\Pr[u = \cM(\bS)]}{\Pr[u = \cM(\bS')]} > e^{\eps}\right] 
&\leq \Pr_{u \sim \cM(\bS)}[u \in \cU(\bS) \setminus \cU(\bS')] \\
&= \sum_{u \in \cU(\bS) \setminus \cU(\bS')} \frac{e^{\eps/2 \cdot score_{\bS}[u]}}{Z_{\bS}} \\
&= \sum_{u \in \cU(\bS) \setminus \cU(\bS')} \frac{e^{\eps/2}}{Z_{\bS}} \\
&\leq \frac{\ell \cdot e^{\eps/2}}{Z_{\bS}} \\
&\leq \frac{\ell \cdot e^{\eps/2}}{e^{\eps/2 \cdot score_{\perp}}} \\
&\leq \delta,
\end{align*}
where the second equality uses the fact that $score_{\bS}[u] = 1$ whenever $u \in \cU(\bS) \setminus \cU(\bS')$, and the last inequality follows from our setting of $score_{\perp}$ in Algorithm~\ref{alg:apx-sparse-selection}. Thus, Algorithm~\ref{alg:apx-sparse-selection} is $(\eps, \delta)$-DP as claimed.

\paragraph{Accuracy.} We will now show that, with probability at least $1-\beta$, Algorithm~\ref{alg:apx-sparse-selection} outputs a universe element that solves the \SparseSelection problem with additive error\footnote{Note that $1 - e^{-\eps/2} \geq 0.5\min\{1, \eps\}$, which implies that $t = O\left(\frac{1}{\eps}\log\left(\frac{n\ell}{\min\{\epsilon, 1\} \cdot \delta \beta}\right)\right)$.} $t = score_{\perp} + \frac{2}{\eps}\ln\left(\frac{2n\ell}{\beta}\right)$. To do so, we let $\opt := \max_{u \in \cU} \left|\{i \mid u \in S_i\}\right|$. If $\opt \leq t$, the statement trivially holds. If $\opt > t$, we let $\cU_{good} := \{u \in \cU \mid \left|\{i \mid u \in S_i\}\right| \geq \opt - t\}$. Let $Z_{good} := \sum_{u \in \cU_{good}} e^{(\eps/2) \cdot score_{\bS}[u]}$. Note that $Z_{good} \geq e^{(\eps/2) \cdot \opt}$. We therefore have that
\begin{align*}
\Pr_{u \sim \cM(\bS)}[u \notin \cU_{good}] &= 1 - \frac{Z_{good}}{Z_{\bS}} \\
&= \frac{e^{(\eps/2) \cdot score_{\perp}} + \sum_{u \in \cU(S) \setminus \cU_{good}} e^{(\eps/2) \cdot score_{\bS}[u]}}{Z_{good} + e^{(\eps/2) \cdot score_{\perp}} + \sum_{u \in \cU(S) \setminus \cU_{good}} e^{(\eps/2) \cdot score_{\bS}[u]}} \\
&\leq \frac{e^{(\eps/2) \cdot score_{\perp}} + n\ell \cdot e^{(\eps/2) \cdot (\opt - t)}}{Z_{good}} \\
&\leq e^{(\eps/2) \cdot (score_{\perp} - \opt)} + n\ell \cdot e^{(\eps/2) \cdot (- t)} \\
&\leq \beta,
\end{align*}
where the first inequality follows from the fact that $|\cU(\bS)| \leq |S_1| + \cdots + |S_n| \leq n\ell$, and the last inequality follows from our setting of $t$ and from the assumption that $\opt > t$. We thus conclude that the output of Algorithm~\ref{alg:apx-sparse-selection}, with probability at least $1 - \beta$, solves \SparseSelection with additive error $t$ as desired.
\end{proof}

\subsubsection{Pure-DP Algorithm}\label{subsec:pure_DP_sparse_sel}

We next prove Lemma~\ref{lem:pure-sparse-selection}. It relies on Algorithm~\ref{alg:pure-sparse-selection}, which is very similar to Algorithm~\ref{alg:apx-sparse-selection} for approximate-DP, except that (i) instead of returning $\perp$, we draw from the oracle $\cO$ and return its output, and (2) for each $u \in \cU(S)$, we adjust the probability of sampling it directly to offset the probability that it is returned by $\cO$. (Below the ``adjusted multiplier'' is $q_{\bS}[u]$, which serves similar purpose to $e^{score_{\bS}[u]}$ in the vanilla Exponential Mechanism.)

\begin{algorithm}[h!]
\caption{Pure-DP Algorithm for \SparseSelection.}\label{alg:pure-sparse-selection}
\begin{algorithmic}[1]
\Procedure{PureSparseSelection$_{\cO}(\bS = (S_1, \dots, S_n))$}{}
\State $\cU(\bS) \leftarrow S_1 \cup \cdots \cup S_n$.
\For{$u \in \cU(\bS)$}
\State $score_{\bS}[u] \leftarrow |\{i \mid u \in S_i\}|$
\State $q_{\bS}[u] \leftarrow e^{(\eps/2) \cdot score_{\bS}[u]} - 1$
\EndFor
\State $score_{\bS}[\perp] \leftarrow \frac{2}{\eps}\ln\left(\frac{1}{p}\right)$
\State $q_{\bS}[\perp] \leftarrow e^{(\eps/2) \cdot score_{\bS}[\perp]}$
\State $\hat{u} \leftarrow$ a value drawn from $\cU(\bS) \cup \{\perp\}$ where $u$ has probability $\frac{q_{\bS}[u]}{\sum_{u \in \cU(\bS) \cup \{\perp\}} q_{\bS}[u]}$
\If{$\hat{u} = \perp$}
\State \Return an output from a call to $\cO$
\Else
\State \Return $\hat{u}$
\EndIf
\EndProcedure
\end{algorithmic}
\end{algorithm}

\begin{proof}[Proof of Lemma~\ref{lem:pure-sparse-selection}]
We now prove that Algorithm~\ref{alg:pure-sparse-selection} yields the desired privacy and accuracy guarantees. For brevity, we use $\cM$ as a shorthand for the mechanism \textsc{PureSparseSelection}. It is immediate that Algorithm~\ref{alg:pure-sparse-selection} runs in time $\poly(n, \ell, \log|\cU|)$, as desired.

\paragraph{Privacy.} For every $u \in \cU$, we let $p_{\cO}(u) \geq p$ denote the probability that the oracle $\cO$ outputs $u$. For convenience, when $u \notin \cU(\bS)$, we set $score_{\bS}[u]$ to $0$. We define 
\begin{align*}
\widetilde{score}_{\bS}[u] 
&:= \frac{2}{\eps} \cdot \ln\left(e^{(\eps/2) \cdot score_{\perp}} \cdot p_{\cO}(u) + \bone[u \in \cU(\cS)] \cdot (e^{(\eps/2) \cdot score_{\bS}[u]} - 1) \right) \\
&= \frac{2}{\eps} \cdot \ln\left(e^{(\eps/2) \cdot score_{\perp}} \cdot p_{\cO}(u) + (e^{(\eps/2) \cdot score_{\bS}[u]} - 1) \right).
\end{align*}
We observe that for an input $\bS = (S_1, \cdots, S_n)$, the probability that each $u^* \in \cU$ is selected is exactly $\frac{e^{(\eps/2) \cdot \widetilde{score}_{\bS}(u^*)}}{\sum_{u \in \cU} e^{(\eps/2) \cdot \widetilde{score}_{\bS}(u)}}$. Thus, Algorithm~\ref{alg:pure-sparse-selection} is equivalent to running the exponential mechanism of~\cite{McSherryT07} with the scoring function $\widetilde{score}_{\bS}$. Hence, to prove that Algorithm~\ref{alg:pure-sparse-selection} is $\eps$-DP, it suffices to show that the sensitivity of $\widetilde{score}_{\bS}[u]$ is at most $1$. Consider any two neighboring datasets $\bS$ and $\bS'$. Due to symmetry, it suffices to show that 
\begin{align*}
\widetilde{score}_{\bS}[u] - \widetilde{score}_{\bS'}[u] \leq 1,
\end{align*}
which is equivalent to
\begin{align} \label{eq:main-ratio-exp}
\frac{e^{(\eps/2) \cdot score_{\perp}} \cdot p_{\cO}(u) + (e^{(\eps/2) \cdot score_{\bS}[u]} - 1)}{e^{(\eps/2) \cdot score_{\perp}} \cdot p_{\cO}(u) + (e^{(\eps/2) \cdot score_{\bS'}[u]} - 1)} \leq e^{\eps/2}.
\end{align}

To prove~\eqref{eq:main-ratio-exp}, notice that $e^{(\eps/2) \cdot score_{\perp}} = 1 / p$. As a result, we have
\begin{align*}
\frac{e^{(\eps/2) \cdot score_{\perp}} \cdot p + (e^{(\eps/2) \cdot score_{\bS}[u]} - 1)}{e^{(\eps/2) \cdot score_{\perp}} \cdot p + (e^{(\eps/2) \cdot score_{\bS'}[u]} - 1)} &= \frac{e^{(\eps/2) \cdot score_{\bS}[u]} }{e^{(\eps/2) \cdot score_{\bS'}[u]}}
\leq e^{\eps/2}.
\end{align*}
This, together with $p_{\cO}(u) \geq p$, implies that~\eqref{eq:main-ratio-exp} holds, and hence our algorithm is $\eps$-DP as desired.

\paragraph{Accuracy.} The accuracy analysis is very similar to the proof of Lemma~\ref{lem:apx-sparse-selection}.
Specifically, we will now show that, with probability at least $1-\beta$, Algorithm~\ref{alg:pure-sparse-selection} outputs a universe element that solves the \SparseSelection problem with additive error\footnote{Notice that $t = \frac{2}{\eps} \ln\left(\frac{2|\cU|}{\beta p}\right) = O\left(\frac{1}{\eps} \ln\left(\frac{1}{\beta p}\right)\right)$, where the inequality holds because $p \leq 1 / |\cU|$.} $t = score_{\perp} + \frac{2}{\eps}\ln\left(\frac{2|\cU|}{\beta}\right)$. To do so, we let $\opt := \max_{u \in \cU} \left|\{i \mid u \in S_i\}\right|$. If $\opt \leq t$, the statement trivially holds. If $\opt > t$, we let $\cU_{good} := \{u \in \cU \mid \left|\{i \mid u \in S_i\}\right| \geq \opt - t\}$. Let $Z_{good} := \sum_{u \in \cU_{good}} e^{(\eps/2) \cdot \widetilde{score}_{\bS}[u]}$. Note that $Z_{good} \geq e^{(\eps/2) \cdot \opt}$. Also, let $Z_{\bS} := \sum_{u \in \cU} e^{(\eps/2) \cdot \widetilde{score}_{\bS}[u]}$. We therefore have that
\begin{align*}
\Pr_{u \sim \cM(\bS)}[u \notin \cU_{good}] &= 1 - \frac{Z_{good}}{Z_{\bS}} \\
&\leq \frac{e^{(\eps/2) \cdot score_{\perp}} + \sum_{u \in \cU \setminus \cU_{good}} (e^{(\eps/2) \cdot score_{\bS}[u]} - 1)}{Z_{\bS}} \\
&\leq \frac{e^{(\eps/2) \cdot score_{\perp}} + |\cU| \cdot e^{(\eps/2) \cdot (\opt - t)}}{Z_{good}} \\
&\leq e^{(\eps/2) \cdot (score_{\perp} - \opt)} + |\cU| \cdot e^{(\eps/2) \cdot (- t)} \\
&\leq \beta,
\end{align*}
where the last inequality follows from our setting of $t$ and from the assumption that $\opt > t$. We thus conclude that the output of Algorithm~\ref{alg:pure-sparse-selection}, with probability at least $1 - \beta$, solves \SparseSelection with additive error $t$ as desired.
\end{proof}

\subsection{Putting Things Together}
\label{subsec:densest-ball-low-dim-final}

Having set up all the ingredients in Sections~\ref{sec:sparse-selection} and~\ref{sec:cover}, we now put them together to derive our DP algorithm for \densestball in low dimensions. The idea is to run Algorithm~\ref{alg:1-cluster}, where the algorithm for \SparseSelection is either from Lemma~\ref{lem:apx-sparse-selection} or Lemma~\ref{lem:pure-sparse-selection}.

\begin{algorithm}[h!]
\caption{\densestball Algorithm.}\label{alg:1-cluster}
\begin{algorithmic}[1]
\Procedure{DensestBallLowDimension$(x_1, \dots, x_n; r, \alpha)$}{}
\State $C_{\alpha r} \leftarrow$ $\alpha r$-cover from Lemma~\ref{lem:cover-main}
\For{$i \in \{1, \dots, n\}$}
\State $S_i \leftarrow$ decoded list of $x$ at distance $(1 + \alpha)r$ with respect to $C_{\alpha r}$
\EndFor
\Return \SparseSelection$(S_1, \dots, S_n)$ \label{step:sparse-selection-call}
\EndProcedure
\end{algorithmic}
\end{algorithm}

When we set \SparseSelection on Line~\ref{step:sparse-selection-call} to be the pure-DP algorithm for \SparseSelection from Lemma~\ref{lem:pure-sparse-selection}, we obtain the pure-DP algorithm for \densestball in low dimensions (Theorem~\ref{thm:pure-1-cluster-given-radius}).

\begin{proof}[Proof of Theorem~\ref{thm:pure-1-cluster-given-radius}]
We run Algorithm~\ref{alg:1-cluster} with \SparseSelection being the $\eps$-DP algorithm from Lemma~\ref{lem:pure-sparse-selection} using the oracle $\cO$ from Lemma~\ref{lem:cover-sampling} for $C_{\alpha r}$. Recall that the list size $\ell$ guarantee from Lemma~\ref{lem:cover-main} is $((1 + \alpha) / \alpha)^{O(d)} = (1 + 1/\alpha)^{O(d)}$. Hence, the running time of the algorithm is $\poly(\ell, d, \log(1/r)) = (1 + 1/\alpha)^{O(d)} \poly\log(1/r)$ as desired.

The privacy of the algorithm follows immediately from the $\eps$-DP of the \SparseSelection algorithm. Finally, to argue about its accuracy, assume that there exists a ball $\cB(c^*, r)$ that contains at least $T$ of the input points. Since $C_{\alpha r}$ is an $\alpha r$-cover of the unit ball, there exists $c \in C_{\alpha r}$ such that $\|c - c^*\| \leq \alpha r$. As a result, $\cB(c, (1 + \alpha)r)$ contains at least $T$ of the input points, which means that $c$ belongs to the decoded list $S_i$ of these points. By Lemma~\ref{lem:pure-sparse-selection}, the algorithm \SparseSelection outputs, with probability at least $1 - \beta$, a center $c'$ that belongs to at least $T - O\left(\frac{1}{\eps}\log\left(\frac{1}{\beta p}\right)\right) = T - O\left(\frac{1}{\eps}\log\left(\frac{|C_{\alpha r}|}{\beta}\right)\right) = T - O_\alpha\left(\frac{d}{\epsilon}\log\left(\frac{1}{\beta \alpha r}\right)\right)$ decoded lists $S_i$'s. This indeed means that $c'$ is a $\left(1 + \alpha, O_{\alpha}\left(\frac{d}{\epsilon}\log\left(\frac{1}{\beta r}\right)\right)\right)$-approximate solution, as desired.
\end{proof}

We similarly obtain an approximate-DP algorithm for \densestball with possibly smaller additive error than in Theorem~\ref{thm:pure-1-cluster-given-radius} by setting \SparseSelection to be the approximate-DP algorithm for \SparseSelection from Lemma~\ref{lem:apx-sparse-selection}:

\begin{proof}[Proof of Theorem~\ref{thm:apx-1-cluster-given-radius}]
The proof of this theorem is exactly the same as that of Theorem~\ref{thm:pure-1-cluster-given-radius}, except that \SparseSelection is chosen as the $(\eps, \delta)$-DP algorithm from Lemma~\ref{lem:apx-sparse-selection}. 
\end{proof}

\section{\kmeans and \kmedian in Low Dimensions} \label{sec:cluster-low-dim}

In this section, we use our algorithm for \densestball in low dimensions from Section~\ref{sec:densest-ball-low-dim} to obtain DP approximation algorithms for \kmeans and \kmedian, culminating in the proofs of the following theorems, which essentially matches the approximation ratios in the non-private case:

\begin{theorem} \label{thm:main-apx-non-private-to-private}
For any $p \geq 1$, suppose that there is a polynomial-time (not necessarily private) $w$-approximation algorithm 
 for $(k, p)$-Clustering. Then, for every $\eps > 0$ and $0 < \alpha \leq 1$, there is an $\eps$-DP algorithm that runs in time $2^{O_{p, \alpha}(d)} \cdot \poly(n)$ and, with probability $1 - \beta$, outputs a $\left(w(1 + \alpha), O_{p, \alpha, w}\left(\frac{k^2 \log^2 n \cdot 2^{O_{p, \alpha}(d)}}{\eps} \log\left(\frac{n}{\beta}\right) + 1\right)\right)$-approximation for $(k, p)$-Clustering, for every $\beta \in (0, 1)$.
\end{theorem}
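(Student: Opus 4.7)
The plan is to follow the three-step recipe outlined in Section~\ref{sec:kmeans-kmedian-main-body}: (i) privately construct a coarse centroid set by repeated application of \densestball, (ii) privately refine this coarse set into a DP coreset, and (iii) run the (non-private) $w$-approximation algorithm on the coreset as a black box. I would split the privacy budget equally between steps (i) and (ii), so that the overall algorithm is $\epsilon$-DP by basic composition (Theorem~\ref{thm:basic-composition}), since step (iii) is pure post-processing.

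For step (i), I would invoke Lemma~\ref{lem:densest-ball-to-kmeans-kmedian-main-body} with privacy parameter $\epsilon/2$ to obtain a set $\cC_{\text{coarse}} \subseteq \R^d$ of size $O(k \log n)$ that is an $(O_p(1), t_1)$-centroid set, where $t_1 = O_{p}\!\left(\frac{k^2 d}{\epsilon}\poly\log(n/\beta)\right)$. Under the hood, this algorithm sweeps the target radius $r$ over the geometric scales $1/n, 2/n, 4/n, \ldots, 1$ and, at each scale, calls the low-dimensional \densestball algorithm of Theorem~\ref{thm:pure-1-cluster-given-radius} iteratively $2k$ times, removing the points within distance $8r$ of each returned center; the additive error per call composes linearly with $k$ and logarithmically with the number of scales, and the $p$-th power distance only enters through constants depending on $p$. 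For step (ii), I would apply Lemma~\ref{lem:coreset-main-body} with privacy parameter $\epsilon/2$ to this coarse set to obtain a $(p, k, \alpha', t_2)$-coreset $\bX'$ of $\bX$, for a parameter $\alpha'$ to be chosen later, with additive error $t_2 = O_{p,\alpha'}\!\left(\frac{k^2 \log^2 n \cdot 2^{O_{p,\alpha'}(d)}}{\epsilon}\log(n/\beta)\right)$; this is done by placing an $(\alpha' r)$-cover around each coarse center at every geometric radius $r$, snapping each input point to the nearest fine-centroid-set point, and adding discrete Laplace noise to each snapped count.

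For step (iii), I would run the given $w$-approximation algorithm on $\bX'$ (treated as a weighted/multiset input) to obtain $c_1, \ldots, c_k$ with $\cost^p_{\bX'}(c_1, \ldots, c_k) \leq w \cdot \opt^{p,k}_{\bX'}$. Applying the upper-bound direction of the coreset guarantee to the optimal centers $c^*_1, \ldots, c^*_k$ of $\bX$ yields $\opt^{p,k}_{\bX'} \leq (1+\alpha') \opt^{p,k}_{\bX} + t_2$, and applying its lower-bound direction to $c_1, \ldots, c_k$ yields
$$\cost^p_{\bX}(c_1, \ldots, c_k) \;\leq\; \frac{w(1+\alpha')}{1-\alpha'}\cdot \opt^{p,k}_{\bX} \;+\; \frac{(w+1)\, t_2}{1-\alpha'}.$$
Choosing $\alpha' = \Theta(\alpha/w)$ sufficiently small makes the multiplicative factor at most $w(1+\alpha)$; the resulting additive error is then $O_{p,\alpha,w}(t_2)$, which matches the claimed bound after absorbing $\alpha$- and $w$-dependent constants into the $O_{p,\alpha,w}(\cdot)$ notation. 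The running time is dominated by step (ii) at $2^{O_{p,\alpha}(d)}\poly(n)$, plus the polynomial running time of the black-box approximation algorithm on the coreset of size at most $n + 2^{O_{p,\alpha}(d)}\poly(k,\log n)$.

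The main technical obstacle will be the delicate choice of $\alpha'$ and the careful tracking of how the multiplicative and additive errors of the coreset interact with the $w$-approximation ratio: a naive choice $\alpha' = \alpha$ would give a multiplicative blowup of roughly $w(1+O(\alpha w))$ rather than $w(1+\alpha)$, so one must pick $\alpha' = \Theta(\alpha/w)$, which in turn inflates $t_2$ through its $2^{O_{p,\alpha'}(d)}$ factor; one must verify that this inflated factor is still absorbed into the $O_{p,\alpha,w}(\cdot)$ of the theorem statement. A secondary obstacle is verifying that the failure probability $\beta$ propagates correctly: both steps (i) and (ii) must succeed, which is handled by invoking each with failure probability $\beta/2$ and applying a union bound, introducing only the mild $\log(n/\beta)$ dependency already appearing in the stated bound.
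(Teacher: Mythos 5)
Your overall plan coincides with the paper's: privately build a coarse centroid set via iterated \densestball (Lemma~\ref{lem:good-center} / Corollary~\ref{cor:coarse-centroid-set}), refine it into a DP coreset via exponential covers and noisy snapped counts (Lemma~\ref{lem:coreset-main}), split the budget between the two private steps, and then run the non-private $w$-approximation algorithm on the coreset as post-processing. The runtime, privacy accounting, and probability bookkeeping are all handled the same way.

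The place you go wrong is in the stated ``main technical obstacle.'' You claim that taking the coreset parameter $\alpha' = \alpha$ would produce a multiplicative blowup of order $w\bigl(1 + O(\alpha w)\bigr)$, forcing you to take $\alpha' = \Theta(\alpha / w)$. But your own displayed inequality already shows this is not the case: the multiplicative coefficient you derived is $\frac{w(1+\alpha')}{1-\alpha'} = w \cdot \frac{1+\alpha'}{1-\alpha'}$, and the bracketed ratio $\frac{1+\alpha'}{1-\alpha'}$ has no dependence on $w$ whatsoever. Requiring $\frac{1+\alpha'}{1-\alpha'} \le 1+\alpha$ is achieved as soon as $\alpha' \le \frac{\alpha}{2+\alpha}$, so $\alpha' = \Theta(\alpha)$ (the paper uses $0.1\alpha$) already suffices. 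Worse, the fix you propose is not merely wasteful but fails to give the theorem's stated error: with $\alpha' = \Theta(\alpha/w)$, the coreset's additive term contains a factor $2^{O_{p,\alpha/w}(d)}$, whose exponent now depends on $w$. This cannot be absorbed into the outer $O_{p,\alpha,w}(\cdot)$ of the theorem statement, because that constant multiplies a quantity whose exponent is required to be $O_{p,\alpha}(d)$ with no $w$ in it; a $w$-dependent exponent is a strictly weaker bound. Setting $\alpha' = \Theta(\alpha)$ simultaneously gives the $w(1+\alpha)$ ratio and the $2^{O_{p,\alpha}(d)}$ exponent, and is exactly what the paper does.
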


\begin{theorem} \label{thm:fpt-apx}
For every $\eps > 0$, $0 < \alpha \leq 1$ and $p \geq 1$, there is an $\eps$-DP algorithm that runs in time $2^{O_{\alpha, p}(d k + k \log k)} \cdot \poly(n)$ and, with probability $1 - \beta$, outputs an $\left(1 + \alpha, O_{\alpha, p}\left(\frac{d k^2 \log n}{\eps} \log\left(\frac{n}{\beta}\right) + 1\right)\right)$-approximation for $(k, p)$-Clustering, for every $\beta \in (0, 1)$.
\end{theorem}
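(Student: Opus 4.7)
The plan is to revisit the three-step recipe behind Theorem~\ref{thm:main-apx-non-private-to-private} and replace its final ingredient, namely ``private coreset followed by a non-private approximation algorithm'', with ``exponential mechanism over $k$-tuples drawn from a fine centroid set''. This swap circumvents the $2^{O_{\alpha,p}(d)}$ blow-up in additive error that Lemma~\ref{lem:coreset-main-body} incurs from snapping to an exponential cover; in exchange one pays in runtime, since enumerating all $k$-tuples of the fine centroid set is FPT in $dk + k\log k$. Note that naively plugging the FPT $(1+\alpha)$-approximation of~\cite{kumar2004simple,kumar2005linear,chen2006k,feldman2007ptas,feldman2011unified} into Theorem~\ref{thm:main-apx-non-private-to-private} would yield the (weaker) error $\tilde{O}_{\alpha,p}(k^2\cdot 2^{O_{\alpha,p}(d)}/\eps)$, so the route below must be taken to get the promised linear-in-$d$ error.

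First I would invoke Lemma~\ref{lem:densest-ball-to-kmeans-kmedian-main-body} with privacy budget $\eps/2$ to obtain an $(\eps/2)$-DP coarse centroid set $C_0$ of size $2k\log n$ whose $(O(1), t_0)$-approximation guarantee holds with probability at least $1-\beta/2$, where $t_0 = O_p\!\left(\frac{k^2 d}{\eps}\,\poly\log n\cdot \log(n/\beta)\right)$. Each $c\in C_0$ carries the scale $r_c$ from the iteration of \densestball that produced it. Then, for every $c \in C_0$, I would apply Lemma~\ref{lem:cover-main} to build an $(\alpha r_c)$-cover, of size $(1+1/\alpha)^{O(d)}$, of a ball of radius $O(r_c)$ around $c$; the \emph{fine} centroid set $C_1$ is the union of these local covers, giving $|C_1| \leq 2^{O_{\alpha,p}(d)}\cdot O(k\log n)$.

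The key structural claim is that $C_1$ is a $(1+\alpha, t_0)$-centroid set. By the per-scale guarantees of iterated \densestball, for every optimal center $c_i^*$ with appropriate cluster scale $r_i^*$, some $c \in C_0$ with $r_c \asymp r_i^*$ lies within $O(r_i^*)$ of $c_i^*$, so its $(\alpha r_c)$-cover contains a point within $\alpha r_c = O(\alpha r_i^*)$ of $c_i^*$. Perturbing each optimal center by $O(\alpha r_i^*)$ inflates the $(k,p)$-clustering cost by a factor of only $1 + O(\alpha)$ via a triangle-inequality / H\"older argument; rescaling $\alpha$ by a constant yields the claim.

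Finally, I would run the exponential mechanism with privacy budget $\eps/2$ over all $k$-tuples in $C_1$, scoring each tuple by $-\cost_\bX^p(\cdot)$. Since both the inputs and the candidate centers lie in a ball of radius $O(1)$, the per-point sensitivity of the score is $O_p(1)$, so this step contributes additive error $O_p\!\left(\frac{\log |C_1|^k}{\eps}\right) = O_{\alpha,p}\!\left(\frac{dk + k\log(k\log n)}{\eps}\right)$, which is dominated by $t_0$; basic composition delivers $\eps$-DP, and the approximation ratios multiply to $1+\alpha$. Enumerating and scoring all $k$-tuples costs $|C_1|^k\cdot O(nk) = 2^{O_{\alpha,p}(dk)}\cdot (k\log n)^k\cdot \poly(n)$, and a short case split on whether $k \leq \log n$ or $k > \log n$ shows $(k\log n)^k \leq 2^{O(k\log k)}\,\poly(n)$, yielding the claimed $2^{O_{\alpha,p}(dk + k\log k)}\poly(n)$ runtime. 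The main obstacle in executing this plan will be the centroid-set claim for $C_1$: one must carefully align the discrete scales used inside iterated \densestball with the true (unknown) optimal cluster scales $r_i^*$ so that the refinement actually produces a $(1+\alpha)$-approximation rather than just a constant-factor refinement of the coarse solution.
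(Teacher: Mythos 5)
Your proposal follows essentially the same route as the paper's proof (Section~\ref{sec:discrete-algo}): build a coarse centroid set via iterated \densestball, refine it with exponential covers, and then run the exponential mechanism of \cite{GuptaLMRT10} over all $k$-tuples of the refined set. You also correctly identify why one cannot simply instantiate Theorem~\ref{thm:main-apx-non-private-to-private} with a non-private FPT PTAS (the $2^{O_{\alpha,p}(d)}$ coreset-size term pollutes the additive error), and your runtime accounting via the $k\lessgtr\log n$ case split mirrors the paper's.

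There is, however, one concrete gap, and it is exactly the obstacle you flag at the end. You propose refining each coarse center $c$ only at the scale $r_c$ ``at which it was found,'' but Lemma~\ref{lem:good-center} does not provide that guarantee: it asserts that for every optimal center $c_j^*$ with effective scale $\tilde r_j$, \emph{some} $c'\in\cC$ is within $O(\tilde r_j)$ of $c_j^*$, without controlling which \densestball iteration produced that $c'$. In the contradiction argument the relevant scale is determined by $\tilde r_j$, but the witnessing center may have entered $\cC$ at any earlier (or later) iteration, so covering a ball of radius $O(r_{c'})$ with granularity $\alpha r_{c'}$ can miss the target if $r_{c'}$ is off by more than a constant from $\tilde r_j$. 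The paper sidesteps this in Algorithm~\ref{alg:refined-candidate} (\textsc{RefinedCentroidSet}) by placing an $(\zeta r)$-cover of $\cB(c,40r)$ around \emph{every} $c\in\cC$ and \emph{every} scale $r=2^i/n$, $i\in[\lceil\log n\rceil]$; this inflates $|\cC'|$ by only an extra $\log n$ factor (to $O(k\log^2 n)\cdot 2^{O_{\alpha,p}(d)}$), which is absorbed into the $\poly\log n$ and $k\log k$ terms in the additive error and runtime. With that one-line fix — cover all dyadic scales around each coarse center rather than just $r_c$ — your argument matches the paper's Lemma~\ref{lem:refined-cluster-approximation} and the proof goes through.
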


Note here that Theorem~\ref{thm:main-apx-non-private-to-private} implies Theorem~\ref{thm:main-apx-non-private-to-private-pure-intro-dim-red} in Section~\ref{sec:kmeans-kmedian-main-body}.

The structure of the proof of Theorem~\ref{thm:main-apx-non-private-to-private} closely follows the outline in Section~\ref{sec:kmeans-kmedian-main-body}. First, in Section~\ref{sec:coarse-centroid-set}, we construct a centroid set with $w = O(1)$ by repeated applications of \densestball. From that point on, we roughly follow the approach of~\cite{feldman2009private,HarPeledM04}. Specifically, in Section~\ref{sec:centroid-refinement}, we refine our centroid set to get $w = 1 + \alpha$ using exponential covers. Then, in Section~\ref{sec:private-coreset}, we argue that the noisy snapped points form a private coreset with $\gamma$ arbitrarily close to zero. Finally, in Section~\ref{sec:apx-from-coreset}, we put things together and obtain a proof of Theorem~\ref{thm:main-apx-non-private-to-private}.

While this approach also yields an FPT algorithm with approximation ratio $1 + \alpha$, the additive errors will depend exponentially on $d$ (as in Theorem~\ref{thm:main-apx-non-private-to-private}). In this case, the error can be reduced to $\poly(d, k, \log n, 1/\eps)$ as stated in Theorem~\ref{thm:fpt-apx}. Roughly speaking, we can directly run the Exponential Mechanism on the refined coreset. This is formalized in Section~\ref{sec:discrete-algo}.



\subsection{Coarse Centroid Set via Repeated Invocations of \densestball}
\label{sec:coarse-centroid-set}

The first step in our approximation algorithm is to construct a ``coarse'' centroid set (with $w = O(1)$) by repeatedly applying our \densestball algorithm\footnote{Here we only require the approximation ratio to be some constant for \densestball, which is fixed to $2$ in the algorithm itself.}, while geometrically increasing the radius $r$ with each call. Each time a center is found, we also remove points that are close to it. The procedure is described more precisely below as Algorithm~\ref{alg:important-cluster}. (Here we use $\bzero$ to denote the origin in $\R^d$.)


\begin{algorithm}[h!]
\caption{Finding Coarse Centroid Set.}\label{alg:important-cluster}
\begin{algorithmic}[1]
\Procedure{CoarseCentroidSet$^\eps(x_1, \dots, x_n)$}{}
\State $\bX_{uncovered} \leftarrow (x_1, \dots, x_n)$
\State $\cC \leftarrow \{\bzero\}$
\For{$i \in \{1, \dots, \lceil\log n\rceil\}$}
\State $r \leftarrow 2^i / n$
\For{$j=1 ,\dots, 2k$}
\State $c_{i, j} \leftarrow\textsc{DensestBallLowDimension}(\bX_{uncovered}; r, 1)$ \label{step:1-cluster}
\State $\cC \leftarrow \cC \cup \{c\}$
\State $\bX_{uncovered} \leftarrow \bX_{uncovered} \setminus \cB(c, 8 r)$ \label{eq:ball-removal}
\EndFor
\EndFor
\Return $\cC$
\EndProcedure
\end{algorithmic}
\end{algorithm}

We can show that the produced set $\cC$ is a centroid set with approximation ratio $w = O(1)$. In fact, below we state an even stronger property that for every $c$ and $r$ where the ball $\cB(c, r)$ contains many points, at least one of the point in $\cC$ is close to $c$. Throughout this section, we write $\opt$ as a shorthand for $\opt^{p, k}_{\bX}$. 

\begin{lemma} \label{lem:good-center}
For any $d \in \N, \eps > 0$, and $0 < r, \alpha, \beta \leq 1$, let $T_{d, \epsilon, \beta, r,\alpha} = O_\alpha\left(\frac{d}{\eps} \log\left(\frac{1}{\beta r}\right)\right)$ be the additive error guarantee from Theorem~\ref{thm:pure-1-cluster-given-radius}. Furthermore, let $T^*$ be a shorthand for $T_{d, \frac{\eps}{2 k\lceil \log n\rceil},\frac{\beta}{2 k \lceil \log n \rceil}, \frac{1}{n}, 1} = O\left(\frac{d k \log n}{\eps} \log\left(\frac{n}{\beta}\right)\right)$. 

For every $\eps > 0$, there is a $2^{O(d)} \poly(n)$-time $\eps$-DP algorithm that outputs a set $\cC \subseteq \R^d$ of size $O(k \log n)$ which, for every $\beta \in (0, 1)$, satisfies the following with probability at least $1 - \beta$: for all $c \in \R^d$ and $r \in \left[\frac{1}{n}, 1\right]$ such that $n_{c, r} := |\bX \cap \cB(c, r)|$ is at least $2T^*$, there exists $c' \in \cC$ such that $\|c - c'\| \leq 18 \cdot \max\left\{r, \left(\frac{2\opt}{n_{c, r} k}\right)^{1/p}\right\}$.
\end{lemma}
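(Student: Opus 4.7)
The plan is to derive the privacy bound from basic composition and the accuracy bound from a ``peeling-plus-pigeonhole'' argument that exploits the ``only $k$ optimal centers'' constraint. For privacy, I will apply Theorem~\ref{thm:basic-composition} to the $2k\lceil \log n\rceil$ invocations of \textsc{DensestBallLowDimension}, each run with privacy budget $\eps/(2k\lceil \log n\rceil)$; the updates to $\bX_{uncovered}$ are post-processing of the released centers. The running time bound and the size bound $|\cC|=O(k\log n)$ are immediate. For accuracy, I will union-bound over the $2k\lceil \log n\rceil$ calls with per-call failure probability $\beta/(2k\lceil \log n\rceil)$, using Theorem~\ref{thm:pure-1-cluster-given-radius} with $\alpha=1$ and monotonicity of its error guarantee in $r$, so that $T_{d,\eps/(2k\lceil \log n\rceil),\beta/(2k\lceil \log n\rceil),r_i,1}\le T^*$ whenever $r_i\ge 1/n$. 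Condition on the event, of probability at least $1-\beta$, that every invocation returns a $(2, T^*)$-approximation to \densestball.

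Fix $c\in\R^d$ and $r\in[1/n,1]$ with $n_{c,r}\geq 2T^*$, and write $\rho := \max\{r,\,(2\opt/(n_{c,r} k))^{1/p}\}$; the goal is to produce $c'\in\cC$ with $\|c-c'\|\leq 18\rho$. Since $\cB(c,r)\cap\bX\ne\emptyset$ and $\bX\subseteq\cB(\bzero,1)$, we have $\|c\|\leq 1+r\leq 2$, so when $\rho\geq 1/9$ the seed $\bzero\in\cC$ already witnesses the bound. Otherwise $\rho<1/9$; let $i^*\in\{1,\dots,\lceil\log n\rceil\}$ be the smallest index with $r_{i^*}:=2^{i^*}/n\ge \rho$, so that $\rho\leq r_{i^*}<2\rho$.

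I then split into two cases. \emph{Case (i):} some center $c_{i,j'}\in\cC$ produced on or before iteration $i^*$ removes a point $x\in\bX\cap\cB(c,r)$. Then $\|c-c_{i,j'}\|\le\|c-x\|+\|x-c_{i,j'}\|\leq r+8r_i$, which is at most $9\rho$ for $i<i^*$ (using $r_i\le r_{i^*}/2<\rho$) and at most $17\rho$ for $i=i^*$, so $c_{i,j'}$ witnesses the bound. \emph{Case (ii):} no point of $\bX\cap\cB(c,r)$ is ever removed through iteration $i^*$. Then throughout iteration $i^*$, $|U_j^{i^*}\cap\cB(c,r_{i^*})|\ge n_{c,r}$, and the $(2,T^*)$-approximation forces $|U_j^{i^*}\cap\cB(c_{i^*,j},2r_{i^*})|\ge n_{c,r}-T^*\ge n_{c,r}/2$ at each round $j$. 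If some $c_{i^*,j}$ lies within $9r_{i^*}\le 18\rho$ of $c$, we are done; otherwise consider the $2k+1$ balls $B_0:=\cB(c,r)$ and $B_j:=\cB(c_{i^*,j},2r_{i^*})$ for $j\in[2k]$. These are pairwise disjoint: if $j_1<j_2$ with $\|c_{i^*,j_1}-c_{i^*,j_2}\|\le 6r_{i^*}$, then $B_{j_2}\subseteq\cB(c_{i^*,j_1},8r_{i^*})$, which was already emptied in round $j_1$ and so cannot contain the required $\ge n_{c,r}/2$ uncovered points; while $B_0\cap B_j=\emptyset$ follows from $\|c-c_{i^*,j}\|>9r_{i^*}$ and $r+2r_{i^*}\leq 3r_{i^*}$. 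By these pairwise separations, each optimal center $c^*_l$ is within $3r_{i^*}$ of at most one $B_j$'s center, so at most $k$ balls are ``close'' to some optimal center, leaving at least $k+1$ ``far'' balls, each of whose points sits at distance $>r_{i^*}$ from every optimal center. Summing disjoint contributions,
\[
\opt \;>\; (k+1)\cdot\frac{n_{c,r}}{2}\cdot r_{i^*}^{\,p}\;\geq\;\frac{k+1}{k}\,\opt,
\]
where the second inequality uses $r_{i^*}^{\,p}\ge \rho^p\ge 2\opt/(k\,n_{c,r})$; this is a contradiction.

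The main obstacle is the constant-chase in Case (ii): the removal radius ($8r_i$), the density margin $n_{c,r}-T^*\ge n_{c,r}/2$ (obtained from the hypothesis $n_{c,r}\ge 2T^*$), the choice of exactly $2k$ rounds per scale, and the ``close-to-optimal'' threshold $3r_{i^*}$ must all line up so that the disjointness of $2k+1$ balls, against only $k$ optimal centers, yields the \emph{strict} inequality $(k+1)/k\cdot\opt>\opt$. Everything else---privacy, the trivial $\rho\ge 1/9$ regime, the union bound, and the triangle-inequality estimates of Case (i)---is routine bookkeeping.
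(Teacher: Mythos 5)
Your proof is correct and takes essentially the same approach as the paper's: choose the scale $r_{i^*}$ comparable to $\max\left\{r, \left(2\opt/(n_{c,r}k)\right)^{1/p}\right\}$, use the $(2, T^*)$-approximation guarantee of \densestball to ensure each center selected at that scale captures at least $n_{c,r}/2$ uncovered points, derive pairwise separation of the selected centers from the $8r$-removal step, and reach a contradiction by charging the disjoint far balls against the $k$ optimal centers. The differences are cosmetic: you phrase it as a case split instead of a single contradiction, you include $\cB(c,r)$ as a $(2k{+}1)$-st disjoint ball to get $k+1$ far balls and a $\frac{k+1}{k}$ gap where the paper works with $2k$ balls, $k$ far balls, and a strict inequality, and you make the degenerate $\rho \geq 1/9$ regime explicit via $\bzero \in \cC$.
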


Before we prove Lemma~\ref{lem:good-center}, let us note that it immediately implies that the output set is an $\left(O_p(1), O_p\left(\frac{d k^2 \log n}{\eps} \log\left(\frac{n}{\beta}\right) + 1\right)\right)$-centroid set, as stated below. Nonetheless, we will not use this fact directly in subsequent steps since the properties in Lemma~\ref{lem:good-center} are stronger and more convenient to use.

\begin{corollary} \label{cor:coarse-centroid-set}
For every $\eps > 0$ and $p \geq 1$, there is an $2^{O(d)}\poly(n)$-time $\eps$-DP algorithm that, with probability $1 - \beta$, outputs an $\left(O_p(1), O_p\left(\frac{d k^2 \log n}{\eps} \log\left(\frac{n}{\beta}\right) + 1\right)\right)$-centroid set for $(k, p)$-Clustering of size $O(k \log n)$, for every $\beta \in (0, 1)$.
\end{corollary}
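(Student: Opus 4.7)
The plan is to invoke Lemma~\ref{lem:good-center} directly: its algorithm produces a set $\cC$ of size $O(k \log n)$, runs in $2^{O(d)} \poly(n)$ time, is $\eps$-DP, and with probability $1-\beta$ satisfies the density property stated there. Conditioning on this event, I will argue that $\cC$ is the desired centroid set by exhibiting a concrete $k$-tuple from $\cC$ whose clustering cost meets the claimed bound.

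Fix an optimal $k$-tuple $c_1^*, \ldots, c_k^* \in \R^d$ and let $X_j^*$ be the cluster assigned to $c_j^*$, with $n_j^* = |X_j^*|$ and $\cost_j^* = \sum_{x \in X_j^*} \|x - c_j^*\|^p$. Let $T^*$ be the quantity from Lemma~\ref{lem:good-center}. I would select $c'_j \in \cC$ in two regimes. For a \emph{small} cluster with $n_j^* < 4 T^*$, set $c'_j = \bzero$, which lies in $\cC$ by construction in Algorithm~\ref{alg:important-cluster}. For a \emph{large} cluster with $n_j^* \geq 4 T^*$, set $r_j := \max\{1/n,\, (2 \cost_j^*/n_j^*)^{1/p}\}$; Markov's inequality applied to $\|x - c_j^*\|^p$ over $x \in X_j^*$ ensures $n_{c_j^*, r_j} \geq n_j^*/2 \geq 2 T^*$, so Lemma~\ref{lem:good-center} supplies $c'_j \in \cC$ with $\|c'_j - c_j^*\| \leq 18 \max\{r_j,\, (4 \opt/(n_j^* k))^{1/p}\}$.

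To bound the cost of $(c'_1,\dots,c'_k)$, I would apply the triangle inequality together with $(a+b)^p \leq 2^{p-1}(a^p + b^p)$ for $p \geq 1$, giving $\sum_{x \in X_j^*} \|x - c'_j\|^p \leq 2^{p-1}(\cost_j^* + n_j^* \|c_j^* - c'_j\|^p)$ per cluster. For a large cluster, $n_j^* \|c_j^* - c'_j\|^p \leq 18^p\bigl(n_j^*/n^p + 2\cost_j^* + 4\opt/k\bigr) \leq 18^p(1 + 2\cost_j^* + 4\opt/k)$, since $n_j^* \leq n$ and $p \geq 1$; summing over all large clusters contributes $O_p(\opt + k)$. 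For a small cluster, every point and $\bzero$ lie in the unit ball, so the per-cluster cost is at most $2^p n_j^* < 2^p \cdot 4 T^*$, and the sum over the at most $k$ small clusters is $O_p(k T^*) = O_p\bigl(dk^2 \log n / \eps \cdot \log(n/\beta)\bigr)$. Combining these two contributions yields approximation ratio $O_p(1)$ and additive error $O_p\bigl(dk^2 \log n / \eps \cdot \log(n/\beta) + 1\bigr)$, as required.

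The proof is essentially a routine reduction to Lemma~\ref{lem:good-center}, so the main technical content is already packaged there; the only delicate point is the joint handling of the two regimes, together with clamping $r_j$ to $1/n$ whenever the Markov-derived radius falls below $1/n$, so that the resulting $n_j^*/n^p \leq 1$ slack per cluster is absorbed into the constant additive term in the bound.
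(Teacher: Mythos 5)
Your proposal is correct and follows essentially the same route as the paper's proof: fix an optimal solution, split clusters into small (fewer than $4T^*$ points, handled trivially via the origin in the unit ball) and large (apply Markov's inequality to get a ball of radius $\Theta((\cost_j^*/n_j^*)^{1/p})$ around $c_j^*$ containing at least half the cluster, then invoke Lemma~\ref{lem:good-center}), and combine via the power-mean triangle inequality $(a+b)^p \leq 2^{p-1}(a^p+b^p)$. The only cosmetic differences are your tighter constant in the triangle inequality and folding the $1/n$ floor into $r_j$ directly, neither of which changes the argument.
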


Note that Corollary~\ref{cor:coarse-centroid-set} implies Lemma~\ref{lem:densest-ball-to-kmeans-kmedian-main-body} in Section~\ref{sec:kmeans-kmedian-main-body}.

\begin{proof}[Proof of Corollary~\ref{cor:coarse-centroid-set}]
We claim that the set of points $\cC$ guaranteed by Lemma~\ref{lem:good-center} forms the desired centroid set. To prove this, let us fix an optimal solution $c^*_1, \dots, c^*_k$ of $(k, p)$-Clustering on the input $\bX$. where ties are broken arbitrarily. For such a solution, let the map $\psi: [n] \to [k]$ be such that $c^*_{\psi(i)} \in \argmin_{j \in [k]} \|x_i - c^*_j\|$ (with ties broken arbitrarily). For every $j \in [k]$, let\footnote{We assume throughout that $n^*_j > 0$. This is without loss of generality in the case where $n \geq k$. When $n < k$, our DP algorithms can output anything, since the allowed additive errors are larger than $k$.} $n^*_j := |\psi^{-1}(j)|$ be the number of input points closest to center $c^*_j$ and let $r^*_j := \left(\frac{1}{n^*_j} \sum_{i \in \psi^{-1}(j)} \|x_i - c^*_j\|^p\right)^{1/p}$. Finally, we use $\tr_j$ to denote $\max\left\{2r^*_j, \frac{1}{n}, 2\left(\frac{4\opt}{n^*_j k}\right)^{1/p}\right\}$.

Let $T^*$ be as in Lemma~\ref{lem:good-center}. Let $J \subseteq [k]$ be the set $\{j \in [k] \mid n^*_j \geq 4T^*\}$. Due to Markov's inequality and $p \geq 1$, we have that $|\bX \cap \cB(c_j, 2 r^*_j)| \geq 0.5 n^*_j$, which is at least $2T^*$ for all $j \in J$. 

Thus, Lemma~\ref{lem:good-center} ensures that, with probability $1 - \beta$, the following holds for all $j \in J$: there exists $c'_j \in \cC$ such that $\|c'_j - c^*_j\| \leq 18 \tr_j$. Henceforth, we will assume that this event holds and show that $\cC$ must be an $\left(O_p(1), O_p\left(\frac{d k^2 \log n}{\eps} \log\left(\frac{n}{\beta}\right) + 1\right)\right)$-centroid set of $\bX$.

For convenience, we let $c'_j = \bzero$ for all $j \notin J$. From the discussion in the previous paragraph, we can derive
\begin{align}
\cost_{\bX}^p(c'_1, \dots, c'_k) &\leq \sum_{i \in [n]} \|x'_i - c'_{\psi(i)}\|^p \nonumber \\
&= \sum_{j \in [k]} \sum_{i \in \psi^{-1}(j)} \|x'_i - c'_j\|^p \nonumber \\
&= \sum_{j \in J} \sum_{i \in \psi^{-1}(j)} \|x'_i - c'_j\|^p + \sum_{j \in J \setminus [k]} \sum_{i \in \psi^{-1}(j)} \|x'_i - c'_j\|^p \nonumber \\
&\leq \sum_{j \in J} \sum_{i \in \psi^{-1}(j)} (\|x'_i - c^*_j\| + \|c^*_j - c'_j\|)^p + \sum_{j \in J \setminus [k]} \sum_{i \in \psi^{-1}(j)} 1 \nonumber \\
&\leq \sum_{j \in J} \sum_{i \in \psi^{-1}(j)} \left(2^p\|x'_i - c^*_j\|^p + 2^p\|c^*_j - c'_j\|^p\right) + \sum_{j \in J \setminus [k]} 4T^* \nonumber \\
&\leq \sum_{j \in J} \sum_{i \in \psi^{-1}(j)} \left(2^p\|x'_i - c^*_j\|^p + 2^p\|c^*_j - c'_j\|^p\right) + 4kT^* \nonumber \\
&\leq 2^p \cdot \opt + 2^p \left(\sum_{j \in J} n^*_j \|c^*_j - c'_j\|^p\right) + O\left(\frac{d k^2 \log n}{\eps} \log\left(\frac{n}{\beta}\right)\right). \label{eq:coarse-centroid-set-intermediate}
\end{align}
Now, since $\|c'_j - c^*_j\| \leq 18\tr_j$, we have
\begin{align*}
\sum_{j \in J} n^*_j \|c^*_j - c'_j\|^p &\leq \sum_{j \in J} n^*_j \left(18\tr_j\right)^p \\
&= 18^p \sum_{j \in J} n^*_j \left(\max\left\{2r^*_j, \frac{1}{n}, 2\left(\frac{4\opt}{n^*_j k}\right)^{1/p}\right\}\right)^p \\
&\leq 18^p \sum_{j \in J} n^*_j \left(\left(2r^*_j\right)^p + \left(\frac{1}{n}\right)^p + \frac{4\opt}{n^*_j k} \right) \\
&\leq 36^p \opt + 18^p + 4 \cdot 18^p \cdot \opt.
\end{align*}
Plugging this back into~\ref{eq:coarse-centroid-set-intermediate}, we have
\begin{align*}
\cost_{\bX}^p(c'_1, \dots, c'_k) \leq O_p(1) \cdot \opt + O_p\left(\frac{d k^2 \log n}{\eps} \log\left(\frac{n}{\beta}\right) + 1\right),
\end{align*}
which concludes our proof.
\end{proof}

We will now turn our attention back to the proof of Lemma~\ref{lem:good-center}.

\begin{proof}[Proof of Lemma~\ref{lem:good-center}]
We claim that Algorithm~\ref{alg:important-cluster}, where \densestball on Line~\ref{step:1-cluster} is the $\left(\frac{\eps}{2k\lceil \log n \rceil}\right)$-DP algorithm from Theorem~\ref{thm:pure-1-cluster-given-radius} (with $\alpha = 1$), satisfies the properties. It is clear that the runtime of the algorithm is as claimed. We will next argue the privacy and security guarantees of our algorithm.

\paragraph{Privacy.} We will now argue that the algorithm is $\eps$-DP. To do so, consider any pair of datasets $\bX, \bX'$ and any possible output $\btc = (\tc_{i, j})_{i \in [\lceil \log n \rceil], j \in [2k]}$. Furthermore, let $\cM$ be the shorthand for our algorithm $\textsc{CoarseCandidates}$, and for every $(i, j) \in [\lceil \log n \rceil] \times [2k]$, let $R_{<(i, j)} = \{(i', j') \in [\lceil \log n \rceil] \times [2k] \mid i' < i \text{ or } i' = i, j' < j\}$. We have
\begin{align} \label{eq:good-center-dp-intermediate}
&\frac{\Pr[\cM(\bX) = \bc]}{\Pr[\cM(\bX') = \bc]} \\
&= \Pi_{(i, j) \in [\lceil \log n \rceil] \times [2k]} \frac{{\Pr\left[\cM(\bX)_{(i, j)} = \tc_{i, j} \mid \forall (i', j') \in R_{<(i, j)} \cM(\bX)_{(i', j')} = \tc_{i', j'} \right]}}{\Pr\left[\cM(\bX')_{(i, j)} = \tc_{i, j} \mid \forall (i', j') \in R_{<(i, j)} \cM(\bX')_{(i', j')} = \tc_{i', j'} \right]}.
\end{align}
Now note that when $\cM(\bX)_{(i', j')} = \cM(\bX')_{(i', j')}$ for all $(i', j') < R_{< (i, j)}$, the sets $\bX_{uncovered}$ at step $(i, j)$ of the two runs are neighboring datasets. Thus, the $\left(\frac{\eps}{2 k \lceil\log n\rceil}\right)$-DP guarantee of the call to \densestball on line~\ref{step:1-cluster} implies that $$\frac{{\Pr\left[\cM(\bX)_{(i, j)} = \tc_{i, j} \mid \forall (i', j') \in R_{<(i, j)} \cM(\bX)_{(i', j')} = \tc_{i', j'} \right]}}{\Pr\left[\cM(\bX')_{(i, j)} = \tc_{i, j} \mid \forall (i', j') \in R_{<(i, j)} \cM(\bX')_{(i', j')} = \tc_{i', j'} \right]} \leq e^{\frac{\eps}{2 k \lceil\log n\rceil}}.$$
Plugging this back into~\eqref{eq:good-center-dp-intermediate}, we get
\begin{align*}
\frac{\Pr[\cM(\bX) = \bc]}{\Pr[\cM(\bX') = \bc]} \leq \left(e^{\frac{\eps}{2 k \lceil\log n\rceil}}\right)^{2 k \lceil\log n\rceil} = e^{\eps},
\end{align*}
which means that our algorithm is $\eps$-DP as desired.

\paragraph{Accuracy.} The rest of this proof is devoted to proving the accuracy guarantee of Algorithm~\ref{alg:important-cluster}. To do so, we first note that the accuracy guarantee in Theorem~\ref{thm:pure-1-cluster-given-radius} implies that each call to the \densestball algorithm in line~\ref{step:1-cluster} solves the \densestball problem with approximation ratio $2$ and additive error $T^*$, with probability at least $1 - \frac{\beta}{2 k \lceil \log n \rceil}$. By a union bound, this holds for \emph{all} calls to \densestball with probability at least $1 - \beta$. Henceforth, we assume that this event, which we denote by $E_{\densestball}$ for brevity, occurs.

 Now, let us fix $c \in \R^d$ and $r \in [1/n, 1]$ such that $n_{c, r} := |\bX \cap \cB(c, r)|$ is at least $2T^*$. We will next argue that, with probability at least $1 - \beta$, there exists $c' \in \cC$ such that $\|c - c'\| \leq 18 \cdot \max\left\{r, \left(\frac{2\opt}{n_{c, r} k}\right)^{1/p}\right\}$. We will prove this by contradiction. 

Suppose for the sake of contradiction that for all $c' \in \cC$, we have $\|c - c'\| > 18 \cdot \max\left\{r, \left(\frac{2\opt}{n_{c, r} k}\right)^{1/p}\right\}$. Let $\ti = \left\lceil \log\left(n \cdot \max\left\{r, \left(\frac{2\opt}{n_{c, r} k}\right)^{1/p}\right\}\right) \right\rceil$ and $\tr = 2^{\ti} / n$. Our assumption implies that 
 \begin{align} \label{eq:selected-center-far}
 \|c - c'\| \geq 9\tilde{r}
 \end{align}
 for all $c' \in \cC$.

 Now, let us consider the centers selected on line~\ref{step:1-cluster} when $i = \ti$; let these centers be $c'_1, \dots, c'_{2k}$. Using~\eqref{eq:selected-center-far} and the fact that $\tr \geq r$, we get that all the $n_{c, r}$ points in $\bX \cap \cB(c, r)$ still remain in $\bX_{uncovered}$. As a result, from our assumption that $E_{\densestball}$ occurs, when $c'_j$ is selected (in line~\ref{step:1-cluster}) we must have that
 \begin{align} \label{eq:selected-ball-intersection-size}
 |\cB(c'_j, 2\tr) \cap \bX_{uncovered}| \geq n_{c, r} - T^* \geq 0.5n_{c, r},
 \end{align}
 for all $j \in [2k]$. Note that this also implies that
 \begin{align} \label{eq:found-centered-far}
 \|c'_j - c'_{j'}\| > 6 \tr,
 \end{align}
 for $j < j'$; otherwise, $\cB(c'_{j'}, 2\tr)$ would have been completely contained in $\cB(c'_j, 8\tr)$ and line~\ref{eq:ball-removal} would have already removed all elements of $\cB(c'_{j'}, 2\tr)$ from $\bX_{uncovered}$.

Now, consider any optimal solution $C^* = \{c^*_1, \dots, c^*_k\}$ to the $(k, p)$-Clustering problem with cost $\opt$. Notice that~\eqref{eq:found-centered-far} implies that the balls $\cB\left(c'_1, 3 \tr \right), \dots, \cB\left(c'_{2k}, 3 \tr \right)$ are disjoint. As a result, there must be (at least) $k$ selected centers $c'_{j_1}, \dots, c'_{j_k}$ such that $\cB\left(c'_{j_1}, 3\tr \right), \dots, \cB\left(c'_{j_k}, 3\tr \right)$ do not contain any optimal centers from $C^*$. This implies that every point in $\cB\left(c'_{j_1}, 2\tr \right), \dots, \cB\left(c'_{j_k}, 2\tr \right)$ is at distance more than $\tr$ from any centers in $C^*$. Furthermore, from~\eqref{eq:found-centered-far} and~\eqref{eq:selected-ball-intersection-size}, the balls $\cB\left(c'_{j_1}, 2\tr \right), \dots, \cB\left(c'_{j_k}, 2\tr \right)$ are all pairwise disjoint and each contains at least $0.5n_{c, r}$ points. This means that
\begin{align*}
\cost^p_{\bX}(c^*_1, \dots, c^*_k) &> k \cdot 0.5 \cdot n_{c, r} \cdot \tr^p \\
&\geq k \cdot 0.5 \cdot n_{c, r} \left(\left(\frac{2\opt}{n_{c, r} k}\right)^{1/p}\right)^p \tag{from our choice of $\tr$} \\
&= \opt.
\end{align*}
This contradicts our assumption that $\cost^p_{\bX}(c^*_1, \dots, c^*_k) = \opt$.

 As a result, the accuracy guarantee holds conditioned on $E_{\densestball}$. Since we argued earlier that $\Pr[E_{\densestball}] \geq 1 - \beta$, we have completed our proof.
\end{proof}

\subsection{Centroid Set Refinement via Exponential Covers}
\label{sec:centroid-refinement}

As stated earlier, we will now follow the approach of~\cite{feldman2009private}, which is in turn based on a (non-private) coreset construction of~\cite{HarPeledM04}. Specifically, we refine our centroid set by placing exponential covers over each of the point in the coarse centroid set from Section~\ref{sec:coarse-centroid-set}. This is described formally in Algorithm~\ref{alg:refined-candidate} below. We note that~\cite{HarPeledM04} orginally uses \emph{exponential grids}, where covers are replaced by grids; this does not work for us because grids will lead to an additive error bound of $O(d)^d$ (instead of $O(1)^d$ for covers) which is super-polynomial for our regime of parameter $d = O(\log k)$. We also remark that exponential covers are implicitly taken in~\cite{feldman2009private} where the authors take equally space lines through each center and place points at exponentially increasing distance on each such line.


\begin{algorithm}[h!]
\caption{Centroid Set Refinement.}\label{alg:refined-candidate}
\begin{algorithmic}[1]
\Procedure{RefinedCentroidSet$^{\eps}(x_1, \dots, x_n; \zeta)$}{}
\State $\cC \leftarrow$ \textsc{CoarseCentroidSet}$^{\eps}(x_1, \dots, x_n)$
\State $\cC' \leftarrow \{\bzero\}$
\For{$c \in \cC$}
\For{$i \in \{1, \dots, \lceil\log n\rceil\}$}
\State $r \leftarrow 2^i / n$
\State $C_{r, j} \leftarrow$ $(\zeta r)$-cover of the ball $\cB(c, 40r)$ \label{step:cover-center}
\State $\cC' \leftarrow \cC' \cup C_{r, j}$
\EndFor
\EndFor
\Return $\cC'$
\EndProcedure
\end{algorithmic}
\end{algorithm}

At this point, we take two separate paths. First, in Section~\ref{sec:non-private-ratio}, we will continue following the approach of~\cite{feldman2009private} and eventually prove Theorem~\ref{thm:main-apx-non-private-to-private}. In the second path, we use a different approach to prove Theorem~\ref{thm:fpt-apx} in Section~\ref{sec:discrete-algo}.

While the \textsc{RefinedCandidates} algorithm will be used in both paths
, the needed guarantees are different, and thus we will state them separately in each subsequent section. 

\subsection{Approximation Algorithm I: Achieving Non-Private Approximation Ratio via Private Coresets}
\label{sec:non-private-ratio}

This section is devoted to the proof of Theorem~\ref{thm:main-apx-non-private-to-private}. The bulk of the proof is in providing a good private coreset for the problem, which is done in Section~\ref{sec:private-coreset}. As stated earlier, this part closely follows Feldman et al.~\cite{feldman2009private}, except that our proof is more general in that it works for every $p \geq 1$ and that we give a full analysis for all dimension $d$. Once the private coreset is constructed, we may simply run the non-private approximation algorithm on the coreset to get the desired result; this is formalized in Section~\ref{sec:apx-from-coreset}.

\subsubsection{Private Coreset Construction}
\label{sec:private-coreset}

We first show that we can construct a private coreset efficiently when the dimension $d$ is small:

\begin{lemma} \label{lem:coreset-main}
For every $\eps > 0$, $p \geq 1$ and $0 < \alpha < 1$, there is an $2^{O_{\alpha, p}(d)}\poly(n)$-time $\eps$-DP algorithm 
that, with probability $1 - \beta$, outputs an $\left(\alpha, O_{p, \alpha}\left(\frac{k^2 \log^2 n \cdot 2^{O_{p, \alpha}(d)}}{\eps} \log\left(\frac{n}{\beta}\right) + 1\right)\right)$-coreset for $(k, p)$-Clustering, for every $\beta \in (0, 1)$. 
\end{lemma}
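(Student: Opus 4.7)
The plan is to follow the three-step recipe outlined in Section~\ref{sec:kmeans-kmedian-main-body}: construct a fine centroid set, snap input points to it, and release a noisy histogram over the snapped points. First, I would invoke \textsc{RefinedCentroidSet}$^{\eps/2}$ (Algorithm~\ref{alg:refined-candidate}) with cover parameter $\zeta = \Theta_p(\alpha)$, instantiating the cover on line~\ref{step:cover-center} with Lemma~\ref{lem:cover-main}. This produces a set $\cC' \subseteq \R^d$ of size $|\cC'| = O(k \log n) \cdot \lceil \log n \rceil \cdot (1/\zeta)^{O(d)} = 2^{O_{\alpha, p}(d)} \cdot \poly(k, \log n)$. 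I would then snap each $x_i$ to the nearest point $\pi(x_i) \in \cC'$, form counts $m_c = |\{i : \pi(x_i) = c\}|$ for $c \in \cC'$, and release $\tilde m_c = \max(0, m_c + N_c)$ with $N_c \sim \DLap(2/\eps)$ drawn independently. The coreset $\bX'$ consists of $\tilde m_c$ copies of each $c \in \cC'$.

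Privacy will follow from basic composition (Theorem~\ref{thm:basic-composition}): the coarse centroid call inside Algorithm~\ref{alg:refined-candidate} is $(\eps/2)$-DP by Lemma~\ref{lem:good-center}, and the rest of that procedure is data-independent post-processing; in the histogram step, each $x_i$ contributes to exactly one $m_c$, so the count vector has $\ell_1$-sensitivity $1$, and discrete Laplace noise of scale $2/\eps$ on each coordinate is $(\eps/2)$-DP.

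For accuracy, I would fix an arbitrary $\cC = \{c_1, \dots, c_k\} \subseteq \R^d$ and split the error as $|\cost^p_{\bX'}(\cC) - \cost^p_{\bX}(\cC)| \leq \bigl|\cost^p_{\pi(\bX)}(\cC) - \cost^p_{\bX}(\cC)\bigr| + \bigl|\cost^p_{\bX'}(\cC) - \cost^p_{\pi(\bX)}(\cC)\bigr|$. The second, noise term is routine: a discrete-Laplace tail bound and a union bound give $|N_c| = O((1/\eps) \log(|\cC'|/\beta))$ for all $c$ with probability $\geq 1 - \beta$, and since every $c \in \cC'$ lies in a ball of $O(1)$ radius its per-unit-count contribution to $\cost^p(\cdot\,;\cC)$ is $O_p(1)$, so the total noise error is $|\cC'| \cdot O_p((1/\eps)\log(|\cC'|/\beta)) = 2^{O_{\alpha,p}(d)} \cdot k \log^2 n \cdot \log(n/\beta)/\eps$, which has the desired shape.

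The main obstacle is the snapping error. Here I would fix an optimal solution $c^*_1, \dots, c^*_k$ with per-cluster sizes $n^*_j$ and mean-radii $r^*_j$, and condition on the high-probability event of Lemma~\ref{lem:good-center}. For every ``large'' cluster $j$ (say $n^*_j \geq 2T^*$), Lemma~\ref{lem:good-center} yields a coarse center $c \in \cC$ with $\|c - c^*_j\| = O\!\bigl(\max\{r^*_j, (\opt/(n^*_j k))^{1/p}\}\bigr)$. Choosing the dyadic scale $r = 2^{\lceil \log(n r^*_j) \rceil}/n$ in Algorithm~\ref{alg:refined-candidate}, the ball $\cB(c, 40 r)$ contains every $x_i$ in cluster $j$ that lies within $2 r^*_j$ of $c^*_j$ (by Markov's inequality, at least half of them), so for each such $x_i$ the $(\zeta r)$-cover of $\cB(c, 40 r)$ gives $\|x_i - \pi(x_i)\| = O(\zeta r^*_j)$. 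Applying the $p$-power triangle inequality $\bigl|\|a-z\|^p - \|b-z\|^p\bigr| \leq \alpha \|b-z\|^p + O_{p,\alpha}(\|a-b\|^p)$ pointwise and summing, the large-cluster contribution to the snapping error is at most $\alpha \cdot \cost^p_{\bX}(\cC) + O_{p,\alpha}(\opt)$, which is $\leq \alpha \cdot \cost^p_{\bX}(\cC) + O_{p,\alpha}(1)$ since $\opt \leq n$ can be absorbed into the additive term (or handled by enlarging $\alpha$ by an $O_p(1)$ factor and using $\opt \leq \cost^p_{\bX}(\cC)$ up to the noise scale). The ``small'' clusters contribute at most $2 k T^* \cdot 2^p = O_p\!\bigl(k T^*\bigr) = O_p\!\bigl(d k^2 \log n \cdot \log(n/\beta)/\eps\bigr)$ to the additive error, again of the required form. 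Combining the two bounds yields the $(\alpha, t)$-coreset guarantee with $t$ as stated.
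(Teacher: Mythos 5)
Your overall plan is exactly the paper's: build the fine centroid set via \textsc{RefinedCentroidSet}, snap each point to its nearest candidate, release noisy histogram counts, split the coreset error into a snapping error plus a noise error, and bound the snapping error via a large/small cluster decomposition using Lemma~\ref{lem:good-center}. Privacy and the noise term are handled the same way the paper does (Lemma~\ref{lem:coreset-noisy}), and your accounting there is fine.

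The gap is in the snapping-error bound for large clusters. You fix the dyadic scale as $r \approx r^*_j$ and then argue that $\cB(c,40r)$ contains the points of cluster $j$ within $2r^*_j$ of $c^*_j$, concluding $\|x_i-\pi(x_i)\| = O(\zeta r^*_j)$ for those. This fails for two reasons. First, Lemma~\ref{lem:good-center} only guarantees $\|c - c^*_j\| \le 18\tr_j$ where $\tr_j = \max\{2r^*_j, 1/n, 2(4\opt/(n^*_j k))^{1/p}\}$; when the third term dominates, $\|c-c^*_j\|$ can be much larger than $r^*_j$, so $\cB(c,40 r^*_j)$ need not contain $c^*_j$ at all, let alone the $2r^*_j$-ball around it. Second, even with the right cover scale, you only address the roughly half of the cluster within $2r^*_j$ of $c^*_j$ and say nothing about the remaining points — those points are still snapped and contribute to $\cost^p_{\pi(\bX)}$, so they cannot be dropped. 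The paper's proof of Lemma~\ref{lem:coreset-snapped} fixes both issues simultaneously by defining a per-point scale $\hr_i = \|x_i - c^*_{\psi(i)}\| + 18\tr_{\psi(i)}$, choosing the dyadic radius comparable to $\hr_i$ (which is always at least $18\tr_j \ge \|c'_j - c^*_j\|$), and proving $\|x_i - x'_i\| \le 2\zeta\hr_i$ for \emph{every} $i$ in a large cluster; after applying Fact~\ref{fact:power-sum-ineq}, the $\hr_i^p$ terms split into an $\opt$ piece and a $\sum_j n^*_j \tr_j^p = O_p(\opt+1)$ piece. Without the $\hr_i$ device your step does not go through. Also, the claim that $O_{p,\alpha}(\opt)$ can be ``absorbed into the additive term since $\opt\le n$'' is wrong (the additive term must be $\poly(k,\log n, d,1/\eps)$, not $n$); only your parenthetical — setting $\zeta$ small enough that the $\zeta^p\opt$ term is at most $(\alpha/2)\opt \le (\alpha/2)\cost^p_{\bX}(\cC)$, as the paper does by choosing $\zeta \asymp (\alpha/\lambda_{p,\alpha/2})^{1/p}$ — is correct. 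Finally, your large-cluster threshold should be $n^*_j \ge 4T^*$, not $2T^*$, since Lemma~\ref{lem:good-center} needs $|\bX\cap\cB(c^*_j,2r^*_j)| \ge 2T^*$ and Markov only gives $\ge 0.5 n^*_j$.
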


Notice that Lemma~\ref{lem:coreset-main} implies Lemma~\ref{lem:coreset-main-body} in Section~\ref{sec:kmeans-kmedian-main-body}. The algorithm is presented below in Algorithm~\ref{alg:coreset}; here $\zeta$ is a parameter to be specified in the proof of Lemma~\ref{lem:coreset-main}.

\begin{algorithm}[h!]
\caption{Private Coreset Construction.}\label{alg:coreset}
\begin{algorithmic}[1]
\Procedure{PrivateCoreset$^{\eps}(x_1, \dots, x_n; \zeta)$}{}
\State $\cC' \leftarrow$ \textsc{RefinedCentroidSet}$^{\eps/2}(x_1, \dots, x_n; \zeta)$.
\For{$c \in \cC'$}
\State $count[c] = 0$
\EndFor
\For{$i \in [n]$}
\State $x'_i \leftarrow$ closest point in $\cC'$ to $x_i$
\State $count[x'_i] \leftarrow count[x'_i] + 1$
\EndFor
\State $\bX' \leftarrow \emptyset$
\For{$c \in \cC'$}
\State $\widetilde{count}[c] \leftarrow  count[c] + \DLap(2/\eps)$
\State Add $\max\{\widetilde{count}[c], 0\}$ copies of $c$ to $\bX'$
\EndFor
\State \Return $\bX'$
\EndProcedure
\end{algorithmic}
\end{algorithm}

To prove Lemma~\ref{lem:coreset-main}, we will use the following simple fact:

\begin{fact} \label{fact:power-sum-ineq}
For any $p \geq 1$ and $\gamma > 0$, define $\lambda_{p, \gamma} := \left(\frac{1 + \gamma}{((1 + \gamma)^{1/p} - 1)^p}\right)$. Then, for all $a, b \geq 0$, we have
\begin{align*}
\left(a + b\right)^p \leq (1 + \gamma) a^p + \lambda_{p, \gamma} \cdot b^p.
\end{align*}
\end{fact}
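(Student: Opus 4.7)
\medskip

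\noindent The plan is to derive the inequality by applying Jensen's inequality (equivalently, convexity of $x\mapsto x^{p}$ on $[0,\infty)$ for $p\ge 1$) to a carefully chosen two-point convex combination, and then to absorb the small slack into the stated constants. The natural reparametrization is $u:=(1+\gamma)^{1/p}$, so that $1+\gamma=u^{p}$ and $\lambda_{p,\gamma}=\bigl(\tfrac{u}{u-1}\bigr)^{p}$; note $u>1$ since $\gamma>0$ and $p\ge 1$.

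\medskip

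\noindent The main step is to set $\theta:=1/u\in(0,1)$ and write
\[
a+b \;=\; \theta\cdot\tfrac{a}{\theta} \;+\; (1-\theta)\cdot\tfrac{b}{1-\theta}.
\]
Convexity of $x^{p}$ then gives $(a+b)^{p}\le \theta^{1-p}a^{p}+(1-\theta)^{1-p}b^{p}$. A direct calculation yields $\theta^{1-p}=u^{p-1}$ and $(1-\theta)^{1-p}=\bigl(\tfrac{u-1}{u}\bigr)^{1-p}=\tfrac{u^{p-1}}{(u-1)^{p-1}}$, so
\[
(a+b)^{p}\;\le\; u^{p-1}\,a^{p} \;+\; \tfrac{u^{p-1}}{(u-1)^{p-1}}\,b^{p}.
\]
This is in fact strictly stronger than the claim, and we then simply weaken it using $u\ge 1$ and $u/(u-1)\ge 1$: multiplying the coefficient of $a^{p}$ by $u\ge 1$ and that of $b^{p}$ by $u/(u-1)\ge 1$ produces $u^{p}a^{p}+(u/(u-1))^{p}b^{p}=(1+\gamma)a^{p}+\lambda_{p,\gamma}b^{p}$, as required. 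Boundary cases $a=0$ or $b=0$ require no work: the claim reduces to $\lambda_{p,\gamma}\ge 1$ and $1+\gamma\ge 1$ respectively, both obvious.

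\medskip

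\noindent The only step that is not mechanical is the choice of the weight $\theta$. The apparently natural attempt of picking $\theta$ so that $\theta^{1-p}$ exactly equals $1+\gamma$ (i.e. $\theta=u^{-p/(p-1)}$) fails: the resulting coefficient $(1-\theta)^{1-p}$ does not in general match $\lambda_{p,\gamma}$, and one can check that for small $\gamma$ it even exceeds $\lambda_{p,\gamma}$, so no single Jensen split with that normalization proves the inequality. The trick is to take the looser $\theta=1/u$, which overshoots the tightness of Jensen in our favor on both coordinates and leaves exactly enough room to be widened back to the clean closed-form constants $1+\gamma$ and $\lambda_{p,\gamma}$ stated in the fact.
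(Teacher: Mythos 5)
Your main proof is correct and takes a genuinely different route from the paper. The paper proves the inequality by a two-case split on the magnitude of $b$ relative to $a$: if $b \leq ((1+\gamma)^{1/p}-1)a$ then $(a+b)^p \leq ((1+\gamma)^{1/p}a)^p = (1+\gamma)a^p$, while if $b > ((1+\gamma)^{1/p}-1)a$ then $a < b/((1+\gamma)^{1/p}-1)$ forces $(a+b)^p \leq \lambda_{p,\gamma} b^p$; either bound alone dominates one of the two nonnegative terms on the right, so the fact follows. Your Jensen-based argument with $\theta = 1/u$ is an equally valid one-line derivation, and it in fact yields the strictly stronger intermediate bound $(a+b)^p \leq u^{p-1}a^p + (u/(u-1))^{p-1}b^p$ (for $p>1$) before loosening the exponents. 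The paper's split is the more elementary; yours makes visible that the stated constants have slack.

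However, the commentary in your final paragraph is incorrect. The choice $\theta = u^{-p/(p-1)}$, which makes $\theta^{1-p} = 1+\gamma$ exactly, does not fail. Since $u > 1$ and $-p/(p-1) < -1$ for $p>1$, we have $u^{-p/(p-1)} < u^{-1}$, i.e. this $\theta$ is \emph{smaller} than your $\theta = 1/u$. The map $\theta \mapsto (1-\theta)^{1-p}$ is increasing on $(0,1)$ when $p>1$, so the resulting coefficient of $b^p$ is \emph{smaller} than your $(u/(u-1))^{p-1}$, hence also smaller than $\lambda_{p,\gamma}$. In other words, the ``natural'' normalization proves a strictly stronger inequality, not a weaker one; the only advantage of $\theta = 1/u$ is that it is also defined at $p=1$. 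This does not affect the correctness of your proof of the stated fact, but the claimed obstruction does not exist.
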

\begin{proof}
It is obvious to see that the inequality holds when $a = 0$ or $b = 0$. Hence, we may assume that $a, b > 0$. Now, consider two cases, based on whether $b \leq \left((1 + \gamma)^{1/p} - 1\right)a$.

If $b \leq \left((1 + \gamma)^{1/p} - 1\right)a$, we have $(a + b)^p \leq ((1 + \gamma)^{1/p} a)^p = (1 + \gamma)a^p$.

On the other hand, if $b > \left((1 + \gamma)^{1/p} - 1\right)a$, we have $a < \frac{b}{(1 + \gamma)^{1/p} - 1}$. This implies that
\begin{align*}
(a + b)^p \leq \left(\frac{(1 + \gamma)^{1/p}}{(1 + \gamma)^{1/p} - 1} \cdot b\right)^p = \lambda_{p, \gamma} \cdot b^p. &\qedhere
\end{align*}
\end{proof}

We run Algorithm~\ref{alg:refined-candidate} with $\zeta = 0.01 \cdot \left(\frac{\alpha}{10 \lambda_{p, \alpha/2}}\right)^{1/p}$. It is obvious that the algorithm is $\eps$-DP. Furthermore, the running time of the algorithm is polynomial in $n, k$ and the size of the cover used in Line~\ref{step:cover-center} of Algorithm~\ref{alg:refined-candidate}. We can pick such a cover so that the size\footnote{This holds for any $(\zeta r)$-cover that is also a $\Omega(\zeta r)$-packing. For example, covers described in Section~\ref{sec:cover} satisfy this property.} is $O(1/\zeta)^d = 2^{O_{\alpha, p}(d)}$ as desired. Thus, we are only left to prove that $\bX'$ is (with high probability) a good coreset of $\bX$.

To prove this, let $\bX_{snapped}$ denote the multiset of points that contain $count[c]$ copies of every $c \in \cC$. (In other words, for every input point $x_i \in \bX$, we add its closest point $c_i$ from $\cC$ to $\bX_{snapped}$.) The correctness proof of Lemma~\ref{lem:coreset-main} is then divided into two parts. First, we will show that $\bX_{snapped}$ is a good coreset of $\bX$:

\begin{lemma} \label{lem:coreset-snapped}
For every $\beta > 0$, with probability $1 - \frac{\beta}{2}$, $\bX_{snapped}$ is an $\left(\alpha, O_{p, \alpha}\left(\frac{dk^2 \log n}{\eps} \cdot \log\left(\frac{n}{\beta}\right) + 1\right)\right)$-coreset of $\bX$. 
\end{lemma}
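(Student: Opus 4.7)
The plan is to show that the snapping map $\phi: x_i \mapsto c_i$ (closest point in $\cC'$) has small total $\ell^p$ distortion, then transfer this to a multiplicative-plus-additive coreset guarantee via Fact~\ref{fact:power-sum-ineq}. The key ``transfer'' step is that for any query $\tilde{\cC} = \{\tilde{c}_1, \dots, \tilde{c}_k\}$ and any $i$, choosing $\gamma = \alpha/2$ gives
\begin{align*}
\min_j \|c_i - \tilde{c}_j\|^p \le (1+\gamma)\min_j\|x_i-\tilde{c}_j\|^p + \lambda_{p,\gamma}\|x_i-c_i\|^p,
\end{align*}
and symmetrically. Summing yields
\begin{align*}
\bigl|\cost^p_{\bX_{snapped}}(\tilde{\cC}) - \cost^p_{\bX}(\tilde{\cC})\bigr| \le \gamma\cost^p_{\bX}(\tilde{\cC}) + \lambda_{p,\gamma}\sum_i \|x_i-c_i\|^p,
\end{align*}
so it remains to bound $\sum_i\|x_i-c_i\|^p$ by $(\alpha/(2\lambda_{p,\gamma})) \cdot \opt + O_{p,\alpha}(\frac{dk^2\log n}{\eps}\log(n/\beta)+1)$, after which the coreset property follows since $\opt \le \cost^p_{\bX}(\tilde\cC)$.

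To bound $\sum_i\|x_i-c_i\|^p$, I would condition on the high-probability event from Lemma~\ref{lem:good-center} (which costs $\beta/2$ in failure probability). Fix an optimal solution $c^*_1,\dots,c^*_k$ with assignment $\psi$, cluster sizes $n^*_j$, and per-cluster radii $r^*_j$, and set $\tr_j := \max\{2r^*_j, 1/n, 2(4\opt/(n^*_j k))^{1/p}\}$ as in the proof of Corollary~\ref{cor:coarse-centroid-set}. Let $J := \{j : n^*_j \ge 4T^*\}$. For $j\in J$, Lemma~\ref{lem:good-center} gives $c'_j \in \cC$ with $\|c'_j - c^*_j\| \le 18\tr_j$, so every $x_i$ with $\psi(i)=j$ satisfies $\|x_i - c'_j\| \le \|x_i - c^*_j\| + 18\tr_j$. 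Picking the scale $r = 2^{\ti}/n$ in \textsc{RefinedCentroidSet} with $\ti$ the smallest index for which $40r \ge \|x_i-c'_j\|$, the $(\zeta r)$-cover of $\cB(c'_j,40r)$ contains a point within $\zeta r \le O(\zeta)\cdot\max\{\|x_i-c^*_j\|,\tr_j\}$ of $x_i$. Applying Fact~\ref{fact:power-sum-ineq} gives
\begin{align*}
\|x_i-c_i\|^p \le O_p(\zeta^p)\bigl(\|x_i-c^*_j\|^p + \tr_j^p\bigr).
\end{align*}
Summing over $i \in \psi^{-1}(j)$ for $j\in J$ and using $n^*_j \tr_j^p = O_p(\text{cost in cluster } j + \opt/k + 1)$ yields $\sum_{j\in J}\sum_{i\in\psi^{-1}(j)}\|x_i-c_i\|^p \le O_p(\zeta^p)(\opt + k)$. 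For $j\notin J$, we use the trivial bound $\|x_i-c_i\|\le 1$ (since $\bzero\in\cC'$ and inputs lie in the unit ball), contributing at most $\sum_{j\notin J}n^*_j \le 4k T^* = O(\tfrac{dk^2\log n}{\eps}\log(n/\beta))$.

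Combining, $\sum_i\|x_i-c_i\|^p \le O_p(\zeta^p)\opt + O(\tfrac{dk^2\log n}{\eps}\log(n/\beta)) + O_p(\zeta^p)$. Choosing $\zeta = \Theta\bigl((\alpha/\lambda_{p,\alpha/2})^{1/p}\bigr)$ as in the paper makes $\lambda_{p,\gamma}\cdot O_p(\zeta^p) \le \alpha/2$, so the multiplicative error on $\opt$ (and hence on $\cost^p_{\bX}(\tilde\cC) \ge \opt$) is absorbed into $\alpha$, while the additive term matches the claimed $O_{p,\alpha}(\frac{dk^2\log n}{\eps}\log(n/\beta)+1)$. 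The main obstacle I expect is the bookkeeping for ``bad'' clusters $j\notin J$ and verifying that the exponential cover's range of scales $r = 2^i/n$ and radius-$40r$ balls indeed cover every point $x_i$ at the right granularity $\zeta\cdot\max\{\|x_i-c^*_j\|,\tr_j\}$; this requires checking both that the minimal scale $r=1/n$ handles points with $\|x_i-c^*_j\|\le 1/n$ (contributing $O_p(\zeta^p/n^{p-1})$ in total, negligible) and that the maximal scale $r=1$ suffices since $\|x_i-c'_j\|\le 2$, both within the loop bounds of Algorithm~\ref{alg:refined-candidate}.
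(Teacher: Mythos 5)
Your proposal follows essentially the same argument as the paper's proof: condition on the $\beta/2$-probability event of Lemma~\ref{lem:good-center}, fix the optimal centers with the induced partition and the quantities $n^*_j, r^*_j, \tr_j, J$, use the coarse centers $c'_j$ to show that for $j\in J$ each $x_i$ has a nearby fine-centroid point at distance $O(\zeta)(\|x_i-c^*_j\|+\tr_j)$, sum and plug into Fact~\ref{fact:power-sum-ineq} with $\gamma=\alpha/2$ while absorbing the small clusters $j\notin J$ into the $O(kT^*)$ additive term. The only deviations are cosmetic (you cite Fact~\ref{fact:power-sum-ineq} once where the paper uses the simpler $(a+b)^p\le (2a)^p+(2b)^p$, and your two-sided $|\cdot|$-inequality needs a marginally smaller $\gamma$ to land exactly on the claimed form), but neither affects correctness.
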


Then, we show that the final set $\bX'$ is a good coreset of $\bX$.

\begin{lemma} \label{lem:coreset-noisy}
For every $\beta > 0$, with probability $1 - \frac{\beta}{2}$, $\bX'$ is a $\left(0, O\left(\frac{(k\log^2 n) \cdot 2^{O_{p, \alpha}(d)}}{\eps} \cdot \log\left(\frac{n}{\beta}\right)\right)\right)$-coreset of $\bX_{snapped}$.
\end{lemma}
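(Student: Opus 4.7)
The plan is to exploit two facts: that $\bX'$ and $\bX_{snapped}$ are supported on the same finite set $\cC'$, and that the per-cover-point perturbation is an independent (truncated) discrete Laplace. Denoting the count of input points snapped to $c \in \cC'$ by $n_c = \mathit{count}[c]$ and the noised count by $\tilde n_c = \widetilde{\mathit{count}}[c]$, and writing $w_c(\cC) := (\min_j \|c - c_j\|)^p$, the key identity is
\begin{equation*}
\cost^p_{\bX'}(\cC) - \cost^p_{\bX_{snapped}}(\cC) \;=\; \sum_{c \in \cC'} (\tilde n_c - n_c)\, w_c(\cC),
\end{equation*}
and the discrepancy $\tilde n_c - n_c = \max\{n_c + \eta_c, 0\} - n_c$ satisfies $|\tilde n_c - n_c| \leq |\eta_c|$ for the independent $\eta_c \sim \DLap(2/\eps)$, since the $\max\{\cdot, 0\}$ clipping can only shrink magnitude. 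Thus a single data-independent tail bound on the $\eta_c$'s will control the cost discrepancy for every center set simultaneously.

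The first step is a uniform tail bound. Since $\Pr[|\eta_c| > s] \leq 2 e^{-\eps s / 2}$, a union bound over $\cC'$ gives, with probability at least $1 - \beta/2$,
\begin{equation*}
\max_{c \in \cC'} |\tilde n_c - n_c| \;\leq\; M \;:=\; O\!\left(\tfrac{1}{\eps}\log \tfrac{|\cC'|}{\beta}\right),
\end{equation*}
and this event does not depend on $\cC$.

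The second step is to control the weights. By the construction of \textsc{RefinedCentroidSet}, every element of $\cC'$ lies within distance at most $40$ of a coarse centroid (itself in the unit ball), so $\cC' \subseteq \cB(\bzero, 41)$, and $|\cC'| \leq O(k \log^2 n) \cdot 2^{O_{p,\alpha}(d)}$, using that the coarse centroid set has $O(k \log n)$ elements by Lemma~\ref{lem:good-center}, that there are $O(\log n)$ radii, and that each cover in Algorithm~\ref{alg:refined-candidate} has size $(1/\zeta)^{O(d)} = 2^{O_{p,\alpha}(d)}$ for the $\zeta$ chosen in Algorithm~\ref{alg:coreset}. Because the supports of both $\bX'$ and $\bX_{snapped}$ lie in $\cB(\bzero, 41)$, the coreset inequality only needs to be verified for $\cC \subseteq \cB(\bzero, 41)$ (Euclidean projection of any far-away $c_j$ onto this convex ball can only decrease its distance to every point of the support), and for such $\cC$ we have $w_c(\cC) \leq 82^p = O_p(1)$. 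Combining gives
\begin{equation*}
\bigl|\cost^p_{\bX'}(\cC) - \cost^p_{\bX_{snapped}}(\cC)\bigr| \;\leq\; M \cdot |\cC'| \cdot O_p(1) \;=\; O\!\left(\tfrac{k \log^2 n \cdot 2^{O_{p,\alpha}(d)}}{\eps} \cdot \log \tfrac{n}{\beta}\right),
\end{equation*}
absorbing $\log|\cC'| = O(d + \log(k \log n))$ into $\log(n/\beta)$ and the hidden constants.

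The main obstacle I anticipate is the formal reduction to $\cC \subseteq \cB(\bzero, 41)$: the $(0,t)$-coreset definition quantifies over all $\cC \subseteq \R^d$, and a naive pointwise bound on the per-term contribution degrades like $R^p$ for a center at Euclidean distance $R$. The honest justification is the projection argument sketched above — projecting each $c_j$ onto the convex ball $\cB(\bzero, 41)$ can only decrease its distance to every point of the (bounded) support, so the inequality for arbitrary $\cC$ is controlled by the same inequality for the projected centers — and this step has to be written out carefully. Once that reduction is in hand, the remainder of the proof is just the Laplace tail bound and the size computation for $\cC'$ already described.
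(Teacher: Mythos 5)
Your core approach is the same as the paper's: bound $|\widetilde{count}[c] - count[c]|$ by the discrete Laplace tail, union-bound over $\cC'$, and multiply by $|\cC'| = O(k\log^2 n) \cdot 2^{O_{p,\alpha}(d)}$. You also correctly flag a real imprecision: the display in the paper's proof silently drops the weight $(\min_j \|c - c_j\|)^p$ (it in fact even drops the exponent $p$), and this factor must be bounded for the argument to close.

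However, your projection reduction does not work. Projecting each $c_j$ onto the ball $\cB(\bzero, 41)$ gives $\cost^p_{\bX_{snapped}}(\Pi\cC) \le \cost^p_{\bX_{snapped}}(\cC)$ \emph{and} $\cost^p_{\bX'}(\Pi\cC) \le \cost^p_{\bX'}(\cC)$ — both costs shrink — so a bound on $\bigl|\cost^p_{\bX'}(\Pi\cC) - \cost^p_{\bX_{snapped}}(\Pi\cC)\bigr|$ says nothing about $\bigl|\cost^p_{\bX'}(\cC) - \cost^p_{\bX_{snapped}}(\cC)\bigr|$. Indeed, because the discrepancies $\tilde n_c - n_c$ have mixed signs, there is no monotonicity to exploit. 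Worse, the conclusion itself fails over all of $\R^d$: taking a single center $c_1$ at distance $D$ from the support, the cost difference equals $\sum_{c \in \cC'}(\tilde n_c - n_c)\|c - c_1\|^p$, which grows like $\bigl(\sum_c (\tilde n_c - n_c)\bigr) D^p$; since the per-cell Laplace noises are independent, this total is nonzero with probability one, so the difference is unbounded as $D \to \infty$. Thus the lemma, read against Definition~\ref{def:coreset} verbatim, is vacuously false, and no reduction can save it. The correct reading — which the paper implicitly uses, since in all downstream applications the centers and the coreset support lie in a fixed ball of radius $O(1)$ — is that $\cC$ ranges over a bounded region. Once you state and adopt that restriction, the weight is $O_p(1)$, your remaining Laplace-tail and counting argument is identical to the paper's, and the proof closes.
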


It is simple to see that Lemma~\ref{lem:coreset-main} is an immediate consequence of Lemmas~\ref{lem:coreset-snapped} and~\ref{lem:coreset-noisy}. Hence, we are left to prove these two lemmas.

\paragraph{Snapped Points are a Coreset: Proof of Lemma~\ref{lem:coreset-snapped}.}

The proof of Lemma~\ref{lem:coreset-snapped} share some similar components as that in Corollary~\ref{cor:coarse-centroid-set}, but the $(\zeta r)$-covers employed in Algorithm~\ref{alg:refined-candidate} allow one to get a sharped bound, leading to the better ratio.

\begin{proof}[Proof of Lemma~\ref{lem:coreset-snapped}]
Let us fix an optimal solution $c^*_1, \dots, c^*_k$ of $(k, p)$-Clustering on the input $\bX$. where ties are broken arbitrarily. For such a solution, let the map $\psi: [n] \to [k]$ be such that $c^*_{\psi(i)} \in \argmin_{j \in [k]} \|x_i - c^*_j\|$ (with ties broken arbitrarily). For every $j \in [k]$, let $n^*_j := |\psi^{-1}(j)|$ be the number of input points closest to center $c^*_j$ and let $r^*_j := \left(\frac{1}{n^*_j} \sum_{i \in \psi^{-1}(j)} \|x_i - c^*_j\|^p\right)^{1/p}$. Finally, we let $\tr_j$ to denote $\max\left\{2r^*_j, \frac{1}{n}, 2\left(\frac{4\opt}{n^*_j k}\right)^{1/p}\right\}$.

Let $T^*$ be as in Lemma~\ref{lem:good-center}, but with failure probability $\beta/2$ instead of $\beta$. Let $J \subseteq [k]$ be the set $\{j \in [k] \mid n^*_j \geq 4T^*\}$. Due to Markov's inequality and $p \geq 1$, we have that $|\bX \cap \cB(c_j, 2 r^*_j)| \geq 0.5 n^*_j$, which is at least $2T^*$ for all $j \in J$. 

Thus, Lemma~\ref{lem:good-center} ensures that, with probability $1 - \beta/2$, the following holds for all $j \in J$: there exists $c'_j \in \cC$ such that $\|c'_j - c^*_j\| \leq 18 \tr_j$. Henceforth, we will assume that this event holds and show that $\bX_{snapped}$ must be an $\left(\alpha, O_{p, \alpha}\left(\frac{dk^2 \log n}{\eps} \log\left(\frac{n}{\beta}\right)\right)\right)$-coreset of $\bX$.

Consider any input point $i \in \psi^{-1}(J)$. Let $\hr_i = \|x_i - c^*_{\psi(i)}\| + 18\tr_{\psi(i)}$. From the previous paragraph, we have $\|x_i - c'_{\psi(i)}\| \leq \hr_i$. Hence, from Line~\ref{step:cover-center} of Algorithm~\ref{alg:refined-candidate},
\begin{align} \label{eq:close-snapped-point}
\|x_i - x'_i\| \leq 2\zeta \hr_i.
\end{align}

Now, consider any $c_1, \dots, c_k \in \R^d$. We have
\begin{align}
\cost^p_{\bX_{snapped}}(c_1, \dots, c_k) 
&= \sum_{i \in [n]} \left(\min_{j' \in [k]} \|x'_i - c_{j'}\|\right)^p \nonumber \\
&\leq \sum_{i \in [n]} \left(\left(\min_{j' \in [k]} \|x_i - c_{j'}\|\right) + \|x_i - x'_i\|\right)^p \nonumber \\
&\leq \sum_{i \in [n]} \left((1 + \alpha/2) \cdot \left(\min_{j' \in [k]} \|x_i - c_{j'}\|\right)^p + \lambda_{p, \alpha/2} \cdot \|x_i - x'_i\|^p\right) \nonumber \tag{by Fact~\ref{fact:power-sum-ineq}}\\
&= (1 + \alpha/2) \cdot \cost^p_{\bX}(c_1, \dots, c_k) + \lambda_{p, \alpha/2} \cdot \sum_{i \in [n]} \|x_i - x'_i\|^p. \label{eq:separate-cost-snapped}
\end{align}

Now, we can separate the term $\sum_{i \in [n]} \|x_i - x'_i\|^p$ as follows.
\begin{align}
\sum_{i \in [n]} \|x_i - x'_i\|^p &= \sum_{j \in k} \sum_{i \in \psi^{-1}(j)} \|x_i - x'_i\|^p \nonumber \\
&= \sum_{j \in J} \sum_{i \in \psi^{-1}(j)} \|x_i - x'_i\|^p + \sum_{j \notin J} \sum_{i \in \psi^{-1}(j)} \|x_i - x'_i\|^p \nonumber \\
&\overset{\eqref{eq:close-snapped-point}}{\leq} \sum_{j \in J} \sum_{i \in \psi^{-1}(j)} \left(2\zeta \hr_i\right)^p + \sum_{j \in [k] \setminus J} \sum_{i \in \psi^{-1}(j)} 1 \nonumber \\
&\leq (2\zeta)^p \cdot \left(\sum_{j \in J} \sum_{i \in \psi^{-1}(j)} \hr_i^p\right) + k \cdot 4T^* \nonumber \\
&= (2\zeta)^p \cdot \left(\sum_{j \in J} \sum_{i \in \psi^{-1}(j)} \hr_i^p\right) + O\left(\frac{dk^2 \log n}{\eps} \log\left(\frac{n}{\beta}\right)\right),
\label{eq:separate-large-and-small-clusters-snapped}
\end{align}
where in the last inequality we recall from the definition that $|\psi^{-1}(j)| \leq 4T^*$ for all $j \notin J$.

From the definition of $\hr_i$, we can now bound the term $\sum_{j \in J} \sum_{i \in \psi^{-1}(j)} \hr_i^p$ by
\begin{align}
\sum_{j \in J} \sum_{i \in \psi^{-1}(j)} \hr_i^p &= \sum_{j \in J} \sum_{i \in \psi^{-1}(j)} \left(\|x_i - c^*_j\| + 18\tr_j\right)^p \nonumber \\
&\leq  19^p \cdot \sum_{j \in J} \sum_{i \in \psi^{-1}(j)} \max\{\|x_i - c^*_{\psi(i)}\|, \tr_j\}^p \nonumber \\
&= 19^p \cdot \sum_{j \in J} \sum_{i \in \psi^{-1}(j)} \left(\|x_i - c^*_{\psi(i)}\|^p + \tr_j^p\right) \nonumber \\
&\leq 19^p \left(\opt + \sum_{j \in J} n^*_j \tr_j^p\right).
\label{eq:separate-sum-snapped}
\end{align}
where the first inequality follows from the fact that $(a + b)^p \leq (2a)^p + (2b)^p$.

From the definition of $\tr_j$, we may now bound the term $\sum_{j \in J} n^*_j \tr_j^p$ by
\begin{align}
\sum_{j \in J} n^*_j \tr_j^p
&= \sum_{j \in J} n^*_j \cdot \max\left\{2r^*_j, \frac{1}{n}, 2\left(\frac{4\opt}{n^*_j k}\right)^{1/p}\right\}^p \nonumber \\
&= 2^p \sum_{j \in J} n^*_j \cdot \left((r^*_j)^p + \frac{1}{n} + \frac{4\opt}{n^*_j k}\right) \nonumber \\
&\leq 2^p \left(\opt + 1 + 4\opt\right) \nonumber \\
&= 5 \cdot 2^p \cdot \opt + O_p(1). \label{eq:large-cluster-bound-additive}
\end{align}

Plugging~\eqref{eq:separate-large-and-small-clusters-snapped},~\eqref{eq:separate-sum-snapped}, and~\eqref{eq:large-cluster-bound-additive} back into~\eqref{eq:separate-cost-snapped}, we get
\begin{align*}
&\cost^p_{\bX_{snapped}}(c_1, \dots, c_k) \\
&\leq (1 + \alpha/2) \cdot \cost^p_{\bX}(c_1, \dots, c_k) + \lambda_{p, \alpha/2} \cdot (100 \zeta)^p \opt + O_{p, \alpha}\left(\frac{dk^2 \log n}{\eps} \log\left(\frac{n}{\beta}\right) + 1\right) \\
&\leq (1 + \alpha/2) \cdot \cost^p_{\bX}(c_1, \dots, c_k) + (\alpha/2) \cdot \opt + O_{p, \alpha}\left(\frac{dk^2 \log n}{\eps} \log\left(\frac{n}{\beta}\right) + 1\right) \\
&\leq (1 + \alpha) \cdot \cost^p_{\bX}(c_1, \dots, c_k) + O_{p, \alpha}\left(\frac{dk^2 \log n}{\eps} \log\left(\frac{n}{\beta}\right) + 1\right),
\end{align*}
where the second inequality follows from our choice of $\zeta$.

Using an analogous argument, we get that
\begin{align*}
\cost_{\bX}(c_1, \dots, c_k) \leq (1 + \alpha) \cdot \cost^p_{\bX_{snapped}}(c_1, \dots, c_k) + O_{p, \alpha}\left(\frac{dk^2 \log n}{\eps} \log\left(\frac{n}{\beta}\right) + 1\right).
\end{align*}
Dividing both sides by $1 + \alpha$ yields
\begin{align*}
(1 - \alpha) \cdot \cost^p_{\bX}(c_1, \dots, c_k) \leq \cost^p_{\bX_{snapped}}(c_1, \dots, c_k) + O_{p, \alpha}\left(\frac{dk^2 \log n}{\eps} \log\left(\frac{n}{\beta}\right) + 1\right).
\end{align*}
Thus, $\bX_{snapped}$ is a $\left(1 + \alpha, O_{p, \alpha}\left(\frac{dk^2 \log n}{\eps} \log\left(\frac{n}{\beta}\right)\right) + 1\right)$-coreset of $\bX$ as desired.
\end{proof}

\paragraph{Handling Noisy Counts: Proof of Lemma~\ref{lem:coreset-noisy}.} We next give a straightforward proof of Lemma~\ref{lem:coreset-noisy}. Similar statements were shown before in~\cite{feldman2009private,Stemmer20}; we include the proof here for completeness.

\begin{proof}[Proof of Lemma~\ref{lem:coreset-noisy}]
For each $c \in \cC'$, recall that $|\widetilde{count}[c] - count[c]|$ is just distributed as the absolute value of the discrete Laplace distribution with parameter $2/\eps$. It is simple to see that, with probability $0.5 \beta / |\cC'|$, we have $|\widetilde{count}[c] - count[c]| \leq \frac{\log(2|\cC'|/\beta)}{\eps}$. As a result, by a union bound, we get that $\sum_{c \in \cC'} |\widetilde{count}[c] - count[c]| \leq |\cC'| \cdot \frac{\log(|\cC'|/\beta)}{\eps}$ with probability at least $1 - \beta/2$.

Finally, we observe that for any centers $c_1, \dots, c_k \in \R^d$, it holds that
\begin{align*}
&|\cost^p_{\bX_{snapped}}(c_1, \dots, c_k) - \cost^p_{\bX'}(c_1, \dots, c_k)| \\
&\leq \sum_{c \in \cC'} \left|\max\{\widetilde{count}[c], 0\} - count[c]\right| \cdot \left(\min_{i \in [k]} \|c_i - c\|\right). \\
&\leq \sum_{c \in \cC'} \left|\widetilde{count}[c] - count[c]\right| \\
&\leq |\cC'| \cdot \frac{\log(|\cC'|/\beta)}{\eps}.
\end{align*} 

Finally, recall that $|\cC'| \leq |\cC| \cdot \lceil \log n \rceil \cdot O(1 / \zeta)^d = O\left(k\log^2 n \cdot 2^{O_{p, \alpha}(d)}\right)$. Plugging this to the above yields the desired bound.
\end{proof}

\subsubsection{From Coreset to Approximation Algorithm}
\label{sec:apx-from-coreset}

Finally, we give our DP approximation algorithm. This is extremely simple: first find a private coreset using Algorithm~\ref{alg:coreset} and then run a (possibly non-private) approximation algorithm on this coreset.

\begin{algorithm}[h!]
\caption{Algorithm for $(k, p)$-Clustering in Low Dimension.}\label{alg:apx-main}
\begin{algorithmic}[1]
\Procedure{ClusteringLowDimension$^{\eps}(x_1, \dots, x_n, k; \zeta)$}{}
\State $\bX' \leftarrow$  \textsc{PrivateCoreset}$^{\eps}(x_1, \dots, x_n; \zeta)$
\State \Return $\textsc{NonPrivateApproximation}(\bX', k)$
\EndProcedure
\end{algorithmic}
\end{algorithm}

As alluded to earlier, the above algorithm can give us an approximation ratio that is arbritrarily close to that of the non-private approximation algorithm, while the error remains small (when the dimension is small). This is formalized below.

\begin{proof}[Proof of Theorem~\ref{thm:main-apx-non-private-to-private}]
We run Algorithm~\ref{alg:apx-main} with $\zeta$ being the same as in the proof of Lemma~\ref{lem:coreset-main}, except that with approximation guarantee $0.1\alpha$ instead of $\alpha$, and \textsc{NonPrivateApproximation} being the (not necessarily DP) $w$-approximation algorithm. The privacy and running time of the algorithm follow from Lemma~\ref{lem:coreset-main}. We will now argue its approximation guarantee.

By Lemma~\ref{lem:coreset-main}, with probability at least $1 - \beta$, $\bX'$ is a $\left(0.1\alpha, t\right)$-coreset of $\bX$, where $t = O_{p, \alpha}\left(\frac{k^2 \log^2 n \cdot 2^{O_{p, \alpha}(d)}}{\eps} \log\left(\frac{n}{\beta}\right) + 1\right)$. Let $c_1^*, \dots, c_k^*$ be the optimal solution of $\bX$. Since \textsc{NonPrivateApproximation} is a $w$-approximation algorithm, it must return a set $c_1, \dots, c_k$ of centers such that
\begin{align}
\cost_{\bX'}(c_1, \dots, c_k) &\leq w \cdot \opt^{p, k}_{\bX'} \nonumber \\
&\leq w \cdot \cost^p_{\bX'}(c^*_1, \dots, c^*_k) \nonumber \\
&\leq w(1 + 0.1\alpha) \cdot \cost^p_{\bX}(c_1^*, \dots, c_k^*) + wt \nonumber \tag{since $\bX'$ is a $(0.1\alpha, t)$-coreset of $\bX$} \\
&= w(1 + 0.1\alpha) \cdot \opt^{p, k}_{\bX} + wt. \label{eq:apx-non-private-tmp}
\end{align}
Using once again the fact that $\bX'$ is a $(0.1\alpha, t)$-coreset of $\bX$, we get
\begin{align*}
\cost^p_{\bX}(c_1, \dots, c_k) &\leq \frac{1}{1 - 0.1\alpha} \cdot \left(\cost^p_{\bX'}(c_1, \dots, c_k) + t\right) \\
&\overset{\eqref{eq:apx-non-private-tmp}}{\leq} \frac{1}{1 - 0.1\alpha} \cdot \left(w(1 + 0.1\alpha) \cdot \opt^{p, k}_{\bX} + wt + t\right) \\
&\leq w(1 + \alpha) \opt_{\bX}^{p, k} + O_{w}(t),
\end{align*}
which completes our proof.
\end{proof}

\subsection{Approximation Algorithms II:  Private Discrete $(k, p)$-Clustering Algorithm}
\label{sec:discrete-algo}

In this section, we show how to reduce the additive error in some cases, by using a DP  algorithm for \emph{Discrete} $(k, p)$-Clustering. Recall the definition of discrete $(k, p)$-Clustering from Section~\ref{sec:prelim}: in addition to $\bX = (x_1, \dots, x_n) \in (\R^d)^n$ and $k \in \N$, we are also given a set $\cC \subseteq \R^d$ and the goal is to find $c_1, \dots, c_k \in \cC$ that minimizes $\cost^p_{\bX}(c_1, \dots, c_k)$.

The overview is very simple: we will first show (in Section~\ref{sec:centroid-set-guarantee}) that \textsc{RefinedCentroidSet} can produce a centroid set with an approximation ratio arbitrarily close to one. Then, we explain in Section~\ref{sec:generic-discrete} that by running the natural Exponential Mechanism for Discrete $(k, p)$-Clustering with the candidate set being the output from \textsc{RefinedCentroidSet}, we arrive at a solution for $(k, p)$-Clustering with an approximation ratio arbitrarily close to one, thereby proving Theorem~\ref{thm:fpt-apx}.

We remark that previous works~\cite{BalcanDLMZ17,StemmerK18,Stemmer20} also take the approach of producing a centroid set and then run DP approximation for Discrete $(k, p)$-Clustering from~\cite{GuptaLMRT10}. However, the centroid sets produced in previous works do not achieve ratio arbitrarily close to one and thus cannot be used to derive such a result as our Theorem~\ref{thm:fpt-apx}.

\subsubsection{Centroid Set Guarantee of \textsc{RefinedCentroidSet}}
\label{sec:centroid-set-guarantee}

The centroid set guarantee for the candidates output by \textsc{RefinedCentroidSet} is stated below. The crucial point is that the approximation ratio can be $1 + \alpha$ for any $\alpha > 0$.

\begin{lemma} \label{lem:refined-cluster-approximation}
For every $\eps > 0, p \geq 1$ and $0 < \alpha \leq 1$, there is an $2^{O_{\alpha, p}(d)}\poly(n)$-time $\eps$-DP algorithm that, with probability $1 - \beta$, outputs an $\left(1 + \alpha, O_{\alpha, p}\left(\frac{d k^2 \log n}{\eps} \log\left(\frac{n}{\beta}\right) + 1\right)\right)$-centroid set for $(k, p)$-Clustering of size $O\left(k \log^2 n \cdot2^{O_{\alpha, p}(d)}\right)$, for every $\beta \in (0, 1)$.
\end{lemma}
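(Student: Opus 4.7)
\textbf{Proof plan for Lemma~\ref{lem:refined-cluster-approximation}.} The plan is to show that the set $\cC'$ produced by \textsc{RefinedCentroidSet}$^{\eps}(x_1,\dots,x_n;\zeta)$, for an appropriate choice of the cover granularity $\zeta = \Theta_p\!\left((\alpha/\lambda_{p,\alpha/2})^{1/p}\right)$, already serves as the desired centroid set. Privacy, runtime, and size are essentially free: \textsc{CoarseCentroidSet}$^{\eps}$ is $\eps$-DP by Lemma~\ref{lem:good-center}, and the rest of Algorithm~\ref{alg:refined-candidate} is a data-independent post-processing. The total size is bounded by $|\cC|\cdot \lceil\log n\rceil \cdot (\text{cover size})=O(k\log n)\cdot\lceil\log n\rceil\cdot O(1/\zeta)^d = O(k\log^2 n\cdot 2^{O_{\alpha,p}(d)})$, and the runtime is $2^{O_{\alpha,p}(d)}\poly(n)$ as required. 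So the real work is proving the approximation guarantee.

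For that, I would mirror the structure of the proof of Lemma~\ref{lem:coreset-snapped}, but replace the point-snapping argument with a center-snapping one. Fix an optimal solution $c_1^*,\dots,c_k^*$ and the assignment $\psi:[n]\to[k]$; define $n_j^*$, $r_j^*$, and $\tr_j := \max\{2r_j^*, 1/n, 2(4\opt/(n_j^* k))^{1/p}\}$ as in Section~\ref{sec:coarse-centroid-set}. Let $T^*$ be the additive error of Lemma~\ref{lem:good-center} (with failure probability $\beta$). Set $J=\{j\in[k] : n_j^*\geq 4T^*\}$. By Lemma~\ref{lem:good-center}, with probability $\geq 1-\beta$ there exists, for each $j\in J$, some $c_j'\in\cC$ with $\|c_j'-c_j^*\|\leq 18\tr_j$. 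Now choose the unique $i_j\in\{1,\dots,\lceil\log n\rceil\}$ with $18\tr_j \leq 2^{i_j}/n < 36\tr_j$ (noting $\tr_j\geq 1/n$ so $i_j$ exists); then $c_j^*$ lies in $\cB(c_j',40\cdot 2^{i_j}/n)$, so by construction of the $(\zeta\cdot 2^{i_j}/n)$-cover at line~\ref{step:cover-center} there exists $\hat c_j\in\cC'$ with $\|\hat c_j-c_j^*\|\leq \zeta\cdot 2^{i_j}/n \leq 36\zeta\,\tr_j$. For $j\notin J$, simply set $\hat c_j=\bzero\in\cC'$.

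To control $\cost^p_{\bX}(\hat c_1,\dots,\hat c_k)\leq \sum_i\|x_i-\hat c_{\psi(i)}\|^p$, split the sum over $j\in J$ and $j\notin J$. For $j\in J$, apply Fact~\ref{fact:power-sum-ineq} with parameter $\alpha/2$ to get $\|x_i-\hat c_j\|^p \leq (1+\alpha/2)\|x_i-c_j^*\|^p + \lambda_{p,\alpha/2}\cdot(36\zeta\tr_j)^p$; summing over $i\in\psi^{-1}(j)$ and $j\in J$, the first term contributes at most $(1+\alpha/2)\opt$, and the second is $\lambda_{p,\alpha/2}(36\zeta)^p\sum_{j\in J} n_j^*\tr_j^p$, which by the same computation as~\eqref{eq:large-cluster-bound-additive} is bounded by $\lambda_{p,\alpha/2}(36\zeta)^p(5\cdot 2^p\opt+O_p(1))$. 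With $\zeta$ chosen so that $\lambda_{p,\alpha/2}(36\zeta)^p\cdot 5\cdot 2^p \leq \alpha/2$, this contributes at most $(\alpha/2)\opt + O_{p,\alpha}(1)$. For $j\notin J$, each $\|x_i-\hat c_j\|^p\leq 1$ (unit ball), giving a contribution of at most $\sum_{j\notin J} n_j^* \leq 4kT^* = O_p\!\left(\frac{dk^2\log n}{\eps}\log(n/\beta)\right)$. Combining, $\cost^p_{\bX}(\hat c_1,\dots,\hat c_k) \leq (1+\alpha)\opt + O_{p,\alpha}\!\left(\frac{dk^2\log n}{\eps}\log(n/\beta)+1\right)$, establishing that $\cC'$ is the desired $(1+\alpha,\cdot)$-centroid set.

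\textbf{Main obstacle.} The substantive issue is the handling of the ``small'' clusters $j\notin J$: the bound $\|x_i-\bzero\|^p\leq 1$ is crude and only works because we assumed $\bX\subseteq\cB(0,1)$; it is what forces the $+1$ term in the additive error. Everything else is a careful bookkeeping exercise: the only genuinely new (and essentially the only) idea beyond Lemma~\ref{lem:coreset-snapped} is that snapping an \emph{optimal center} (rather than every input point) to the cover already suffices for the centroid set property, which is why the $k^2$ factor in the additive error is the \emph{same} as in the coarse centroid set from Lemma~\ref{lem:good-center} and does not blow up with the cover size $2^{O_{\alpha,p}(d)}$ (that blow-up shows up only in the cardinality $|\cC'|$).
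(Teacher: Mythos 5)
Your proof follows the paper's argument essentially verbatim: same choice of $\zeta$, same invocation of Lemma~\ref{lem:good-center} to obtain a coarse center $c'_j$ within $18\tr_j$ of each optimal center $c^*_j$ (for $j\in J$), same refinement via the exponential cover to a point $\hat c_j\in\cC'$ at distance $O(\zeta\tr_j)$, same split of the cost over $J$ vs.\ $[k]\setminus J$, same application of Fact~\ref{fact:power-sum-ineq} followed by the bound on $\sum_{j\in J}n^*_j\tr_j^p$ from~\eqref{eq:large-cluster-bound-additive}.

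One small technical slip: you claim there is always some $i_j\in\{1,\dots,\lceil\log n\rceil\}$ with $18\tr_j \leq 2^{i_j}/n < 36\tr_j$, justified only by $\tr_j\geq 1/n$. The lower end is fine, but the upper end fails whenever $\tr_j > 1/9$ (since $2^{i_j}/n\leq 2$ always), and $\tr_j$ can be as large as roughly $4$ (through the $2r^*_j$ term). The fix is to pick the scale more carefully: choose the smallest power $r=2^i/n$ with $40r \geq 18\tr_j$, capping at $i=\lceil\log n\rceil$; in the capped case $r\geq 1$ so $40r\geq 40 > \|c'_j-c^*_j\|$ trivially (both lie in a ball of $O(1)$ radius), and $\zeta r \leq 2\zeta \leq O(\zeta\tr_j)$ since then $\tr_j=\Omega(1)$. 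With this patch your snapping bound $\|\hat c_j-c^*_j\|=O(\zeta\tr_j)$ holds in all cases and the rest of the calculation goes through unchanged; the paper's statement ``\,this implies there exists $c_j\in\cC'$ with $\|c_j-c^*_j\|\leq 2\zeta\tr_j$\,'' is implicitly performing the same case distinction.
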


The proof of Lemma~\ref{lem:refined-cluster-approximation} below follows similar blueprint as that of Lemma~\ref{lem:coreset-snapped}.

\begin{proof}[Proof of Lemma~\ref{lem:refined-cluster-approximation}]
We simply run Algorithm~\ref{alg:refined-candidate} with $\zeta = 0.01 \cdot \left(\frac{\alpha}{10 \lambda_{p, \alpha/2}}\right)^{1/p}$ (where $\lambda_{\cdot, \cdot}$ is as defined in Fact~\ref{fact:power-sum-ineq}). It follows immediately from Lemma~\ref{lem:good-center} that the algorithm is $\eps$-DP. To bound the size of $\cC$, note that we may pick the cover on Line~\ref{step:cover-center} so that its size is $O(1/\zeta)^d = 2^{O_{\alpha, p}(d)}$. Hence, the size of the output set $\cC'$ is at most $O\left(k \log^2 n \cdot 2^{O_{\alpha, p}(d)}\right)$ as desired.

We let $c^*_1, \dots, c^*_k, \psi, n^*_1, \dots, n^*_k, r^*_1, \dots, r^*_k, \tr_1, \dots, \tr_k, T^*, J$ be defined similarly as in the proof of Lemma~\ref{lem:coreset-snapped}.


Recall from the proof of Lemma~\ref{lem:coreset-snapped} that, with probability at least $1 - \beta$, the following holds for all $j \in J$: there exists $c'_j \in \cC$ such that $\|c'_j - c_j\| \leq 18 \tr_j$. We henceforth assume that this event occurs. From line~\ref{step:cover-center}, this implies that for all $j \in J$ there exists $c_j \in \cC'$ such that 
\begin{align} \label{eq:close-refined-center}
\|c_j - c^*_j\| \leq 2\zeta \tr_j.
\end{align}
For all $j \notin J$, let $c_j = \bzero$ for notational convenience.

We will now bound $\opt^{p, k}_{\bX}(\cC')$ as follows.
\begin{align}
\opt^{p, k}_{\bX}(\cC')
&\leq \cost^p_{\bX}(c_1, \dots, c_k) \nonumber \\
&= \sum_{i \in [n]} \left(\min_{j' \in [k]} \|x_i - c_{j'}\|\right)^p \nonumber \\
&= \sum_{j \in [k]} \sum_{i \in \psi^{-1}(j)} \left(\min_{j' \in [k]} \|x_i - c_{j'}\|\right)^p \nonumber \\
&\leq \sum_{j \in [k]} \sum_{i \in \psi^{-1}(j)} \|x_i - c_j\|^p \nonumber \\
&= \left(\sum_{j \in J} \sum_{i \in \psi^{-1}(j)} \|x_i - c_j\|^p\right) + \left(\sum_{j \in [k] \setminus J} \sum_{i \in \psi^{-1}(j)} \|x_i - c_j\|^p\right). \label{eq:separate-large-and-small-clusters}
\end{align}
We will bound the two terms in~\eqref{eq:separate-large-and-small-clusters} separately. First, we bound the second term. Recall that since $j \notin J$, we have that $|\psi^{-1}(j)| \leq n^*_j \leq 4T^* = O\left(\frac{d k \log n}{\eps} \log\left(\frac{n}{\eps \beta}\right)\right)$. Hence, we get
\begin{align}
\left(\sum_{j \in [k] \setminus J} \sum_{i \in \psi^{-1}(j)} \|x_i - c_j\|^p\right) 
&= \left(\sum_{j \in [k] \setminus J} \sum_{i \in \psi^{-1}(j)} \|x_i\|^p\right) \nonumber \\
&\leq k \cdot 4T^* \nonumber \\
&= O\left(\frac{d k^2 \log n}{\eps} \log\left(\frac{n}{\beta}\right)\right). \label{eq:small-cluster-bound}
\end{align}

Next, we can bound the first term in~\eqref{eq:separate-large-and-small-clusters} as follows. 
\begin{align}
\left(\sum_{j \in J} \sum_{i \in \psi^{-1}(j)} \|x_i - c_j\|^p\right)
&\leq \left(\sum_{j \in J} \sum_{i \in \psi^{-1}(j)} (\|x_i - c^*_j\| + \|c_j - c^*_j\|)^p\right) \nonumber \\
&\leq \left(\sum_{j \in J} \sum_{i \in \psi^{-1}(j)} (1 + \alpha/2) \cdot \|x_i - c^*_j\|^p + \lambda_{p, \alpha/2} \cdot \|c_j - c^*_j\|^p\right) \nonumber \tag{Fact~\ref{fact:power-sum-ineq}} \\
&\leq (1 + \alpha/2) \cdot \opt + \left(\sum_{j \in J} n^*_j \cdot \lambda_{p, \alpha/2} \cdot \|c_j - c^*_j\|^p\right) \nonumber \\
&\overset{\eqref{eq:close-refined-center}}{\leq} (1 + \alpha/2) \cdot \opt + \left(\sum_{j \in J} n^*_j \cdot \lambda_{p, \alpha/2} \cdot (2 \zeta \tr_j)^p\right) \nonumber \\ 
&= (1 + \alpha/2) \cdot \opt + \lambda_{p, \alpha/2} \cdot (2\zeta)^p \cdot \left(\sum_{j \in J} n^*_j \tr_j\right) \nonumber \\
&\leq (1 + \alpha/2) \cdot \opt + \lambda_{p, \alpha/2} \cdot (2\zeta)^p \cdot \left(5 \cdot 2^p \cdot \opt + O_p(1)\right) \nonumber \\
&\leq (1 + \alpha) \cdot \opt + O_{\alpha, p}(1), \tag{from our choice of $\zeta$} \label{eq:large-cluster-bound-expansion}
\end{align}
where the second-to-last inequality holds via a similar argument to~\eqref{eq:large-cluster-bound-additive}.
Plugging~\eqref{eq:small-cluster-bound} and \eqref{eq:large-cluster-bound-expansion} back into~\eqref{eq:separate-large-and-small-clusters}, we conclude that $\cC'$ is a $\left(1 + \alpha, O_{\alpha, p}\left(\frac{d k^2 \log n}{\eps} \log\left(\frac{n}{\beta}\right) + 1\right)\right)$-centroid set of $\bX$ as desired.
\end{proof}

\subsubsection{Approximation Algorithm from Private Discrete $(k, p)$-Cluster}
\label{sec:generic-discrete}

It was observed by Gupta et al.~\cite{GuptaLMRT10}\footnote{Note that the precise theorem statement in~\cite{GuptaLMRT10} is only for \kmedian. However, the same argument applies for $(k, p)$-Clustering for any $p \geq 1$.} that the straightforward application of the Exponential Mechanism~\cite{McSherryT07} gives an algorithm with approximation ratio $1$ and additive error $O\left(\frac{k \log |\cC|}{\eps}\right)$, albeit with running time $|\cC|^k \cdot \poly(n)$:

\begin{theorem}[{\cite[Theorem 4.1]{GuptaLMRT10}}] \label{thm:exp-discrete-clustering}
For any $\eps > 0$ and $p \geq 1$, there is an $|\cC|^k \cdot \poly(n)$-time $\eps$-DP algorithm that, with probability $1 - \beta$, outputs an $\left(1, O\left(\frac{k}{\eps} \log\left(\frac{|\cC|}{\beta}\right)\right)\right)$-approximation for $(k, p)$-Clustering, for every $\beta \in (0, 1)$.
\end{theorem}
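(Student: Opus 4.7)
The plan is to apply the Exponential Mechanism of McSherry and Talwar directly to the finite output space $\cC^k$, with the score function $q(\bX, (c_1, \dots, c_k)) := -\cost^p_\bX(c_1, \dots, c_k)$. First I would verify the sensitivity of $q$: since every input point lies in the unit ball, and since we may assume without loss of generality that every candidate center $c \in \cC$ lies in some fixed bounded region (say $\cB(0,3)$, as centers farther away can never participate in any selected solution once we truncate per-point cost contributions at $O(1)$), removing or adding a single point changes $q$ by at most $\Delta_q = O_p(1)$. Once this is in place, the algorithm simply samples $(c_1, \dots, c_k) \in \cC^k$ from the distribution with probability proportional to $\exp\!\left(\frac{\eps \cdot q(\bX, (c_1, \dots, c_k))}{2\Delta_q}\right)$.

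The privacy guarantee is then immediate from the standard analysis of the Exponential Mechanism, which gives $\eps$-DP for any scoring function with sensitivity $\Delta_q$. For the accuracy analysis, I would invoke the standard utility bound: with probability at least $1-\beta$ the sampled output $(c_1,\dots,c_k)$ satisfies
\[
q(\bX,(c_1,\dots,c_k)) \;\geq\; \max_{(c_1',\dots,c_k') \in \cC^k} q(\bX,(c_1',\dots,c_k')) \;-\; \frac{2\Delta_q}{\eps}\,\log\!\left(\frac{|\cC^k|}{\beta}\right).
\]
Negating this and using $|\cC^k| = |\cC|^k$ yields $\cost^p_\bX(c_1,\dots,c_k) \leq \opt^{p,k}_\bX(\cC) + O\!\left(\frac{k}{\eps}\log(|\cC|/\beta)\right)$, which is exactly the claimed $(1,\ O(\tfrac{k}{\eps}\log(|\cC|/\beta)))$-approximation.

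For the runtime, the algorithm enumerates all $|\cC|^k$ candidate tuples, evaluates $\cost^p_\bX$ on each in $O(nkd)$ time, and then samples from the resulting explicit distribution; this totals $|\cC|^k \cdot \poly(n)$ as required. The only nontrivial step is the sensitivity bound, and that is handled by the mild boundedness observation above (or alternatively by replacing $\cost^p_\bX$ with its truncation that caps each point's contribution at a constant, which does not change the optimum since $0 \in \R^d$ witnesses per-point cost at most $1$). Beyond this bookkeeping, the argument is a textbook instantiation of the Exponential Mechanism, and no genuine obstacle arises.
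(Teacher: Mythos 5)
Your approach is exactly the one used in the cited reference~\cite{GuptaLMRT10}: run the Exponential Mechanism directly over the product space $\cC^k$ with score $-\cost^p_\bX$. The paper does not give an independent proof here (it cites Gupta et al.\ and notes in a footnote that the same argument works for general $p \geq 1$), so your proposal matches the intended argument, and the privacy, utility, and runtime calculations are all standard and correct.

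One small imprecision worth flagging is your handling of the sensitivity bound. The version of the argument that goes through cleanly is the ``WLOG'' restriction: replace $\cC$ by $\cC' := (\cC \cap \cB(0,3)) \cup \{0\}$, run the mechanism over $(\cC')^k$, and observe that $\opt^{p,k}_\bX(\cC') \leq \opt^{p,k}_\bX(\cC)$ because any far center $c_j$ with $\|c_j\| > 3$ serves each unit-ball point at cost $> 2^p$, while the origin serves it at cost $\leq 1$; this gives sensitivity $O_p(1)$. Your alternative based on truncating each point's contribution does \emph{not} work as stated: once the score is capped, the mechanism cannot distinguish between a moderately far and an extremely far candidate center (both hit the cap), so it may output a center whose true cost vastly exceeds the capped score, and the utility bound no longer transfers to the un-truncated cost. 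In the paper's actual application this is all moot, since the centroid sets produced by \textsc{RefinedCentroidSet} lie in a ball of $O(1)$ radius around the origin, so the sensitivity is bounded without any further argument.
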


Our algorithm is simply to run the above algorithm on $(\bX, \textsc{RefinedCentroidSet}(\bX))$:

\begin{algorithm}[h!]
\caption{Approximation Algorithm for $(k, p)$-Clustering.}\label{alg:apx-generic-candidate-centers}
\begin{algorithmic}[1]
\Procedure{ApxClustering$^{\eps}(x_1, \dots, x_n; \zeta)$}{}
\State $\cC \leftarrow$ \textsc{RefinedCentroidSet}$^{\eps/2}(x_1, \dots, x_n; \zeta)$. \label{step:refine-center-fpt-apx}
\State \Return \textsc{DiscreteClusteringApprox}$^{\eps/2}(x_1, \dots, x_n, \cC, k)$ \label{step:discrete-clustering-apx}
\EndProcedure
\end{algorithmic}
\end{algorithm}

\begin{proof}[Proof of Theorem~\ref{thm:fpt-apx}]
We run Algorithm~\ref{alg:apx-generic-candidate-centers}, where $\zeta$ is as in the proof of Lemma~\ref{lem:refined-cluster-approximation} and the algorithm on Line~\ref{step:discrete-clustering-apx} is an $(\eps/2)$-DP algorithm from Theorem~\ref{thm:exp-discrete-clustering}. To see that the algorithm is $\eps$-DP, recall from Lemma~\ref{lem:refined-cluster-approximation} that the algorithm on Line~\ref{step:refine-center-fpt-apx} is $(\eps/2)$-DP. Since $\textsc{DiscreteClusteringApprox}$ is $(\eps/2)$-DP, Basic Composition (Theorem~\ref{thm:basic-composition}) implies that the entire algorithm is $\eps$-DP as desired. The bottleneck in terms of running time comes from \textsc{DiscreteClusteringApprox}. From Theorem~\ref{thm:exp-discrete-clustering}, the running time bound is
\begin{align*}
|\cC|^k \cdot \poly(n) &\leq O(k \log^2 n \cdot 2^{O_{\alpha, p}(d)})^k \cdot \poly(n) = 2^{O_{\alpha, p}(k d + k \log k)} \cdot \poly(n)
\end{align*}
where the bound on $|\cC|$ comes from Lemma~\ref{lem:refined-cluster-approximation}, and the second inequality comes from the fact that\footnote{Specifically, if $k \leq \frac{\log n}{\log \log n}$, it holds that $(\log n)^{O(k)} \leq \poly(n)$; on the other hand, if $k > \frac{\log n}{\log \log n}$, then $(\log n)^{O(k)} \leq k^{O(k)} = 2^{O(k \log k)}$.} $(k \log n)^k \leq 2^{O(k \log k)} \cdot \poly(n)$.

Finally, we argue the approximation guarantee of the algorithm. Recall from Lemma~\ref{lem:refined-cluster-approximation} that, with probability $1 - \beta/2$, $\cC$ is a $\left(1 + \alpha, O_{\alpha, p}\left(\frac{d k^2 \log n}{\eps} \log\left(\frac{n}{\beta}\right) + 1\right)\right)$-centroid set of $\bX$. Furthermore, from the approximation guarantee of Theorem~\ref{thm:exp-discrete-clustering}, \textsc{DiscreteClusteringApprox} outputs $c_1, \dots, c_k$ such that $\cost^p_{\bX}(c_1, \dots, c_k) \leq \opt_{\bX}^{p, k}(\cC) + O\left(\frac{k}{\eps} \log\left(\frac{|\cC|}{\beta}\right)\right)$. Combining these two, the following holds with probability $1 - \beta$:
\begin{align*}
&\cost^p_{\bX}(c_1, \dots, c_k) \\
&\leq \opt_{\bX}^{p, k}(\cC) + O_p\left(\frac{k}{\eps} \log\left(\frac{|\cC|}{\beta}\right)\right) \\
&\leq \left((1 + \alpha) \cdot \opt + O_{\alpha, p}\left(\frac{d k^2 \log n}{\eps} \log\left(\frac{n}{\beta}\right) + 1\right)\right) + O\left(\frac{k}{\eps} \log\left(\frac{k \log^2 n \cdot 2^{O_{\alpha, p}(d)}}{\beta}\right)\right) \\
&\leq (1 + \alpha) \cdot \opt + O_{\alpha, p}\left(\frac{d k^2 \log n}{\eps} \log\left(\frac{n}{\beta}\right) + 1\right),
\end{align*}
which completes our proof.
\end{proof}

\section{Dimensional Reduction: There and Back Again}\label{sec:dim_red_cluster_db}
In this section, we will extend our algorithm to work in high dimension. The overall idea is quite simple: we will use well-known random dimensionality reduction techniques, and use our formerly described algorithms to solve the problem in this low-dimensional space. While the \emph{centers} found in low-dimensional space may not immediately give us the information about the \emph{centers} in the high-dimensional space, it does give us an important information: the \emph{clusters}. For $(k, p)$-Clustering, these clusters mean the partition of the points into $k$ parts (each consisting of the points closest to each center). For \densestball, the cluster is simply the set of points in the desired ball. As we will elaborate below, known techniques imply that it suffices to only consider these clusters in high dimension without too much additional error. Given these clusters, we only have to find the center in high-dimension. It turns out that this is an easier task, compared to determining the partitions themselves. In fact, without privacy constraints, finding the optimal center of a given cluster is a simple convex program. Indeed, for $(k, p)$-Clustering, finding a center privately can be done using known tools in private convex optimization~\cite{ChaudhuriMS11,KiferST12,JainKT12,DuchiJW13,BassilyST14,WangYX17}. On the other hand, the case of \densestball is slightly more complicated, as applying these exisiting tools directly result in a large error; as we will see below, it turns out that we will apply another dimensional reduction one more time to overcome this issue.

We will now formalize the intuition outlined above. It will be convenient to use the following notation throughout this section: For any $\theta \geq 0$, we write $a \approx_{1 + \theta} b$ to denote $\frac{1}{1 + \theta} \leq \frac{a}{b} \leq 1 + \theta$. 

\subsection{$(k, p)$-Clustering}

We will start with $(k, p)$-Clustering. The formal statements of our results are stated below:
\begin{theorem} \label{thm:main-cluster-pure-high-dim}
For any $p \geq 1$, suppose that there exists a polynomial time (not necessarily private) $w$-approximation algorithm for $(k, p)$-Clustering.
Then, for every $0 < \eps \leq O(1)$ and $0 < \alpha, \beta \leq 1$, there exists an $\eps$-DP algorithm that runs in $(k / \beta)^{O_{p, \alpha}(1)} \poly(nd)$ time and, with probability $1 - \beta$, outputs an  $\left(w(1 + \alpha), O_{p, \alpha, w}\left(\left(\frac{kd + (k/\beta)^{O_{p, \alpha}(1)}}{\eps}\right) \cdot \poly\log\left(\frac{n}{\beta}\right)\right)\right)$-approximation $(k, p)$-Clustering.
\end{theorem}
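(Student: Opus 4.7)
I would follow the three-step recipe stated in Section~\ref{sec:intro}. First, apply the dimension reduction of Makarychev, Makarychev, and Razenshteyn~\cite{MakarychevMR19} to project the input via a random map $\pi$ to $d' = O_{p,\alpha}(\log(k/\beta))$ dimensions; with probability at least $1 - \beta/3$ this preserves the $(k,p)$-clustering cost of \emph{every} $k$-partition up to a factor $(1 \pm \alpha)$. Since $\pi$ uses only fresh randomness independent of the data, it is free from a privacy standpoint and compatible with post-processing. Second, invoke Theorem~\ref{thm:main-apx-non-private-to-private} on $\tilde{x}_i := \pi(x_i)$ with parameter $\alpha$, obtaining centers $\tilde{c}_1,\dots,\tilde{c}_k \in \R^{d'}$ with cost in $\R^{d'}$ at most $w(1+\alpha)\,\opt^{p,k}_{\tbX} + t_{\mathrm{low}}$, where $t_{\mathrm{low}} = O_{p,\alpha,w}\bigl(k^2 \cdot 2^{O_{p,\alpha}(d')}/\eps \cdot \poly\log(n/\beta)\bigr)$; plugging in the value of $d'$ collapses this to $O_{p,\alpha,w}\bigl((k/\beta)^{O_{p,\alpha}(1)}/\eps \cdot \poly\log(n/\beta)\bigr)$, which supplies the second term of the claimed additive error.

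Third, define a Voronoi partition $P_1,\dots,P_k$ of $[n]$ by assigning $i$ to $\arg\min_j \|\tilde{x}_i - \tilde{c}_j\|$ (a post-processing of the DP output, hence DP). For each cluster $P_j$, privately compute a high-dimensional center $\hat{c}_j \in \R^d$ that approximately minimizes $\sum_{i \in P_j} \|x_i - c\|^p$. For $p=2$ this reduces to DP mean estimation (noisy sum divided by noisy count with Laplace noise on each coordinate and on the count); for general $p$ one invokes a DP convex-optimization subroutine on the Lipschitz, convex per-cluster $1$-center objective on the unit ball. Since each input point belongs to exactly one $P_j$, parallel composition lets all $k$ estimates share the same $\eps$-budget, and standard DP analyses in dimension $d$ yield a per-cluster cost overhead of $O(d/\eps \cdot \poly\log(n/\beta))$, summing to the $O(kd/\eps \cdot \poly\log(n/\beta))$ first term in the additive error.

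To close the argument, I would apply the MMR guarantee in both directions: once to say $\opt^{p,k}_{\tbX} \leq (1+\alpha)\opt^{p,k}_{\bX}$, and once to pull the cost of the partition $(P_1,\dots,P_k)$ (with its optimal $\R^d$ centers) back from $\R^{d'}$ (where it is bounded by $\cost^p_{\tbX}(\tilde{c}_1,\dots,\tilde{c}_k)$). Combined with the step-(iii) overhead and a constant-factor rescaling of $\alpha$, this yields the advertised $\left(w(1+\alpha),\,O_{p,\alpha,w}\!\left((kd+(k/\beta)^{O_{p,\alpha}(1)})/\eps \cdot \poly\log(n/\beta)\right)\right)$-approximation. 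The main obstacle will be step (iii): proving uniformly in $p$ that a private per-cluster $1$-center has cost overhead $O(d/\eps)$ rather than something larger such as $O(d^{1.5}/\eps)$ or a bound that blows up for small clusters. For $p=2$ the argument is a direct bias--variance calculation writing $\sum_{i\in P_j}\|x_i-\hat{c}_j\|^2 = \opt_j + |P_j|\cdot\|\hat{c}_j - \mu_j\|^2$ and bounding the second term via Laplace noise scaling; for $p=1$ (and general $p$) I would either invoke a black-box DP convex minimizer with the right $O(d/\eps)$ dependence, or perform an additional MMR-type lifting inside each cluster to reuse the low-dimensional machinery. Privacy, by contrast, is routine: post-processing handles steps (i)--(ii), and parallel composition over the disjoint clusters handles step (iii).
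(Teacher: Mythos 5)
Your overall architecture matches the paper's: project via Makarychev--Makarychev--Razenshteyn to $d' = O_{p,\alpha}(\log(k/\beta))$ dimensions, run the low-dimensional DP algorithm of Theorem~\ref{thm:main-apx-non-private-to-private} there, read off the Voronoi partition (a post-processing step), and then run a DP ERM / $1$-center subroutine (exactly Corollary~\ref{cor:1-cluster-pure}, which instantiates Bassily--Smith--Thakurta for the $O_p(1)$-Lipschitz convex loss $\|\theta-x\|^p$) on each cluster, with parallel composition over the disjoint clusters to stay within $\eps/2$-DP. Your privacy accounting and the $O(kd/\eps \cdot \poly\log)$ contribution from step (iii) are both correct and match the paper.

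However, there is a genuine gap in how you move the additive error across the dimension reduction. The MMR guarantee (Theorem~\ref{thm:dim-red-cluster}) is $\cost^p(\cX) \approx_{1+\talpha} (d/d')^{p/2}\cdot\cost^p(\Pi_S(\cX))$: the costs in the projected space are systematically \emph{smaller} by a factor $(d'/d)^{p/2}$. The low-dimensional algorithm returns an \emph{absolute} additive error $t_{\mathrm{low}}$ measured on that shrunken scale, so when you pull the partition cost back to $\R^d$, that additive term gets inflated by $(d/d')^{p/2} = \tilde{\Omega}(d^{p/2})$. Chasing your own chain of inequalities gives an additive error of order $(d/d')^{p/2}\cdot(k/\beta)^{O_{p,\alpha}(1)}/\eps\cdot\poly\log(n/\beta)$, which does not match the claimed $O_{p,\alpha,w}\bigl((kd + (k/\beta)^{O_{p,\alpha}(1)})/\eps\cdot\poly\log(n/\beta)\bigr)$ bound for general $d$. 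The paper handles this by inserting a rescaling step after projection: it multiplies each projected point by $\Lambda = \sqrt{\tfrac{0.01}{\log(n/\beta)}\cdot\tfrac{d'}{d}}$ (and zeroes out the rare point whose projected norm exceeds $1/\Lambda$) before invoking the low-dimensional algorithm. Because $\Lambda^p$ approximately cancels $(d'/d)^{p/2}$, the cost in the rescaled space is within polylogarithmic factors of the cost in $\R^d$, so $t_{\mathrm{low}}$ survives the pull-back with only a $\poly\log(n/\beta)$ multiplicative loss. Without that rescaling (or some equivalent device) the argument does not yield the advertised additive error.
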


\begin{theorem} \label{thm:main-cluster-apx-high-dim}
For any $p \geq 1$, suppose that there exists a polynomial time (not necessarily private) $w$-approximation algorithm for $(k, p)$-Clustering.
Then, for every $0 < \eps \leq O(1)$ and $0 < \delta, \alpha, \beta \leq 1$, there exists an $\eps$-DP algorithm that runs in $(k / \beta)^{O_{p, \alpha}(1)} \poly(nd)$ time and, with probability $1 - \beta$, outputs an $\left(w(1 + \alpha), O_{p, \alpha, w}\left(\left(\frac{k\sqrt{d}}{\eps} \cdot \poly\log\left(\frac{k}{\delta \beta}\right) \right) + \left(\frac{(k/\beta)^{O_{p, \alpha}(1)}}{\eps} \cdot \poly\log\left(\frac{n}{\beta}\right)\right)\right)\right)$-approximation for $(k, p)$-Clustering.
\end{theorem}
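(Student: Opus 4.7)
The plan is to instantiate the three-step dimension-reduction recipe from Section~\ref{sec:intro}, mirroring the pure-DP argument of Theorem~\ref{thm:main-cluster-pure-high-dim} but replacing its high-dimensional aggregator by an $(\eps,\delta)$-DP one in order to save a factor of $\sqrt{d}$ in the additive error.

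First, I would apply the Makarychev et al.~\cite{MakarychevMR19} random projection $\Pi:\R^d \to \R^{d'}$ with $d' = O_{p,\alpha}(\log(k/\beta))$, which preserves the cost of every $k$-center configuration on $\bX$ up to a $(1\pm\alpha)$ factor with probability $1-\beta/3$. Next, I would invoke Theorem~\ref{thm:main-apx-non-private-to-private-pure-intro-dim-red} at privacy level $\eps/2$ on $\Pi(\bX)$, using the assumed non-private $w$-approximation oracle as a black box, to obtain centers $\tilde c_1,\dots,\tilde c_k \in \R^{d'}$ that achieve cost at most $w(1+\alpha)\cdot\opt^{p,k}_{\Pi(\bX)} + t_{\mathrm{low}}$ with $t_{\mathrm{low}} = O_{p,\alpha,w}(\eps^{-1}(k/\beta)^{O_{p,\alpha}(1)}\poly\log(n/\beta))$. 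These $\tilde c_j$ induce a partition $\bX = \bX_1 \sqcup \cdots \sqcup \bX_k$ by assigning each $x_i$ to the cluster whose $\tilde c_j$ is nearest to $\Pi(x_i)$; since this is post-processing of $(\tilde c_1,\dots,\tilde c_k)$ composed with the raw data, it does not consume extra budget beyond the usual cluster-parallel accounting in step~(iii).

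For the final step, for each cluster $\bX_j$ I would privately release a high-dimensional center $\hat c_j \in \R^d$: for $p=2$, via the Gaussian mechanism on the sum $\sum_{x \in \bX_j} x$ together with a noisy count, with per-coordinate noise scale $\sigma = O(\eps^{-1}\sqrt{\log(k/(\delta\beta))})$; for general $p \geq 1$, by running the approximate-DP \densestball algorithm of Theorem~\ref{thm:apx-1-cluster-given-radius} with binary search on the radius, or any $(\eps/2,\delta)$-DP convex ERM for the $1$-center $\ell_p^p$ objective on $\bX_j$. Crucially, because each input point belongs to exactly one $\bX_j$, these $k$ per-cluster releases compose \emph{in parallel}, so step~(iii) is $(\eps/2,\delta)$-DP without any budget split across clusters, and combined with step~(ii) the full algorithm is $(\eps,\delta)$-DP.

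The main obstacle is to show that the additive cost inflation produced in step~(iii) is only $\tilde O(k\sqrt d/\eps)$ rather than the naive $\tilde O(kd/\eps)$. The plan is a bias--variance split: Fact~\ref{fact:power-sum-ineq} gives $\cost^p_{\bX_j}(\hat c_j) \leq (1+\alpha)\cost^p_{\bX_j}(c^*_j) + O_{p,\alpha}(|\bX_j|\cdot\|\hat c_j - c^*_j\|^p)$ for the optimal $1$-center $c^*_j$ of $\bX_j$, and for ``large'' clusters one has $\|\hat c_j - c^*_j\| = O(\sigma\sqrt{d}/|\bX_j|)$, while clusters of size at most a threshold $T$ contribute only $O(T)$ to the cost trivially. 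Choosing $T \asymp \sigma\sqrt{d}$ and summing across the $k$ clusters then yields a total additive error of $O(k\sigma\sqrt{d}) = O(k\sqrt{d}\cdot\eps^{-1}\poly\log(k/(\delta\beta)))$. Combining this with $t_{\mathrm{low}}$, and absorbing the $(1\pm\alpha)$ distortion from step~(i) into the leading multiplier by rescaling $\alpha \leftarrow \alpha/C_p$ for a suitable constant $C_p$, delivers the stated $(w(1+\alpha),t)$-approximation with the claimed two-term additive error.
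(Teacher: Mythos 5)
Your high-level three-step recipe (project with Makarychev et al.\ to $d' = O_{p,\alpha}(\log(k/\beta))$, solve low-dimensional $(k,p)$-Clustering at budget $\eps/2$, then privately re-fit one center per cluster in $\R^d$ using parallel composition) is the same as the paper's, and the privacy accounting is right. But there are two real gaps.

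The first and more serious one is that you assert the random projection ``preserves the cost of every $k$-center configuration on $\bX$ up to a $(1\pm\alpha)$ factor.'' It does not: Theorem~\ref{thm:dim-red-cluster} says $\cost^p(\cX) \approx_{1+\alpha} (d/d')^{p/2}\cdot\cost^p(\Pi_S(\cX))$, so the projection \emph{shrinks} all costs by a factor $(d'/d)^{p/2}$. The multiplicative term is fine (the optimal cost shrinks by the same factor), but the \emph{additive} error $t_{\mathrm{low}}$ from \textsc{ClusteringLowDimension} is measured at the unit-ball scale of the projected data; when you translate the resulting partition cost back to $\R^d$, $t_{\mathrm{low}}$ gets multiplied by $(d/d')^{p/2} = \mathrm{poly}(d)$, which destroys the $\tilde O\bigl((k/\beta)^{O_{p,\alpha}(1)}/\eps\bigr)$ term you are aiming for. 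The paper handles this with an explicit rescaling step in Algorithm~\ref{alg:cluster-high-dim}: after projecting, the points are multiplied by $\Lambda \approx \sqrt{d/d'}/\sqrt{\log(n/\beta)}$ (and truncated to stay in the unit ball), which brings the projected diameter back up to $\Theta(1)$ before the low-dimensional solver is invoked, so the blowup shrinks to only $\frac{1}{\Lambda^p}\bigl(\tfrac{d}{d'}\bigr)^{p/2} = \mathrm{poly}\log(n/\beta)$. Your proof as written omits this step, and without it the additive-error claim fails.

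The second gap is in step (iii). Your bias--variance analysis is specific to the Gaussian-mechanism-on-the-sum aggregator, which computes the privatized \emph{mean} and hence only serves the $p=2$ case; for $p\neq 2$ the empirical mean is not the optimal $1$-center, and $\|\hat c_j-c^*_j\|$ no longer controls the excess $\ell_p^p$ cost in the simple way you claim. You do mention private convex ERM as an alternative, which is exactly what the paper uses (Corollary~\ref{cor:1-cluster-apx}, i.e., Bassily--Smith--Thakurta applied to the $L$-Lipschitz convex loss $\|\theta-x\|^p$): that route gives an additive excess-cost bound of $O_p\bigl(\tfrac{\sqrt d}{\eps}\poly\log\tfrac{n}{\delta\beta}\bigr)$ per cluster \emph{directly}, with no need to bound $\|\hat c_j-c^*_j\|$ or introduce a size threshold. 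Your \densestball-with-binary-search alternative also does not immediately yield the cost bound, since \densestball only certifies that many points lie in some ball, not that the $\ell_p^p$ cost is near-optimal. To close the proof, commit to the ERM aggregator for all $p\geq1$ and cite its excess-risk bound instead of the bias--variance argument.
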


We remark that, throughout this section, we will state our results under the assumption that $\eps \leq O(1)$. In all cases, our algorithms extend to the case $\eps = \omega(1)$, but with more complicated additve error expressions; thus, we choose not state them here.

To do so, we will need the following definition of the cost of a $k$-partition, as stated below. Roughly speaking, this means that we already fix the points assigned to each of the $k$ clusters, and we can only select the center of each cluster.
\begin{definition}[Partition Cost]
Given a partition $\cX = (\bX_1, \dots, \bX_k)$ of $\bX$, its cost is defined as
\begin{align*}
\cost^p(\cX) := \sum_{i=1}^k \min_{c_i \in \R^d} \|x_i - c_j\|^p.
\end{align*}
\end{definition}

For $(k, p)$-Clustering, we need the following recent breakthrough result due to Makarychev et al.~\cite{MakarychevMR19}, which roughly stating that reducing to $O(\log k)$ dimension suffices to preserve the cost of $(k, p)$-Clustering for all paritions.

\begin{theorem}[Dimensionality Reduction for $(k, p)$-Cluster~\cite{MakarychevMR19}] \label{thm:dim-red-cluster}
For every $0 < \beta, \talpha < 1, p \geq 1$ and $k \in \N$, there exists $d' = O_{\talpha}\left(p^4 \log(k/\beta)\right)$. Let $S$ be a random $d$-dimensional subspace of $\R^d$ and $\Pi_S$ denote the projection from $\R^d$ to $S$. Then, with probability $1 - \beta$, the following holds for every partition $\cX = (\bX_1, \dots, \bX_k)$ of $\bX$:
\begin{align*}
\cost^p(\cX) \approx_{1 + \talpha} \left(d/d'\right)^{p/2} \cdot \cost^p(\Pi_S(\cX)),
\end{align*}
where $\Pi_S(\cX)$ denote the partition $(\Pi_S(\bX_1), \dots, \Pi_S(\bX_k))$.
\end{theorem}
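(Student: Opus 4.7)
The plan is to follow the strategy of Makarychev, Makarychev, and Razenshteyn. Start by observing that, after fixing the partition $\cX = (\bX_1, \dots, \bX_k)$, the cost splits as a sum of the $k$ single-cluster functionals $F_p(\bX_j) := \min_{c_j \in \R^d} \sum_{x \in \bX_j} \|x - c_j\|^p$, so it suffices to argue that each $F_p(\bX_j)$ is preserved, up to the scaling $(d/d')^{p/2}$ and a factor $1 + \tilde\alpha$, simultaneously over all partitions of interest. If one could union-bound naively over the $k^n$ partitions, then the required dimension would scale with $n$; the whole content of the theorem is that a much smaller $d' = O_{\tilde\alpha}(p^4 \log(k/\beta))$ suffices.

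First I would handle the model case $p=2$, where $F_2(S) = \frac{1}{2|S|}\sum_{x,x'\in S}\|x-x'\|^2$ is a quadratic form in the Gram matrix of $\bX$. Under a random projection to a $d'$-dimensional subspace, each $\|x-x'\|^2$ is preserved in expectation up to the factor $d'/d$, and Hanson–Wright / Bernstein-type concentration gives that $F_2(S)$ after projection is within a multiplicative $(1 \pm \tilde\alpha)$ of its mean with failure probability $\exp(-\Omega(\tilde\alpha^2 d'))$. To turn this single-cluster bound into a bound that holds for every $k$-partition simultaneously, one parametrizes the clustering by the $k$ induced centers in the projected space, covers the relevant region of $\R^{d'}$ by a net of granularity depending on $\tilde\alpha$, and union-bounds over $k$-tuples of net points. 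This replaces $\log(k^n)$ in the exponent by $O(k d' \log(1/\tilde\alpha))$, which can be absorbed after choosing $d' = \Theta_{\tilde\alpha}(\log(k/\beta))$.

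For general $p \geq 1$, I would reduce to the $p=2$ geometry through two steps. First, approximate $F_p$ by its value at a near-optimal center $\hat c_j$: any candidate center $c$ satisfies $F_p(S) \leq \sum_{x\in S}\|x-c\|^p \leq O_p(1)\cdot F_p(S)$ when $c$ is a constant-factor approximate $1$-median/$1$-center, so it is enough to control sums of the form $\sum_{x \in S}\|x-c\|^p$ for $c$ lying in a small candidate set. Second, use that after Gaussian projection, $\|\Pi_S(x-c)\|$ is a Lipschitz function of a Gaussian and hence $\|\Pi_S(x-c)\|^p$ is sub-exponential around its mean $(d'/d)^{p/2}\|x-c\|^p$, with deviation parameter polynomial in $p$. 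Summing over the cluster and using Bernstein yields the $(1+\tilde\alpha)$ bound for a fixed $(S,c)$, at the cost of a $p^4$ factor in the required dimension.

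The main obstacle — and where the argument is most delicate — is precisely the union bound over all partitions simultaneously. A fixed-granularity net over $k$-tuples of centers in $\R^{d'}$ is not quite good enough on its own because candidate centers can lie at very different scales relative to their clusters. The standard fix is a chaining argument: build a geometric sequence of nets at radii $2^{-t}$, bound the contribution of each scale by the tail probability of the sub-exponential deviation at that scale, and sum these tails. This is what forces the $p^4$ dependence and allows the dimension to remain $O_{\tilde\alpha}(p^4 \log(k/\beta))$ instead of growing with $n$. Once the chaining bound is established, converting the statement from ``preserved for all $k$-tuples of centers'' to ``preserved for all $k$-partitions'' is immediate, since every partition realizes its cost at \emph{some} $k$-tuple of optimal centers in $\R^d$ and, by a Kirszbraun-style projection/lifting argument, at a corresponding $k$-tuple in the subspace $S$.
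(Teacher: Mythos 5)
The paper does not prove this statement: it is imported verbatim from Makarychev, Makarychev, and Razenshteyn~\cite{MakarychevMR19} and used as a black box. So the comparison is really between your reconstruction and the argument in that reference, not an argument in this paper.

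Your sketch has a genuine gap at the very last step, and it is precisely the step where the whole difficulty of the theorem lives. You propose to first prove preservation ``for all $k$-tuples of (net) centers'' and then convert to ``for all $k$-partitions'' by a Kirszbraun-style lifting of the optimal projected centers back to $\R^d$. That lifting requires the restriction of $\Pi_S$ to the data set to be approximately bi-Lipschitz with the right constant, i.e., $\|\Pi_S(x)-\Pi_S(x')\|\gtrsim\sqrt{d'/d}\,\|x-x'\|$ for \emph{all} pairs $x,x'\in\bX$ (since an arbitrary partition can place any pair in the same cluster). But guaranteeing this simultaneously over all $\binom{n}{2}$ pairs is exactly the classical JL union bound and forces $d'=\Omega(\log n)$, which is what the theorem is designed to avoid. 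The entire content of the $O_{\talpha}(p^4\log(k/\beta))$ bound is that one must circumvent both the union bound over partitions \emph{and} the union bound over pairs; your reduction reintroduces the latter through the back door. This also undercuts the net/chaining argument as stated: the $k$-tuple of centers does not determine an arbitrary partition, and once you allow the partition to vary independently of the centers, a net over centers alone does not control the relevant supremum.

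The argument in~\cite{MakarychevMR19} takes a genuinely different route. For $p=2$ they exploit the identity $F_2(S)=\frac{1}{2|S|}\sum_{x,x'\in S}\|x-x'\|^2$ to write the cost discrepancy as a weighted sum of per-pair distortions $\|x-x'\|^2-(d/d')\|\Pi_S(x-x')\|^2$, and then bound the supremum of this discrepancy over all partitions by a symmetrization/decoupling argument rather than a pairwise union bound; for general $p$ (and for the harder ``lifting'' direction) they use a structural reduction to a small family of canonical center configurations, not a Kirszbraun extension on the full data set. Your first two ingredients (per-cluster decomposition; sub-exponential concentration of $\|\Pi_S(x-c)\|^p$ with deviation polynomial in $p$, which is plausibly where the $p^4$ comes from) are consistent with their setup, but the union-bound mechanism you propose is not the one that makes the $\log k$ rather than $\log n$ dependence possible.
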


Another ingredient we need is the algorithms for private empirical risk minimization (ERM). Recall that, in ERM, there is a convex loss function $\ell$ and we are given data points $x_1, \dots, x_n$. The goal to find $\theta$ in the unit ball in $p$ dimension that minimizes $\sum_{i=1}^n \ell(\theta; x_i)$.
When $\ell$ is $L$-Lipschitz, Bassily et al.~\cite{BassilyST14} give an algorithm with small errors, both for pure- and approximate-DP. These are stated formally below.

\begin{theorem}[\cite{BassilyST14}] \label{thm:erm-pure}
Suppose that $\ell(\cdot; x)$ is convex and $L$-Lipschitz for some constant $L$. For every $\eps > 0$, there exists an $\eps$-DP polynomial time algorithm for ERM with loss function $\ell$ such that, with probability $1 - \beta$, the additive error is at most $O_L\left(\frac{d}{\eps} \cdot \poly\log\left(\frac{1}{\beta}\right)\right)$, for every $\beta \in (0, 1)$.
\end{theorem}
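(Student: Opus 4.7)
The plan is to apply the exponential mechanism with density proportional to $\exp\bigl(-\eps \hat{F}(\theta)/(2\Delta)\bigr)$ over the unit ball in $\R^d$, where $\hat{F}(\theta) := \sum_{i=1}^n \ell(\theta; x_i)$ is the empirical risk and $\Delta = O(L)$ is an upper bound on the sensitivity of $\hat F$ with respect to addition/removal of one data point. The sensitivity bound follows from the $L$-Lipschitzness of each $\ell(\cdot; x_i)$ on the unit ball, after observing that any additive constant in $\ell$ depending only on $x_i$ leaves the argmin unchanged and can be absorbed; hence we may assume $|\ell(\theta; x_i)| = O(L)$ on the unit ball. Since each $\ell(\cdot; x_i)$ is convex, $\hat F$ is convex, so the target density is log-concave on a convex body. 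Privacy follows immediately from the standard analysis of the exponential mechanism.

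For utility, I would use the fact that for any $r \in (0,1)$ and any $\theta' \in \cB(\theta^{\mathrm{opt}}, r) \cap \cB(0,1)$, Lipschitzness of $\hat F$ (which is $nL$-Lipschitz) gives $\hat F(\theta') \leq \hat F(\theta^{\mathrm{opt}}) + nLr$. Since $\cB(\theta^{\mathrm{opt}}, r) \cap \cB(0,1)$ contains at least a $(r/2)^d$ fraction of the volume of $\cB(0,1)$, the standard tail bound for the exponential mechanism yields that with probability $1-\beta$, the sampled $\theta^*$ satisfies
\[
\hat F(\theta^*) - \hat F(\theta^{\mathrm{opt}}) \ \leq\ O(\Delta/\eps)\cdot\bigl(d\log(1/r) + \log(1/\beta)\bigr) \ +\ nLr.
\]
Choosing $r = \Theta\bigl(\Delta/(\eps n L)\bigr)$ balances the two terms and gives the claimed additive error $O_L\bigl((d/\eps)\cdot \poly\log(1/\beta)\bigr)$.

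To make the algorithm run in polynomial time, I would sample from the log-concave target using a polynomial-time Markov chain such as hit-and-run over the convex body, accessed through a membership oracle for $\cB(0,1)$ and an evaluation oracle for $\hat F$, in time polynomial in $n$, $d$, $L$, $1/\eps$, and $1/\eta$ where $\eta$ is the target total-variation error. The main obstacle is reconciling the \emph{approximate} guarantee of such samplers (closeness in total variation) with the \emph{pointwise} density-ratio requirement of pure $\eps$-DP: a purely total-variation bound only yields approximate DP. I would resolve this by driving $\eta$ low enough that the sampling bias contributes only a $\beta/2$ term to the failure probability while exploiting the fact that, after a suitable warm start, log-concave samplers satisfy pointwise bounds on the density ratio (so the output genuinely satisfies $\eps$-DP); this is the standard but delicate step carried out in \cite{BassilyST14}.
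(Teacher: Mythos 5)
The paper does not include its own proof of this theorem; it is cited directly from \cite{BassilyST14}, so the relevant comparison is against the BST14 argument. Your high-level plan is the same one: privacy via the exponential mechanism with bounded per-example sensitivity (and you correctly note the normalization step that lets you take $|\ell(\cdot;x)| = O(L)$ so that adding or removing one point changes $\hat F$ by $O(L)$); efficiency via polynomial-time log-concave sampling; and you correctly flag the delicate issue of turning a total-variation sampling guarantee into the pointwise density-ratio bound required for pure $\eps$-DP, which is indeed where BST14 spend care.

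The gap is in the utility step. Your volume-ratio argument gives
\[
\hat F(\theta^*) - \hat F(\theta^{\mathrm{opt}}) \ \leq\ O(\Delta/\eps)\bigl(d\log(1/r) + \log(1/\beta)\bigr) + nLr,
\]
and optimizing over $r$ does not eliminate the $d\log(1/r)$ term: with $r = \Theta(\Delta/(\eps n L))$ you get $d\log(1/r) = \Theta(d\log(\eps n))$, so the resulting excess risk is $O_L\bigl((d/\eps)\log(\eps n) + (1/\eps)\log(1/\beta)\bigr)$, carrying an extra $\log n$ factor that the stated bound $O_L\bigl((d/\eps)\poly\log(1/\beta)\bigr)$ does not have. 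To remove it one must exploit the convexity of $\hat F$ more strongly than the crude sub-level-set volume bound does. The clean fact underlying the BST14 utility analysis is that if $g \geq 0$ is convex on a convex body $\cC$ with $g(\theta^{\mathrm{opt}}) = 0$, and $\theta^*$ is sampled with density proportional to $e^{-g}$, then $\E[g(\theta^*)] \leq d$; this follows from the homothety inequality $Z(1) \geq \lambda^d Z(\lambda)$ for $\lambda \in (0,1]$, where $Z(s) = \int_{\cC} e^{-sg}$, differentiated at $\lambda = 1$. Combined with exponential-tail concentration for log-concave measures, and applied with $g = \eps(\hat F - \hat F^{\mathrm{opt}})/(2\Delta)$, this gives $\hat F(\theta^*) - \hat F^{\mathrm{opt}} = O\bigl((\Delta/\eps)(d + \log(1/\beta))\bigr)$ with probability $1-\beta$ and no dependence on $n$. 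Your argument needs this sharper log-concavity lemma in place of the crude volume bound to match the cited statement.
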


\begin{theorem}[\cite{BassilyST14}] \label{thm:erm-apx}
Suppose that $\ell(\cdot; x)$ is convex and $L$-Lipschitz for some constant $L$. For every $0 < \eps < O(1)$ and $0 < \delta < 1$, there exists an $\eps$-DP polynomial time algorithm for ERM with loss function $\ell$ such that, with probability $1 - \beta$, the additive error is at most $O_L\left(\frac{\sqrt{d}}{\eps} \cdot \poly\log\left(\frac{n}{\delta\beta}\right)\right)$, for every $\beta \in (0, 1)$.
\end{theorem}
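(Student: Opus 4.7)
The plan is to use noisy projected stochastic gradient descent (SGD), which is the standard approach of Bassily--Smith--Thakurta. We instantiate the algorithm as follows: initialize $\theta_0$ arbitrarily in the unit ball $\cB$; for $t = 0, 1, \ldots, T-1$, sample a single index $i_t \in [n]$ uniformly at random, compute $g_t = \nabla \ell(\theta_t; x_{i_t}) + z_t$ where $z_t \sim \mathcal{N}(0, \sigma^2 I_p)$, and update $\theta_{t+1} = \Pi_{\cB}(\theta_t - \eta_t g_t)$. Finally, output either the last iterate or the average of the iterates (depending on which convex-optimization bound we wish to invoke). We set $T = \Theta(n^2)$, the noise scale $\sigma = \Theta(L \sqrt{T \log(1/\delta) \log(n/\delta)} / (n \eps))$, and the step size $\eta_t$ as in standard nonsmooth convex SGD, e.g., $\eta_t = O(1 / (L\sqrt{t}))$.

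For the privacy analysis, observe that since $\ell(\cdot; x)$ is $L$-Lipschitz, any single-sample gradient has $\ell_2$-sensitivity at most $2L$ under neighboring datasets. Therefore each update step, viewed as a Gaussian mechanism on the minibatch-of-size-one, is $(\eps_0, \delta_0)$-DP for appropriate parameters by the Gaussian mechanism analysis. Crucially, because $i_t$ is uniformly subsampled, privacy amplification by subsampling reduces the per-step privacy cost by a factor of $\Theta(1/n)$. Composing the $T$ adaptive steps via strong composition (Theorem~\ref{thm:advance-composition}), or, for a tighter bound, via the moments accountant, yields overall $(\eps, \delta)$-DP with the noise scale above. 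This is the step that must be executed with care to obtain the right $\sqrt{d}/\eps$ dependence rather than a weaker $d/\eps$ rate.

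For the utility analysis, standard nonsmooth convex SGD on the unit ball with stochastic gradients of second moment $O(L^2 + \sigma^2 d)$ guarantees that after $T$ steps, the expected excess empirical loss satisfies
\[
\E\left[\tfrac{1}{n}\sum_i \ell(\bar\theta; x_i) - \min_{\theta \in \cB} \tfrac{1}{n}\sum_i \ell(\theta; x_i)\right] \;=\; O\!\left(\frac{\sqrt{L^2 + \sigma^2 d}}{\sqrt{T}}\right).
\]
Multiplying by $n$ to get the \emph{sum} (ERM) loss, plugging in the choice of $\sigma$ and $T$ above, and simplifying, the expected additive error becomes $O\bigl(L \sqrt{d \log(1/\delta)} \cdot \poly\log(n/\delta)/\eps\bigr)$. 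To upgrade this expectation bound to one that holds with probability $1-\beta$, we run the algorithm $O(\log(1/\beta))$ independent times and privately select the best output using a standard $(\eps, \delta)$-DP selection primitive (e.g., report-noisy-max over private estimates of the empirical losses), which costs only an additional $\poly\log(n/(\delta\beta))$ factor and preserves the claimed running time and DP guarantees.

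The main obstacle is the joint calibration of $(T, \sigma, \eta_t)$ together with the privacy accounting: a naive application of advanced composition would give $\sigma \propto \sqrt{T}$ and, after optimizing $T$, yield only an $O(d/\eps)$ bound. Obtaining $\sqrt{d}/\eps$ requires exploiting the per-step subsampling amplification of Lemma-type bounds on Gaussian mechanism with Poisson or uniform subsampling, which is precisely what the moments accountant (or the analogous R\'enyi-DP accountant) delivers; the rest of the argument is then routine bookkeeping of polylog factors to absorb the high-probability conversion and the $\log(n/(\delta\beta))$ terms into the $\poly\log$ factor.
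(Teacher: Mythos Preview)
The paper does not prove this statement; it is quoted directly from Bassily--Smith--Thakurta~\cite{BassilyST14}, with a remark that the high-probability version appears in Appendix~D of the arXiv version of that paper. There is thus no in-paper proof to compare against.

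Your sketch is a faithful reconstruction of the BST14 argument: noisy projected SGD with single-example subsampling, Gaussian noise calibrated via privacy amplification by subsampling and composition over $T=\Theta(n^2)$ steps, followed by the standard nonsmooth convex SGD utility bound and a repeat-and-select boost to high probability. One inaccuracy in your discussion: BST14 predates the moments accountant and already obtains the $\sqrt{d}/\eps$ rate using subsampling amplification together with advanced composition (Theorem~\ref{thm:advance-composition}). The $1/n$ amplification factor from subsampling---not the choice of composition theorem---is what brings $\sigma$ down from $\Theta(\sqrt{T}/\eps)$ to $\Theta(\sqrt{T}/(n\eps))$ and thereby yields the $\sqrt{d}/\eps$ bound on the sum loss; the moments accountant would only shave polylogarithmic factors, which are already absorbed into the stated $\poly\log$ term. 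With that caveat, the plan is sound.
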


We remark here that the ``high probability'' versions we use above are not described in the main body of~\cite{BassilyST14}, but they are included in Appendix D of the arXiv version of~\cite{BassilyST14}.

Notice that the $(1, p)$-Clustering is exactly the ERM problem, but with $\ell(\theta, x) = \|\theta - x\|^p$ where $\theta$ is the center. Note that since both $\theta, x \in \cB(0, 1)$, $\ell(\cdot ; x)$ is $O_p(1)$-Lipschitz for $p \geq 1$. It is also simple to see that $\ell(\cdot ; x)$ is convex. Thus, results of~\cite{BassilyST14} immediately yield the following corollaries.

\begin{corollary} \label{cor:1-cluster-pure}
For every $\eps > 0$ and $p \geq 1$, there exists an $\eps$-DP polynomial time algorithm for $(1, p)$-Clustering such that, with probability $1 - \beta$, the additive error is at most $O_p\left(\frac{d}{\eps} \cdot \poly\log\left(\frac{1}{\beta}\right)\right)$, for every $\beta \in (0, 1)$.
\end{corollary}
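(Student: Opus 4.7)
The plan is to reduce $(1,p)$-Clustering directly to the private empirical risk minimization setting handled by Theorem~\ref{thm:erm-pure}, which has already been recalled above. Concretely, define the per-point loss $\ell(\theta; x) := \|\theta - x\|^p$ for $\theta, x \in \cB(0,1) \subseteq \R^d$. Then for any dataset $\bX = (x_1, \dots, x_n)$ lying in the unit ball, we have
\[
\cost^p_{\bX}(\theta) = \sum_{i \in [n]} \ell(\theta; x_i),
\]
so any additive-error guarantee for ERM over the unit ball with loss $\ell$ translates immediately into the same additive-error guarantee for $(1,p)$-Clustering.

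The only things left to check are the two hypotheses of Theorem~\ref{thm:erm-pure}: that $\ell(\cdot; x)$ is convex and $L$-Lipschitz for some $L = O_p(1)$. Convexity follows because $\theta \mapsto \|\theta - x\|$ is convex (it is an affine composition of a norm) and $t \mapsto t^p$ is convex and nondecreasing on $[0,\infty)$, so their composition is convex. For the Lipschitz bound, observe that on the unit ball $\|\theta - x\| \le 2$, and the subgradient of $\ell(\cdot; x)$ at $\theta \ne x$ is $p\|\theta - x\|^{p-1} \cdot \frac{\theta - x}{\|\theta - x\|}$, which has norm $p \|\theta - x\|^{p-1} \le p \cdot 2^{p-1} = O_p(1)$; the case $\theta = x$ is handled by continuity of the subgradient bound. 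Thus $L = O_p(1)$ as required.

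One small point to address is that Theorem~\ref{thm:erm-pure} minimizes over the unit ball, whereas $(1,p)$-Clustering as defined in Definition~\ref{def:vanilla-clustering} allows the center to be anywhere in $\R^d$. However, since the $x_i$'s lie in $\cB(0,1)$ and $\|\cdot\|^p$ is convex (for $p \ge 1$), an optimal unconstrained minimizer always lies in the convex hull of the $x_i$'s, hence in $\cB(0,1)$; restricting the feasible set to $\cB(0,1)$ therefore does not change the optimum. Plugging the $L = O_p(1)$ bound into Theorem~\ref{thm:erm-pure} yields an $\eps$-DP polynomial-time algorithm whose additive error is $O_p\!\left(\frac{d}{\eps} \cdot \poly\log\!\left(\frac{1}{\beta}\right)\right)$ with probability at least $1-\beta$, which is exactly the claimed bound. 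I do not anticipate any genuinely hard step: the proof is essentially an observation that $(1,p)$-Clustering is a Lipschitz convex ERM problem in disguise.
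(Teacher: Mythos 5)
Your proof is correct and follows essentially the same route as the paper: the paper likewise observes that $(1,p)$-Clustering is ERM with the loss $\ell(\theta;x) = \|\theta - x\|^p$, notes that this loss is convex and $O_p(1)$-Lipschitz on the unit ball, and then invokes the pure-DP ERM bound of Bassily et al.~\cite{BassilyST14} (Theorem~\ref{thm:erm-pure}). You merely spell out a few details the paper leaves implicit (the subgradient Lipschitz bound, and the fact that restricting the center to $\cB(0,1)$ is without loss of generality).
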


\begin{corollary} \label{cor:1-cluster-apx}
For every $0 < \eps < O(1), 0 < \delta < 1$ and $p \geq 1$, there exists an $(\eps, \delta)$-DP polynomial time algorithm for $(1, p)$-Clustering such that, with probability $1 - \beta$, the additive error is at most $O_p\left(\frac{\sqrt{d}}{\eps} \cdot \poly\log\left(\frac{n}{\delta\beta}\right)\right)$, for every $\beta \in (0, 1)$.
\end{corollary}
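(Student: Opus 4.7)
The plan is to instantiate Theorem~\ref{thm:erm-apx} with the loss function arising naturally from $(1, p)$-Clustering. Concretely, I would set $\ell(\theta; x) := \|\theta - x\|^p$, where the parameter $\theta \in \cB(0, 1)$ plays the role of the center and $x \in \cB(0, 1)$ is a data point. Then for any input $\bX = (x_1, \dots, x_n)$, we have $\cost^p_\bX(\theta) = \sum_{i=1}^n \ell(\theta; x_i)$, so minimizing over $\theta$ in the unit ball is exactly the ERM problem addressed by Theorem~\ref{thm:erm-apx}. The additive error guarantee $O_p\!\left(\frac{\sqrt{d}}{\eps} \cdot \poly\log\left(\frac{n}{\delta\beta}\right)\right)$ then follows directly from that theorem, provided we verify the two hypotheses: convexity and $O_p(1)$-Lipschitzness of $\ell(\cdot; x)$.

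For convexity, observe that $\theta \mapsto \|\theta - x\|$ is convex (translation of a norm), and $t \mapsto t^p$ is convex and nondecreasing on $[0, \infty)$ for $p \geq 1$; the composition of a nondecreasing convex function with a nonnegative convex function is convex, so $\ell(\cdot; x)$ is convex. For Lipschitzness on the unit ball, any $\theta, \theta' \in \cB(0, 1)$ satisfy $\|\theta - x\|, \|\theta' - x\| \leq 2$. Writing $f(t) = t^p$, the mean value theorem gives
\begin{align*}
\bigl|\|\theta - x\|^p - \|\theta' - x\|^p\bigr|
&\leq \bigl(p \cdot 2^{p-1}\bigr) \cdot \bigl|\|\theta - x\| - \|\theta' - x\|\bigr| \\
&\leq p \cdot 2^{p-1} \cdot \|\theta - \theta'\|,
\end{align*}
using the reverse triangle inequality in the last step. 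So the Lipschitz constant is $L = p \cdot 2^{p-1} = O_p(1)$, as required.

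With both hypotheses verified, Theorem~\ref{thm:erm-apx} yields an $(\eps, \delta)$-DP polynomial-time algorithm that, with probability $1 - \beta$, returns $\hat\theta$ with $\sum_i \ell(\hat\theta; x_i) \leq \min_\theta \sum_i \ell(\theta; x_i) + O_p\!\left(\frac{\sqrt{d}}{\eps} \cdot \poly\log\left(\frac{n}{\delta\beta}\right)\right)$, which is exactly the claim. I do not anticipate any real obstacle; the entire argument is a blackbox reduction, and the only mildly delicate point is ensuring the Lipschitz constant is absorbed into the $O_p(\cdot)$ (rather than leaking an unwanted $2^p$ factor into the additive error), which is fine since $p$ is treated as a constant in the $O_p$ notation.
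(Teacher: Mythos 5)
Your proof is correct and follows the same route as the paper: cast $(1,p)$-Clustering as ERM with $\ell(\theta;x)=\|\theta-x\|^p$, verify convexity and $O_p(1)$-Lipschitzness on the unit ball, and invoke Theorem~\ref{thm:erm-apx} as a black box. The paper states these verifications without detail; you supply the short composition-of-convex-functions and mean-value-theorem arguments, but the substance is identical.
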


We are now ready to state the algorithm. As outlined before, we start by projecting to a random low-dimensional space and use our low-dimensional algorithm (Theorem~\ref{thm:main-apx-non-private-to-private}) to determine the clusters (i.e., partition). Then, for each of the cluster, we use the algorithms above (Corollaries~\ref{cor:1-cluster-pure} and~\ref{cor:1-cluster-apx}) to find the center.  The full pseudo-code of the algorithm is given in Algorithm~\ref{alg:cluster-high-dim}. There is actually one deviation from our rough outline here: we scale the points after projection by a factor of $\Lambda$ (and zero them out if the norm is larger than one). The reason is: if we do not implement this step, the additive error from our low dimensional algorithm will get multiplied by a factor of $(d/d')^{p/2} = \tilde{\Omega}(d^{p/2})$, which is too large for our purpose. By picking an appropriate scaling factor $\Lambda$, we only incur a polylogarithmic multiplicative factor in the additive error.

\begin{algorithm}[h!]
\caption{Algorithm for $(k, p)$-Clustering.}\label{alg:cluster-high-dim}
\begin{algorithmic}[1]
\Procedure{ClusteringHighDimension$^\eps(x_1, \dots, x_n; r, \alpha; d', \Lambda)$}{}
\State $S \leftarrow$ Random $d'$-dimension subspace of $\R^d$
\For{$i \in \{1, \dots, n\}$}
\State $\tx_i \leftarrow \Pi_S(x_i)$
\If{$\|\tx_i\| \leq 1 / \Lambda$}
\State $x'_i = \Lambda \tx_i$
\Else
\State $x'_i = 0$
\EndIf
\EndFor
\State $(c'_1, \dots, c'_k) \leftarrow \textsc{ClusteringLowDimension}^{\eps/2}(x'_1, \dots, x'_n)$
\State $(\bX_1, \dots, \bX_k) \leftarrow$ the partition induced by $(c'_1, \dots, c'_k)$ on $(x'_1, \dots, x'_n)$
\For{$j \in \{1, \dots, k\}$}
\State $c_j \leftarrow \textsc{FindCenter}^{\eps/2}(\bX_j)$
\EndFor
\State \Return $(c_1, \dots, c_k)$
\EndProcedure
\end{algorithmic}
\end{algorithm}

We will now prove the guarantee of the algorithm, starting with the pure-DP case:

\begin{proof}[Proof of Theorem~\ref{thm:main-cluster-pure-high-dim}]
We simply run Algorithm~\ref{alg:1-center} where $d'$ be as in Theorem~\ref{thm:dim-red-cluster} with failure probability $\beta/4$ and $\talpha = 0.1\alpha$, $\Lambda = \sqrt{\frac{0.01}{\log (n / \beta)} \cdot \frac{d'}{d}}$, \textsc{ClusteringLowDimension} is the algorithm from Theorem~\ref{thm:main-apx-non-private-to-private} that is $(\eps/2)$-DP, has with $\alpha = 0.1\alpha$ and the failure probability $\frac{\beta}{4k}$, and \textsc{FindCenter}
 is the algorithm from Corollary~\ref{cor:1-cluster-pure} that is $(\eps/2)$-DP and the failure probability $\beta/4$. Since algorithm \textsc{ClusteringLowDimension} is $(\eps/2)$-DP and each parition $\bX_j$ is applied \textsc{FindCenter} only once, the trivial composition implies that the entire algorithm is $\eps$-DP. Furthermore, it is obvious that every step except the application of \textsc{ClusteringLowDimension} runs in polynomial time. From Theorem~\ref{thm:main-apx-non-private-to-private}, the application of \textsc{ClusteringLowDimension} takes 
\begin{align*}
(1 + 10/\alpha)^{O_{p, \alpha}(d')}\poly(n) = (1 + 10/\alpha)^{O_{p, \alpha}(\log(k/\beta))}\poly(n) = (k/\beta)^{O_{p, \alpha}(1)} \poly(n)
\end{align*}
time. As a result, the entire algorithm runs in $(k/\beta)^{O_{\alpha}(1)} \poly(nd)$ time as desired.

We will now prove the accuracy of the algorithm. Let $\tbX = (\tx_1, \dots, \tx_n)$ and $\bX = (x'_1, \dots, x'_n)$. By applying Theorem~\ref{thm:dim-red-cluster}, the following holds with probability $1 - \beta / 4$:
\begin{align} \label{eq:opt-preserved-dim-red}
\opt^{p, k}_{\tbX} \leq \left(\frac{d'}{d}\right)^{p/2} \cdot (1 + 0.1\alpha) \cdot \opt^{p, k}_{\bX}.
\end{align}
Furthermore, standard concentration implies that $\|\tx_i\| \leq 1 / \Lambda$ with probability $0.1 \beta / n$. By union bound, this means that the following simultaneously holds for all $i \in \{1, \dots, n\}$ with probability $1 - 0.1\beta$:
\begin{align} \label{eq:scale-not-applied}
x'_i = \Lambda \tx_i.
\end{align}
When~\eqref{eq:opt-preserved-dim-red} and~\eqref{eq:scale-not-applied} both hold, we may apply Theorem~\ref{thm:main-apx-non-private-to-private}, which implies that, with probability $1 - \beta/2$, we have
\begin{align}
&\cost^p_{\bX'}(c_1, \dots, c_k) \nonumber \\
&\leq w(1 + 0.1\alpha) \opt^{p, k}_{\bX'} + O_{p, \alpha, w}\left(\frac{k^2 \log^2 n \cdot 2^{O_{p, \alpha}(d)}}{\eps} \log\left(\frac{n}{\beta}\right)\right) \nonumber \\
&= w(1 + 0.1\alpha) \opt^{p, k}_{\bX'} + O_{p, \alpha, w}\left(\frac{(k/\beta)^{O_{p, \alpha}(1)}}{\eps} \cdot \poly\log\left(\frac{n}{\beta}\right)\right) \nonumber \\
&\overset{\eqref{eq:scale-not-applied}}{=} \Lambda^p \cdot w(1 + 0.1\alpha) \opt^{p, k}_{\tbX} + O_{p, \alpha, w}\left(\frac{(k/\beta)^{O_{p, \alpha}(1)}}{\eps} \cdot \poly\log\left(\frac{n}{\beta}\right)\right) \nonumber \\
&\overset{\eqref{eq:opt-preserved-dim-red}}{\leq} \Lambda^p \cdot w(1 + 0.3\alpha) \opt^{p, k}_{\tbX} + O_{p, \alpha, w}\left(\frac{(k/\beta)^{O_{p, \alpha}(1)}}{\eps} \cdot \poly\log\left(\frac{n}{\beta}\right)\right), \label{eq:low-dim-apx-guarantee}
\end{align}
where the first equality follows from $d' = O_{p, \alpha}\left(\log\left(\frac{k}{\beta}\right)\right)$.

Let $\bX'_1, \dots, \bX'_k$ partition of $\bX'$ induced by $c_1, \dots, c_k$, and let $\tbX_1, \dots, \tbX_k$ denote the corresponding partition of $\tbX$. From Theorem~\ref{thm:dim-red-cluster}, the following holds with probability $1 - \beta/4$:
\begin{align} \label{eq:partition-cost-preserved}
\cost^p_{(\bX_1, \dots, \bX_k)} 
&\leq \left(\frac{d}{d'}\right)^{p/2} \cdot (1 + 0.1\alpha) \cdot \cost^p_{(\tbX_1, \dots, \tbX_k)}.
\end{align}
By union bound~\eqref{eq:opt-preserved-dim-red},~\eqref{eq:scale-not-applied},~\eqref{eq:low-dim-apx-guarantee} and~\eqref{eq:partition-cost-preserved} together occur with probability $1 - 3\beta / 4$. When this is the case, we have
\begin{align}
&\cost^p_{(\bX_1, \dots, \bX_k)} \nonumber \\
&\overset{\eqref{eq:partition-cost-preserved}}{\leq} \left(\frac{d}{d'}\right)^{p/2} \cdot (1 + 0.1\alpha) \cdot \cost^p_{(\tbX_1, \dots, \tbX_k)} \nonumber \\
&\overset{\eqref{eq:scale-not-applied}}{=} \frac{1}{\Lambda^p} \cdot \left(\frac{d}{d'}\right)^{p/2} \cdot (1 + 0.1\alpha) \cdot \cost^p_{(\bX'_1, \dots, \bX'_k)} \nonumber \\
&= \frac{1}{\Lambda^p} \cdot \left(\frac{d}{d'}\right)^{p/2} \cdot (1 + 0.1\alpha) \cdot \cost^p_{\bX'}(c_1, \dots, c_k) \nonumber \\
&\overset{\eqref{eq:low-dim-apx-guarantee}}{\leq} \left(\frac{d}{d'}\right)^{p/2} \cdot w (1 + 0.5\alpha) \cdot \opt^{p, k}_{\tbX} + O_{p, \alpha, w}\left(\frac{1}{\Lambda^p} \cdot \left(\frac{d}{d'}\right)^{p/2} \cdot \frac{(k/\beta)^{O_{p, \alpha}(1)}}{\eps} \cdot \poly\log\left(\frac{n}{\beta}\right)\right) \nonumber \\
&\overset{\eqref{eq:opt-preserved-dim-red}}{\leq} w (1 + \alpha) \cdot \opt^{p, k}_{\bX} + O_{p, \alpha, w}\left(\frac{1}{\Lambda^p} \cdot \left(\frac{d}{d'}\right)^{p/2} \cdot \left(\frac{(k/\beta)^{O_{p, \alpha}(1)}}{\eps} \cdot \poly\log\left(\frac{n}{\beta}\right)\right)\right) \nonumber \\
&= w (1 + \alpha) \cdot \opt^{p, k}_{\bX} + O_{p, \alpha, w}\left(\frac{(k/\beta)^{O_{p, \alpha}(1)}}{\eps} \cdot \poly\log\left(\frac{n}{\beta}\right)\right), \label{eq:apx-guarantee-main}
\end{align}
where in the last inequality we use the fact that, by our choice of parameters, $\frac{1}{\Lambda^2} \cdot \frac{d}{d'} = O(\log(1/\beta))$.

Now, using the guarantee from Corollary~\eqref{cor:1-cluster-pure} and the union bound over all $j = 1, \dots, k$, the following holds simultaneously for all $j = 1, \dots, k$ with probability $1 - \beta / 4$:
\begin{align} \label{eq:additive-error-1-cluster}
\cost^p_{\bX_j}(c_j) &\leq \opt^{p, 1}_{\bX_j} + O_p\left(\frac{d}{\eps} \cdot \log\left(\frac{k}{\beta}\right)\right).
\end{align}
When~\eqref{eq:apx-guarantee-main} and~\eqref{eq:additive-error-1-cluster} both occur (with probability at least $1 - \beta$), we have 
\begin{align*}
\cost^{p}_{\bX}(c_1, \dots, c_k) 
&\leq \sum_{j=1}^k \cost^p_{\bX_j}(c_j) \\
&\overset{\eqref{eq:additive-error-1-cluster}}{\leq} \sum_{j=1}^k \left(\opt^{p, 1}_{\bX_j} + O_p\left(\frac{d}{\eps} \cdot \log\left(\frac{k}{\beta}\right)\right)\right) \\
&= \cost^p_{(\bX_1, \dots, \bX_k)} + O_p\left(\frac{k d}{\eps} \cdot \log\left(\frac{k}{\beta}\right)\right) \\
&\overset{\eqref{eq:apx-guarantee-main}}{\leq} w (1 + \alpha) \cdot \opt^{p, k}_{\bX} + O_{p, \alpha, w}\left(\left(\frac{kd + (k/\beta)^{O_{p, \alpha}(1)}}{\eps}\right) \cdot \poly\log\left(\frac{n}{\beta}\right)\right),
\end{align*}
which concludes our proof.
\end{proof}

We will next state the proof for approximate-DP case, which is almost the same as that of the pure-DP case.

\begin{proof}[Proof of Theorem~\ref{thm:main-cluster-apx-high-dim}]
This proof is exactly the same as that of Theorem~\ref{thm:main-cluster-pure-high-dim}, except that we use the $(1, p)$-Clustering algorithm from Corollary~\ref{cor:1-cluster-apx} instead of Corollary~\ref{cor:1-cluster-pure}. Everything in the proof remains the same except that the additive error on the right handside of~\eqref{eq:additive-error-1-cluster} becomes $O_p\left(\frac{\sqrt{d}}{\eps} \cdot \log\left(\frac{k}{\delta \beta}\right)\right)$ (instead of $O_p\left(\frac{d}{\eps} \cdot \log\left(\frac{k}{\beta}\right)\right)$ as in Theorem~\ref{thm:main-cluster-pure-high-dim}), resulting in the new additive error bound.
\end{proof}

We remark that Theorems~\ref{thm:main-cluster-pure-high-dim} and~\ref{thm:main-cluster-apx-high-dim} imply Theorem~\ref{thm:main-apx-non-private-to-private-pure-intro} in Section~\ref{sec:kmeans-kmedian-main-body}.

\paragraph{FPT Approximation Schemes.} Finally, we state the results for FPT algorithms below. These are almost exactly the same as above, except that we use the FPT algorithm from Theorem~\ref{thm:fpt-apx} to solve the low-dimensional $(k, p)$-Clustering, leading to approximation ratio arbritrarily close to one. 

\begin{theorem} \label{thm:fpt-cluster-pure-high-dim}
For every $0 < \eps \leq O(1)$, $0 < \alpha, \beta \leq 1$ and $p \geq 1$, there exists an $\eps$-DP algorithm that runs in $(1 / \beta)^{O_{p, \alpha}(k \log k)} \poly(nd)$ time and, w.p. $1 - \beta$, outputs an $\left(1 + \alpha, O_{p, \alpha}\left(\left(\frac{kd + k^2}{\eps}\right) \cdot \poly\log\left(\frac{n}{\beta}\right)\right)\right)$-approximation for $(k, p)$-Clustering.
\end{theorem}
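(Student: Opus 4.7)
The plan is to follow the template of Algorithm~\ref{alg:cluster-high-dim} and the proof of Theorem~\ref{thm:main-cluster-pure-high-dim}, swapping the low-dimensional subroutine for the FPT variant of Theorem~\ref{thm:fpt-apx} so that the multiplicative blow-up $w$ becomes $1+\alpha$ while the additive error stays mild. Concretely, I would (i) sample a random $d'$-dimensional subspace $S$ with $d' = O_{p,\alpha}(\log(k/\beta))$ as in Theorem~\ref{thm:dim-red-cluster} (instantiated with parameters $\tilde\alpha = 0.1\alpha$ and failure probability $\beta/4$); (ii) form rescaled projections $x'_i = \Lambda\,\Pi_S(x_i)$ where $\Lambda = \sqrt{\tfrac{0.01}{\log(n/\beta)}\cdot \tfrac{d'}{d}}$, zeroing out any $\tx_i$ of norm exceeding $1/\Lambda$ so that $x'_i$ lies in the unit ball; (iii) invoke the $(\eps/2)$-DP FPT algorithm from Theorem~\ref{thm:fpt-apx} with approximation parameter $0.1\alpha$ and failure probability $\beta/4$ on $(x'_1,\dots,x'_n)$ to obtain centers $c'_1,\dots,c'_k$ in $S$; (iv) use these to induce a partition $\bX_1,\dots,\bX_k$ of the original $x_i$'s (assigning each $i$ to the nearest $c'_j$ in the low-dimensional space); and (v) for each $j$, run the $(\eps/2)$-DP $(1,p)$-Clustering algorithm from Corollary~\ref{cor:1-cluster-pure} on $\bX_j$ with failure probability $\beta/(4k)$ to produce the final center $c_j$.

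Privacy follows from basic composition: the partition $(\bX_1,\dots,\bX_k)$ is the output of an $(\eps/2)$-DP procedure, and each $\bX_j$ is then touched by exactly one invocation of the $(\eps/2)$-DP center-finding routine (which is valid because the $\bX_j$'s are disjoint, so a change of a single input point affects only one of them). The runtime is dominated by the FPT subroutine of Theorem~\ref{thm:fpt-apx}, which on a $d'$-dimensional instance runs in time $2^{O_{p,\alpha}(kd' + k\log k)}\poly(n) = 2^{O_{p,\alpha}(k\log(k/\beta) + k\log k)}\poly(n)$, and this is bounded by $(1/\beta)^{O_{p,\alpha}(k\log k)}\poly(nd)$ as claimed (absorbing the projection and the $k$ ERM calls into $\poly(nd)$).

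For accuracy, I would essentially copy the chain of inequalities from the proof of Theorem~\ref{thm:main-cluster-pure-high-dim}. Theorem~\ref{thm:dim-red-cluster} guarantees that, with probability $1-\beta/4$, partition costs are preserved up to factor $1+0.1\alpha$ after scaling by $(d/d')^{p/2}$; a standard Gaussian-norm concentration argument shows that, with probability $1-0.1\beta$, no $\tx_i$ gets clipped, so $x'_i = \Lambda\tx_i$ for all $i$. Combining this with Theorem~\ref{thm:fpt-apx} yields
\[
\cost^p_{\bX'}(c'_1,\dots,c'_k) \leq \Lambda^p(1+0.3\alpha)\,\opt^{p,k}_{\tbX} + O_{p,\alpha}\!\left(\tfrac{k^2}{\eps}\poly\log(n/\beta)\right),
\]
where the additive term uses $d' = O_{p,\alpha}(\log(k/\beta))$ to absorb the $d'$-dependence. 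Reapplying Theorem~\ref{thm:dim-red-cluster} to the induced partition and dividing by $\Lambda^p$ (using $\tfrac{1}{\Lambda^2}\cdot\tfrac{d}{d'} = O(\log(n/\beta))$) transfers this back to the original space with a $\poly\log(n/\beta)$ multiplicative overhead on the additive term.  Finally, a union bound over the $k$ calls to Corollary~\ref{cor:1-cluster-pure} contributes an additional $O_p(kd/\eps \cdot \log(k/\beta))$ additive error for converting the partition cost into $\cost^p_{\bX}(c_1,\dots,c_k)$, producing the stated $O_{p,\alpha}\!\bigl((kd+k^2)/\eps \cdot \poly\log(n/\beta)\bigr)$ bound with approximation ratio $1+\alpha$.

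The only nontrivial subtlety — the same one that arises in Theorem~\ref{thm:main-cluster-pure-high-dim} — is the scaling factor $\Lambda$. Without it, the additive error from the low-dimensional solver would be inflated by $(d/d')^{p/2}$ when lifted back via Theorem~\ref{thm:dim-red-cluster}, which would be super-polynomial in $d$ for $p\geq 1$; picking $\Lambda$ as above trades this polynomial blow-up for a benign $\poly\log(n/\beta)$ factor while the norm-clipping step is invoked only with exponentially small probability and hence costs nothing in the error bound. Everything else is a mechanical adaptation of the pure-DP high-dimensional reduction to the FPT low-dimensional routine.
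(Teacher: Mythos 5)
Your proposal reproduces the paper's argument exactly: the paper proves Theorem~\ref{thm:fpt-cluster-pure-high-dim} by stating that the proof of Theorem~\ref{thm:main-cluster-pure-high-dim} carries over verbatim with Theorem~\ref{thm:fpt-apx} substituted for Theorem~\ref{thm:main-apx-non-private-to-private}, and your detailed walkthrough of Algorithm~\ref{alg:cluster-high-dim} with the FPT subroutine, the $\Lambda$-rescaling, the privacy budget split, the runtime bound $2^{O_{p,\alpha}(kd'+k\log k)}\poly(n)=(1/\beta)^{O_{p,\alpha}(k\log k)}\poly(n)$, and the additive-error accounting is precisely that substitution carried out. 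No discrepancies.
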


\begin{proof}
This proof is the same as the proof of Theorem~\ref{thm:main-cluster-pure-high-dim}, except that we use the algorithm from Theorem~\ref{thm:fpt-apx} instead of that from Theorem~\ref{thm:main-apx-non-private-to-private}. Note here that the bottleneck in the running time is from the application of Theorem~\ref{thm:fpt-apx}, which takes $2^{O_{p, \alpha}(d' k + k \log k)} \cdot \poly(n) = (1/\beta)^{O_{p, \alpha}(k \log k)} \cdot \poly(n)$ time because $d = O_{p, \alpha}(\log (k/\beta))$.
\end{proof}

\begin{theorem} \label{thm:fpt-cluster-apx-high-dim}
For every $0 < \eps \leq O(1)$, $0 < \delta, \alpha, \beta \leq 1$ and $p \geq 1$, there exists an $(\eps, \delta)$-DP algorithm that runs in $(1 / \beta)^{O_{p, \alpha}(k \log k)} \poly(nd)$ time and, with probability $1 - \beta$, outputs an $\left(1 + \alpha, O_{p, \alpha}\left(\left(\frac{k\sqrt{d}}{\eps} \cdot \poly\log\left(\frac{k}{\delta \beta}\right) \right) + \left(\frac{k^2}{\eps} \cdot \poly\log\left(\frac{n}{\beta}\right)\right)\right)\right)$-approximation for $(k, p)$-Clustering. 
\end{theorem}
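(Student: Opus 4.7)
The plan is to mirror the proof of Theorem~\ref{thm:fpt-cluster-pure-high-dim} while swapping in the approximate-DP center-finding subroutine at the final step. Specifically, I would invoke Algorithm~\ref{alg:cluster-high-dim}, but with three concrete instantiations: (i) the dimensionality-reduction parameter $d' = O_{p,\alpha}(\log(k/\beta))$ taken from Theorem~\ref{thm:dim-red-cluster} with failure probability $\beta/4$ and distortion $\talpha = 0.1\alpha$; (ii) the low-dimensional solver \textsc{ClusteringLowDimension} replaced by the $(\eps/2)$-DP FPT algorithm of Theorem~\ref{thm:fpt-apx} (with approximation parameter $0.1\alpha$ and failure probability $\beta/(4k)$); and (iii) \textsc{FindCenter} replaced by the $(\eps/2, \delta)$-DP $(1, p)$-Clustering algorithm of Corollary~\ref{cor:1-cluster-apx} (with failure probability $\beta/(4k)$). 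The scaling factor $\Lambda = \sqrt{(0.01/\log(n/\beta)) \cdot (d'/d)}$ is the same as in the previous high-dimensional proofs.

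Privacy follows immediately from the basic composition of an $(\eps/2, 0)$-DP algorithm applied to the whole dataset with $k$ applications of an $(\eps/2, \delta)$-DP algorithm restricted to disjoint partitions $\bX_1, \dots, \bX_k$: the partition step preserves DP since each input point affects exactly one $\bX_j$, so the center-finding step is effectively a single $(\eps/2, \delta)$-DP invocation on the whole dataset. The running time is dominated by the low-dimensional FPT call, which by Theorem~\ref{thm:fpt-apx} costs $2^{O_{p, \alpha}(d' k + k \log k)}\poly(n) = (1/\beta)^{O_{p, \alpha}(k \log k)}\poly(nd)$ since $d' = O_{p, \alpha}(\log(k/\beta))$.

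For accuracy, I would replay the chain of inequalities from the proof of Theorem~\ref{thm:main-cluster-apx-high-dim} essentially verbatim. The dimensionality-reduction event~\eqref{eq:opt-preserved-dim-red}, the unscaled-projection event~\eqref{eq:scale-not-applied}, and the partition-cost-preservation event~\eqref{eq:partition-cost-preserved} all hold with the stated probabilities. The key change is in the analog of~\eqref{eq:low-dim-apx-guarantee}: applying Theorem~\ref{thm:fpt-apx} in place of Theorem~\ref{thm:main-apx-non-private-to-private} upgrades the low-dimensional approximation ratio from $w(1+0.1\alpha)$ to $1+0.1\alpha$, while giving additive error $O_{p, \alpha}\!\left((d' k^2 \log n / \eps) \cdot \log(n/\beta)\right) = O_{p,\alpha}((k^2/\eps) \poly\log(n/\beta))$. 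Propagating this through the scaling step (where the $\frac{1}{\Lambda^p} \cdot (d/d')^{p/2}$ blowup is only $\poly\log(1/\beta)$ by our choice of $\Lambda$) yields a bound analogous to~\eqref{eq:apx-guarantee-main} with ratio $1+\alpha$ and additive error $O_{p,\alpha}((k^2/\eps) \poly\log(n/\beta))$.

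The final step is to bound the cost in terms of $\cost^p_{\bX}(c_1, \dots, c_k)$ via Corollary~\ref{cor:1-cluster-apx} applied to each $\bX_j$. By a union bound over $j \in [k]$, with probability at least $1 - \beta/4$ each center $c_j$ incurs additive error $O_p\!\left(\frac{\sqrt{d}}{\eps} \poly\log(k/(\delta \beta))\right)$ relative to $\opt^{p,1}_{\bX_j}$; summing over $j$ contributes $O_p\!\left(\frac{k\sqrt{d}}{\eps} \poly\log(k/(\delta \beta))\right)$. Combining this with the partition-level bound gives the claimed additive error of $O_{p, \alpha}\!\left(\frac{k\sqrt{d}}{\eps} \poly\log\!\big(\tfrac{k}{\delta \beta}\big) + \frac{k^2}{\eps} \poly\log\!\big(\tfrac{n}{\beta}\big)\right)$ with the desired $(1+\alpha)$ ratio. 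I do not expect any genuine obstacle here: the only subtle bookkeeping is tracking how the $\poly\log(1/\beta)$ factor from the $\Lambda^{-p}(d/d')^{p/2}$ rescaling absorbs into the $\poly\log(n/\beta)$ term, which is routine.
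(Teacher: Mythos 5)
Your proposal is correct and follows essentially the same route as the paper: the paper's proof of this theorem is a one-line reference to the proofs of Theorems~\ref{thm:main-cluster-apx-high-dim} and~\ref{thm:fpt-apx}, which amounts to exactly the three substitutions you spell out (Makarychev et al.\ dimension reduction, Theorem~\ref{thm:fpt-apx} for the low-dimensional clustering, and Corollary~\ref{cor:1-cluster-apx} for center-finding on the induced partition), together with the same rescaling-by-$\Lambda$ bookkeeping.
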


\begin{proof}
This is exactly the same as the proof of Theorem~\ref{thm:main-cluster-apx-high-dim}, except that we use the algorithm from Theorem~\ref{thm:fpt-apx} instead of that from Theorem~\ref{thm:main-apx-non-private-to-private}.
\end{proof}


\subsection{\densestball}
\label{subsec:dim-red-densest-ball}

We refer to the variant of the \densestball problem where we are promised that \emph{all} points are within a certain radius as the \onecenter problem:

\begin{definition}[\onecenter]
The input of \onecenter consists of $n$ points in the $d$-dimensional unit ball and a positive real number $r$. It is also promised that \emph{all} input points lie in some ball of radius $r$. A $(w, t)$-approximation for
\onecenter is a ball $B$ of radius $w \cdot r$ that contains at least $n - t$ input points.
\end{definition}

\subsubsection{\onecenter Algorithm in High Dimension}

Once again, we will first show how to solve the \onecenter problem in high dimensions:
\begin{lemma} \label{lem:1-center-pure}
For every $\eps > 0$ and $0 < \alpha, \beta \leq 1$, there exists an $\eps$-DP algorithm that runs in time $(nd)^{O_{\alpha}(1)} \poly\log(1/r)$ and, with probability $1 - \beta$, outputs an $\left(1 + \alpha, O_{\alpha}\left(\frac{d}{\epsilon}\cdot\log\left(\frac{d}{\beta r}\right)\right)\right)$-approximation for \onecenter.
\end{lemma}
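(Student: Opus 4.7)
The plan is to follow the three-step recipe from Section~\ref{sec:intro}: (i) apply a data-independent Johnson--Lindenstrauss projection to reduce to $d'=O(\alpha^{-2}\log(n/\beta))$ dimensions; (ii) invoke the low-dimensional \densestball algorithm of Theorem~\ref{thm:pure-1-cluster-given-radius} to identify a good ball in the projected space; and (iii) lift this low-dimensional answer back to a high-dimensional centre by iteratively applying Theorem~\ref{thm:pure-1-cluster-given-radius} along complementary subspaces.

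For step (i), I would fix a random JL map $\Pi:\R^d\to\R^{d'}$. By a standard distributional argument, with probability at least $1-\beta/4$ each $\|\Pi(x_i)-\Pi(c^*)\|$ lies in $[(1-\alpha/C)\|x_i-c^*\|,(1+\alpha/C)\|x_i-c^*\|]$ simultaneously for every input point $x_i$ and the (unknown but adversarially fixed) ideal centre $c^*$, where $C$ is a suitable absolute constant. Consequently $\Pi(x_1),\dots,\Pi(x_n)\in\cB(\Pi(c^*),(1+\alpha/C)r)\subseteq\R^{d'}$, and since $\Pi$ is data-independent no privacy is consumed. Step (ii) is then a direct call to Theorem~\ref{thm:pure-1-cluster-given-radius} with privacy $\eps/2$, radius $(1+\alpha/C)r$, approximation parameter $\alpha/C$, and failure probability $\beta/4$, producing a ball $B^{\mathrm{low}}\subseteq\R^{d'}$ of radius at most $(1+\alpha/2)r$ that contains all but $O_\alpha\!\bigl(\tfrac{d'}{\eps}\log\tfrac{1}{\beta r}\bigr)$ of the projected points.

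For step (iii), I would partition $\R^d$ into $L:=\lceil d/d'\rceil$ mutually orthogonal subspaces $V_1,\dots,V_L$ each of dimension at most $d'$, and for each $V_k$ project the data onto $V_k$ (suitably re-centred using the outputs of the previous rounds so that the \onecenter promise on $V_k$ still holds) and rerun Theorem~\ref{thm:pure-1-cluster-given-radius} with privacy $\eps/(2L)$ and failure probability $\beta/(4L)$. Concatenating the $L$ low-dimensional ball centres via the inclusions $V_k\hookrightarrow\R^d$ yields $\hat c\in\R^d$ with $\|\hat c-c^*\|\le\alpha r$, so that $\cB(\hat c,(1+\alpha)r)$ covers every $x_i$ that was not lost in any of the subroutine calls. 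Basic composition (Theorem~\ref{thm:basic-composition}) then makes the whole algorithm $\eps$-DP, a union bound controls the overall failure probability, and a careful accounting of the ``lost'' points will yield the claimed additive error $O_\alpha\!\bigl(\tfrac{d}{\eps}\log\tfrac{d}{\beta r}\bigr)$.

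The main obstacle is the accuracy analysis of step (iii). Naive bookkeeping over $L=\Theta(d/d')$ rounds would multiply the per-round additive error by $L$, giving something like $O_\alpha\!\bigl(\tfrac{d^2}{\eps d'}\log\tfrac{d}{\beta r}\bigr)$, which is too large. The correct argument must exploit the fact that the ``lost'' input points in each round coincide up to a subset of size $O_\alpha\!\bigl(\tfrac{d}{\eps}\log\tfrac{d}{\beta r}\bigr)$ across all rounds, so that the union of lost points is absorbed into a single global outlier set rather than summed over $L$. Equivalently, once step (ii) has identified the good cluster, the subsequent step-(iii) calls merely refine the coordinates of its centre on a nearly fixed subpopulation; this is what produces the extra $\log d$ (rather than $\log L$ or $\log n$) factor inside the final logarithm and requires a careful coupling between the random projections used in successive rounds.
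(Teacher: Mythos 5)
Your overall idea of splitting the ambient $\R^d$ into roughly $\log(nd)$-dimensional blocks and running the low-dimensional \densestball algorithm (Theorem~\ref{thm:pure-1-cluster-given-radius}) once per block with privacy budget $\eps/L$ is exactly the structure of the paper's Algorithm~\ref{alg:1-center}. (The paper makes the randomness explicit by first applying a uniformly random rotation $R$ and then taking coordinate blocks, which is essential so that each projected radius concentrates around $\sqrt{d^j/d}\cdot r$; your ``mutually orthogonal subspaces'' need to be random in this sense.) Your extra step~(ii) is not needed: in \onecenter \emph{all} input points are promised to lie in a ball of radius $r$, so there is no cluster to identify, and the paper goes straight from the rotation to the per-block calls with no adaptive ``re-centring'' between rounds.

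The genuine gap is in the accuracy analysis of step~(iii), and your proposed fix does not work. You suggest that the sets of points lost in different blocks coincide up to a set of size $O_\alpha(\tfrac{d}{\eps}\log\tfrac{d}{\beta r})$, but there is no reason for this: the blocks are analysed independently and the exponential mechanism in each block may discard a completely different subset. The paper's argument is different in kind. Because $R$ is a rotation, one has the exact Pythagorean identity
\begin{align*}
\|x_i-c\|^2 \;=\; \sum_{j=1}^{b}\|x_i^{j}-c^{j}\|^2,
\end{align*}
and two facts hold (on the good event): if $x_i$ is \emph{not} lost in block $j$ then $\|x_i^{j}-c^{j}\|\le (1+0.1\alpha)r^{j}$, and even if it \emph{is} lost then $\|x_i^{j}-c^{j}\|\le 3.1\,r^{j}$. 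Since the $(r^j)^2$ sum to about $r^2$ and are within a constant factor of each other, a point $x_i$ that ends up with $\|x_i-c\|>(1+\alpha)r$ must be lost in at least an $\Omega(\alpha)$-fraction of the $b$ blocks. Summing this lower bound over all globally far points and comparing with $\sum_j |\bX^j_{\mathrm{far}}|\le b\cdot t^{\max}$ yields $|\bX_{\mathrm{far}}|\le O(1/\alpha)\cdot t^{\max}$, i.e.\ the global loss is bounded by $O(1/\alpha)$ times the \emph{maximum} per-block loss, not the sum. That averaging (Markov-type) step is the missing idea: without it one really is stuck at the $L\cdot t^{\max}$ bound, and the coupling you gesture at is not the route the paper takes nor one that appears to close the gap.

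Finally, it is worth noting why the per-block loss $t^{\max}$ already equals the target bound $O_\alpha(\tfrac{d}{\eps}\log\tfrac{d}{\beta r})$: each block has dimension $d^j\approx d/b$ and privacy budget $\eps/b$, and these two factors of $b$ cancel in the Theorem~\ref{thm:pure-1-cluster-given-radius} guarantee $O_\alpha(\tfrac{d^j}{\eps/b}\log(\cdot))$. This cancellation is what makes the block decomposition genuinely free, and it is correctly reflected in your choice of $\eps/(2L)$ per round, but it only pays off once one has the averaging argument above.
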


\begin{lemma} \label{lem:1-center-apx}
For every $0 < \eps \leq O(1)$ and $0 < \alpha, \beta, \delta \leq 1$, there exists an $(\eps, \delta)$-DP algorithm that runs in time $(nd)^{O_{\alpha}(1)} \poly\log(1/r)$ and, w.p. $1 - \beta$, outputs an $\left(1 + \alpha, O_{\alpha} \left(\frac{\sqrt{d}}{\eps}\cdot\poly\log\left(\frac{n d}{\eps \delta \beta}\right)\right)\right)$-approximation for \onecenter.
\end{lemma}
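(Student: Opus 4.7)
The plan is to follow the three-step recipe articulated in Section~\ref{sec:intro}: project, solve in low dimensions, and lift. First I would draw a Johnson--Lindenstrauss matrix $\Pi\colon\R^d\to\R^{d'}$ with $d' = O_\alpha(\log(n/\beta))$, rescaled so that with probability at least $1-\beta/4$ it is a $(1\pm\alpha/10)$-isometry on the $n+1$ points $\{x_1,\dots,x_n,c^\ast\}$, where $c^\ast$ is the (unknown) center of the promised enclosing ball of radius $r$. Under this good event every $\Pi x_i$ lies in $\cB(\Pi c^\ast,(1+\alpha/10)r)$, so the projected dataset is itself a \onecenter instance with only slightly enlarged radius.

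Second, I would feed $\Pi x_1,\dots,\Pi x_n$ into the approximate-DP algorithm of Theorem~\ref{thm:apx-1-cluster-given-radius} with radius $(1+\alpha/10)r$, approximation $1+\alpha/10$, failure probability $\beta/4$, and privacy budget $(\eps/3,\delta/3)$. Since $d' = O_\alpha(\log(n/\beta))$, this returns $\tilde c\in\R^{d'}$ such that the inlier set $S := \{i : \|\Pi x_i - \tilde c\| \le (1+\alpha/5)r\}$ has $|S| \ge n - t'$ with $t' = O_\alpha(\poly\log(nd/\eps\delta\beta)/\eps)$. Kirszbraun's theorem then supplies the high-dimensional target: on the JL event the map $\Pi x_i\mapsto x_i$ is $L$-Lipschitz for $L := 1/(1-\alpha/10)$ and therefore extends to some $\psi\colon\R^{d'}\to\R^d$ of the same constant, so $\hat c^\ast := \psi(\tilde c)$ satisfies $\|\hat c^\ast - x_i\| \le L\|\tilde c - \Pi x_i\| \le (1+\alpha/2)r$ for every $i\in S$. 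To find such a high-dimensional center privately I would run the convex, $1$-Lipschitz ERM
\[
\min_{c\in\R^d}\ \sum_{i=1}^{n}\ell_i(c),\qquad \ell_i(c) := \max\bigl(\|c - x_i\| - L\|\tilde c - \Pi x_i\|,\,0\bigr),
\]
which vanishes at $\hat c^\ast$, using the approximate-DP ERM of Theorem~\ref{thm:erm-apx} with budget $(\eps/3,\delta/3)$.

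A naive Markov bound applied to the ERM output $\hat c$ caps the inliers with $\|\hat c - x_i\|>(1+\alpha)r$ by $O_\alpha(\sqrt{d}/(r\eps)\cdot\poly\log(nd/\eps\delta\beta))$, so the hard part will be removing the spurious $1/r$ factor to reach the lemma's $r$-free additive error. I would address this by an initial rescaling: first compute a crude center $\hat c_0\in\R^d$ via the Gaussian-mechanism private mean with budget $(\eps/3,\delta/3)$, which gives $\|\hat c_0 - c^\ast\| \le r + O(\sqrt{d}/(n\eps)\cdot\poly\log(1/\delta))$; if the second summand exceeds $r$ then the target additive error $O_\alpha(\sqrt{d}/\eps\cdot\poly\log)$ already dominates $n$ and any output is valid, so we may assume $\|\hat c_0 - c^\ast\| = O(r)$. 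Reparameterizing $c = \hat c_0 + rz$ and $y_i := (x_i-\hat c_0)/r \in \cB(0,O(1))$ turns the ERM into a unit-scale instance with loss $\tilde\ell_i(z) := \max(\|z-y_i\| - (1+\alpha/3),\,0)$, again $1$-Lipschitz; here both the hinge threshold and the Lipschitz constant are $O(1)$, so Markov converts the $\sqrt{d}/\eps$ empirical-loss bound into at most $O_\alpha(\sqrt{d}/\eps\cdot\poly\log(nd/\eps\delta\beta))$ violating inliers with no residual $r$-dependence. Combined with $t'$ this matches the lemma's additive error; basic composition of the three $(\eps/3,\delta/3)$-DP sub-procedures yields $(\eps,\delta)$-DP, and the runtime $(nd)^{O_\alpha(1)}\poly\log(1/r)$ is dominated by the low-dimensional \densestball call in step two.
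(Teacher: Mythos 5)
Your route — project once via JL, run the low-dimensional \densestball algorithm, invoke Kirszbraun to argue that a good high-dimensional center $\hat c^\ast$ \emph{exists}, and then chase it down with a private hinge-loss ERM — is genuinely different from the paper's. The paper does not use Kirszbraun or ERM inside Lemma~\ref{lem:1-center-apx}; instead it applies a random rotation $R$, cuts the coordinates into $b \approx d / O_\alpha(\log(nd/\beta))$ blocks, runs the low-dimensional \densestball algorithm independently on each block, and concatenates the resulting per-block centers before un-rotating. Because each block's additive error scales like $O_\alpha\bigl(\frac{d/b}{\eps'}\log(\cdot)\bigr)$ with no $1/r$ dependence, and a point that lies outside the final ball must be far on a constant fraction of blocks (a Pythagorean decomposition of $\|x_i - c\|^2$), the $1/r$ factor that plagues your ERM-based analysis never appears.

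The gap in your proposal is in the fix for that $1/r$ factor. You compute a crude center $\hat c_0$ by a private Gaussian mean, incurring noise $\eta = O\bigl(\sqrt{d}/(n\eps)\cdot\poly\log(1/\delta)\bigr)$, reparameterize $z = (c - \hat c_0)/r$, and claim that if $\eta > r$ then the lemma's additive error $O_\alpha(\sqrt{d}/\eps \cdot \poly\log(nd/\eps\delta\beta))$ already exceeds $n$ so that any output is trivially valid. That implication is false: $\eta > r$ only gives $\sqrt{d}\cdot\poly\log/\eps > nr$, and since $r$ can be arbitrarily small (and \onecenter allows this — the runtime is allowed to depend on $\poly\log(1/r)$, so $r = 1/n^{10}$ is a legitimate input), this does not force $\sqrt{d}\cdot\poly\log/\eps > n$. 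For instance with $d = 1$, $\eps = \Theta(1)$, $\delta,\beta = \Theta(1)$, $n$ large and $r = 1/n^{10}$: the mean noise $\eta \approx 1/n$ dwarfs $r$, the rescaled target $z^\ast = (\hat c^\ast - \hat c_0)/r$ has norm on the order of $\eta/r = n^9$ (so it is not in any $O(1)$ ball and the ERM guarantee from Theorem~\ref{thm:erm-apx} does not apply to it), yet the claimed additive error is only $\poly\log n \ll n$, so you cannot declare victory by fiat. Additionally (and more minor), the rescaled loss $\tilde\ell_i(z) = \max(\|z-y_i\| - (1+\alpha/3),0)$ should keep the per-point thresholds $L\|\tilde c - \Pi x_i\|/r$ rather than hard-coding $1+\alpha/3$: with the hard-coded constant, inliers satisfy $\|z^\ast - y_i\| \le 1+\alpha/2 > 1+\alpha/3$ so the optimum loss is $\Theta(\alpha n)$ rather than zero, and outliers contribute $\Theta(1/r)$ each, which reintroduces the factor you were trying to remove. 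These are fixable by using the per-point thresholds, but the crude-center regime above is a real obstruction to the rescaling step.
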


A natural way to solve the \onecenter problem in high dimensions is to use differentially private ERM similarly to the case of $(k, p)$-Clustering, but with a \emph{hinge loss} such as $\ell(c, x) = \frac{1}{r}\max\{0, r - \|c - x\|\}$. In other words, the loss is zero if $c$ is within the ball of radius $r$ aroun the center $c$, whereas the loss is at least one when it is say at a distance $2r$ from $c$. The main issue with this approach is that the Lipchitz constant of this function is as large as $1/r$. However, since the expected error in the loss has to grow linearly with the Lipchitz constant~\cite{BassilyST14}, this will give us an additive error that is linear in $1/r$, which is undesirable.

Due to this obstacle, we will instead take a different path: use a dimensionality reduction argument again! More specifically, we randomly rotate each vector and think of blocks each of roughly $O(\log (nd))$ coordinates as a single vector. We then run our low-dimensional \densestball algorithm from Section~\ref{sec:densest-ball-low-dim} on each block. Combining these solutions together immediately gives us the desired solution in the high-dimensional space. The full pseudo-code of the procedure is given below; here $b$ is the parameter of the algorithm, \textsc{DensestBallLowDimension} is the algorithm for solving \densestball in low dimensions, and we use the notation $y|_{i, \dots, j}$ to denote a vector resulting from the restriction of $y$ to the coordinates $i, \dots, j$.

\begin{algorithm}[h!]
\caption{1-Center Algorithm.}\label{alg:1-center}
\begin{algorithmic}[1]
\Procedure{1-Center$_{b}(x_1, \dots, x_n; r, \alpha)$}{}
\State $R \leftarrow$ Random $(d \times d)$ rotation matrix
\For{$i \in \{1, \dots, n\}$}
\For{$j \in \{1, \dots, b\}$}
\State $x_i^j \leftarrow (Rx_i)|_{1 + \lfloor \frac{(j - 1) d}{b}\rfloor, \dots, \lfloor \frac{j d}{b}\rfloor}$
\EndFor
\EndFor
\For{$j \in \{1, \dots, b\}$}
\State $d^j \leftarrow \lfloor \frac{j d}{b}\rfloor - \lfloor \frac{(j - 1) d}{b}\rfloor$
\State $r^j \leftarrow (1 + 0.1\alpha) \cdot \sqrt{d^j/d} \cdot r$
\State $c^j \leftarrow \textsc{DensestBallLowDimension}(x_1^j, \dots, x_n^j; r^j, 0.1\alpha)$. \label{step:densest-ball-low-dim-app}
\EndFor
\State $\tc \leftarrow$ concatenation of $c^1, \dots, c^t$
\State \Return $R^{-1}(\tc)$
\EndProcedure
\end{algorithmic}
\end{algorithm}

To prove the correctness of our algorithm, we will need the Johnson--Lindenstrauss (JL) lemma~\cite{JL}. The version we use below follows from the proof in~\cite{DasguptaG03}. 

\begin{theorem}[\cite{DasguptaG03}] \label{thm:jl}
Let $v$ be any $d$-dimensional vector. Let $S$ denote a random $d$-dimensional subspace of $\R^d$ and let $\Pi_S$ denote the projection from $\R^d$ onto $S$. Then, for any $\zeta \in (0, 1)$ we have
\begin{align*}
\Pr\left[\|v\|_2 \approx_{1 + \zeta} \sqrt{d/d'} \cdot \|\Pi v\|_2 \right] \geq 1 - 2\exp\left(-\frac{d'\zeta^2}{100}\right).
\end{align*}
\end{theorem}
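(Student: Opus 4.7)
The plan is to reduce the statement to a concentration inequality for a ratio of chi-squared random variables via the rotational symmetry of the Gaussian. First I would observe that the distribution of $\|\Pi_S v\|_2 / \|v\|_2$ depends only on the pair $(v, S)$ through rotation, and that a uniformly random $d'$-dimensional subspace $S$ applied to a fixed vector $v$ has the same law as the fixed coordinate subspace $\mathrm{span}(e_1,\dots,e_{d'})$ applied to a uniformly random vector of length $\|v\|$. So, WLOG $\|v\|=1$ and we need to control $\sum_{i=1}^{d'} u_i^2$ where $u$ is uniform on the unit sphere $S^{d-1}$.

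Second, I would use the standard Gaussian representation $u = g/\|g\|$ with $g = (g_1,\dots,g_d) \sim N(0,I_d)$. Then
\[
\frac{\|\Pi_S v\|_2^2}{\|v\|_2^2} \;=\; \frac{A}{A+B}, \qquad A := \sum_{i=1}^{d'} g_i^2, \quad B := \sum_{i=d'+1}^{d} g_i^2,
\]
with $\E[A] = d'$ and $\E[A+B] = d$. The target event $\|v\|_2 \approx_{1+\zeta} \sqrt{d/d'}\,\|\Pi_S v\|_2$ becomes $A/(A+B) \approx_{(1+\zeta)^2} d'/d$, and it suffices to show $A \approx_{1+\zeta/3} d'$ and $A+B \approx_{1+\zeta/3} d$ (up to constants in the exponent), since then a short algebraic manipulation combines the two into the desired bound.

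Third, I would invoke standard chi-squared concentration (Laurent--Massart), which for $X \sim \chi^2_m$ yields $\Pr[|X-m| \geq tm] \leq 2\exp(-m t^2 / 8)$ for $t \in (0,1)$. Applying this to $A \sim \chi^2_{d'}$ and $A+B \sim \chi^2_d$ with $t = \zeta/3$ gives failure probabilities at most $2\exp(-d'\zeta^2/72)$ and $2\exp(-d\zeta^2/72)$ respectively. The first dominates because $d' \leq d$, so a union bound delivers the $2\exp(-d'\zeta^2/100)$ bound after absorbing constants.

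The main obstacles are essentially bookkeeping rather than conceptual: carefully converting the two-sided multiplicative closeness of $A$ and $A+B$ into multiplicative closeness of the ratio $A/(A+B)$ without losing more than a constant factor in the exponent, and ensuring that the $\zeta^2/100$ constant absorbs the various losses from rescaling $\zeta \to \zeta/3$, squaring $(1+\zeta/3)^2$, and the $1/8$ from the chi-squared tail bound. The only ingredient that is not completely elementary is the Laurent--Massart inequality, which I would take as a black box.
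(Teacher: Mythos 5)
Your proposal is correct, but note that the paper does not give a proof of Theorem~\ref{thm:jl}; it is cited directly from Dasgupta and Gupta~\cite{DasguptaG03}. Both your argument and theirs begin with the same reduction: by rotational invariance, identify $\|\Pi_S v\|_2^2/\|v\|_2^2$ with $A/(A+B)$ where $A=\sum_{i\le d'}g_i^2$ and $B=\sum_{i=d'+1}^{d}g_i^2$ for $g\sim N(0,I_d)$. The difference is in how the ratio is handled. You control the numerator and denominator separately via black-box chi-squared concentration and a union bound, after rescaling $\zeta\to\zeta/3$. Dasgupta and Gupta instead analyze the ratio $A/(A+B)$ \emph{directly} as a single random variable (a scaled Beta), applying a Chernoff-type moment-generating-function bound that exploits the independence of $A$ and $B$; this yields a one-shot tail estimate with no union-bound loss and gives the cleanest possible exponent. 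Your modular version trades a constant factor in the exponent (the $\zeta/3$ rescaling, the squaring, and the union bound over the two deviation events) for simplicity and reusability of the Laurent--Massart lemma, and the losses are comfortably absorbed by the $100$ in the denominator: when $d'\zeta^2$ is small the stated bound exceeds $1$ and is vacuous, and when it is large the ratio between $e^{-d'\zeta^2/36}$-type terms and $e^{-d'\zeta^2/100}$ easily dominates the extra factor from the union bound. So the proof is sound; it is simply a more coarse-grained decomposition than the one in the cited source.
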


We are now ready to prove our results for \onecenter, starting with the pure-DP algorithm (Lemma~\ref{lem:1-center-pure}).

\begin{proof}[Proof of Lemma~\ref{lem:1-center-pure}]
We simply run Algorithm~\ref{alg:1-center} with $b = \max\left\{1, \lfloor \frac{d}{10^8\log(nd/\beta)/\alpha^2} \rfloor\right\}$ and with $\textsc{DensestBallLowDimension}$ on Line~\ref{step:densest-ball-low-dim-app} being the algorithm $\cA$ from Theorem~\ref{thm:pure-1-cluster-given-radius} that is $(\eps/b)$-DP, has approximation ratio $w = 1 + 0.1\alpha$ and failure probability $\frac{\beta}{2d}$. Since algorithm $\cA$ is $(\eps/b)$-DP and we apply the algorithm $b$ times, the trivial composition implies that the entire algorithm is $\eps$-DP. Furthermore, it is obvious that every step except the application of $\cA$ runs in polynomial time. From Theorem~\ref{thm:pure-1-cluster-given-radius}, the $j$th application of $\cA$ takes time
\begin{align*}
(1 + 1/\alpha)^{O_{\alpha}(d/b)}\poly\log(1/r') &= (1 + 1/\alpha)^{O_{\alpha}(\log(nd\beta))}\poly\log(\sqrt{d/d^j} \cdot r) \\ 
&= (nd)^{O_{\alpha}(1)} \poly\log(1/r).
\end{align*}
As a result, the entire algorithm runs in time $(nd)^{O_{\alpha}(1)} \poly\log(1/r)$ as desired.

The remainder of this proof is dedicated to proving the accuracy of the algorithm. To do this, let $c_{\opt}$ denote the solution, i.e., the center such that $x_1, \dots, x_n \in \cB(c_{\opt}, r)$. Moreover, for every $j \in \{1, \dots, b\}$, let $c^j_{\opt}$ be $R(c_{\opt})$ restricted to the coordinates $1 + \lfloor \frac{(j - 1) d}{b}\rfloor, \dots, \lfloor \frac{j d}{b}\rfloor$.

Notice that $d^j \geq \frac{d}{10^6\log(nd\beta)/\alpha^2}$ for every $j \in \{1, \dots, b\}$. As a result, by applying Theorem~\ref{thm:jl} and the union bound, the following bounds hold simultaneously for all $j \in \{1, \dots, b\}$ and $i, i' \in \{1, \dots, n\}$ with probability $1 - \beta / 2$:
\begin{align}
\|x_i^j - c_{\opt}^j\| &\leq (1 + 0.1\alpha) \cdot \sqrt{\frac{d^j}{d}} \cdot \|x_i - c_{\opt}\| \leq r^j, \label{eq:projected-center-distance} \\
\|x_i^j - x_{i'}^j\| &\leq (1 + 0.1\alpha) \cdot \sqrt{\frac{d^j}{d}} \cdot \|x_i - x_{i'}\| \leq 2 r^j, \label{eq:projected-points-distance}
\end{align}
where the last inequality follows from the triangle inequality (through $c_{\opt}$).

Observe that, when~\eqref{eq:projected-center-distance} holds, $x^j_1, \dots, x^j_n \in \cB(c_{\opt}^j, r^j)$. As a result, the accuracy guarantee from Theorem~\ref{thm:pure-1-cluster-given-radius} and the union bound implies that the following holds for all $j \in \{1, \dots, b\}$, with probability $1 - \beta / 2$, we have
\begin{align} \label{eq:projected-ball-size}
|\{x_1^j, \dots, x_n^j\} \setminus \cB(c, (1 + 0.1\alpha) r^j)| \leq t^j,
\end{align}
where $t^j = O_{\alpha}\left(\frac{d^j}{(\epsilon / b)}\log\left(\frac{1}{(\beta/2b) r^j}\right)\right) = O_{\alpha}\left(\frac{d}{\eps} \cdot \log\left(\frac{d}{\beta r}\right)\right)$. For convenience, let $t^{\max} = \max_{j \in \{1, \dots, b\}} t^j = O_{\alpha}\left(\frac{d}{\epsilon}\cdot\log\left(\frac{d}{\beta r}\right)\right)$.

We may assume that $n > t^{\max}$ as otherwise the desired accuracy guarantee holds trivially. When this is the case, we have that $\{x_1^j, \dots, x_n^j\} \cap \cB(c, (1 + 0.1\alpha)r^j)$ is not empty. From this and from~(\ref{eq:projected-points-distance}), we have
\begin{align} \label{eq:any-to-projected-center}
\|x_i^j - c^j\| \leq (3 + 0.1\alpha) r^j \leq 3.1 r^j,
\end{align}
for all $j \in \{1, \dots, b\}$ and $i \in \{1, \dots, n\}$.

To summarize, we have so far shown that~\eqref{eq:projected-center-distance},~\eqref{eq:projected-points-distance},~\eqref{eq:projected-ball-size}, and~\eqref{eq:any-to-projected-center} hold simultaneously for all $j \in \{1, \dots, b\}$ and $i, i' \in \{1, \dots, n\}$ with probability at least $1 - \beta$. We will henceforth assume that this ``good'' event occurs and show that we have the desired additive error bound, i.e., $|\{x_1, \dots, x_n\} \setminus \cB(c, (1 + \alpha)r)| \leq O_{\alpha}\left(\frac{d}{\epsilon}\cdot\log\left(\frac{d}{\beta r}\right)\right)$. 

To prove such a bound, let $\bX_{\text{far}} = \{x_1, \dots, x_n\} \setminus \cB(c, (1 + \alpha)r)$ and, for every $j \in \{1, \dots, b\}$, let $\bX_{\text{far}}^j = \{x_1^j, \dots, x_n^j\} \setminus \cB(c, (1 + 0.1\alpha)r^j)$. Notice that, for every input point $x_i$, we have
\begin{align}
\|x_i - c\|^2
&= \|Rx_i - \tc\|^2 \nonumber \\
&= \sum_{j \in \{1, \dots, b\}} \|x^j_i - c^j\|^2 \nonumber \\
&= \sum_{j \in \{1, \dots, b\} \atop x_i \notin \bX^j_{\text{far}}} \|x^j_i - c^j\|^2 + \sum_{j \in \{1, \dots, b\} \atop x_i \in \bX^j_{\text{far}}} \|x^j_i - c^j\|^2 \nonumber \\
&\overset{\eqref{eq:any-to-projected-center}}{\leq} \sum_{j \in \{1, \dots, b\} \atop x_i \notin \bX^j_{\text{far}}} (1 + 0.1\alpha)^2 (r^j)^2 + \sum_{j \in \{1, \dots, b\} \atop x_i \in \bX^j_{\text{far}}} (3.1r^j)^2 \nonumber \\
&\leq (1 + 0.1\alpha)^4 r^2 + \sum_{j \in \{1, \dots, b\} \atop x_i \in \bX^j_{\text{far}}} (3.1r^j)^2, \label{eq:distance-bound-intermediate}
\end{align}
where the last inequality follows from the identity $(r^1)^2 + \cdots + (r^b)^2 = (1 + 0.1\alpha)^2r^2$. Notice also that, since $d^j$ is within a factor of 2 of each other, this implies that $r^j \leq (4(1 + 0.1\alpha)^2r^2) / b \leq \frac{16r^2}{b}$ for all $j \in \{1, \dots, b\}$. Plugging this back to~\eqref{eq:distance-bound-intermediate}, we have
\begin{align*}
\|x_i - c\|^2 &\leq (1 + 0.1\alpha)^4 r^2 + \frac{160 r^2}{b} \cdot |\{j \in \{1, \dots, b\} \mid x_i \in \bX^j_{\text{far}}\}|
\end{align*}
Recall that $x_i \in \bX_{\text{far}}$ iff $\|x_i - \tc\| \geq (1 + \alpha)r$. Hence, for such $x_i$, we must have
\begin{align*}
|\{j \in \{1, \dots, b\} \mid x_i \in \bX^j_{\text{far}}\}|
&\geq \frac{b}{160r^2} \cdot \left( (1 + \alpha)^2 r^2 - (1 + 0.1\alpha)^4 r^2 \right) \\
&\geq \frac{b \alpha}{160}.
\end{align*}
Summing the above inequality over all $x_i \in \bX_{\text{far}}$, we have
\begin{align*}
\sum_{j \in \{1, \dots, b\}} |\bX^j_{\text{far}}| &\geq \frac{b\alpha}{160} \cdot |\bX_{\text{far}}|.
\end{align*}
Recall from~\eqref{eq:projected-ball-size} that $|\bX^j_{\text{far}}| \leq t^{\max}$. Together with the above, we have
\begin{align*}
|\bX_{\text{far}}| \leq \frac{160}{b\alpha} \cdot b \cdot t^{\max} = O_{\alpha}\left(\frac{d}{\epsilon}\cdot\log\left(\frac{d}{\beta r}\right)\right),
\end{align*}
which concludes our proof.
\end{proof}

The proof of Lemma~\ref{lem:1-center-apx} is similar, except we use the approximate-DP algorithm for \densestball (from Theorem~\ref{thm:apx-1-cluster-given-radius}) as well as advanced composition (Theorem~\ref{thm:advance-composition}).

\begin{proof}[Proof of Lemma~\ref{lem:1-center-apx}]
We simply run Algorithm~\ref{alg:1-center} with $b = \max\left\{1, \lfloor \frac{d}{10^6\log(nd/\beta)/\alpha^2} \rfloor\right\}$, and with $\cA$ being the algorithm from Theorem~\ref{thm:apx-1-cluster-given-radius} that is $(\eps', \delta')$-DP with $\eps' = \min\left\{1, \frac{\eps}{100\sqrt{b \ln(2/\delta)}}\right\}$ and $\delta' = 0.5\delta/b$, has approximation ratio $w = 1 + 0.1\alpha$ and failure probability $\frac{\beta}{2d}$. Since algorithm $\cA$ is $(\eps', \delta')$-DP and we apply the algorithm $b$ times, the advanced composition theorem (Theorem~\ref{thm:advance-composition}) implies\footnote{Here we use that fact that, since $\eps' \leq 1$, we have $e^{\eps'} - 1 < 10\eps'$.} that the entire algorithm is $(\eps, \delta)$-DP. The running time analysis is exactly the same as that of Lemma~\ref{lem:1-center-pure}.

Finally, the proof of the additive error bound is almost identical to that of Lemma~\ref{lem:1-center-pure}, except that here, using Theorem~\ref{thm:apx-1-cluster-given-radius} instead of Theorem~\ref{thm:pure-1-cluster-given-radius}, we have 
\begin{align*}
t^{j}
&= O_{\alpha}\left(\frac{d^j}{\eps'}\log\left(\frac{n}{\eps' \delta' \cdot (0.5\beta/b)}\right)\right) \\
&= O_{\alpha}\left(\frac{(d/b)}{\eps / \sqrt{b \log(1/\delta)}}\log\left(\frac{n}{\min\{(\eps / \sqrt{b \log(1/\delta)}), 1\} \cdot(\delta/b) \cdot(0.5\beta/b)}\right)\right) \\
&\leq O_{\alpha}\left(\frac{d}{\sqrt{b}} \cdot \frac{\sqrt{\log(1/\delta)}}{\eps} \cdot\log\left(\frac{n d}{\eps \delta \beta}\right)\right) \\
&= O_{\alpha}\left(\sqrt{d \log(n d / \beta)} \cdot \frac{\sqrt{\log(1/\delta)}}{\eps} \cdot\log\left(\frac{n d}{\eps \delta \beta}\right)\right) \\
&= O_{\alpha} \left(\frac{\sqrt{d}}{\eps}\cdot\poly\log\left(\frac{n d}{\eps \delta \beta}\right)\right),
\end{align*}
which results in a similar bound on the additive error for the overall algorithm.
\end{proof}

\subsubsection{From \onecenter to \densestball via Dimensionality Reduction}
\label{sec:1-center-to-densest-ball}

We are now ready to prove the main theorems regarding \densestball (Theorems~\ref{thm:densest-ball-pure} and~\ref{thm:densest-ball-apx}).

\begin{theorem} \label{thm:densest-ball-pure}
For every $\eps > 0$ and $0 < \alpha, \beta \leq 1$, there exists an $\eps$-DP algorithm that runs in $(nd)^{O_{\alpha}(1)} \poly\log(1/r)$ time and, with probability $1 - \beta$, outputs an $\left(1 + \alpha, O_{\alpha}\left(\frac{d}{\epsilon}\cdot \log\left(\frac{d}{\beta r}\right)\right)\right)$-approximation for \densestball.
\end{theorem}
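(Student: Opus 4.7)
}
The plan is to follow the three-step recipe from Section~\ref{sec:intro}, using Theorem~\ref{thm:pure-1-cluster-given-radius} (low-dimensional \densestball) and Lemma~\ref{lem:1-center-pure} (\onecenter in high dimension) as black boxes. Pick $d' = O_{\alpha}(\log(n/\beta))$ large enough that a Gaussian JL projection $\Pi : \R^d \to \R^{d'}$ preserves all $\binom{n}{2}$ pairwise distances of $\{x_1,\dots,x_n\}$ within a factor $(1+\alpha)$ with probability at least $1-\beta/4$; when $d \leq d'$ we simply invoke Theorem~\ref{thm:pure-1-cluster-given-radius} directly. Otherwise: project the inputs to $\R^{d'}$, run Theorem~\ref{thm:pure-1-cluster-given-radius} on the projected points at radius $(1+\alpha)r$ with privacy budget $\eps/2$ to obtain a ball $\cB(\hat c,(1+\alpha)^2 r) \subseteq \R^{d'}$, let $I := \{i : \Pi x_i \in \cB(\hat c,(1+\alpha)^2 r)\}$, and invoke Lemma~\ref{lem:1-center-pure} on the subset $\{x_i\}_{i \in I}$ at promise radius $(1+\alpha)^3 r$ with privacy budget $\eps/2$. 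Rescaling $\alpha$ by a constant absorbs the $(1+\alpha)^4 \to (1+\alpha)$ slack in the output radius.

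The delicate step is verifying that Lemma~\ref{lem:1-center-pure}'s promise is satisfied, i.e., that $\{x_i\}_{i \in I}$ actually lies inside some ball of radius $(1+\alpha)^3 r$ in $\R^d$. This is where the Kirszbraun theorem enters. Conditional on the JL event, the partial map $f : \{\Pi x_i\}_{i\in I} \to \R^d$ defined by $f(\Pi x_i) := x_i$ is $\tfrac{1}{1-\alpha}$-Lipschitz (by the JL lower bound on $\|\Pi x_i - \Pi x_j\|$), so by Kirszbraun it extends to a map $F : \R^{d'} \to \R^d$ with the same Lipschitz constant. Setting $c := F(\hat c)$ yields $\|c - x_i\| \leq \tfrac{1}{1-\alpha} \cdot (1+\alpha)^2 r = O((1+\alpha)^3 r)$ for every $i \in I$, so $\{x_i\}_{i \in I} \subseteq \cB(c,(1+\alpha)^3 r)$ as needed. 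The extension $F$ is only an analysis tool; the algorithm never computes it.

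For accuracy: if $\cB(c^*,r)$ contains $T$ input points, the JL event places all $T$ projected points in $\cB(\Pi c^*,(1+\alpha)r)$, so step (ii) returns $\hat c$ with $|I| \geq T - t_1$ where $t_1 = O_\alpha(\tfrac{d'}{\eps}\log\tfrac{1}{\beta r})$, and step (iii) returns a ball of radius $(1+\alpha)^4 r$ containing at least $|I| - t_2$ points of $\{x_i\}_{i\in I}$ where $t_2 = O_\alpha(\tfrac{d}{\eps}\log\tfrac{d}{\beta r})$; since $d \geq d'$ in this branch, $t_1 + t_2 = O_\alpha(\tfrac{d}{\eps}\log\tfrac{d}{\beta r})$. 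For privacy, the map $\bX \mapsto \{x_i \in \bX : \Pi x_i \in \cB(\hat c,\cdot)\}$ has neighbor-sensitivity one (adding or removing one point of $\bX$ changes the subset by at most one point), so feeding it into the $(\eps/2)$-DP Lemma~\ref{lem:1-center-pure} remains $(\eps/2)$-DP in $\bX$; composing with the $(\eps/2)$-DP step (ii) via Theorem~\ref{thm:basic-composition} yields $\eps$-DP overall. The runtime is dominated by Lemma~\ref{lem:1-center-pure}, giving $(nd)^{O_\alpha(1)}\poly\log(1/r)$, as step (ii) contributes only $n^{O_\alpha(1)}\poly\log(1/r)$ since $d' = O_\alpha(\log n)$.

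The main obstacle is step (iii): the cluster $\{x_i\}_{i \in I}$ selected in low dimension need not be a subset of the original optimal ball $\cB(c^*,r)$, so one cannot invoke Lemma~\ref{lem:1-center-pure} directly with the original radius $r$. A naive argument that uses JL merely to bound the diameter of $\{x_i\}_{i \in I}$ in $\R^d$ and then applies Jung's theorem loses a $\sqrt{2}$ factor, which would break the $1+\alpha$ target. Kirszbraun is exactly the tool that upgrades the approximate low-dimensional center $\hat c$ into an existence certificate for a high-dimensional center within $(1+\alpha)$ times the low-dimensional radius of every selected point, preserving the tight ratio up to absorbable $\alpha$-slack.
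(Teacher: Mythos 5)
Your proposal matches the paper's proof essentially step for step: random JL projection to $d' = O_\alpha(\log(n d / \beta))$ dimensions, run the low-dimensional pure-DP \densestball (Theorem~\ref{thm:pure-1-cluster-given-radius}) with privacy budget $\eps/2$, select the cluster of points whose projections land in the returned ball, invoke the high-dimensional \onecenter algorithm (Lemma~\ref{lem:1-center-pure}) on that cluster with budget $\eps/2$, use the Kirszbraun extension of the inverse projection map to certify that the selected points indeed lie in a high-dimensional ball of radius $(1+O(\alpha))r$, and close with Basic Composition. The only cosmetic difference is that the paper projects onto a random $d'$-dimensional subspace (so the radius is rescaled by $\sqrt{d'/d}$) while you use a variance-normalized Gaussian matrix; the accuracy and privacy accounting and the role of Kirszbraun as the certificate for the \onecenter promise are identical.
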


\begin{theorem} \label{thm:densest-ball-apx}
For every $0 < \eps \leq O(1)$ and $0 < \delta, \alpha, \beta  \leq 1$, there exists an $(\eps, \delta)$-DP algorithm that runs in $(nd)^{O_{\alpha}(1)} \poly\log(1/r)$ time and, with probability $1 - \beta$, solves the \densestball problem with approximation ratio $1 + \alpha$ and additive error $O_{\alpha} \left(\frac{\sqrt{d}}{\eps}\cdot\poly\log\left(\frac{n d}{\eps \delta \beta}\right)\right)$.
\end{theorem}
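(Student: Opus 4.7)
The plan is to follow the three-step recipe from Section~\ref{sec:intro}: a random JL projection to low dimension, the $(\eps/2,\delta/2)$-DP low-dimensional \densestball of Theorem~\ref{thm:apx-1-cluster-given-radius} to identify a candidate cluster of indices, and the $(\eps/2,\delta/2)$-DP high-dimensional \onecenter of Lemma~\ref{lem:1-center-apx} to compute the final center in the ambient space. Concretely, sample a random linear JL map $\Pi\colon\R^d\to\R^{d'}$ with $d'=\Theta(\alpha^{-2}\log(n/\beta))$, set $y_i:=\sqrt{d/d'}\,\Pi(x_i)$ (zeroing $y_i$ whenever $\|y_i\|$ exceeds a constant, as in Algorithm~\ref{alg:cluster-high-dim}), run the low-dim \densestball at radius $(1+\alpha/20)r$ with accuracy $\alpha/20$ to obtain $c'\in\R^{d'}$, and then invoke the high-dim \onecenter on the filtered set $\{x_i:\|y_i-c'\|\le(1+\alpha/10)r\}$ at radius $(1+\alpha/4)r$ and accuracy $\alpha/2$, outputting the returned ball (whose radius is at most $(1+\alpha)r$ after tuning constants).

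Privacy is straightforward: the low-dim call is $(\eps/2,\delta/2)$-DP by Theorem~\ref{thm:apx-1-cluster-given-radius}, and once any realization of $c'$ is fixed, changing one point of $\bX$ alters the filtered set by at most one element, so the \onecenter call is $(\eps/2,\delta/2)$-DP in $\bX$ by Lemma~\ref{lem:1-center-apx}. Adaptive basic composition (Theorem~\ref{thm:basic-composition}) then gives $(\eps,\delta)$-DP overall. The runtime bound follows since both subroutines run in $(nd)^{O_{\alpha}(1)}\poly\log(1/r)$.

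For accuracy, let $\cB(c^*,r)$ be any ball containing $T$ input points. Applying Theorem~\ref{thm:jl} together with a union bound to the $O(n^2)$ vectors $\{x_i-x_j\}\cup\{x_i-c^*\}$ shows that, with probability $\ge 1-\beta/3$ (the choice of $d'$ ensures this), all these norms are preserved within a $(1\pm\alpha/20)$ factor after scaling by $\sqrt{d/d'}$. In particular, the images of the points in $\cB(c^*,r)$ all lie within $(1+\alpha/20)r$ of $\sqrt{d/d'}\Pi(c^*)$, so by Theorem~\ref{thm:apx-1-cluster-given-radius} the low-dim call returns $c'$ for which $|S|\ge T-t_{\mathrm{low}}$, with $t_{\mathrm{low}}=\tilde{O}_{\alpha}(\poly\log(nd/\eps\delta\beta)/\eps)$.

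The main obstacle, which I would handle via Kirszbraun, is verifying the \onecenter promise on the filtered points. Define $g\colon\Pi(\bX)\to\bX$ by $g(\Pi(x_i)):=x_i$; the JL bound makes $g$ an $L$-Lipschitz map with $L=(1+\alpha/10)\sqrt{d/d'}$, and Kirszbraun's theorem furnishes an $L$-Lipschitz extension $\tilde g\colon\R^{d'}\to\R^d$. Setting $\hat c^{\mathrm{high}}:=\tilde g(c'\sqrt{d'/d})$, for every $i\in S$ we have
\[
\|x_i-\hat c^{\mathrm{high}}\|\le L\cdot\sqrt{d'/d}\cdot\|y_i-c'\|\le(1+\alpha/10)^2\,r\le(1+\alpha/4)r,
\]
which certifies the promise at radius $(1+\alpha/4)r$ even though $\hat c^{\mathrm{high}}$ is never computed by the algorithm. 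Lemma~\ref{lem:1-center-apx} then returns $c$ with $\cB(c,(1+\alpha)r)$ containing at least $|S|-t_{\mathrm{onec}}$ of these points, where $t_{\mathrm{onec}}=O_{\alpha}(\sqrt{d}/\eps\cdot\poly\log(nd/\eps\delta\beta))$. Summing, the output ball contains at least $T-t_{\mathrm{low}}-t_{\mathrm{onec}}=T-O_{\alpha}(\sqrt{d}/\eps\cdot\poly\log(nd/\eps\delta\beta))$ input points, matching the claimed bound.
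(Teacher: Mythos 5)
Your proposal is correct and follows essentially the same route as the paper: project via JL to $d' = O_\alpha(\log(n/\beta))$ dimensions, run the $(\eps/2,\delta/2)$-DP low-dimensional \densestball of Theorem~\ref{thm:apx-1-cluster-given-radius} to identify a candidate cluster, filter the original high-dimensional points whose projections lie near the returned center, and then run the $(\eps/2,\delta/2)$-DP \onecenter of Lemma~\ref{lem:1-center-apx} on the filtered set, with Kirszbraun's theorem (via the $L$-Lipschitz inverse map $\Pi(x_i)\mapsto x_i$ and its extension) used to certify that the filtered set indeed satisfies the \onecenter promise at radius roughly $(1+\alpha/4)r$. This is exactly Algorithm~\ref{alg:densest-ball-high-dim} and the accompanying accuracy argument in the paper. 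The only cosmetic deviation is that you rescale the projected points by $\sqrt{d/d'}$ (and clip as in Algorithm~\ref{alg:cluster-high-dim}) whereas the paper leaves the projected points unscaled and instead rescales the target radius $r'$ by $\sqrt{d'/d}$; these are interchangeable, and in fact the clipping step is unnecessary here since without rescaling the projections already lie in the unit ball, so the paper omits it. (The clipping is genuinely needed only in the $(k,p)$-Clustering algorithm, where the \emph{cost} additive error would otherwise be inflated by a $(d/d')^{p/2}$ factor; for \densestball the additive error is a point count and unaffected by scaling.) Your constant choices, e.g. filtering at radius $(1+\alpha/10)r$ after a $(1+\alpha/20)$-ratio call at radius $(1+\alpha/20)r$, are a hair too tight since $(1+\alpha/20)^2 > 1+\alpha/10$, but this is trivially fixable and does not affect the soundness of the argument.
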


Note here that Theorems~\ref{thm:densest-ball-pure} and~\ref{thm:densest-ball-apx} imply Theorems~\ref{thm:1-cluster-given-radius-intro} in Section~\ref{sec:densest-ball-main-body}.

With the \onecenter algorithm in the previous subsection, the algorithm for \densestball in high dimension follows the same footprint as its counterpart for $(k, p)$-Clustering. The pseudo-code is given below.

\begin{algorithm}[h!]
\caption{\densestball Algorithm (High Dimension).}\label{alg:densest-ball-high-dim}
\begin{algorithmic}[1]
\Procedure{DensestBallHighDimension$_{d'}(x_1, \dots, x_n; r, \alpha)$}{}
\State $S \leftarrow$ Random $d'$-dimensional subspace of $\R^d$
\For{$i \in \{1, \dots, n\}$}
\State $x'_i \leftarrow $ projection of $x_i$ onto $S$
\State $r' \leftarrow (1 + 0.1\alpha) \cdot \sqrt{d'/d} \cdot r$
\EndFor
\State $c' \leftarrow \textsc{DensestBallLowDimension}(x'_1, \dots, x'_n; r', 0.1\alpha)$
\State $\bX_{\text{cluster}} = \{x_i \mid x'_i \in \cB(c', (1 + 0.1\alpha)r')\}$
\State \Return \textsc{1-Center}$(\bX_{\text{cluster}}; (1 + 0.1\alpha)^3r, 0.1\alpha)$ \label{step:1-center}
\EndProcedure
\end{algorithmic}
\end{algorithm}

To prove Theorems~\ref{thm:densest-ball-pure} and~\ref{thm:densest-ball-apx}, we will also need the following well-known theorem. Its use in our proof below has appeared before in similar context of clustering (see, e.g.,~\cite{MakarychevMR19}).

\begin{theorem}[Kirszbraun Theorem~\cite{kirszbraun1934}] \label{thm:kirszbraun}
Suppose that there exists an $L$-Lipchitz map $\psi$ from $X \subseteq \R^d$ to $\R^{d'}$. Then, there exists an $L$-Lipchitz extension\footnote{Recall that $\tpsi$ is an extension of $\psi$ iff $\tpsi(x) = \psi(x)$ for all $x \in X$.} $\tpsi$ of $\psi$ from $\R^d$ to $\R^{d'}$.
\end{theorem}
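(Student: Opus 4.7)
The plan is to prove Kirszbraun's theorem by first reducing the problem of extending $\psi$ to all of $\R^d$ to the problem of extending it by a single point, via a standard application of Zorn's lemma, and then solving the one-point extension problem using a minimax/convex-optimization argument. The target space being Hilbert is essential; the argument will ultimately rest on a Euclidean-geometry identity for convex combinations.

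For the Zorn step I would consider the poset $\mathcal{E}$ of all pairs $(Y, \tilde\psi_Y)$ with $X \subseteq Y \subseteq \R^d$ and $\tilde\psi_Y: Y \to \R^{d'}$ an $L$-Lipschitz map extending $\psi$, ordered by domain inclusion with agreement on the smaller domain. Every chain has an upper bound (take the union of domains and the common extension), so $\mathcal{E}$ has a maximal element $(Y^*, \psi^*)$. Showing $Y^* = \R^d$ reduces to the following one-point extension claim: for any $L$-Lipschitz $\phi: Y \to \R^{d'}$ and any $p \in \R^d$, there exists $q \in \R^{d'}$ with $\|q - \phi(y)\| \leq L \|p - y\|$ for all $y \in Y$, since otherwise $\psi^*$ could be extended to $Y^* \cup \{p\}$, contradicting maximality. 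Equivalently, the intersection $\bigcap_{y \in Y} \bar B(\phi(y), L\|p-y\|)$ must be nonempty. Since each ball is compact and convex, the finite intersection property reduces matters to the case when $Y$ is finite.

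So I would reduce to proving: given points $y_1,\ldots,y_n \in \R^d$ with images $b_i := \phi(y_i) \in \R^{d'}$ satisfying $\|b_i - b_j\| \leq L\|y_i - y_j\|$, and any $p \in \R^d$, the intersection $\bigcap_i \bar B(b_i, L\|p-y_i\|)$ is nonempty. Define $f(q) := \max_i \bigl(\|q - b_i\|^2 - L^2 \|p - y_i\|^2\bigr)$. This function is continuous, convex, and coercive, so it attains a minimum at some $q^*$. If $f(q^*) \leq 0$ we are done, so assume for contradiction $f(q^*) = c > 0$, and let $I$ be the active index set. By first-order optimality for a pointwise max of smooth convex functions, the zero vector lies in the convex hull of the gradients $\{2(q^* - b_i) : i \in I\}$, so there exist weights $\mu_i \geq 0$ with $\sum_{i \in I} \mu_i = 1$ and $q^* = \sum_{i \in I} \mu_i b_i$.

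The hard part is the final calculation, which uses the identity $\sum_i \mu_i \|v_i - \bar v\|^2 = \tfrac{1}{2} \sum_{i,j} \mu_i \mu_j \|v_i - v_j\|^2$ (valid whenever $\bar v = \sum_i \mu_i v_i$ and $\sum_i \mu_i = 1$). Applying this with $\bar v = q^*$ in $\R^{d'}$, using the hypothesis $\|b_i - b_j\| \leq L\|y_i - y_j\|$, and applying the same identity in $\R^d$ with $\bar y := \sum_{i \in I} \mu_i y_i$, one obtains
\[
\sum_{i \in I} \mu_i \|q^* - b_i\|^2 \;=\; \tfrac{1}{2} \sum_{i,j \in I} \mu_i \mu_j \|b_i - b_j\|^2 \;\leq\; \tfrac{L^2}{2} \sum_{i,j \in I} \mu_i \mu_j \|y_i - y_j\|^2 \;=\; L^2 \sum_{i \in I} \mu_i \|y_i - \bar y\|^2.
\]
The right-hand side is at most $L^2 \sum_{i \in I} \mu_i \|y_i - p\|^2$ because the centroid $\bar y$ minimizes $\sum_i \mu_i \|y_i - \cdot\|^2$. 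On the other hand, by definition of the active set, $\|q^* - b_i\|^2 = L^2\|p - y_i\|^2 + c$ for every $i \in I$, so weighting by $\mu_i$ and summing gives $\sum_{i \in I} \mu_i \|q^* - b_i\|^2 = L^2 \sum_{i \in I} \mu_i \|p - y_i\|^2 + c$. Combining these yields $c \leq 0$, contradicting $c > 0$ and completing the proof. The technical heart, and the step I expect will take the most care to write cleanly, is this convex-combination identity argument, which is precisely where Hilbert-space structure enters (the statement fails in general Banach targets).
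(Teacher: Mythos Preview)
The paper does not actually prove this theorem: it is stated as Theorem~\ref{thm:kirszbraun} with a citation to~\cite{kirszbraun1934} and then used as a black box in the proof of Theorem~\ref{thm:densest-ball-pure}. So there is no ``paper's own proof'' to compare against.

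Your proposal is a correct and standard proof of Kirszbraun's theorem. The Zorn's-lemma reduction to a one-point extension, the compactness/finite-intersection reduction to finitely many constraints, and the minimax argument via the subdifferential of $f(q)=\max_i(\|q-b_i\|^2-L^2\|p-y_i\|^2)$ are all sound. The variance identity $\sum_i \mu_i\|v_i-\bar v\|^2=\tfrac12\sum_{i,j}\mu_i\mu_j\|v_i-v_j\|^2$ together with the fact that the centroid minimizes the weighted sum of squared distances gives exactly the contradiction you state. One small clarification worth writing out explicitly: the finite-intersection reduction works because you can fix one ball $\bar B(\phi(y_0),L\|p-y_0\|)$, which is compact in $\R^{d'}$, and intersect the remaining closed sets with it; this ensures compactness is actually available.
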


\begin{proof}[Proof of Theorem~\ref{thm:densest-ball-pure}]
We simply run Algorithm~\ref{alg:densest-ball-high-dim} where $d' = \min\left\{d, \lceil 10^6\log(nd/\beta)/\alpha^2 \rceil\right\}$, \textsc{DensestBallLowDimension} is the algorithm from Theorem~\ref{thm:pure-1-cluster-given-radius} that is $(\eps/2)$-DP, has approximation ratio $w = 1 + 0.1\alpha$ and the failure probability $\frac{\beta}{3}$, and the \onecenter algorithm on Line~\ref{step:1-center} is the algorithm from Lemma~\ref{lem:1-center-pure} that is $(\eps/2)$-DP, has approximation ratio $w = 1 + 0.1\alpha$ and the failure probability $\frac{\beta}{3}$.  Basic composition immediately implies that the entire algorithm is $\eps$-DP. Furthermore, similar to the proof of Lemma~\ref{lem:1-center-pure}, it is also simple to check that the entire algorithm runs in $(nd)^{O_{\alpha}(1)} \poly\log(1/r)$ time as desired.

We will now argue the accuracy of the algorithm. To do this, let $c_{\opt}$ be the solution, i.e., the center such that $|\{x_1, \dots, x_n\} \cap \cB(c_{\opt}, r)|$ is maximized; we let $T = |\{x_1, \dots, x_n\} \cap \cB(c_{\opt}, r)|$ . Moreover, let $c'_{\opt}$ denote the projection of $c_{\opt}$ onto $S$.

By applying Theorem~\ref{thm:jl} and the union bound, the following holds simultaneously for all $j \in \{1, \dots, t\}$ and $i, i' \in \{1, \dots, n\}$ with probability $1 - \beta / 3$:
\begin{align}
\|x_i^j - c'_{\opt}\| &\leq (1 + 0.1\alpha) \cdot \sqrt{\frac{d'}{d}} \cdot \|x_i - c_{\opt}\| \leq r', \label{eq:projected-center-distance-second} \\
\|x'_i - x'_{i'}\| &\approx_{1 + 0.1\alpha} \sqrt{\frac{d'}{d}} \cdot \|x_i - x_{i'}\|. \label{eq:projected-points-distance-lipchitz}
\end{align}

When~\eqref{eq:projected-center-distance-second} holds, $x'_1, \dots, x'_n \in \cB(c'_{\opt}, r')$. As a result, from the accuracy guarantee from Theorem~\ref{thm:pure-1-cluster-given-radius}, with probability $1 - \beta/3$, we have
\begin{align} \label{eq:projected-ball-size-second}
|\bX_{\text{cluster}}| \geq T - O_{\alpha}\left(\frac{d'}{(\epsilon / 2)}\log\left(\frac{1}{(\beta/2) r'}\right)\right) \geq T - O_{\alpha}\left(\frac{d}{\eps} \cdot \log\left(\frac{d}{\beta r}\right)\right).
\end{align}

Now, consider the map $\psi: \{x'_1, \dots, x'_n\} \to \R^d$ where $\psi(x'_i) = x_i$. From~\eqref{eq:projected-points-distance-lipchitz}, this map is $L$-Lipchitz for $L = (1 + 0.1\alpha) \sqrt{\frac{d}{d'}}$. Thus, from the Kirszbraun Theorem (Theorem~\ref{thm:kirszbraun}), there exists an $L$-Lipchitz extension $\tpsi$ of $\psi$. Consider $\tpsi(c')$. By the $L$-Lipchitzness of $\tpsi$, we have 
\begin{align} \label{eq:original-distance-to-center}
\|x_i - \tpsi(c')\| \leq L \cdot \|x'_i - c'\| \leq (1 + 0.1\alpha) \sqrt{\frac{d}{d'}} \cdot (1 + 0.1\alpha) r' = (1 + 0.1\alpha)^3 r.
\end{align}
for all $x_i \in \bX_{\text{cluster}}$.

When~\eqref{eq:original-distance-to-center} holds, the accuracy guarantee of Lemma~\ref{lem:1-center-pure} implies that with probability $1 - \beta / 3$ the output center $c$ from \textsc{1-Center}, satisfies
\begin{align} \label{eq:1-center-guarantee-final}
|\cB(c, (1 + 0.1\alpha)^4 r)| \geq |\bX_{\text{cluster}}| - O_{\alpha}\left(\frac{d}{\epsilon}\cdot\log\left(\frac{d}{\beta r}\right)\right).
\end{align}
Finally, observe that $(1 + 0.1\alpha)^4 \leq (1 + \alpha)$. Hence, by combining~\eqref{eq:projected-ball-size-second} and~\eqref{eq:1-center-guarantee-final}, the algorithm solves the \densestball problem with approximation ratio $1 + \alpha$ and size error $O_{\alpha}\left(\frac{d}{\epsilon}\cdot\log\left(\frac{d}{\beta r}\right)\right)$.
\end{proof}

\begin{proof}[Proof of Theorem~\ref{thm:densest-ball-apx}]
This proof is exactly the same as that of Theorem~\ref{thm:densest-ball-pure}, except that we use $(\eps/2, \delta/2)$-DP algorithms as subroutines (instead of $\eps/2$-DP algorithms as before). The size error bounds from Theorem~\ref{thm:apx-1-cluster-given-radius} and Lemma~\ref{lem:1-center-apx} can then be used in placed of those from Theorem~\ref{thm:pure-1-cluster-given-radius} and Lemma~\ref{lem:1-center-pure}, resulting in the new $O_{\alpha} \left(\frac{\sqrt{d}}{\eps}\cdot\poly\log\left(\frac{n d}{\eps \delta \beta}\right)\right)$ bound.
\end{proof}

\section{From \densestball to \onecluster}\label{sec:one_cluster_from_densest_ball}
\newcommand{\low}{\mathrm{low}}
\newcommand{\high}{\mathrm{high}}
\newcommand{\opti}{\mathrm{opt}}
\newcommand{\approxi}{\mathrm{approx}}

In this section, we prove Theorem~\ref{th:1_cluster_app}. We start by formally defining the \onecluster problem.
\begin{definition}[\onecluster, e.g., \cite{NissimSV16}]\label{def:1_cluster}
Let $n$, $T$ and $t$ be non-negative integers and let $w \geq 1$ be a real number. The input to \onecluster consists of a subset $S$ of $n \geq T$ points in $\mathbb{B}_{\kappa}^d$, the discretized $d$-dimensional unit ball with a minimum discretization step of $\kappa$ per dimension. An algorithm is said to solve the \onecluster problem with multiplicative approximation $w$, additive error $t$ and probability $1-\beta$ if it outputs a center $c$ and a radius $r$ such that, with probability at least $1-\beta$, the ball of radius $r$ centered at $c$ contains at least $T-t$ points in $S$ and $r \le w \cdot r_{\opti}$ where $r_{\opti}$ is the radius of the smallest ball containing at least $T$ points in $S$.

Moreover, we denote by \onecluster$_{r_{\low}, r_{\high}}$ the corresponding promise problem where $r_{\opti}$ is guaranteed to be between $r_{\low}$ and $r_{\high}$ for given $0 < r_{\low} < r_{\high} < 1$.
\end{definition}

Note that for $r_{\low} = \kappa$ and $r_{\high} = 1$ in Definition~\ref{def:1_cluster}, the \onecluster$_{r_{\low}, r_{\high}}$ problem coincides with the \onecluster problem without promise.

The following lemma allows us to use our DP algorithm for \densestball in order to obtain a DP algorithm for \onecluster.

\begin{lemma}[DP Reduction from \onecluster$_{r_{\low}, r_{\high}}$ to \densestball]\label{le:red_pure_densest_ball_1_cluster}
Let $\epsilon, \delta > 0$. If there is an $(\epsilon, \delta)$-DP algorithm for \densestball with approximation ratio $w$, additive error $t(n, d, w, r, \epsilon, \delta, \beta)$ and running time $\tau(n, d, w, r, \epsilon, \delta, \beta)$, then there is an $(O(\epsilon \cdot \log_{w}(r_{\high}/r_{\low})), O(\delta \cdot \log_{w}(r_{\high}/r_{\low})))$-DP algorithm that, with probability at least $1-O(\beta \log_w(r_{\high}/r_{\low}))$ solves \onecluster$_{r_{\low}, r_{\high}}$ with approximation ratio $w^2$, additive error
$$\max_{i=0,1, \dots, \lfloor \log_{w}(r_{\high}/r_{\low}) \rfloor} t(n, d, w, r/w^i, \epsilon, \delta, \beta)+O\bigg(\frac{\log_w(r_{\high}/r_{\low}) \log(1/\beta)}{\epsilon}\bigg)$$
and running time
$$\max_{i=0,1, \dots, \lfloor \log_{w}(r_{\high}/r_{\low}) \rfloor} \tau(n, d, w, r/w^i, \epsilon, \delta, \beta) \cdot O(\log_{w}(r_{\high}/r_{\low})) +O(\log(1/\epsilon)).$$
\end{lemma}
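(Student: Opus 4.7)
The plan is to do a linear search over a geometrically-spaced sequence of radii covering $[r_{\low}, r_{\high}]$, invoking the DP \densestball subroutine at each scale, and then privately selecting the smallest radius at which the returned ball is still dense enough to be a valid \onecluster solution. Concretely, set $K = \lfloor \log_w(r_{\high}/r_{\low}) \rfloor$ and let $r_i = r_{\low} \cdot w^i$ for $i = 0, 1, \dots, K$. For each $i$ I run the given $(\epsilon, \delta)$-DP \densestball algorithm with radius parameter $r_i$, obtaining a ball $B_i$ of radius $w \cdot r_i$. Then for each $i$, I compute a noisy count $\tilde{c}_i = |B_i \cap S| + \DLap(1/\epsilon)$.

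Selection proceeds as follows: set $t_{\max} := \max_{0 \le i \le K} t(n, d, w, r_i, \epsilon, \delta, \beta)$ and a threshold $\tau := T - t_{\max} - \Theta\!\left(\frac{\log((K+1)/\beta)}{\epsilon}\right)$, and output the \emph{smallest} index $i^{**}$ for which $\tilde{c}_{i^{**}} \ge \tau$, together with the pair $(B_{i^{**}}, w \cdot r_{i^{**}})$. Note that $\tau$ is data-independent, so this is pure post-processing of the noisy counts and the DP outputs $B_i$.

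Privacy follows from basic composition: each of the $K+1$ calls to \densestball is $(\epsilon, \delta)$-DP and each Laplace count is $\epsilon$-DP, so the overall mechanism is $\bigl(2(K+1)\epsilon, (K+1)\delta\bigr)$-DP, which matches the $\bigl(O(\epsilon \log_w(r_{\high}/r_{\low})), O(\delta \log_w(r_{\high}/r_{\low}))\bigr)$ bound. For correctness, let $i^* = \lceil \log_w(r_{\opti}/r_{\low}) \rceil \in \{0, \dots, K\}$, so that $r_{\opti} \le r_{i^*} < w \cdot r_{\opti}$. By a union bound over the $K+1$ calls to the \densestball subroutine and over the $K+1$ tail bounds on the Laplace noises, with probability $1 - O(K\beta)$ we have simultaneously (i) $|B_{i^*} \cap S| \ge T - t_{i^*}$ and (ii) $|\tilde{c}_i - |B_i \cap S|| \le O(\log((K+1)/\beta)/\epsilon)$ for all $i$. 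Under this good event, $\tilde{c}_{i^*} \ge \tau$, so $i^{**}$ exists and $i^{**} \le i^*$; hence the output radius is at most $w \cdot r_{i^*} < w^2 \cdot r_{\opti}$ (the claimed $w^2$ ratio), and the output ball satisfies $|B_{i^{**}} \cap S| \ge \tilde{c}_{i^{**}} - O(\log((K+1)/\beta)/\epsilon) \ge \tau - O(\log((K+1)/\beta)/\epsilon)$, which gives the stated additive error.

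I do not expect a serious obstacle: the construction is a textbook private-search wrapper. The only point requiring some care is calibrating the threshold $\tau$ so that it is simultaneously (a) low enough that with high probability $\tilde{c}_{i^*} \ge \tau$ (so that the output exists and has small radius) and (b) high enough that any index passing it has true count at least $T - t_{\max} - O(\log_w(r_{\high}/r_{\low}) \log(1/\beta)/\epsilon)$; the bound $O(\log((K+1)/\beta)/\epsilon) \le O(\log_w(r_{\high}/r_{\low}) \log(1/\beta)/\epsilon)$ is what yields the final additive error in the lemma. The running time is dominated by the $K+1 = O(\log_w(r_{\high}/r_{\low}))$ calls to \densestball (plus $O(\log(1/\epsilon))$ for sampling Laplace noise), giving the stated bound.
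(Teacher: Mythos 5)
Your proposal is essentially the paper's own construction, just presented non-adaptively (compute all noisy counts up front, then threshold) rather than as a sequential scan with early termination; both are linear sweeps over geometrically-spaced radii with Laplace-noised counts and a fixed threshold, both use basic composition for privacy, and the correctness arguments are the same. The only substantive nit is a boundary issue in your indexing: you take $r_i = r_{\low} w^i$ for $i = 0,\dots,K$ with $K = \lfloor \log_w(r_{\high}/r_{\low})\rfloor$, and claim $i^* = \lceil \log_w(r_{\opti}/r_{\low})\rceil \in \{0,\dots,K\}$. When $\log_w(r_{\high}/r_{\low})$ is not an integer and $r_K < r_{\opti} \le r_{\high}$, you get $i^* = K+1$, which is out of range, and then no $r_i$ is $\ge r_{\opti}$, so the \densestball guarantee is not triggered at any scale and the good event does not imply a passing index exists. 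This is trivially fixed by either extending the range to $i \le K+1$ or (as the paper does) descending $r_i = r_{\high}/w^i$ so that the largest scale is exactly $r_{\high} \ge r_{\opti}$; the latter also aligns with the $t(n,d,w,r_{\high}/w^i,\dots)$ in the lemma's error bound. With that one-line fix, the argument is complete and the privacy, accuracy, and running-time accounting all match the paper's.
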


The following theorem follows directly by combining Lemma~\ref{le:red_pure_densest_ball_1_cluster} (with $r_{\high} = 1$ and $r_{\low} = \kappa$) with our pure DP algorithm for \densestball from Theorem~\ref{thm:densest-ball-pure}.

\begin{theorem}\label{th:1_cluster_pure_DP_full}
For every $0 < \eps \leq O(1)$ and $0 < \alpha, \beta < 1$, there is an $\epsilon$-DP algorithm that runs in time $(nd)^{O_{\alpha}(1)} \poly\log(1/\kappa)$ and with probability at least $1-\beta$, solves \onecluster with approximation ratio $1+\alpha$ and additive error $O_{\alpha}\left(\frac{d}{\epsilon}\log\left(\frac{d}{\beta \kappa}\right)\right)$.
\end{theorem}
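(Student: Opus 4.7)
The plan is to prove Theorem~\ref{th:1_cluster_pure_DP_full} by a direct application of Lemma~\ref{le:red_pure_densest_ball_1_cluster} with the pure-DP \densestball algorithm from Theorem~\ref{thm:densest-ball-pure} plugged in as the subroutine. We take $r_{\low}=\kappa$ and $r_{\high}=1$ (valid since inputs lie in $\mathbb{B}_{\kappa}^d$, so the optimal radius $r_{\opti} \in [\kappa,1]$), and $\delta=0$ on the reduction side.

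First, I set the approximation parameter for the \densestball subroutine. Since the reduction in Lemma~\ref{le:red_pure_densest_ball_1_cluster} squares the multiplicative approximation, I invoke \densestball with parameter $\alpha' := \alpha/3$, so that the subroutine ratio $w = 1+\alpha'$ satisfies $w^2 \le 1+\alpha$. This gives a final approximation ratio of $1+\alpha$. Next, I calibrate the privacy budget: the reduction inflates $\epsilon$ by an $O(\log_{w}(1/\kappa)) = O(\log(1/\kappa)/\alpha)$ factor, so I run the subroutine with budget $\epsilon' := \Theta(\epsilon\alpha/\log(1/\kappa))$, which yields an $\epsilon$-DP algorithm overall. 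Similarly, I set the per-call failure probability to $\beta' := \beta / \Theta(\log_{w}(1/\kappa))$ so that a union bound over all $O(\log_{w}(1/\kappa))$ calls gives overall failure probability at most $\beta$.

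Second, I read off the additive error and the running time. By Theorem~\ref{thm:densest-ball-pure}, each subroutine call at radius $r \in [\kappa,1]$ has additive error
\[
O_{\alpha}\!\left(\frac{d}{\epsilon'}\log\!\left(\frac{d}{\beta' r}\right)\right)
\;=\; O_{\alpha}\!\left(\frac{d}{\epsilon}\log\!\left(\frac{d}{\beta \kappa}\right)\right),
\]
where the $\log(1/\kappa)/\alpha$ factor from $\epsilon'$ is absorbed into the $O_{\alpha}$ notation together with the $\log\log$ factor from $\beta'$. Adding the extra $O\!\left(\log_{w}(1/\kappa)\cdot\log(1/\beta)/\epsilon\right) = O_{\alpha}\!\left(\log(1/\kappa)\log(1/\beta)/\epsilon\right)$ additive term contributed by the reduction itself (also absorbed), this matches the claimed error bound. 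The runtime of each call is $(nd)^{O_{\alpha}(1)}\poly\log(1/r)$ by Theorem~\ref{thm:densest-ball-pure}, and multiplying by the $O_{\alpha}(\log(1/\kappa))$ iterations yields the overall runtime $(nd)^{O_{\alpha}(1)}\poly\log(1/\kappa)$.

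The proof is essentially a bookkeeping exercise, with no new ideas beyond what is already assembled. The only delicate point — and the one thing worth verifying carefully — is the interaction between the three ``budgets'' (privacy $\epsilon'$, failure $\beta'$, and approximation $\alpha'$) across the $\Theta(\log_w(1/\kappa))$ composed calls, ensuring that each slack factor is polylogarithmic in $1/\kappa$ and $1/\beta$ and hence can be absorbed into either the $O_{\alpha}(\cdot)$ constants or the $\log(d/(\beta\kappa))$ factor without inflating the leading $d/\epsilon$ dependence. Once this is checked, the theorem follows immediately from Lemma~\ref{le:red_pure_densest_ball_1_cluster} and Theorem~\ref{thm:densest-ball-pure}.
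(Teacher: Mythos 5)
Your approach is the same as the paper's: the paper's proof of Theorem~\ref{th:1_cluster_pure_DP_full} is the one-liner ``follows directly by combining Lemma~\ref{le:red_pure_densest_ball_1_cluster} (with $r_{\high}=1$, $r_{\low}=\kappa$) with Theorem~\ref{thm:densest-ball-pure},'' and that is exactly the reduction you instantiate, including the $\delta=0$ choice and the three-way budget split over $\alpha$, $\epsilon$, and $\beta$.

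There is, however, a concrete error in your bookkeeping. You write that ``the $\log(1/\kappa)/\alpha$ factor from $\epsilon'$ is absorbed into the $O_\alpha$ notation.'' That absorption is not valid: $O_\alpha(\cdot)$ suppresses quantities depending on $\alpha$ (and, in this paper, at most polylogarithmic quantities in $n,d,\epsilon,\delta$), but $\log(1/\kappa)$ is a free parameter independent of $\alpha$. With $\epsilon' = \Theta(\epsilon\alpha/\log(1/\kappa))$, each \densestball call contributes error
\[
O_\alpha\!\left(\frac{d}{\epsilon'}\log\frac{d}{\beta' r}\right)
\;=\; O_\alpha\!\left(\frac{d\,\log(1/\kappa)}{\epsilon}\,\log\frac{d}{\beta\kappa}\right),
\]
and the $O\!\left(\log_w(1/\kappa)\log(1/\beta')/\epsilon'\right)$ term contributed by the reduction is likewise $O_\alpha\!\left(\frac{\log^2(1/\kappa)\,\log(1/\beta)}{\epsilon}\right)$. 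Both carry an extra $\log(1/\kappa)$ factor relative to the stated bound $O_\alpha\!\left(\frac{d}{\epsilon}\log\frac{d}{\beta\kappa}\right)$. The resulting bound \emph{is} consistent with the main-body version Theorem~\ref{th:1_cluster_app}, which reads $O_\alpha\!\left(\frac{d}{\epsilon}\poly\log(\cdot)\right)$; but the single-$\log$ form in the statement you were asked to prove cannot be obtained by this calculation, and your argument does not supply an alternative route to it. If you believe the single-$\log$ form is attainable, you would need a different mechanism (e.g., one that avoids paying full composition over the $\Theta_\alpha(\log(1/\kappa))$ radius scales), which neither your proof nor the paper's terse proof provides.
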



We now prove Lemma~\ref{le:red_pure_densest_ball_1_cluster}.

\begin{algorithm}[h!]
\caption{\onecluster from \densestball}\label{alg:reduction_densest_ball_1_cluster}
\begin{algorithmic}[1]
\Procedure{$1$-Cluster$(\bX)$ with parameters $\epsilon ,\delta \geq 0$, $\kappa, \beta > 0$, $w > 1$, $\lambda, r_{\low}, r_{\high} > 0$ and $0 < t' \le T$}{}
\State $r \leftarrow r_{\high}$.
\While{$r \geq r_{\low}$}
\State $c_1 \leftarrow$ center output by \densestball$^{\epsilon, \delta, \beta}(\bX; r)$\label{line:c_1}
\State $s_1 \leftarrow$ $|\bX \cap \cB(c_1, r)| + \DLap(\lambda)$\label{line:s_1_setting}
\If{$s_1 \le T-t'$}\label{line:s_1_if}
\State \Return $\perp$
\EndIf
\State $c_2 \leftarrow$ center output by \densestball$^{\epsilon, \delta, \beta}(\bX; r/w)$\label{line:c_2}
\State $s_2 \leftarrow$ $|\bX \cap \cB(c_2, r/w)| + \DLap(\lambda)$\label{line:s_2_setting}
\If{$s_2 \le T-t'$}\label{line:s_2_if}
\State \Return $(c_1, w r)$
\Else \State $r \leftarrow r/w$
\EndIf
\EndWhile
\State \Return $(c_1, r)$\label{line:second_return}
\EndProcedure
\end{algorithmic}
\end{algorithm}

\begin{proof}[Proof of Lemma~\ref{le:red_pure_densest_ball_1_cluster}]
We apply the reduction in Algorithm~\ref{alg:reduction_densest_ball_1_cluster} with $r_{\low}$, $r_{\high}$, $w$, and $T$ set the the values given in the statement of Lemma~\ref{le:red_pure_densest_ball_1_cluster}. We also set $\lambda = \frac{1}{\epsilon}$ and $t' = t(n, d, w, \epsilon, \delta, \beta)+O(\frac{\log_w(r_{\high}/r_{\low}) \log(1/\beta)}{\epsilon})$. We now analyze the properties of the resulting algorithm for \onecluster. On a high level, this algorithm performs differentially private binary search on the possible values of the ball's radius. In fact, in every iteration of the {\bf while} loop in Algorithm~\ref{alg:reduction_densest_ball_1_cluster}, we either return or decrease the radius $r$ by a factor of $w$. Thus, the total number of iterations executed is at most $\lfloor \log_w(r_{\high}/r_{\low}) \rfloor$.

\paragraph{Privacy.}
The DP property directly follows from the setting of $\lambda$, the privacy properties of the \densestball algorithm, and Basic Composition (i.e., Theorem~\ref{thm:basic-composition}).

\paragraph{Accuracy.} Denote $t := t(n, d, w, \epsilon, \delta, \beta)$. The standard tail bound for Discrete Laplace random variables implies that the probability that a $\DLap(\lambda)$ random variable has absolute value larger than some $\eta >0$ is at most $e^{-\Omega(\eta/\lambda)}$. By a union bound, we have that with probability at least $1-O(\beta \log_w(r_{\high}/r_{\low}))$, all the runs of \densestball succeed and each of the added $\DLap(\lambda)$ random variables has absolute value at most $O\bigg(\frac{\log_w(r_{\high}/r_{\low}) \log(1/\beta)}{\epsilon}\bigg)$ in Algorithm~\ref{alg:reduction_densest_ball_1_cluster}. We henceforth condition on this event. In this case, the following holds in each iteration of the {\bf while} loop:
\begin{itemize}
\item If there is a ball of radius $r$ that contains at least $T$ of the points in $\bX$, then 
the ball centered at $c_1$ output in line~\ref{line:c_1} and of radius $w r$ would contain at least $T-t$ points in $\bX$. Moreover, the setting of $s_1$ in line~\ref{line:s_1_setting} will not pass the {\bf if}  statement in line~\ref{line:s_1_if}.
\item If there is a ball of radius $r/w$ that contains at least $T$ of the points in $\bX$, then 
the ball centered at $c_2$ output in line~\ref{line:c_2} and of radius $r$ would contain at least $T-t$ points in $\bX$. Moreover, the setting of $s_2$ in line~\ref{line:s_2_setting} will not pass the {\bf if} statement in line~\ref{line:s_2_if}.
\end{itemize}
Put together, these properties imply that the radius output by Algorithm~\ref{alg:reduction_densest_ball_1_cluster} 
line~\ref{line:second_return} is at most $w^2 \cdot r_{\opti}$ where $r_{\opti}$ is the radius of the smallest ball containing at least $T$ points in $S$. Moreover, the ball of the output radius around the output center is guaranteed to contain $T-t'$ points in $\bX$.

\paragraph{Running Time.}
The running time bound stated in Lemma~\ref{le:red_pure_densest_ball_1_cluster} directly follows from the bound on the number iterations and the facts that in each iteration at most $2$ calls to the \densestball algorithm are made (each with a radius parameter of the form $r/w^i$ for some $i = 0,1,\dots, \lfloor \log_{w}(r_{\high}/r_{\low}) \rfloor$), and that the running time for sampling a Discrete Laplace random variable with parameter $\lambda$ is $O(1+\log(\lambda))$ \cite{bringmann2013exact}.
\end{proof}

We next show that in the case of approximate DP, there is an algorithm with an additive error with better dependence on both the dimension $d$ and the discretization step $\kappa$ per dimension.

\begin{theorem}\label{th:one_cluster_approx_DP}
For every $\alpha, \epsilon, \delta, \beta > 0$, $\kappa \in (0,1)$ and positive integers $n$ and $d$, there is an $(\epsilon, \delta)$-DP algorithm that runs in time $(nd)^{O_{\alpha}(1)} \poly\log(1/\kappa)$ and solves the \onecluster problem with approximation ratio $1+\alpha$ and additive error $O_{\alpha}\left(\frac{\sqrt{d}}{\epsilon} \cdot \poly\log\left(\frac{nd}{\eps \delta \beta}\right)\right) + O\left(\frac{1}{\epsilon} \cdot \log(\frac{1}{\beta \delta}) \cdot 9^{\log^*(d/\kappa)}\right)$.
\end{theorem}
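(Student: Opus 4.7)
The plan is to combine our approximate-DP \densestball algorithm from Theorem~\ref{thm:densest-ball-apx} with a smarter private search over radii that pays only $9^{\log^*(d/\kappa)}$ rather than $\log(1/\kappa)$ additive overhead, thereby avoiding the $\poly\log(1/\kappa)$ blow-up that the naive reduction in Lemma~\ref{le:red_pure_densest_ball_1_cluster} incurs through composition. The key observation is that, for each candidate radius $r$, the quantity ``maximum number of input points covered by a ball of radius $r$'' is monotone non-decreasing in $r$ and has sensitivity $1$, so locating the smallest $r$ for which this quantity exceeds $T-t'$ is a one-dimensional private threshold search. For such problems, the approximate-DP interior-point construction of Bun--Nissim--Stemmer--Vadhan, as packaged in the \goodradius subroutines of~\cite{NissimSV16,NissimS18}, achieves additive error $O\!\left(\frac{1}{\epsilon}\log(1/(\beta\delta)) \cdot 9^{\log^*|\cX|}\right)$, where $|\cX|$ is the discretized universe size $(1/\kappa)^{O(d)}$, giving $\log^*|\cX| \le \log^*(d/\kappa)+O(1)$.

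Concretely, first I would fix a geometric sequence of candidate radii $\cR := \{r_i = \kappa \cdot (1+\alpha)^i\}_{i=0}^{I}$ with $I = O_\alpha(\log(1/\kappa))$, and for each $r_i$ obtain, from a single invocation of the approximate-DP \densestball algorithm, a noisy estimate $\tilde s_i$ of the optimal count at radius $r_i$. By Theorem~\ref{thm:densest-ball-apx}, each $\tilde s_i$ underestimates the true optimum at radius $r_i$ by at most $\eta := O_\alpha\!\left(\tfrac{\sqrt d}{\epsilon}\poly\log\!\left(\tfrac{nd}{\epsilon\delta\beta}\right)\right)$ with high probability, and the sequence $(\tilde s_i)_i$ is (up to the $\eta$ slack) non-decreasing. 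Feeding $(\tilde s_i)_i$ into the \goodradius / interior-point subroutine then privately selects an index $i^*$ such that $r_{i^*}$ lies within a $(1+\alpha)$ factor of the optimal radius $r_{\mathrm{opt}}$ of Definition~\ref{def:1_cluster} and $\tilde s_{i^*} \ge T - O(\eta)$; this phase is $(\epsilon/2,\delta/2)$-DP and contributes the additive term $O\!\left(\tfrac{1}{\epsilon}\log(1/(\beta\delta)) \cdot 9^{\log^*(d/\kappa)}\right)$.

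Finally, I would invoke Theorem~\ref{thm:densest-ball-apx} one last time at radius $r_{i^*}$, with parameters $(\epsilon/2,\delta/2)$, to extract a center of a ball of radius $(1+\alpha) r_{i^*} \le (1+\alpha)^2 r_{\mathrm{opt}} \le (1+O(\alpha))\,r_{\mathrm{opt}}$ containing at least $T-\eta$ input points; a rescaling of $\alpha$ yields the claimed $(1+\alpha)$ approximation ratio. Privacy follows by basic composition across the scoring/selection phase and the final extraction phase, and the total additive error is the sum of the two contributions, matching the statement of the theorem. The main obstacle I foresee is checking that the two-sided noisy scores $\tilde s_i$ satisfy the quasi-concavity / near-monotonicity hypothesis required by the interior-point subroutine: one must establish, via a union bound over the $O_\alpha(\log(1/\kappa))$ candidate radii, that with high probability every $i$ well above the true threshold gives $\tilde s_i \ge T - 2\eta$ while every $i$ well below gives $\tilde s_i < T - 2\eta$, and then invoke the stability guarantee of \goodradius to conclude that its output $i^*$ is close to the correct threshold even when it is fed approximate rather than exact counts.
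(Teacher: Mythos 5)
Your high-level intuition — that the $9^{\log^*(d/\kappa)}$ term should come from the \goodradius machinery of Nissim et al.~and the $\frac{\sqrt d}{\eps}\poly\log(\cdot)$ term from the approximate-DP \densestball algorithm — is correct, but the way you wire the pieces together has a genuine gap. You propose to compute a private noisy estimate $\tilde s_i$ via \emph{one \densestball invocation per candidate radius}, across a geometric grid $\cR$ of size $I=O_\alpha(\log(1/\kappa))$, and then post-process the vector $(\tilde s_i)_i$ with an interior-point/quasi-concave selection step. That first stage alone already costs $I$ adaptive accesses to the data, so by (even advanced) composition each call must run at privacy level roughly $\eps/\sqrt{I}$, inflating each per-call additive error by a $\sqrt{\log(1/\kappa)}$ factor. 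The resulting total error would be $O_\alpha\bigl(\tfrac{\sqrt d}{\eps}\poly\log(\tfrac{nd}{\eps\delta\beta})\cdot\poly\log(1/\kappa)\bigr)$, which is precisely the $\poly\log(1/\kappa)$ dependence the theorem is trying to avoid. In addition, \goodradius is not a subroutine that accepts a list of noisy scores as input: it is a self-contained $(\eps,\delta)$-DP algorithm that operates directly on the point set and outputs a radius, so the phrase ``feeding $(\tilde s_i)_i$ into \goodradius'' does not describe an actual composition of mechanisms.

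The paper's proof uses the two tools in the opposite order. It first runs \goodradius \emph{as a black box} on the raw point set to obtain, within additive error $O\bigl(\tfrac{1}{\eps}\log(\tfrac{1}{\beta\delta})\,9^{\log^*(d/\kappa)}\bigr)$, a radius $r_{\mathrm{approx}}$ with $r_{\mathrm{approx}}\le 4\,r_{\mathrm{opt}}$. This collapses the search interval for the radius to $[r_{\mathrm{approx}}/4,\,r_{\mathrm{approx}}]$, a \emph{constant} multiplicative range. Only then is the \densestball-driven binary search of Lemma~\ref{le:red_pure_densest_ball_1_cluster} invoked, with $r_{\low}=r_{\mathrm{approx}}/4$ and $r_{\high}=r_{\mathrm{approx}}$; since $\log_w(r_{\high}/r_{\low})=O_\alpha(1)$, this stage makes only $O_\alpha(1)$ \densestball calls and hence contributes only $O_\alpha(1)$ times the per-call error $O_\alpha\bigl(\tfrac{\sqrt d}{\eps}\poly\log(\tfrac{nd}{\eps\delta\beta})\bigr)$. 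Basic composition across the two stages then yields the claimed privacy and the claimed additive error as a \emph{sum} of the two contributions. To repair your proof, you should invoke \goodradius first (on the data, not on scores), restrict your radius grid to a constant-factor window around its output, and only then run $O_\alpha(1)$ \densestball calls.
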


On a high level, the improved dependence of the dimension $d$ will follow from the use of our approximate DP algorithm for \densestball from Theorem~\ref{thm:densest-ball-apx} (instead of our pure DP algorithm for \densestball from Theorem~\ref{thm:densest-ball-pure}). On the other hand, the improved dependence of $\kappa$ will be obtained by applying the following algorithm of Nissim et al. \cite{NissimSV16}.



\begin{theorem}[\cite{NissimSV16}]\label{thm:NSV_GoodRadius_cst_approx}
For every $\epsilon, \delta, \beta > 0$, $\kappa \in (0,1)$ and positive integers $n$ and $d$, there is an $(\epsilon, \delta)$-DP algorithm, \goodradius, that runs in time $poly(n, d, \log(1/\kappa))$ and solves the \onecluster problem with approximation ratio $w = 4$ and additive error $t = O\left(\frac{1}{\epsilon} \cdot \log(\frac{1}{\beta \delta}) \cdot 9^{\log^*(d/\kappa)}\right) $.
\end{theorem}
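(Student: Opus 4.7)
The proof combines \goodradius (Theorem~\ref{thm:NSV_GoodRadius_cst_approx}) as a preprocessing stage to localize $r_{\opti}$ within a constant multiplicative window, with the DP reduction from Lemma~\ref{le:red_pure_densest_ball_1_cluster} instantiated using our approximate-DP \densestball algorithm from Theorem~\ref{thm:densest-ball-apx}. The two summands in the target additive error---the $\sqrt{d}/\epsilon$ term and the $9^{\log^*(d/\kappa)}$ term---correspond precisely to these two stages, so the algorithm simply chains them and the error bounds add.

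\textbf{The plan.} First, invoke \goodradius with privacy budget $(\epsilon/2, \delta/2)$ and failure probability $\beta/2$ to obtain an estimate $r_0$ of $r_{\opti}$ that is tight up to an absolute constant factor on both sides, namely $r_{\opti}/C \le r_0 \le C \cdot r_{\opti}$ for some absolute constant $C$. Set $r_{\high} := C \cdot r_0$ and $r_{\low} := r_0/(2C)$, so that $r_{\opti} \in [r_{\low}, r_{\high}]$ with $r_{\high}/r_{\low} = 2C^2 = O(1)$. Second, apply the reduction of Lemma~\ref{le:red_pure_densest_ball_1_cluster} to \onecluster$_{r_{\low}, r_{\high}}$, with the inner \densestball subroutine chosen as the $(\epsilon'', \delta'')$-DP algorithm from Theorem~\ref{thm:densest-ball-apx} of approximation ratio $\sqrt{1+\alpha}$, so that the squared ratio from the reduction equals $1+\alpha$. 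The parameters $\epsilon''$, $\delta''$, and $\beta''$ are each rescaled by the constant factor $\log_{\sqrt{1+\alpha}}(2C^2) = O_\alpha(1)$ so that, after composition over the reduction's iterations, the second stage is $(\epsilon/2, \delta/2)$-DP and fails with probability at most $\beta/2$.

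\textbf{Accuracy, privacy, and running time.} Basic composition gives the overall $(\epsilon, \delta)$-DP guarantee, and a union bound yields success probability at least $1-\beta$. The final approximation ratio is $1+\alpha$ by the squaring in Lemma~\ref{le:red_pure_densest_ball_1_cluster}. The additive error decomposes as (i) $O\!\left(\frac{1}{\epsilon}\log\frac{1}{\beta\delta} \cdot 9^{\log^*(d/\kappa)}\right)$ from \goodradius, and (ii) $O_\alpha\!\left(\frac{\sqrt{d}}{\epsilon} \cdot \poly\log\frac{nd}{\epsilon\delta\beta}\right)$ per \densestball call plus the $O(\log(1/\beta)/\epsilon)$ Discrete Laplace term in Algorithm~\ref{alg:reduction_densest_ball_1_cluster}; since there are only $O_\alpha(1)$ calls, these combine into the claimed bound. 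The running time is dominated by $O_\alpha(1)$ invocations of the high-dimensional \densestball algorithm, each of cost $(nd)^{O_\alpha(1)} \poly\log(1/\kappa)$, plus the $\poly(n, d, \log(1/\kappa))$ cost of \goodradius.

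\textbf{The main obstacle.} The one subtle step is obtaining the lower bound $r_0 \ge r_{\opti}/C$, since Theorem~\ref{thm:NSV_GoodRadius_cst_approx} explicitly provides only the upper bound $r_0 \le 4 r_{\opti}$ from the 1-Cluster approximation guarantee. I would handle this by invoking \goodradius with the inflated target count $T + t_0$, where $t_0$ is its additive error: the additive error guarantee then certifies that the returned ball of radius $r_0$ contains at least $T$ points, so the smallest radius containing $T$ points is at most $r_0$, i.e., $r_{\opti} \le r_0$. Combined with the approximation guarantee $r_0 \le 4 r_{\opti}^{T+t_0}$ and a comparison between $r_{\opti}^{T+t_0}$ and $r_{\opti}^T$ (which is $O(1)$-tight whenever $t_0$ is much smaller than $T$, the regime in which the final additive error is nontrivial), this yields the two-sided window with constant ratio that the reduction requires.
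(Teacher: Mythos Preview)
Your proposal does not prove the stated theorem at all; it is circular. Theorem~\ref{thm:NSV_GoodRadius_cst_approx} \emph{is} the \goodradius result of \cite{NissimSV16}, quoted in this paper as an external black box with no proof given here. Your very first sentence invokes \goodradius (Theorem~\ref{thm:NSV_GoodRadius_cst_approx}) as a subroutine, so you are assuming the statement you were asked to establish.

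What you have actually written is a proof sketch for a different result, Theorem~\ref{th:one_cluster_approx_DP} (the $(1+\alpha)$-approximate, two-term-error \onecluster algorithm). For \emph{that} theorem your outline is close to the paper's: the paper also runs \goodradius first and then applies Lemma~\ref{le:red_pure_densest_ball_1_cluster} with the approximate-DP \densestball algorithm. Two differences are worth noting. First, the paper handles the edge case $r_{\approxi}=0$ separately by a single \densestball call with $r=0$ and rounding, which you omit. Second, the paper simply sets $r_{\low}=r_{\approxi}/4$ and $r_{\high}=r_{\approxi}$ directly, rather than going through your ``main obstacle'' argument with an inflated target $T+t_0$; the lower bound $r_{\low}\le r_{\opti}$ is immediate from $r_{\approxi}\le 4r_{\opti}$, and any slack on the upper side is absorbed into the additive error rather than the ratio. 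But none of this rescues the proposal as a proof of the theorem actually in question.
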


We are now ready to prove Theorem~\ref{th:one_cluster_approx_DP}.

\begin{proof}[Proof of Theorem~\ref{th:one_cluster_approx_DP}]
We proceed by first running the \goodradius algorithm from Theorem~\ref{thm:NSV_GoodRadius_cst_approx} to get a radius $r_{\approxi}$. If $r_{\approxi} = 0$, we run our approximate DP algorithm for \densestball from Theorem~\ref{thm:densest-ball-apx} with $r=0$, round the resulting center to the closest point in $\mathbb{B}_{\kappa}^d$, which we then output along with a radius of $0$. Otherwise, we apply Lemma~\ref{le:red_pure_densest_ball_1_cluster} with $r_{\low} = r_{\approxi}/4$ and $r_{\high} = r_{\approxi}$ and with our approximate DP algorithm for \densestball from Theorem~\ref{thm:densest-ball-apx}.

The privacy of the combined algorithm can be guaranteed by dividing the $(\epsilon, \delta)$-DP budget, e.g., equally among the call to \goodradius and that to Lemma~\ref{le:red_pure_densest_ball_1_cluster} (and ultimately to Theorem~\ref{thm:densest-ball-apx}), and applying Basic Composition (i.e., Theorem~\ref{thm:basic-composition}).

The accuracy follows from the approximation ratio and additive error guarantees of Theorem~\ref{thm:NSV_GoodRadius_cst_approx}, Lemma~\ref{le:red_pure_densest_ball_1_cluster} and Theorem~\ref{thm:densest-ball-apx}, and by dividing the failure probability $\beta$, e.g., equally among the two algorithms, and then applying the union bound.

The running time is simply the sum of the running times of the two procedures, and can thus be directly bounded using the running time bounds in Theorem~\ref{thm:NSV_GoodRadius_cst_approx}, Lemma~\ref{le:red_pure_densest_ball_1_cluster} and Theorem~\ref{thm:densest-ball-apx}.
\end{proof}

\section{Sample and Aggregate}\label{sec:sample_and_aggregate}
This section is devoted to establishing Theorem~\ref{thm:sample_and_agg_ptas_app}. As mentioned in Section~\ref{sec:app_sample_and_aggregate}, one of the basic techniques in DP is the Sample and Aggregate framework of \cite{nissim2007smooth}. Consider a universe $\cal U$ and functions $f: {\cal U}^* \to \mathbb{B}_{\kappa}^d$ mapping databases to points in $\mathbb{B}_{\kappa}^d$. Intuitively, the premise of the Sample and Aggregate framework is that, for sufficiently large databases $S \in {\cal U}^*$, evaluating the function $f$ on a random subsample of $S$ can yield a good approximation to the point $f(S)$. The following definition quantifies how good such approximations are.

\begin{definition}[\cite{NissimSV16}]
Let $\kappa \in (0,1)$. Consider a function $f: {\cal U}^* \to \mathbb{B}_{\kappa}^d$ and a database $S \in U^*$. A point $c \in \mathbb{B}_{\kappa}^d$ is said to be an \emph{$(m,r,\zeta)$-stable point} of $f$ on $S$ if for $S'$ a database consisting of $m$ i.i.d. samples $S$, it holds that $\Pr[\|f(S')-c\|_2 \le r] \geq \zeta$. If such a point $c$ exists, the function $f$ is said to be \emph{$(m,r,\zeta)$-stable} on $S$, and $r$ is said to be a radius of the stable point $c$.
\end{definition}
Nissim et al.~\cite{NissimSV16} obtained the following DP reduction from the problem of finding a stable point of small radius to \textsf{$1$-Cluster}.
\begin{lemma}[\cite{NissimSV16}]\label{le:red_1_cluster_sample_and_agg}
Let $d$ and $n \geq m$ be positive integers, and $\epsilon > 0$ and $0 < \zeta, \beta, \delta < 1$ be real numbers satisfying $\epsilon \le \zeta/72$ and $\delta \le \frac{\beta \epsilon}{3}$. If there is an $(\epsilon, \delta)$-DP algorithm for \textsf{$1$-Cluster} on $k$ points in $d$ dimensions with approximation ratio $w$, additive error $t$, error probability $\beta/3$, and running time $\tau(k, d, w, \epsilon, \delta, \beta/3)$, then there is an $(\epsilon, \delta)$-DP algorithm that takes as input a function $f: {\cal U}^* \to \mathbb{B}_{\kappa}^d$ along with the parameters $m$, $\zeta$, $\epsilon$, and $\delta$, runs in time $\tau(n/(9m), d, w, \epsilon, \delta, \beta/3)$ plus $O(n/m)$ times the running time for evaluating $f$ on a dataset of size $m$, and whenever $f$ is $(m, r, \zeta)$-stable on $S$, with probability $1-\beta$, the algorithm outputs an $(m, wr, \frac{\zeta}{8})$-stable point of $f$ on $S$, provided that $n \geq m \cdot O\bigg( \frac{t}{\zeta} + \frac{1}{\zeta^2} \log\bigg(\frac{12}{\beta}\bigg)\bigg)$.
\end{lemma}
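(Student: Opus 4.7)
The plan is to reduce to \textsf{$1$-Cluster} by subsampling. After randomly permuting $S$, split it into $k := \lfloor n/(9m) \rfloor$ disjoint blocks $S_1,\dots,S_k$ each of size $m$, set $y_i := f(S_i)\in\mathbb{B}_\kappa^d$, and form the multiset $\bY := (y_1,\dots,y_k)$. Invoke the assumed $(\epsilon,\delta)$-DP \textsf{$1$-Cluster} algorithm on $\bY$ with target count $T := \lceil 3\zeta k/4\rceil$ and failure parameter $\beta/3$, and output the returned center $\hat c$ together with the radius $wr$. The factor $9$ in the definition of $k$ provides slack for the sample-complexity and concentration steps below.

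For privacy, note that each element of $S$ belongs to exactly one block, so neighboring inputs $S,S'$ produce $\bY$'s that differ in exactly one coordinate, i.e., neighboring datasets in the sense required by \textsf{$1$-Cluster}. Post-processing then preserves $(\epsilon,\delta)$-DP of the overall reduction.

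For accuracy, let $c^*$ be an $(m,r,\zeta)$-stable point of $f$ on $S$. Each block $S_i$ is a uniformly random size-$m$ subset of $S$ (without replacement); since $km \leq n/9$, the law of $y_i$ is within total variation $O(m^2/n)$ of the law of $f(S')$ for $S'\sim S^m$ with replacement, which is negligible under the assumed lower bound on $n$. Hence $\Pr[\|y_i-c^*\|\le r]\ge \zeta - o(\zeta)$ and a Chernoff bound, using $k\ge \Omega(\zeta^{-2}\log(12/\beta))$ (guaranteed by $n\ge m\cdot O(\zeta^{-2}\log(12/\beta))$), shows that with probability at least $1-\beta/3$ at least $T$ of the $y_i$ lie in $\cB(c^*,r)$. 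The \textsf{$1$-Cluster} guarantee then gives, with probability at least $1-\beta/3$, a center $\hat c$ satisfying $|\{i:\|y_i-\hat c\|\le wr\}|\ge T-t$. Using the hypothesis $n\ge m\cdot O(t/\zeta)$ we conclude that the empirical fraction $\hat p(\hat c) := k^{-1}|\{i:\|y_i-\hat c\|\le wr\}|$ is at least $\zeta/2$.

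The main obstacle is upgrading this empirical bound $\hat p(\hat c)\ge\zeta/2$ to the population bound $p(\hat c):=\Pr_{S'\sim S^m}[\|f(S')-\hat c\|\le wr]\ge\zeta/8$ required by stability, since the naive Chernoff bound fails because $\hat c$ depends on the $y_i$'s. The key tool is the standard generalization property of $(\epsilon,\delta)$-DP: for any bounded data-dependent query $\phi(\cdot)\in[0,1]$ output by an $(\epsilon,\delta)$-DP mechanism, with probability at least $1-\beta/3$ the empirical value of $\phi$ on the input sample is within an additive $O(\epsilon)+\delta/\beta$ of its population value. Applied to the indicator predicate ``lies within $wr$ of $\hat c$'', and using the assumed scalings $\epsilon\le\zeta/72$ and $\delta\le\beta\epsilon/3$, we obtain
\[
p(\hat c)\ge \hat p(\hat c)-O(\epsilon)-\delta/\beta\;\ge\;\zeta/2-\zeta/36-\zeta/216\;\ge\;\zeta/8,
\]
so $\hat c$ is an $(m,wr,\zeta/8)$-stable point of $f$ on $S$. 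A union bound over the three $\beta/3$-failure events yields the claimed success probability $1-\beta$. The runtime follows because the reduction performs $k=O(n/m)$ evaluations of $f$ on datasets of size $m$, runs \textsf{$1$-Cluster} once on $k$ points, and does $O(n)$ bookkeeping.
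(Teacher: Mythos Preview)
The paper does not prove this lemma; it is quoted from \cite{NissimSV16}. Your high-level reduction---partition $S$ into $k=\lfloor n/(9m)\rfloor$ disjoint blocks, evaluate $f$ on each to obtain $y_1,\dots,y_k$, run the assumed \textsf{$1$-Cluster} algorithm, and then invoke privacy-based generalization to pass from the empirical guarantee to a population stable point---is indeed the approach taken in \cite{NissimSV16}. Two steps, however, do not go through as written.

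First, the claim that the total-variation gap $O(m^2/n)$ between a single block sampled with versus without replacement is ``negligible under the assumed lower bound on $n$'' is unjustified: the hypothesis $n \ge m\cdot O\bigl(t/\zeta + \zeta^{-2}\log(12/\beta)\bigr)$ bounds $m/n$ but says nothing about $m^2/n$, and nothing in the lemma forces $m$ to be small relative to $1/\zeta$.

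Second, and more seriously, the DP generalization theorem you cite requires the sample to be i.i.d.\ from the target population. Here the $y_i$'s come from \emph{disjoint} blocks of a random permutation of $S$, so they are exchangeable but not independent, and the population you care about (the with-replacement law of $f(S')$ for $S'\sim S^m$) is not even the marginal of a single $y_i$. The black-box inequality $|\hat p(\hat c)-p(\hat c)|\le O(\epsilon)+\delta/\beta$ therefore does not apply directly. Passing from $\hat p(\hat c)\ge\zeta/2$ to $p(\hat c)\ge\zeta/8$ needs either a generalization statement tailored to this sampling-without-replacement setting or a direct stability computation; this is where the substantive content of the \cite{NissimSV16} argument lies, and your sketch does not supply it.
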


By combining Lemma~\ref{le:red_1_cluster_sample_and_agg} and our Theorem~\ref{th:one_cluster_approx_DP}, we obtain the following algorithm.

\begin{theorem}\label{thm:sample_and_agg_ptas}
Let $d$ and $n \geq m$ be positive integers, and $\epsilon > 0$ and $0 < \zeta, \alpha, \beta, \delta, \kappa < 1$ be real numbers satisfying $\epsilon \le \zeta/72$ and $\delta \le \frac{\beta \epsilon}{3}$. There is an $(\epsilon, \delta)$-DP algorithm that takes as input a function $f: {\cal U}^* \to \mathbb{B}_{\kappa}^d$ as well as the parameters $m$, $\zeta$, $\epsilon$ and $\delta$, runs in time $(nd/m)^{O_{\alpha}(1)} \poly\log(1/\kappa)$ plus $O(n/m)$ times the running time for evaluating $f$ on a dataset of size $m$, and whenever $f$ is $(m, r, \zeta)$-stable on $S$, with probability $1-\beta$, the algorithm outputs an $(m, (1+\alpha) r, \frac{\zeta}{8})$-stable point of $f$ on $S$, provided that $n \geq m \cdot O_{\alpha}\left(\frac{\sqrt{d}}{\epsilon} \cdot \poly\log\left(\frac{nd}{\eps \delta \beta}\right) + \frac{1}{\epsilon} \cdot \log(\frac{1}{\beta \delta}) \cdot 9^{\log^*(d/\kappa)}\right)$.
\end{theorem}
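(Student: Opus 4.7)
The plan is to derive Theorem~\ref{thm:sample_and_agg_ptas} as a direct instantiation of the Sample-and-Aggregate reduction from~\cite{NissimSV16} (recorded as Lemma~\ref{le:red_1_cluster_sample_and_agg}) using our own approximate-DP \onecluster algorithm from Theorem~\ref{th:one_cluster_approx_DP} as the black-box subroutine. The only work is to check that the hypotheses of Lemma~\ref{le:red_1_cluster_sample_and_agg} are met and to push through the accounting of parameters.

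First I would verify the two required inequalities. The bound $\epsilon \le \zeta/72$ is assumed verbatim in the theorem, and the bound $\delta \le \beta\epsilon/3$ is likewise assumed. With these in hand, Lemma~\ref{le:red_1_cluster_sample_and_agg} applies and it suffices to specify the \onecluster subroutine. Concretely, I plug in Theorem~\ref{th:one_cluster_approx_DP} with approximation parameter $\alpha$, privacy parameters $(\epsilon,\delta)$, and failure probability $\beta/3$. This yields an $(\epsilon,\delta)$-DP algorithm for \onecluster on $k$ points in $\mathbb{B}_\kappa^d$ with approximation ratio $w=1+\alpha$, additive error
\[
t \;=\; O_{\alpha}\!\left(\tfrac{\sqrt{d}}{\epsilon}\cdot \poly\log\!\left(\tfrac{kd}{\eps\delta\beta}\right)\right) \;+\; O\!\left(\tfrac{1}{\epsilon}\cdot \log\!\left(\tfrac{1}{\beta\delta}\right)\cdot 9^{\log^{*}(d/\kappa)}\right),
\]
and running time $\tau(k,d,1+\alpha,\epsilon,\delta,\beta/3) = (kd)^{O_{\alpha}(1)} \poly\log(1/\kappa)$. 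Under the reduction the subroutine is run on $k = n/(9m)$ points, giving overall running time $(nd/m)^{O_{\alpha}(1)} \poly\log(1/\kappa)$ plus $O(n/m)$ evaluations of $f$ on datasets of size $m$, as stated.

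Finally, the correctness conclusion of Lemma~\ref{le:red_1_cluster_sample_and_agg} is that whenever $f$ is $(m,r,\zeta)$-stable on $S$, the output is an $(m,wr,\zeta/8)$-stable point; since $w=1+\alpha$, this is exactly the desired $(m,(1+\alpha)r,\zeta/8)$ guarantee. The only remaining detail is the sample-size requirement $n \ge m\cdot O\!\left(t/\zeta + (1/\zeta^2)\log(12/\beta)\right)$ from Lemma~\ref{le:red_1_cluster_sample_and_agg}, which I would simplify by substituting the expression for $t$ above and using $\zeta \ge 72\epsilon$ to absorb the $1/\zeta$ and $1/\zeta^2$ factors into the $1/\epsilon$ and $O_{\alpha}(\cdot)$ scalings (hiding $\zeta$-dependent constants inside the $O_{\alpha}$). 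This produces precisely the stated lower bound on $n$.

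I do not expect a real technical obstacle here: the theorem is essentially an immediate corollary once we supply Theorem~\ref{th:one_cluster_approx_DP} to the existing reduction. The only minor care points are (i) matching up the failure-probability budget ($\beta/3$ for \onecluster plus the reduction's own $\beta$-budget) so that the final success probability is $\ge 1-\beta$, and (ii) checking that the $\poly\log(nd/(\eps\delta\beta))$ factor coming from Theorem~\ref{th:one_cluster_approx_DP} evaluated at $k=n/(9m)$ can be rewritten as $\poly\log(nd/(\eps\delta\beta))$ in the final bound (which is immediate since $k \le n$). Both are routine bookkeeping.
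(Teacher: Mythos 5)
Your proposal is correct and matches the paper's own derivation: the paper proves this theorem by exactly the same one-line combination of Lemma~\ref{le:red_1_cluster_sample_and_agg} (the reduction from Sample-and-Aggregate to \onecluster due to Nissim et al.) with Theorem~\ref{th:one_cluster_approx_DP} (the approximate-DP \onecluster algorithm), and your elaboration of the parameter bookkeeping is in line with what that combination yields.
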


We point out that our Theorem~\ref{thm:sample_and_agg_ptas} obtains a $1+\alpha$ approximation to the radius (where $\alpha$ is an arbitrarily small positive constant) whereas \cite{NissimSV16} obtained an approximation ratio of $O(\sqrt{\log{n}})$, the prior work of \cite{nissim2007smooth} had obtained an approximation ratio of $O(\sqrt{d})$, and a constant factor is subsequently implied by \cite{NissimS18}.

\section{Agnostic Learning of Halfspaces with a Margin}\label{sec:agnostic_halfspaces_margin}
In this section, we prove Theorem~\ref{thm:DP_agn_learn_halfspaces_app}. 
We start with some definitions.
\paragraph{Halfspaces.} Let $\sgn(x)$ be equal to $+1$ if $x \geq 0$, and to $-1$ otherwise. A \emph{halfspace} (aka \emph{hyperplane} or \emph{linear threshold function}) is a function $h_{u, \theta}(x) =  \sgn(u \cdot x - \theta)$ where $u \in \mathbb{R}^d$ and $\theta \in \mathbb{R}$, and where $u \cdot x = \langle u, x \rangle$ denotes the dot product of the vectors $u$ and $x$. Without loss of generality, we henceforth focus on the case where $\theta = 0$.\footnote{As a non-homogeneous halfspace (i.e., one with $\theta \neq 0$) can always be thought of as a homogeneous halfspace (i.e., with $\theta = 0$) with an additional coordinate whose value is $\theta$.} A halfspace $h_{u}$ correctly classifies the labeled point $(x, y) \in \mathbb{R}^d \times \{\pm 1\}$ if $h_{u}(x) = y$.

\paragraph{Margins.} The \emph{margin} of a point $x$ with respect to a hypothesis $h$ is defined as the largest distance $r$ such that any point of $x$ at distance $r$ is classified in the same class as $x$ by hypothesis $h$. In the special case of a halfspace $h_u(x) = \sgn(u \cdot x)$, the margin of point $x$ is equal to $\frac{|\langle u, x \rangle|}{\|u\| \cdot \|x\|}$.

\paragraph{Error rates.} For a distribution $D$ on $\mathbb{R}^d \times \{\pm 1 \}$,
\begin{itemize}
\item the \emph{error rate} of a halfspace $h_u$ on $D$ is defined as $\err^D(u) := \Pr_{(x,y) \sim D}[h(x) \neq y]$,
\item for any $\mu > 0$, the \emph{$\mu$-margin error rate} of a halfspace $h_u$ on $D$ is defined as
$$\err^D_{\mu}(u) := \Pr_{(x,y) \sim D}\left[y \frac{\langle u , x \rangle}{\|u\| \cdot \|x\|} \le \mu\right].$$
\end{itemize}

Furthermore, let $\opt^D_{\mu} := \min_{u \in \mathbb{R}^d} \err^D_{\mu}(u)$. For the ease of notation, we may write $\err^S(u)$ where $S \subseteq \R^d \times \{\pm 1\}$ to denote the error rate on the uniform distribution of $S$; $\err^S_{\mu}(u)$ is defined similarly.

We study the problem of learning halfspaces with a margin in the agnostic PAC model~\cite{haussler1992decision,kearns1994toward}, as stated below.

\begin{definition}[Proper Agnostic PAC Learning of Halfspaces with Margin]
Let $d \in \mathbb{N}$, $\beta \in (0,1)$, and $\mu, t \in \mathbb{R}^{+}$. An algorithm \emph{properly agnostically PAC learns halfspaces} with margin $\mu$, error $t$, failure probability $\beta$ and sample complexity $m$, if given as input a training set $S = \{(x^{(i)}, y^{(i)})\}_{i=1}^m$ of i.i.d. samples drawn from an unknown distribution $D$ on $\cB(0, 1) \times \{\pm 1\}$, it outputs a halfspace $h_u:\mathbb{R}^d \to \{\pm 1\}$ satisfying $\err^D(u) \le \opt^D_{\mu} + t$ with probability $1-\beta$.
\end{definition}

When not explicitly stated, we assume that $\beta = 0.01$, it is simple to decrease this failure probability by running the algorithm $\log(1/\beta)$ times and picking the best.

\paragraph{Related Work.} In the non-private setting, the problem has a long history~\cite{Ben-DavidS00,bartlett2002rademacher,mcallester2003simplified,SSS09,BirnbaumS12,diakonikolas2019nearly,DKM20}; in fact, the perceptron algorithm~\cite{Rosenblatt:58} is known to PAC learns halfspaces with margin $\mu$ in the realizable case (where $\opt^D_{\mu} = 0$) with sample complexity $O_{t}(1/\gamma^2)$~\cite{novikoff62convergence}. In the agnostic setting (where $\opt^D_{\mu}$ might not be zero), Ben-David and Simon~\cite{Ben-DavidS00} gave an algorithm that uses $O\left(\frac{1}{t^2 \gamma^2}\right)$ samples and runs in time $\poly(d) \cdot (1/t)^{O(1/\gamma^2)}$. This is in contrast with the perceptron algorithm, which runs in $\poly\left(d/t\right)$ time. It turns out that this is not a coincidence: the agnostic setting is NP-hard even for constant $t > 0$~\cite{Ben-DavidEL03,Ben-DavidS00}. Subsequent works~\cite{SSS09,diakonikolas2019nearly,DKM20} managed to improve this running time, albeit at certain costs. For example, the algorithm in~\cite{SSS09} is \emph{improper}, meaning that it may output a hypothesis that is not a halfspace, and those in~\cite{diakonikolas2019nearly,DKM20} only guarantee that $\err^D(h_u) \le (1 + \eta) \cdot \opt^D_{\mu} + t$ for an arbritrarily small constant $\eta > 0$.

Nguyen et al.~\cite{le2020efficient} were the first to study the problem of learning halfspaces with a margin in conjunction with differential privacy. In the realizable setting, they give an $\eps$-DP (resp. $(\eps, \delta)$-DP) algorithm with running time $(1/t)^{O(1/\gamma^2)} \cdot \poly\left(\frac{d \log(
1/\delta)}{\eps t}\right)$ (resp. $\poly\left(\frac{d \log(1/\delta)}{\eps t}\right)$) and sample complexity $O\left(\poly\left(\frac{1}{\epsilon t \gamma}\right) \cdot \poly\log\left(\frac{1}{\epsilon t \gamma}\right) \right)$ (resp. $O\left(\poly\left(\frac{1}{\epsilon t \gamma}\right) \cdot \poly\log\left(\frac{1}{\epsilon t \delta \gamma}\right) \right)$). Due to the aforementioned NP-hardness of the problem, their efficient $(\eps, \delta)$-DP algorithm cannot be extended to the agnostic setting. On the other hand, while not explicitly analyzed in the paper, their $\eps$-DP algorithm also works in the agnostic setting with similar running time and sample complexity. 

Here, we provide an alternative proof of the agnostic learning result, as stated below. This will be shown via our \densestball algorithm together with a known connection between \densestball and learning halfspaces with a margin~\cite{Ben-DavidS00, Ben-DavidES02}.

\begin{theorem}\label{thm:DP_agn_learn_halfspaces_full}
For every $0 < \eps \leq O(1)$ and $0 < \beta, \mu, t < 1$, there is an $\eps$-DP algorithm that runs in time $\left(\frac{\log(1/\beta)}{\epsilon t}\right)^{O_{\mu}(1)} + \poly\left(O_{\mu}\left(\frac{d}{\epsilon t}\right)\right)$, and properly agnostically PAC learns halfspaces with margin $\mu$, error $t$, failure probability $\beta$ and sample complexity $O_{\mu} \left(\frac{1}{\eps t^2}\cdot\poly\log\left(\frac{1}{\eps \beta t}\right)\right)$.
\end{theorem}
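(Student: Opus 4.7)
The approach combines the classical reduction from agnostic margin halfspace learning to \densestball (due to \cite{Ben-DavidS00,Ben-DavidES02}) with our pure-DP \densestball algorithm from Theorem~\ref{thm:densest-ball-pure}. The reduction itself is direct: given labeled samples $(x^{(i)}, y^{(i)}) \in \cB(0,1) \times \{\pm 1\}$, define $z^{(i)} := y^{(i)} x^{(i)} / \|x^{(i)}\|$ on the unit sphere. A unit vector $u$ classifies $(x^{(i)}, y^{(i)})$ correctly with margin $\mu$ iff $\langle u, z^{(i)}\rangle \geq \mu$, equivalently iff $\|u - z^{(i)}\|_2 \leq r_\mu := \sqrt{2-2\mu}$. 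Hence maximizing the number of $\mu$-margin-correct points among an empirical sample is exactly finding a densest ball of radius $r_\mu$ among $\{z^{(i)}\}$, and a $(1+\alpha)$-approximation for \densestball with $\alpha = \alpha(\mu)$ chosen small enough yields a ball whose corresponding halfspace still carries strictly positive margin, so every point inside the ball is classified correctly.

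To keep the additive \densestball error free of the ambient dimension $d$, I first pre-project the samples onto a random subspace of dimension $d' = O_\mu(\log m)$ via the Johnson--Lindenstrauss lemma. With constant probability, standard margin-preserving JL arguments guarantee that the inner products $\langle u^\star, z^{(i)}\rangle$ for the optimal $u^\star \in \R^d$ are preserved on all $m$ sampled points up to additive slack $\mu/4$, so the optimal $\mu$-margin halfspace in $\R^d$ induces a $(3\mu/4)$-margin halfspace on the projected data. I then invoke Theorem~\ref{thm:densest-ball-pure} in $\R^{d'}$ on the projected $z^{(i)}$'s with radius $r_{\mu/2}$ and approximation ratio $1 + \alpha(\mu)$. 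The returned center $\hat u \in \R^{d'}$ is lifted back to $\R^d$ as $\Pi^\top \hat u$; since $h_{\Pi^\top \hat u}(x) = \sgn(\langle \hat u, \Pi x \rangle)$, the resulting halfspace's error on $D$ equals that of $\hat u$ on the projected distribution $\Pi D$, so the analysis can take place entirely in $\R^{d'}$. To boost the JL success probability from constant to $1 - \beta$ without paying a factor of $(1/\beta)^{O_\mu(1)}$ in the running time, I repeat the entire pipeline $k = O(\log(1/\beta))$ times (each with an independent projection and privacy budget $\eps/(2k)$), compute noisy empirical margin errors for all $k$ candidates, and DP-select the candidate with smallest noisy error using an additional $\eps/2$ budget via the Exponential Mechanism.

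Error accounting is standard: fat-shattering / Rademacher bounds for $\mu$-margin halfspaces give $m = O_\mu(\log(1/\beta)/t^2)$ samples for uniform two-sided convergence $|\err^S_{\mu}(u) - \err^D_{\mu}(u)| \leq t/3$ and $\err^D(u) \leq \err^S_{\mu/4}(u) + t/3$ with probability $1-\beta/4$, while the per-trial DP additive error $\tilde O_\alpha(d' \log(1/\beta)/\eps) = \tilde O_\mu(\log m \cdot \log(1/\beta)/\eps)$ translates into an empirical margin-error slack of at most $t/3$ as soon as $m \geq \tilde O_\mu(\poly\log(1/(\eps\beta t))/(\eps t))$. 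Combining both constraints yields the claimed sample complexity $m = O_\mu\!\left(\frac{1}{\eps t^2} \poly\log\!\left(\frac{1}{\eps \beta t}\right)\right)$. Each low-dimensional \densestball call runs in $(1+1/\alpha)^{O(d')} \poly(m) = m^{O_\mu(1)}$ time by Theorem~\ref{thm:1-cluster-given-radius-main-body}; multiplying by the $O(\log(1/\beta))$ trials gives total time $\left(\frac{\log(1/\beta)}{\eps t}\right)^{O_\mu(1)}$, and the JL projection plus the noisy-score computation over $m$ points in $\R^d$ contribute the extra $\poly(O_\mu(d/(\eps t)))$ term. Pure $\eps$-DP is inherited from Theorem~\ref{thm:densest-ball-pure} by post-processing (the $\Pi_i$ are data-independent) combined with basic composition over the $k$ trials and the DP-selection step.

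The main obstacle will be bookkeeping the three sources of slack---the JL margin loss $\mu \to 3\mu/4$, the \densestball radius relaxation $r_{\mu/2} \to (1+\alpha(\mu)) r_{\mu/2}$, and the Rademacher generalization gap---so that they compose to at most $t$ final error, while choosing $\alpha(\mu)$ as a constant depending only on $\mu$ (in particular keeping the induced margin strictly positive so that points in the output ball really are correctly classified), so that the exponent in the $2^{O_\mu(d')}$ \densestball runtime remains $O_\mu(1)$ and the overall running time keeps its $\left(\log(1/\beta)/(\eps t)\right)^{O_\mu(1)}$ shape after boosting.
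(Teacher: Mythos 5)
Your proposal correctly identifies all the key ingredients matching the paper's structure: the reduction from Ben-David et al.\ to \densestball, JL dimension reduction, a DP \densestball call, and a margin generalization bound. But your handling of the JL failure probability diverges meaningfully from the paper's. The paper (via Lemma~\ref{le:JL_properties}) sets $\beta_{JL} = 0.01 t\beta$, chooses $d' = \Theta_\mu(\log(1/\beta_{JL}))$, and uses a single random projection together with a Markov/Chernoff argument to get success probability $1-\beta$ directly; it then invokes the \emph{high-dimensional} \densestball algorithm (Theorem~\ref{thm:densest-ball-pure}), whose runtime is already polynomial $(md')^{O_\alpha(1)}$ because that theorem does an internal dimension reduction, so the $\log(1/\beta)$ dependence of $d'$ does not exponentiate. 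You instead take a constant-success projection to $d' = O_\mu(\log m)$, invoke the exponential-in-$d'$ low-dimensional routine (Theorem~\ref{thm:1-cluster-given-radius-main-body}), and boost to $1-\beta$ by running $k = O(\log(1/\beta))$ independent trials with split budgets plus a DP selection step. Both routes achieve the stated bounds; yours is a legitimate alternative but adds machinery (budget splitting, the Exponential Mechanism selector and its extra error term, and the care needed to argue that evaluating candidates on the same $S$ used to produce them is safe) that the paper avoids, and the motivation you give---avoiding a $(1/\beta)^{O_\mu(1)}$ runtime---is actually moot once one uses Theorem~\ref{thm:densest-ball-pure} rather than the low-dimensional version. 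Two small nits: your margin-slack bookkeeping is inconsistent (you project to margin $3\mu/4$ but then run \densestball at radius $r_{\mu/2}$---either works, but pick one chain), and your radius $\sqrt{2-2\mu}$ is a different but equivalent parametrization of the reduction to the paper's $\sqrt{1-\mu^2}$ (Lemma~\ref{le:reduction_hyperplane_ball}), which you would need to reconcile if quoting that lemma directly.
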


To prove Theorem~\ref{thm:DP_agn_learn_halfspaces_full}, we will use the following reduction\footnote{This reduction is implicit in Claim 2.6 and Lemma 4.1 of \cite{Ben-DavidES02}.}:

\begin{lemma}[\cite{Ben-DavidS00, Ben-DavidES02}]\label{le:reduction_hyperplane_ball}
Let $\mu \in (0,1)$ and $\alpha, t >0$ such that $1+\alpha < 1/\sqrt{1-\mu^2}$. There is a polynomial-time transformation that, given as input a set $S = \{(x^{(i)}, y^{(i)})\}_{i=1}^m$ of labeled points, separately transforms each $(x^{(i)}, y^{(i)})$ into a point $z^{(i)}$ in the unit ball such that a solution to \densestball on the set $\{z^{(i)}\}_{i=1}^m$ with radius $\sqrt{1-\mu^2}$, approximation ratio $1+\alpha$ and additive error $t$ yields a halfspace with $\mu'$-margin error rate on $S$ at most $\opt_{\mu}^{S} + \frac{t}{m}$ where $\mu' = \sqrt{1-(1-\mu^2)(1+\alpha)^2}$.
\end{lemma}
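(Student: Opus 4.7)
The plan is to use the standard normalization $z^{(i)} := y^{(i)} x^{(i)}/\|x^{(i)}\|$, which places each $z^{(i)}$ on the unit sphere (degenerate inputs $x^{(i)} = \bzero$ can be assigned an arbitrary unit vector, since at worst they are absorbed into the additive slack). This transformation is plainly computable in polynomial time and acts on each labeled example separately, as required.

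For the existence direction, let $u^*$ be any unit vector achieving $\err^S_\mu(u^*) = \opt^S_\mu$, and let $c^* := \mu u^*$. For every index $i$ that $u^*$ classifies with margin at least $\mu$, i.e., $\langle u^*, z^{(i)}\rangle \geq \mu$, a direct expansion gives
\[
\|z^{(i)} - c^*\|^2 \;=\; 1 - 2\mu\langle u^*, z^{(i)}\rangle + \mu^2 \;\leq\; 1 - \mu^2,
\]
so the ball $\cB(c^*, \sqrt{1-\mu^2})$ contains at least $(1-\opt^S_\mu)m$ of the $z^{(i)}$'s. Applying a $(1+\alpha, t)$-approximation algorithm for \densestball at radius $\sqrt{1-\mu^2}$ therefore returns a ball $\cB(c, (1+\alpha)\sqrt{1-\mu^2})$ containing at least $(1-\opt^S_\mu)m - t$ of the $z^{(i)}$'s.

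It remains to convert $c$ into the claimed halfspace. Set $u := c/\|c\|$; the triangle inequality together with $\|z^{(i)}\| = 1$ and the hypothesis $(1+\alpha)\sqrt{1-\mu^2} < 1$ force $\|c\| > 0$ whenever the returned ball contains at least one $z^{(i)}$ (otherwise the statement is vacuous). For any such $z^{(i)}$ in the returned ball, expanding $\|z^{(i)} - c\|^2 \leq (1+\alpha)^2(1-\mu^2)$ and using $\|z^{(i)}\| = 1$ yields
\[
\langle u, z^{(i)}\rangle \;\geq\; \frac{1 + \|c\|^2 - (1+\alpha)^2(1-\mu^2)}{2\|c\|} \;\geq\; \sqrt{\,1 - (1+\alpha)^2(1-\mu^2)\,} \;=\; \mu',
\]
where the second inequality is AM-GM applied to the positive quantities $1 - (1+\alpha)^2(1-\mu^2)$ and $\|c\|^2$. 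Translating this back through the definition of $z^{(i)}$, the halfspace $h_u$ classifies each such $(x^{(i)}, y^{(i)})$ with margin at least $\mu'$, which immediately gives $\err^S_{\mu'}(u) \leq \opt^S_\mu + t/m$. The only real friction is checking that the feasibility hypothesis $1+\alpha < 1/\sqrt{1-\mu^2}$ simultaneously guarantees $\mu' > 0$ and that $1 - (1+\alpha)^2(1-\mu^2) > 0$ so the AM-GM step is valid; the remaining computations are routine algebra on the unit sphere.
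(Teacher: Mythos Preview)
Your argument is correct and is precisely the classical reduction of Ben-David and Simon that the paper is invoking. Note, however, that the paper does not actually supply a proof of this lemma: it is stated with a citation to \cite{Ben-DavidS00,Ben-DavidES02} (with a footnote remarking that the reduction is implicit in Claim~2.6 and Lemma~4.1 of \cite{Ben-DavidES02}), so there is no in-paper proof to compare against. Your write-up cleanly reconstructs the intended argument --- the map $z^{(i)} = y^{(i)} x^{(i)}/\|x^{(i)}\|$, the ball $\cB(\mu u^*, \sqrt{1-\mu^2})$ witnessing the optimum, and the AM--GM step converting the returned center back to a large-margin halfspace --- and the feasibility check $1 - (1+\alpha)^2(1-\mu^2) > 0$ is exactly the hypothesis $1+\alpha < 1/\sqrt{1-\mu^2}$. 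One cosmetic point: since the paper's $\mu$-margin error counts the event $y\langle u,x\rangle/(\|u\|\|x\|) \le \mu$ with a non-strict inequality and balls are closed, your bound $\langle u, z^{(i)}\rangle \ge \mu'$ is off by an equality case at the boundary; this is harmless (perturb $\alpha$ infinitesimally, or observe equality in AM--GM forces $\|c\| = \mu'$, a measure-zero event for any reasonable algorithm) but worth a one-line remark if you want the statement to match exactly.
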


By combining Lemma~\ref{le:reduction_hyperplane_ball} and our Theorem~\ref{thm:densest-ball-pure}, we immediately obtain the following:
\begin{lemma}\label{lem:best_sep_halfspace_pure_DP}
For every $\eps, \beta > 0$ and $0 < \mu < 1$, there exists an $\eps$-DP algorithm that runs in time $(md)^{O_{\mu}(1)}$, takes as input a set $S = \{(x^{(i)}, y^{(i)})\}_{i=1}^m$ of labeled points, and with probability $1 - \beta$, outputs a halfspace with $\mu'$-margin error rate on $S$ at most $\opt_{\mu}^{S} + \frac{t}{m}$ where $\mu' = \sqrt{1-(1-\mu^2)(1+\alpha)^2}$ and $t = O_{\alpha}\left(\frac{d}{\epsilon}\cdot\log\left(\frac{d}{\beta}\right)\right)$.
\end{lemma}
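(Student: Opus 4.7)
The plan is to simply compose the two ingredients that have just been cited: the geometric reduction of Lemma~\ref{le:reduction_hyperplane_ball} which turns margin-agnostic-halfspace into \densestball, and the pure-DP \densestball algorithm of Theorem~\ref{thm:densest-ball-pure}. First, on input $S = \{(x^{(i)}, y^{(i)})\}_{i=1}^m$, I would apply the per-sample transformation from Lemma~\ref{le:reduction_hyperplane_ball} to produce points $z^{(1)}, \dots, z^{(m)}$ in the unit ball. Then I would run the $\eps$-DP \densestball algorithm of Theorem~\ref{thm:densest-ball-pure} on $\{z^{(i)}\}_{i=1}^m$ with radius parameter $r = \sqrt{1-\mu^2}$, approximation parameter $\alpha$ (with $1+\alpha < 1/\sqrt{1-\mu^2}$, so that $\mu'$ is well-defined and positive), and failure probability $\beta$. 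This outputs, with probability at least $1-\beta$, a ball of radius $(1+\alpha)\sqrt{1-\mu^2}$ that misses at most $t = O_{\alpha}\!\left(\frac{d}{\eps}\log\frac{d}{\beta}\right)$ of the $z^{(i)}$'s (absorbing $\log(1/r) = O_\mu(1)$ into the big-$O_\alpha$). Finally, I would feed this ball back through the post-processing step of Lemma~\ref{le:reduction_hyperplane_ball} to recover the halfspace.

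For privacy, the crucial observation is that the map $(x^{(i)}, y^{(i)}) \mapsto z^{(i)}$ in Lemma~\ref{le:reduction_hyperplane_ball} operates on each labeled example independently and deterministically, so replacing/removing a single example in $S$ changes $\{z^{(i)}\}$ in exactly one coordinate; thus $\{z^{(i)}\}$ and $\{z'^{(i)}\}$ coming from neighboring $S, S'$ are themselves neighboring multisets in $\cB(0,1)^m$. The $\eps$-DP guarantee of Theorem~\ref{thm:densest-ball-pure} on this transformed input therefore implies $\eps$-DP of the composed algorithm with respect to $S$, since the pre-processing is per-example and the post-processing (reading off a halfspace from the returned ball) is data-independent. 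Hence the end-to-end algorithm is $\eps$-DP.

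For accuracy, Lemma~\ref{le:reduction_hyperplane_ball} states precisely that any $(1+\alpha, t)$-approximate solution to \densestball on $\{z^{(i)}\}$ at radius $\sqrt{1-\mu^2}$ yields a halfspace whose $\mu'$-margin error on $S$ is at most $\opt_\mu^S + t/m$ with $\mu' = \sqrt{1-(1-\mu^2)(1+\alpha)^2}$. Plugging in the additive error bound from Theorem~\ref{thm:densest-ball-pure} gives exactly the claimed $t = O_\alpha\!\left(\frac{d}{\eps}\log\frac{d}{\beta}\right)$. For the runtime, Theorem~\ref{thm:densest-ball-pure} runs in $(md)^{O_\alpha(1)}\poly\log(1/r) = (md)^{O_\mu(1)}$ time (since $r$ depends only on $\mu$), and the pre- and post-processing from Lemma~\ref{le:reduction_hyperplane_ball} are polynomial time.

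I do not anticipate a serious obstacle here: both components are off-the-shelf and the argument is essentially a black-box composition. The only mild subtlety is verifying that the per-sample nature of the reduction in Lemma~\ref{le:reduction_hyperplane_ball} preserves the neighboring relation (so that DP passes through), and tracking that the big-$O_\alpha$ absorbs the $\log(1/r)$ factor arising from $r = \sqrt{1 - \mu^2}$; both are routine. The proof will therefore be a few lines long.
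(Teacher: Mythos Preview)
Your proposal is correct and takes essentially the same approach as the paper, which simply states that the lemma follows ``by combining Lemma~\ref{le:reduction_hyperplane_ball} and our Theorem~\ref{thm:densest-ball-pure}'' without further elaboration. Your write-up spells out the details of this black-box composition (per-sample preprocessing preserving the neighboring relation, invoking Theorem~\ref{thm:densest-ball-pure} at radius $r=\sqrt{1-\mu^2}$, and data-independent post-processing), which is exactly what the paper leaves implicit.
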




As is usual in PAC learning results, we will need a generalization bound:

\begin{lemma}[Generalization Bound for Halfspaces with Margin \cite{bartlett2002rademacher,mcallester2003simplified}]\label{le:gen_bd_halfspaces_margin}
Let $S = \{(x^{(i)}, y^{(i)})\}_{i=1}^m$ be a multiset of i.i.d. samples from a distribution $D$ on $\mathbb{R}^d \times \{\pm 1\}$, where $m = \Omega(\log(1/\beta)/(t^2 \mu^2))$. Then, with probability $1-\beta$ over $S$, for all vectors $u \in \mathbb{R}^d$, it holds that $\err^D(u) \le \err^{\mathbb{U}(S)}_{\mu}(u) + t$.
\end{lemma}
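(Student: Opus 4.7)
\medskip
\noindent\textbf{Proof proposal for Lemma~\ref{le:gen_bd_halfspaces_margin}.}
The plan is to follow the classical margin-based generalization argument via Rademacher complexity, essentially as in Bartlett--Mendelson and McAllester. First, note that both $\err^D(u)$ and $\err^D_\mu(u)$ depend on $u$ only through $u/\|u\|$, so I may assume without loss of generality that $\|u\|=1$. For each such $u$ define the margin function $f_u(x,y) := y \langle u, x/\|x\|\rangle$, which takes values in $[-1,1]$. Introduce the $(1/\mu)$-Lipschitz ramp
\[
\phi_\mu(z) \;:=\; \min\bigl\{1,\max\{0,\,1 - z/\mu\}\bigr\},
\]
which satisfies $\mathbf{1}[z\le 0]\le \phi_\mu(z)\le \mathbf{1}[z\le \mu]$ for all $z\in\mathbb{R}$. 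These two bounds immediately yield the sandwich
\[
\err^D(u) \;\le\; \mathbb{E}_D\bigl[\phi_\mu(f_u(x,y))\bigr]
\quad\text{and}\quad
\mathbb{E}_{\mathbb{U}(S)}\bigl[\phi_\mu(f_u(x,y))\bigr] \;\le\; \err^{\mathbb{U}(S)}_\mu(u).
\]

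The main step is to bound $\sup_{u:\,\|u\|=1}\bigl|\mathbb{E}_D[\phi_\mu(f_u)] - \mathbb{E}_{\mathbb{U}(S)}[\phi_\mu(f_u)]\bigr|$ uniformly in $u$. I would invoke the standard Rademacher-complexity uniform-convergence inequality: with probability at least $1-\beta$ over the $m$ i.i.d.\ samples,
\[
\sup_{u}\bigl|\mathbb{E}_D[\phi_\mu(f_u)] - \mathbb{E}_{\mathbb{U}(S)}[\phi_\mu(f_u)]\bigr|
\;\le\; 2\,\mathfrak{R}_m(\phi_\mu\circ\mathcal{F}) \;+\; O\!\left(\sqrt{\tfrac{\log(1/\beta)}{m}}\right),
\]
where $\mathcal{F}=\{f_u:\|u\|=1\}$ and $\mathfrak{R}_m$ denotes the empirical/expected Rademacher complexity. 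Talagrand's contraction lemma (the $\phi_\mu$'s are $(1/\mu)$-Lipschitz and $\phi_\mu(0)=1$ is a constant shift) gives $\mathfrak{R}_m(\phi_\mu\circ\mathcal{F}) \le (1/\mu)\,\mathfrak{R}_m(\mathcal{F})$. Finally, since $y\in\{\pm 1\}$ and $\|x/\|x\|\|=1$, a standard computation for linear functionals on the unit ball—using Cauchy--Schwarz and Jensen's inequality applied to $\|\sum_i \sigma_i x_i/\|x_i\|\|$—yields $\mathfrak{R}_m(\mathcal{F}) = O(1/\sqrt{m})$.

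Putting the pieces together, with probability $\ge 1-\beta$,
\[
\err^D(u) \;\le\; \err^{\mathbb{U}(S)}_\mu(u) \;+\; O\!\left(\tfrac{1}{\mu\sqrt{m}} + \sqrt{\tfrac{\log(1/\beta)}{m}}\right),
\]
simultaneously for all $u\in\mathbb{R}^d$. Choosing the hidden constant in $m=\Omega(\log(1/\beta)/(t^2\mu^2))$ large enough makes the right-hand error term at most $t$, which is exactly the claim. There is no genuine obstacle here: this is a well-known consequence of Rademacher-based margin theory, and the only care needed is to verify the $1$-Lipschitz/$(1/\mu)$-Lipschitz bookkeeping for $\phi_\mu$ and to note the scale invariance of $f_u$ in both $u$ and $x$, which lets us confine attention to $\|u\|=1$ and to unit-norm feature vectors so that the Rademacher bound $O(1/\sqrt{m})$ applies cleanly.
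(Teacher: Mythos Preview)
Your argument is correct and is precisely the standard Rademacher-complexity margin bound from Bartlett--Mendelson that the lemma cites; the paper does not supply its own proof but simply imports the result from \cite{bartlett2002rademacher,mcallester2003simplified}. Your sketch matches that classical derivation (ramp loss sandwich, Talagrand contraction, $O(1/\sqrt{m})$ Rademacher complexity for unit-norm linear functionals), so there is nothing to add.
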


The above lemmas do not yet imply Theorem~\ref{thm:DP_agn_learn_halfspaces_full}; applying them directly will lead to a sample complexity that depends on $d$. To prove Theorem~\ref{thm:DP_agn_learn_halfspaces_full}, we will also need the following dimensionality-reduction lemma from~\cite{le2020efficient} which allows us to focus on the low-dimensional case.

\begin{lemma}[Properties of JL Lemma~\cite{le2020efficient}]\label{le:JL_properties}
Let $A \in \mathbb{R}^{d' \times d}$ be a random matrix such that $d' = \Theta\left(\frac{\log(1/\beta_{JL})}{\mu^2}\right)$ and $A_{i,j} =  \begin{cases}
               +\frac{1}{\sqrt{d'}}               & \text{w.p. } \frac{1}{2}\\
               -\frac{1}{\sqrt{d'}} &\text{w.p. } \frac{1}{2}
           \end{cases}$ independently over $(i,j)$.\\ 
           Let $u \in \mathbb{R}^d$ be a fixed vector. Then, for any $(x, y) \in \R^d \times \{\pm 1\}$ such that $y \cdot \frac{\langle u , x \rangle}{\|u\| \cdot \|x \|} \geq \mu$, we have
           \begin{align*}
           \Pr_A\left[y \cdot \frac{\langle Au , Ax \rangle}{\|Au\| \cdot \|Ax \|}  > 0.9\mu\right] \geq 1 - 4\beta_{JL}.
           \end{align*}
\end{lemma}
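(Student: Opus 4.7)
The plan is to reduce the claim to the standard Johnson--Lindenstrauss concentration bound for the Achlioptas $\pm 1/\sqrt{d'}$ construction, combined with the polarization identity for the inner product.

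First I would normalize. Because the ratio $\langle Au, Ax\rangle/(\|Au\|\|Ax\|)$ is invariant under positive scaling of $u$ and $x$, and because we may replace $x$ by $yx$ to absorb the label, it suffices to assume $\|u\|=\|x\|=1$ and $y=1$. The hypothesis becomes $\langle u,x\rangle\geq\mu$ and the target becomes $\langle Au,Ax\rangle \geq 0.9\mu\cdot\|Au\|\|Ax\|$.

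Next I would invoke the standard Achlioptas-type concentration: for any fixed $v\in\R^d$ and any $\zeta\in(0,1)$, $\Pr_A[\,|\|Av\|^2-\|v\|^2| > \zeta\|v\|^2\,]\leq 2\exp(-cd'\zeta^2)$ for an absolute constant $c>0$. I would set $\zeta := \mu/20$ (any sufficiently small constant multiple of $\mu$ works) and choose the hidden constant inside $d' = \Theta(\log(1/\beta_{JL})/\mu^2)$ large enough that the per-vector failure probability is at most $\beta_{JL}$. Applying this to the four fixed vectors $u,\, x,\, u+x,\, u-x$ and taking a union bound, the event
\[
\|Av\|^2 \in (1\pm\zeta)\|v\|^2 \text{ for all } v \in \{u,x,u+x,u-x\}
\]
fails with probability at most $4\beta_{JL}$. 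Henceforth I condition on this event.

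Now I would apply polarization. Since $A$ is linear, $4\langle Au,Ax\rangle = \|A(u+x)\|^2 - \|A(u-x)\|^2$, so bounding the positive term from below and the negative term from above gives
\[
4\langle Au,Ax\rangle \;\geq\; (1-\zeta)\|u+x\|^2 - (1+\zeta)\|u-x\|^2 \;=\; 4\langle u,x\rangle - \zeta\bigl(\|u+x\|^2 + \|u-x\|^2\bigr).
\]
The parallelogram identity gives $\|u+x\|^2+\|u-x\|^2 = 2(\|u\|^2+\|x\|^2)=4$, so $\langle Au,Ax\rangle \geq \langle u,x\rangle - \zeta \geq \mu-\zeta$. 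The denominator is bounded by $\|Au\|\|Ax\| \leq \sqrt{(1+\zeta)(1+\zeta)} = 1+\zeta$.

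Combining the two estimates, the margin after projection is at least $(\mu-\zeta)/(1+\zeta)$, which exceeds $0.9\mu$ precisely when $\zeta \leq 0.1\mu/(1+0.9\mu)$; the choice $\zeta = \mu/20$ satisfies this for all $\mu\in(0,1)$. This proves the lemma. The only mildly delicate step is the constant-chasing at the end to hit the clean $0.9\mu$ bound; no step presents a real obstacle, since the whole argument is a direct composition of Achlioptas' concentration with the polarization identity.
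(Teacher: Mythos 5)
Your proof is correct and self-contained. The paper itself does not prove this lemma; it cites it from Nguyen et al.~\cite{le2020efficient} and uses it as a black box, so there is no internal proof to compare against. Your argument is the standard one for inner-product preservation under a JL sketch: reduce to unit vectors with $y=1$, apply the Achlioptas concentration bound for the Rademacher matrix to the four vectors $u,x,u+x,u-x$ with distortion $\zeta=\mu/20$, union bound to lose a factor $4$, and then combine polarization with the parallelogram identity to get $\langle Au,Ax\rangle\geq\mu-\zeta$ and $\|Au\|\|Ax\|\leq 1+\zeta$. The final constant check $(\mu-\zeta)/(1+\zeta)>0.9\mu$ does hold for all $\mu\in(0,1)$ with $\zeta=\mu/20$ (the binding case is $\mu\to 1$, where the threshold on $\zeta/\mu$ is $0.1/1.9\approx 0.0526>0.05$). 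One minor stylistic point: you should state explicitly that the concentration bound gives per-vector failure probability at most $\beta_{JL}$ only after choosing the implicit constant in $d'=\Theta(\log(1/\beta_{JL})/\mu^2)$ sufficiently large relative to the constant $c$ and the factor $1/\zeta^2=400/\mu^2$, but you already flag this, so the argument is complete.
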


\begin{proof}[Proof of Theorem~\ref{thm:DP_agn_learn_halfspaces_full}]
Our algorithm works as follows. We first draw a set $S$ of $m$ training samples, and, then apply the JL lemma (with a matrix $A$ sampled as in Lemma~\ref{le:JL_properties}) in order to project to $d'$ dimensions, where $m, d'$ are to be specified below. Let $S_A$ be the projected training set (i.e., $S_A$ is the multiset of all pairs $(Ax, y)$ where $(x,y) \in S$). We then use the algorithm from Lemma~\ref{lem:best_sep_halfspace_pure_DP} with $\alpha = 0.01\mu^2$ to obtain a halfspace $u' \in \R^{d'}$. Finally, we output $A^T u'$.

We will now prove the algorithm's correctness. Consider any $u^* \in \arg\min_{u \in \mathbb{R}^d} \err^D_{\mu}(u)$. Let $D'$ denote the distribution of $(x, y) \sim D$ conditioned on $(x, y)$ being correctly classified by $u^*$ with margin at least $\mu$. (Note that $\err^{D'}_\mu(u) = 0$.) Furthermore, let $D_A$ denote the distribution of $(Ax, y)$ where $(x, y) \sim D$, and $D'_A$ denote the distribution of $(Ax, y)$ where $(x, y) \sim D'$.

Let $\beta_{JL} = 0.01 t \beta$ and $d' = \Theta\left(\frac{\log(1/\beta_{JL})}{\mu^2}\right) = \Theta\left(\frac{\log(1/(t \beta))}{\mu^2}\right)$ be as in Lemma~\ref{le:JL_properties}, which implies that $\E_A[\err^{D'_A}_{0.9\mu}(Au^*)] \leq 0.04t \beta$. Hence, by Markov's inequality, we have $\Pr_A[\err^{D'_A}_{0.9\mu}(Au^*) > 0.2t] \leq 0.2\beta$. Combining this with the definitions of $u^*$ and $D'$, we have
\begin{align}\label{ineq:jl_and_markov}
\Pr_A\left[\err^{D_A}_{0.9 \mu}(Au^*) > \opt^D_{\mu} + 0.2t\right] \leq 0.2\beta.
\end{align}
When $m \geq \Omega(\log(1/\beta) / (t^2\mu^2))$, the Chernoff bound implies that
\begin{equation}\label{ineq:chernoff_emp_pop}
\Pr_S\left[\err^{S_A}_{0.9\mu}(Au^*) > \err^{D_A}_{0.9\mu}(Au^*) + 0.2t\right] \leq 0.2\beta.
\end{equation}
Combining~\eqref{ineq:jl_and_markov} and~\eqref{ineq:chernoff_emp_pop}, we have
\begin{equation}\label{ineq:opt-halfspace-emp-err}
\Pr_{A, S}\left[\err^{S_A}_{0.9 \mu}(Au^*) \leq \opt^D_{\mu} + 0.4t\right] \geq 1 - 0.4\beta.
\end{equation}
Lemma~\ref{lem:best_sep_halfspace_pure_DP} then ensures that, with probability $1 - 0.2\beta$, we obtain a halfspace $u' \in \R^{d'}$ satisfying
     \begin{equation}\label{ineq:algo_best_sep_hyp}
         \err^{S_A}_{0.5\mu}(u') \le \err^{S_A}_{0.9\mu}(Au^*) + t',
     \end{equation}
     where $t' = O_{\mu}\left(\frac{d'}{\epsilon m}\cdot\log\left(\frac{d'}{\beta}\right)\right)$. When we select $m = \Theta_{\mu}\left(\frac{d'}{\epsilon t}\cdot\log\left(\frac{d'}{\beta}\right)\right) = \Theta_{\mu}\left(\frac{1}{\eps t^2}\cdot\poly\log\left(\frac{1}{\eps \beta t}\right)\right)$, we have $t' \leq 0.1t$.

Next, we may apply the generalization bound from Lemma~\ref{le:gen_bd_halfspaces_margin}, which implies that
\begin{equation} \label{ineq:gen_found_u}
\Pr_S[\err^{D_A}(u') \leq \err^{S_A}_{0.5\mu}(u') + 0.1t] \geq 1 - 0.2\beta.
\end{equation}
Using the union bound over~\eqref{ineq:opt-halfspace-emp-err},~\eqref{ineq:algo_best_sep_hyp} and~\eqref{ineq:gen_found_u}, the following holds with probability at least $1 - \beta$:
\begin{align*}
\err^{D}(A^Tu') = \err^{D_A}(u') \leq \opt^D_{\mu} + t,
\end{align*}
which concludes the correctness proof. The claimed running time follows from Lemma~\ref{lem:best_sep_halfspace_pure_DP}.
\end{proof}

\section{\cp} \label{sec:closest-pair}
In this section, we give our history-independent data structure for \cp (Theorem~\ref{thm:dynamic-cp}). Before we do so, let us briefly discuss related previous work.

\paragraph{Related Work.} \cp is among the first problems studied in computational geometry~\cite{ShamosH75,BentleyS76,Rabin76} and there have been numerous works on lower and upper bounds for the problem since then. Dynamic \cp has also long been studied~\cite{Salowe91,smid1992maintaining,LenhofS92,KapoorS96,Bespamyatnikh98}. To the best of our knowledge, each of these data structures is either history-dependent or has update time $2^{\omega(d)} \cdot \poly\log n$. We will not discuss these results in detail. As alluded to in the main body of the paper, the best known history-independent data structure in the ``small dimension'' regime is that of Aaronson et al.~\cite{Aaronson-cp} whose running time is $d^{O(d)} \poly\log n$. Our result improves the running time to $2^{O(d)} \poly\log n$. We also remark that, due to a result of~\cite{SM19}, the update time cannot\footnote{Specifically,~\cite{SM19} shows, assuming SETH, that (offline) \cp cannot be solved in $O(n^{1.499})$ time even for $d = O(\log n)$. If one had a data structure for dynamic \cp with update time $2^{o(d)}\poly\log n$, then one would be able to solve (offline) \cp in $n \cdot 2^{o(d)}\poly\log n = n^{1 + o(1)}$ time for $d = O(\log n)$.} be improved to $2^{o(d)} \poly\log n$ assuming the strong exponential time hypothesis (SETH); in other words, our update time is essentially the best possible.

We finally note that, in the literature, \cp is sometimes referred to the optimization variant, in which we wish to determine $\min_{1 \leq i < j \leq n} \|x_i - x_j\|_2^2$. In the offline setting, the two versions have the same running time complexity to within a factor of $\poly(L)$ (both in the quantum and classical settings) because, to solve the optimization variant, we may use binary search on $\xi$ and apply the algorithm for the decision variant. However, our dynamic data structure (Section~\ref{subsec:online-cp}) does not naturally extend to the optimization variant and it remains an interesting open question to extend the algorithm to this case. 

\subsection{History-Independent Dynamic Data Structure}
\label{subsec:online-cp}

As stated in the proof overview, we will use a history-independent data structure for maintaining a map $M: \{0, 1\}^{\ell_k} \to \{0, 1\}^{\ell_v}$, where $\ell_k, \ell_v$ are positive integers. In this setting, the map starts of as the trivial map $k \mapsto 0\dots0$. Each update is of the form: set $M[k]$ to $v$, for some $k \in \{0, 1\}^{\ell_k}, v \in \{0, 1\}^{\ell_v}$. The data structure should support a lookup of $M[k]$ for a given $k$.

Similarly to before, we say that a randomized data structure is history-independent if, for any two sequences of updates that result in the same map, the distributions of the states are the same.

Ambainis~\cite{Ambainis07} gives a history-independent data structure for maintaining a map, based on skip lists. However, this results in probabilistic guarantees on running time. As a result, we will use a different data structure due to~\cite{BernsteinJLM13} based on radix trees, which has a deterministic guarantee on the running time. (See also~\cite{Stacey2014} for a more detailed description of the data structure.)

\begin{theorem}\cite{BernsteinJLM13} \label{thm:dynamic-map}
Let $\ell_k, \ell_v$ be positive integers. There is a history-independent data structure for maintaining a map $M: \{0, 1\}^{\ell_k} \to \{0, 1\}^{\ell_v}$ for up to $n$ updates, such that each update and lookup takes $\poly(\ell_k, \ell_v)$ time and the required memory is $O(n \cdot \poly(\ell_k, \ell_v))$. 
\end{theorem}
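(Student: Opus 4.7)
The plan is to build the data structure as a binary radix tree (a compressed binary trie) keyed on the bits of the $\ell_k$-bit keys, with values stored at the leaves. The first observation is that such a radix tree admits a \emph{canonical form}: once we insist that (i) every internal node has exactly two children, (ii) edges are labeled by maximal common bit-prefixes, and (iii) leaves correspond exactly to currently stored keys, the resulting labeled tree is uniquely determined by the current map $M$. Thus the \emph{logical} state of the data structure depends only on $M$, not on the update history. Because there are at most $n$ leaves and each internal node has two children, the tree has $O(n)$ nodes, each of size $\poly(\ell_k,\ell_v)$.

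The main work is in giving a \emph{physical} layout that is also history-independent. I would store each node of the radix tree as a record placed in a content-addressable fashion: at initialization, sample once a random hash function $h$ from a suitable universal family mapping node-encodings into an address space of size $\Theta(n \cdot \poly(\ell_k,\ell_v))$, and place every node $v$ at an address derived from $h$ applied to a canonical encoding of $v$ (its incoming edge label, a tag saying whether it is a leaf, and either its two child addresses or its stored value). To resolve hash collisions I would use a known strongly history-independent hash table (for instance a history-independent variant of cuckoo or linear-probing based hash tables), whose distribution over memory contents depends only on the set of records currently stored and on the initial randomness. Layering the canonical radix tree on top of such a table yields a data structure whose full state distribution is determined by $M$ alone.

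For update and lookup, I would follow the standard radix-tree algorithms. A lookup traverses at most $\ell_k$ edges from the root to a leaf, performing one hash-table read per step, for a total cost of $\poly(\ell_k,\ell_v)$. An insertion of $(k,v)$ either overwrites the value at an existing leaf or splits an edge by creating at most one new internal node and one new leaf; a deletion removes the leaf for $k$ and merges its now-unary parent into the grandparent. In either case, only $O(1)$ nodes on the root-to-leaf path are affected structurally, while the ancestors must be updated to reflect the new subtree encodings (and hence their own canonical addresses change). Since there are at most $\ell_k$ ancestors, each requiring a constant number of table insertions/deletions, each update costs $\poly(\ell_k,\ell_v)$. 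The memory bound follows from the $O(n)$ node count and the $\Theta(n \cdot \poly(\ell_k,\ell_v))$-sized hash table.

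The main obstacle, and the place I would focus the proof, is justifying end-to-end history-independence of the composite construction: I need to argue carefully that (a) the canonical encoding of a node really is a deterministic function of the current map (so that recomputing addresses of modified ancestors lands every node at the ``right'' place), and (b) the underlying hash table's distribution of contents is invariant under the order in which records are inserted and deleted. Point (a) is handled by induction on tree depth, using that a node's encoding depends only on its subtree, which in turn depends only on the restricted map. Point (b) requires invoking an existing strongly history-independent hash table and checking that our sequence of hash-table operations --- which consists only of insertions and deletions of records whose identity is determined by the current map --- satisfies its interface. With these two pieces, the theorem follows.
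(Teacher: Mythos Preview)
The paper does not prove this theorem; it is quoted as a black-box result from~\cite{BernsteinJLM13} (with a pointer to~\cite{Stacey2014} for details), and only the fact that the construction is radix-tree based is mentioned. So there is no ``paper's own proof'' to compare against; your proposal is an attempt to reconstruct the cited result, and it is broadly aligned with what the paper says the construction looks like.

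That said, there is one place where your sketch would need tightening before it matches the theorem's guarantees. The theorem (as the paper stresses) gives a \emph{deterministic} $\poly(\ell_k,\ell_v)$ time bound per operation, whereas you delegate the physical layout to ``a known strongly history-independent hash table'' such as a history-independent cuckoo or linear-probing table. Those tables are history-independent, but their per-operation time bounds are only expected or amortized; cuckoo hashing can trigger rehashes, and linear probing can degrade to $\Theta(n)$ in the worst case. If you want the statement as written, you either need a history-independent dictionary with worst-case $\poly(\ell_k,\ell_v)$ operations, or you need to avoid hashing altogether and give a canonical, deterministic memory layout for the radix tree (which is closer to what the cited construction actually does). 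Your point~(a), that a node's canonical encoding depends only on the subtree, is exactly the right invariant; the issue is only in how you realize that encoding in memory.
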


With the above in mind, we are now ready to prove our main result of this section.

\begin{proof}[Proof of Theorem~\ref{thm:dynamic-cp}]
Let $C := C_{0.5\sqrt{\xi}} \subseteq \R^d$ be the lattice cover from Lemma~\ref{lem:cover-main} with $\Delta = 0.5\sqrt{\xi}$. It follows from the construction of Micciancio~\cite{Micciancio04} that every point $c \in C$ satisfies $\frac{3^{d + 1}}{\sqrt{\xi}} c \in \Z^n$ (i.e., every coordinate of $c$ is an integer multiple of $\frac{\sqrt{\xi}}{3^{d + 1}}$). As a result, we have that every point $c \in C^* := C \cap \cB(0, 10\sqrt{d 2^L})$ can be represented as an $\ell_k = \poly(L, d)$ bit integer.

Our data structure maintains a triple $p^{\text{total}}_{\leq \xi}, q^{\text{marked-cell}}$ and $\cH$, where $p^{\text{total}}_{\leq \xi}, q^{\text{marked-cell}}$ are integers between 0 and $n$ (inclusive) and $\cH$ is the data structure from Theorem~\ref{thm:dynamic-map} for maintaining a map $M$ with $\ell_k$ as above and $\ell_v = 2\lceil \log n \rceil + dL$. Each key of $M$ is thought of as an encoding of a point $c$ in the cover $C^*$. Furthermore, each value is a triplet $(n_{count}, p_{\leq \xi}, x_{\oplus})$ where $n_{count}$ is an integer between $0$ and $n$ (inclusive), $p_{\leq \xi}$ is an integer between $0$ and $n$ (inclusive), and $x_{\oplus}$ is a $dL$-bit string. 

Let $\psi: (\Z \cap [0, 2^L])^d \to C$ denote the mapping from $x$ to $\argmin_{c \in C} \|x - c\|_2$ where ties are broken arbitrarily, and let $\cV_c := \psi^{-1}(c)$ denote the Voronoi cell of $c$ (with respect to $C$). Observe that $\psi$ can be computed in time $2^{O(d)} \cdot \poly(L)$ using the CVP algorithm from Theorem~\ref{thm:cvp}. Furthermore, since $C$ is a $0.5\sqrt{\xi}$ cover, we have that $\|\psi(x) - x\|_2 \leq 0.5\sqrt{\xi}$, which implies that $\psi(x) \in C^*$.

For a set $S$ of input points and $c \in C^*$, if $|\cV_c \cap S| = 1$, we use $x(c, S)$ to denote the unique element of $\cV_c \cap S$. When $S$ is clear from the context, we simply write $x(c)$ as a shorthand for $x(c, S)$.

We will maintain the following invariants for the entire run of the algorithm (where $S$ is the current set of points): 
\begin{itemize}
\item First, for all $c \in C^*$, $M[c] = (n_{count}, p_{\leq \xi}, x_{\oplus})$ where the values of $n_{count}, p_{\leq \xi}, x_{\oplus}$ are as follows:
\begin{itemize}
\item $n_{count} = |\cV_c \cap S|$,
\item $x_{\oplus} = \bigoplus_{x \in \cV_c \cap S} x$ where each $x \in \cV_c \cap S$ is thought of as a $dL$-bit string resulting from concatenating each bit representation of the coordinate,
\item $p_{\leq \xi}$ depends on whether $|\cV_c \cap S| = 1$. If $|\cV_c \cap S| \ne 1$, $p_{\leq \xi} = 0$. Otherwise, i.e., if $|\cV_c \cap S| = 1$, then $p_{\leq \xi} = |\{c' \in C \setminus \{c\} \mid |\cV_{c'} \cap S| = 1, \|x(c) - x(c')\|_2^2 \leq \xi\}|$, i.e., the number of other cells $c'$ with unique input point $x(c')$ such that $x(c)$ and $x(c')$ are within $\sqrt{\xi}$ in Euclidean distance.
\end{itemize}
\item $q^{\text{marked-cell}}$ is equal to $|\{c \in C^* \mid |\cV_c \cap S| \geq 2\}|$.
\item $p^{\text{total}}_{\leq \xi}$ is equal to $|\{c, c' \in C^* \mid c \ne c', |\cV_c \cap S| = |\cV_{c'} \cap S| = 1, \|x(c) - x(c')\|_2^2 \leq \xi\}|$, i.e., the number of pairs of cells with unique input points such that the corresponding pair of input points are within $\sqrt{\xi}$ in Euclidean distance.
\end{itemize}

We now describe the operations on the data structure. Throughout, we use the following notation:
\begin{align*}
\Lambda((n_{count}, p_{\leq \xi}, x_{\oplus}), (n'_{count}, p'_{\leq \xi}, x'_{\oplus})) :=
\begin{cases}
1 & \text{ if } n_{count} = n'_{count} = 1 \text{ and } \|x_{\oplus} - x'_{\oplus}\|_2^2 \leq \xi, \\
0 & \text{otherwise.}
\end{cases}
\end{align*}
Note that, when these two states correspond to cells $c$ and $c'$, this is the contribution of $c, c'$ to $p^{\text{total}}_{\leq \xi}$. Notice also that $\Lambda$ does not depend on $p_{\leq \xi}$ and $p'_{\leq \xi}$, but we leave them in the expression for simplicity. 


\paragraph{Lookup.} To determine whether the current point set $S$ contains two distinct points that are at most $\sqrt{\xi}$ apart, we simply check whether $q^{\text{marked-cell}} \geq 1$ or $p^{\text{total}}_{\leq \xi} \geq 1$.

\paragraph{Insert.} To insert a point $x$ into the data structure, we perform the following:
\begin{enumerate}
\item Use the algorithm for Closest Vector Problem (Theorem~\ref{thm:cvp}) to compute $c = \psi(x)$.
\item Let $(n^{old}_{count}, p^{old}_{\leq \xi}, x^{old}_{\oplus}) = M[c]$.
\item Let $n^{new}_{count} = n^{old}_{count} + 1, p^{new}_{\leq \xi} = 0$ and $x^{new}_{\oplus} = x^{old}_{\oplus} \oplus x$.
\item Using the list-decoding algorithm (from Lemma~\ref{lem:cover-main}), compute the set $C_{\text{close}}$ of all $c' \in C$ within distance $2\sqrt{\xi}$ of $c$. Then, for each $c' \in C_{\text{close}}$, do the following:
\begin{enumerate}
\item Compute $\Lambda^{old} = \Lambda(M[c'], (n^{old}_{count}, p^{old}_{\leq \xi}, x^{old}_{\oplus}))$.
\item Compute $\Lambda^{new} = \Lambda(M[c'], (n^{new}_{count}, p^{new}_{\leq \xi}, x^{new}_{\oplus}))$.
\item If $\Lambda^{old} - \Lambda^{new} \ne 0$, increase $p_{\leq \xi}$ of $M[c']$ by $\Lambda^{old} - \Lambda^{new}$.
\item Increase $p_{\xi}^{new}$ by $\Lambda^{new}$.
\end{enumerate}
\item Update $M[c]$ to $(n^{new}_{count}, p^{new}_{\leq \xi}, x^{new}_{\oplus})$
\item If $n^{new}_{count} = 2$, increase $q^{\text{marked-cell}}$ by one.
\end{enumerate}

\paragraph{Delete.} To remove a point $x$ from the data structure, we perform the following:
\begin{enumerate}
\item Use the algorithm for the Closest Vector Problem (Theorem~\ref{thm:cvp}) to compute $c = \psi(x)$.
\item Let $(n^{old}_{count}, p^{old}_{\leq \xi}, x^{old}_{\oplus}) = M[c]$.
\item Let $n^{new}_{count} = n^{old}_{count} - 1, p^{new}_{\leq \xi} = 0$ and $x^{new}_{\oplus} = x^{old}_{\oplus} \oplus x$.
\item Using the list-decoding algorithm (from Lemma~\ref{lem:cover-main}), compute the set $C_{\text{close}}$ of all $c' \in C$ within distance $2\sqrt{\xi}$ of $c$. Then, for each $c' \in C_{\text{close}}$, do the following:
\begin{enumerate}
\item Compute $\Lambda^{old} = \Lambda(M[c'], (n^{old}_{count}, p^{old}_{\leq \xi}, x^{old}_{\oplus}))$.
\item Compute $\Lambda^{new} = \Lambda(M[c'], (n^{new}_{count}, p^{new}_{\leq \xi}, x^{new}_{\oplus}))$.
\item If $\Lambda^{old} - \Lambda^{new} \ne 0$, increase $p_{\leq \xi}$ of $M[c']$ by $\Lambda^{old} - \Lambda^{new}$.
\item Increase $p_{\xi}^{new}$ by $\Lambda^{new}$.
\end{enumerate}
\item Update $M[c]$ to $(n^{new}_{count}, p^{new}_{\leq \xi}, x^{new}_{\oplus})$
\item If $n^{new}_{count} = 1$, decrease $q^{\text{marked-cell}}$ by one.
\end{enumerate}

\paragraph{Time and memory usage.}
It is obvious that a lookup takes $\poly(d, L, \log n)$ time. For an insertion or a deletion, recall that the CVP algorithm and the list-decoding algorithm run in time $2^{O(d)}\poly(L, \log n)$. Furthermore, from the list size bound, $C_{\text{close}}$ is of size at most $2^{O(d)}$, which means that we only invoke at most $2^{O(d)}$ lookups and updates of the map $M$. As a result, from the running time guarantee in Theorem~\ref{thm:dynamic-map}, we can conclude that the total runtime for each update is only $2^{O(d)} \poly(L, \log n)$.

\paragraph{Correctness.} It is simple to verify that the claimed invariants hold. Notice also that these invariants completely determine $p^{\text{total}}_{\leq \xi}, q^{\text{marked-cell}}$ and $M$ based on the current point set $S$ alone (regardless of the history). As a result, from the history-independence of $\cH$, we can conclude that our data structure is also history-independent.
\end{proof}


\end{document}